\pgfplotsset{compat=1.17}
\newtheorem*{rep@theorem}{\rep@title}
\newcommand{\newreptheorem}[2]{%
\newenvironment{rep#1}[1]{%
 \def\rep@title{#2 \ref{##1}}%
 \begin{rep@theorem}}%
 {\end{rep@theorem}}}
\newtheorem*{rep@lemma}{\rep@title}
\newcommand{\newreplemma}[2]{%
\newenvironment{rep#1}[1]{%
 \def\rep@title{#2 \ref{##1}}%
 \begin{rep@lemma}}%
 {\end{rep@lemma}}}
\newtheorem*{rep@corollary}{\rep@title}
\newcommand{\newrepcorollary}[2]{%
\newenvironment{rep#1}[1]{%
 \def\rep@title{#2 \ref{##1}}%
 \begin{rep@corollary}}%
 {\end{rep@corollary}}}
\theoremstyle{plain}
\newtheorem{theorem}{{\bf Theorem}}[section]
\newtheorem{lemma}[theorem]{{\bf Lemma}}
\newtheorem{definition}{{\bf Definition}}[section]
\newtheorem{remark}[theorem]{{\bf Remark}}
\newcommand\citenumber[2]{[#1, \citep{#2}]}
\newcommand{\dif}[1]{\mathrm{d} #1}
\newcommand{\der}[2]{\frac{\dif{#1}}{\dif{#2}}}
\newcommand{\doh}[2]{\frac{\partial #1}{\partial #2}}
\newcommand{\graD}[1]{\nabla #1}
\newcommand{\divR}[1]{\mathrm{div} \left(#1\right)}
\newcommand{\lapL}[1]{\Delta #1}
\newcommand{\hesS}[1]{\nabla^2 #1}
\newcommand{\tra}[1]{\textsf{Tr}\left(#1\right)}
\newcommand{\dotP}[2]{\left\langle#1,#2\right\rangle}
\newcommand{\norm}[1]{\left\Vert #1 \right\Vert_2}
\newcommand{\ve}{\ensuremath{\mathbf v}}
\newcommand{\V}{V}
\newcommand{\prob}[2]{\underset{#1}{\mathbb{P}}\left[#2\right]}
\newcommand{\lap}[1]{\text{Lap}\left(#1\right)}
\newcommand{\expec}[2]{\underset{#1}{\mathbb{E}}\left[#2\right]}
\newcommand{\Gaus}[2]{\mathcal{N}\left(#1, #2\right)}
\newcommand{\Id}{\mathbb{I}_\dime}
\newcommand{\noise}{\sigma}
\newcommand{\Z}{\mathbf{Z}}
\newcommand{\Out}{O}
\newcommand{\Outb}{\bar{\Out}}
\newcommand{\indic}[1]{\mathbbm{1}\left\{#1\right\}}
\newcommand{\pI}{\uppi}					
\newcommand{\psI}{\psi}
\newcommand{\Gen}{\mathcal G}				
\newcommand{\carre}{\Gamma}
\newcommand{\Ent}{\mathrm{Ent}}
\newcommand{\entropy}{\textrm H}
\newcommand{\p}{p}
\newcommand{\nU}{\upnu}
\newcommand{\mU}{\upmu}
\newcommand{\rhO}{\uprho}
\newcommand{\X}{\mathcal{X}}				
\newcommand{\C}{\mathcal{O}}				
\newcommand{\dime}{d}					
\newcommand{\domain}{\R^\dime}				
\newcommand{\R}{\mathbb{R}}				
\newcommand{\N}{\mathbb{N}}				
\newcommand{\Pub}{\Phi}					
\newcommand{\pub}{\phi}					
\newcommand{\D}{\mathcal{D}}				
\newcommand{\n}{n}					
\newcommand{\Lrn}{\mathrm{A}}
\newcommand{\Unlrn}{{\bar{\Lrn}}}			
\newcommand{\Nsgd}{\text{Noisy-GD}}
\newcommand{\x}{\mathbf{x}}				
\newcommand{\thet}{\theta}				
\newcommand{\Thet}{\Theta}				
\newcommand{\loss}{\boldsymbol{\ell}}			
\newcommand{\rloss}{\bar\loss}
\newcommand{\reg}{\mathbf{r}}				
\newcommand{\ltwo}{L2}					
\newcommand{\Loss}{\mathcal{L}}				
\newcommand{\rLoss}{\Loss}				
\newcommand{\err}{\mathrm{err}}				
\newcommand{\risk}{\alpha}
\newcommand{\q}{q}					
\newcommand{\eps}{\varepsilon}				
\newcommand{\epsdp}{\eps_{\mathrm{dp}}}			
\newcommand{\epsdd}{\eps_{\mathrm{dd}}}			
\newcommand{\del}{\delta}				
\newcommand{\step}{\eta}				
\newcommand{\K}{K}					
\newcommand{\batch}{\mathcal{B}}			
\newcommand{\cvx}{\lambda}
\newcommand{\lip}{L}
\newcommand{\Clip}{\mathrm{Clip}}
\newcommand{\smh}{\beta}
\newcommand{\B}{B}					
\newcommand{\conv}{\circledast}
\newcommand{\grad}{\mathbf{g}}				
\newcommand{\y}{\mathbf{y}}				
\newcommand{\up}{u}					
\newcommand{\Up}{U}					
\newcommand{\ind}{\mathrm{ind}}
\newcommand{\U}{\mathcal{U}}				
\newcommand{\reps}{r}					
\newcommand{\prox}{\mathrm{prox}}
\newcommand{\publish}{f_{\mathrm{pub}}}
\newcommand{\updreq}{\mathcal{Q}}
\newcommand{\pubs}{p}
\newcommand{\query}{h}					
\newcommand{\median}{\mathrm{med}}			
\newcommand{\LS}{\mathrm{LS}}
\newcommand{\lsi}{c}					
\newcommand{\T}{T}					
\newcommand{\eqdef}{\mathrel{\mathop:}=}
\newcommand{\approxDP}[2]{\overset{#1,#2}{\approx}}		
\newcommand{\KL}[2]{\mathrm{KL}\left(#1\middle\Vert#2\right)}	
\newcommand{\Fis}[2]{\mathrm{I}\left(#1\middle\Vert#2\right)}	
\newcommand{\Was}[2]{\mathrm{W}_2\left(#1,#2\right)}
\newcommand{\Ren}[3]{\mathrm{R}_{#1}\left(#2\middle\Vert#3\right)}	
\newcommand{\Eren}[3]{\mathrm{E}_{#1}\left(#2\middle\Vert#3\right)}	
\newcommand{\Gren}[3]{\mathrm{I}_{#1}\left(#2\middle\Vert#3\right)}	
\newcommand{\TV}[1]{\mathbf{TV}\left(#1\right)}
\newcommand{\Test}{\text{Test}}
\newcommand{\Testb}{\overline{\text{Test}}}
\title{Forget Unlearning: Towards True Data-Deletion in \\ Machine Learning}
\author{Rishav Chourasia \& Neil Shah \\
	Department of Computer Science\\
	National University of Singapore\\
	\texttt{\{rishav1,neilshah\}@comp.nus.edu.sg} \\
}
\date{\today}
\begin{document}

\maketitle

\doparttoc 
\faketableofcontents 


\begin{abstract}
	    Unlearning algorithms aim to remove deleted data's influence from trained models at a cost lower than full retraining. However, prior guarantees of unlearning in literature are flawed and don't protect the privacy of deleted records. We show that when users delete their data as a function of published models, records in a database become interdependent. So, even retraining a fresh model after deletion of a record doesn't ensure its privacy. Secondly, unlearning algorithms that cache partial computations to speed up the processing can leak deleted information over a series of releases, violating the privacy of deleted records in the long run. To address these, we propose a sound deletion guarantee and show that the privacy of existing records is necessary for the privacy of deleted records. Under this notion, we propose an accurate, computationally efficient, and secure machine unlearning algorithm based on noisy gradient descent.
\end{abstract}


\section{Introduction}
\label{sec:intro}
Corporations today collect their customers' private information to train Machine Learning models that power a variety of services like recommendations, searches, targeted ads, etc. To prevent any unintended use of personal data, privacy policies, such as the \citet{GDPR16} (GDPR) and the \citet{CCPA18} (CCPA), require that these corporations provide the ``\emph{Right to be Forgotten}'' (RTBF) to their users---if a user wishes to revoke access to their data, an organization must comply by erasing all information about her without undue delay (typically a month). Critically, models trained in standard ways are susceptible to model inversion~\citep{fredrikson2015model} and membership inference attacks~\citep{shokri2017membership}, demonstrating that training data can be exfiltrated from these models.

Periodic retraining of models after excluding deleted users can be computationally expensive. Consequently, there is a growing interest in designing computationally cheap \emph{Machine Unlearning} algorithms as an alternative to retraining for erasing the influence of deleted data from trained models. Since it is generally difficult to tell how a specific data point affects a model, \citet{ginart2019making} propose quantifying the worst-case information leakage from an unlearned model through an \emph{unlearning guarantee} on the mechanism, defined as a \emph{differential privacy} (DP) like $(\eps, \del)$-indistinguishability between its output and that of retraining on the updated database. With some minor variations in this definition, several mechanisms have been proposed and certified as unlearning algorithms in literature~\citep{ginart2019making,izzo2021approximate,sekhari2021remember,neel2021descent,guo2019certified,ullah2021machine}.

However, \emph{is indistinguishability to retraining a sufficient guarantee of deletion privacy?} We argue that it is not. In the real world, a user's decision to remove his information is often affected by what a deployed model reveals about him. Unfortunately, the same revealed information may also affect other users' decisions. Such \emph{adaptive} requests make the records in a database interdependent. For instance, if an individual is identified to be part of the training set, many members of his group may request deletion. Therefore, the representation of different groups in the unlearned model can reveal the individual's affiliation, even when he is no longer part of the dataset. We demonstrate on mechanism certified under existing unlearning guarantees, including~\citet{gupta2021adaptive}'s adaptive unlearning, that the identity of the target record can be inferred from the unlearned model when requests are adaptive. Since it is possible that adaptive deletion requests can encode patterns specific to a target record in the curator's database, we argue that any deletion privacy certification via indistinguishability to retraining, as done in all prior unlearning definitions, is fundamentally flawed.


\emph{Is an unlearning guarantee a sound and complete measure of deletion privacy when requests are non-adaptive?} Again, we argue that it is neither. A sound deletion privacy guarantee must ensure the non-recovery of deleted records from an \emph{infinite} number of model releases after deletion. However, approximate indistinguishability to retraining implies an inability to accurately recover deleted data from a \emph{singular} unlearned model only, which we argue is not sufficient. We show that certain algorithms can satisfy an unlearning guarantee yet blatantly reveal the deleted data eventually over multiple releases. The vulnerability arises in algorithms that maintain partial computations in internal data-dependent states for speeding up subsequent deletions. These internal states can retain information even after record deletion and influence multiple future releases, making the myopic unlearning guarantee unreliable in sequential deletion settings. Several proposed unlearning algorithms in literature~\citep{neel2021descent, gupta2021adaptive} are stateful (rely on internal states) and, therefore, cannot be trusted. Secondly, existing unlearning definitions are incomplete notions of deletion privacy they exclude valid deletion mechanisms that do not imitate retraining. For instance, a (useless) mechanism that outputs a fixed untrained model on any request is a valid deletion algorithm. However, since its output is easily distinguishable from retraining, it fails to satisfy these unlearning guarantees.

This paper proposes a \emph{sound definition of data-deletion} that does not suffer from the aforementioned shortcomings. Under our paradigm, a data-deletion mechanism is reliable if {\bf A)} it is stateless, i.e., does not rely on any secret states that may be influenced by previously deleted records; and {\bf B)} it generates models that are indistinguishable from some deleted record independent random variable. Statelessness thwarts the danger of sustained information leakage through internal data structures after deletion. Moreover, by measuring its deletion privacy via indistinguishability with any deleted-record independent random variable, as opposed to the output of retraining, we ensure reliability in presence of adaptive requests that can create dependence between current and deleted records in the database. 

In general, we show that \emph{under adaptive requests, any data-deletion mechanism must be privacy-preserving with respect to existing records to ensure the privacy of deleted records.} Privacy with respect to existing records is necessary to prevent adaptive requests from creating any unwanted correlations among present and absent database entries that prevents deletion of records in an information theoretic sense. We also prove that if a mechanism is differentially private with respect to the existing records and satisfies our data-deletion guarantee under non-adaptive edit requests, then it also satisfies a data-deletion guarantee under adaptive requests. That is, we prove \emph{a general reduction for our sound data-deletion guarantee under non-adaptive requests to adaptive requests when the unlearning mechanism is differentially private with respect to records not being deleted}. We emphasize that we are not advocating for doing data deletion through differentially-private mechanisms simply because it caps the information content of all records equally, deleted or otherwise. Instead, a data-deletion mechanism should provide two differing information reattainment bounds; one for records currently in the database in the form of a differential privacy guarantee, and the other for records previously deleted in the form of a non-adaptive data-deletion guarantee, as these two information bounds together ensure deletion privacy under adaptive requests as well. 

Based on our findings, we redefine the problem of data-deletion as designing a mechanism that {\bf (1.)} satisfies a data-deletion guarantee against non-adaptive deletion requests, {\bf (2.)} is differentially private for remaining records, and {\bf (3.)} has the same utility guarantee as retraining under identical differential privacy constraints. On top of these objectives, a data-deletion mechanism must also be computationally cheaper than retraining for being useful. We propose a data-deletion solution based on Noisy Gradient Descent (Noisy-GD), a popular differentially private learning algorithm~\citep{bassily2014private,abadi2016deep}, and show that our solution satisfies all the three objectives while providing substantial computational savings for both convex and non-convex losses. Our solution demonstrates a powerful synergy between data deletion and differential privacy as the same noise needed for the privacy of records present in the database also rapidly erases information regarding records deleted from the database. For convex and smooth losses, we certify that under a $(\q, \epsdd)$-R\'enyi data-deletion and $(\q,\epsdp)$-R\'enyi DP constraint, our Noisy-GD based deletion mechanism for $\dime$-dimensional models over $\n$-sized databases with requests that modify no more than $\reps$ records can maintain a tight optimal excess empirical risk of the order $O \big(\frac{\q\dime}{\epsdp\n^2}\big)$ while being $\Omega(\n\log(\min\{\frac{\n}{\reps}, \n\sqrt{\frac{\epsdd}{\q\dime}}\})$ cheaper than retraining in gradient complexity. For non-convex, bounded and smooth losses we show a computational saving of $\Omega(\dime\n\log\frac{\n}{\reps})$ in gradient complexity while satisfying the three objectives with an excess risk bound of $\tilde O\big(\frac{\q\dime}{\epsdp\n^2} + \frac{1}{\n} \sqrt{\frac{\q}{\epsdp}}\big)$. Compared to our results, prior works have a worse computation saving under same utility constraints~\citep{izzo2021approximate, guo2019certified, sekhari2021remember, ullah2021machine}, do not satisfy differential privacy~\citep{bourtoule2021machine,gupta2021adaptive}, or require unsafe internal data structures for matching our utility bounds~\citep{neel2021descent}.

\section{Preliminaries}
\label{sec:prelim}

\subsection{Indistinguishability and Differential Privacy}
\label{ssec:indistinguishability_notions}

We provide the basics of indistinguishability of random variables (with more details in Appendix~\ref{ssec:appendix_indistinguishability}). Let $\Thet, \Thet'$ be two random variables in space $\C$ with densities $\nU, \nU'$ respectively. 
\begin{definition}[$(\eps, \del)$-indistinguishability~\citep{dwork2014algorithmic}]
	We say $\Thet$ and $\Thet'$ are $(\eps,\del)$-\emph{indistinguishable} (denoted by $\Thet \approxDP{\eps}{\del} \Thet'$) if, for all $\Out \subset \C$, 
	\begin{align}
			\prob{}{\Thet \in \Out} \leq e^\eps \prob{}{\Thet' \in \Out} + \del \quad \text{and} \quad \prob{}{\Thet' \in \Out} \leq e^\eps \prob{}{\Thet \in \Out} + \del.
	\end{align}
\end{definition}
\begin{definition}[R\'enyi divergence~\citep{renyi1961measures}]
    \label{dfn:renyi}
    \emph{R\'enyi divergence} of $\nU$ w.r.t. $\nU'$ of order $\q > 1$ is defined as
    \begin{equation}
        \label{eqn:renyi_dfn_main}
        \Ren{\q}{\nU}{\nU'} = \frac{1}{\q - 1} \log \Eren{\q}{\nU}{\nU'}, \quad
        \text{where} \quad \Eren{\q}{\nU}{\nU'} = \expec{\thet \sim \nU'}{\left(\frac{\nU(\thet)}{\nU'(\thet)}\right)^\q},
    \end{equation}
    when $\nU$ is \emph{absolutely continuous} w.r.t. $\nU'$ (denoted as $\nU \ll \nU'$). If $\nU \not\ll \nU'$, we'll say $\Ren{\q}{\nU}{\nU'} = \infty$. 
\end{definition}
\begin{remark}
	\label{rem:renyi_onesided}
	R\'enyi divergence is asymmetric $({\Ren{\q}{\nU}{\nU'} \neq \Ren{\q}{\nU'}{\nU}})$. \citet[Proposition 3]{mironov2017renyi} show that ${\Ren{\q}{\nU}{\nU'} \leq \eps_0}$ implies indistinguishability only in one direction, i.e., for any $\Out \subset \C$, we have ${\prob{}{\Thet \in \Out} \leq e^\eps \prob{}{\Thet' \in \Out} + \del}$, where $\eps = \eps_0 + \frac{\log 1/\del}{\q - 1}$ for any $0 < \del < 1$.
\end{remark}
\begin{definition}[(R\'enyi) Differential Privacy~\citep{dwork2014algorithmic, mironov2017renyi}]
	A randomized mechanism ${\mathcal{M}:\X^\n\rightarrow\C}$ is said to be $(\eps, \del)$-\emph{differentially private} if ${\mathcal{M}(\D) \approxDP{\eps}{\del} \mathcal{M}(\D')}$ for all \emph{neighbouring} databases ${\D, \D' \in \X^\n}$. Similarly, $\mathcal{M}$ is $(\q, \eps)$-\emph{R\'enyi differentially private} if $\Ren{\q}{\mathcal{M}(\D)}{\mathcal{M}(\D')} \leq \eps$.
\end{definition}

\subsection{(Adaptive) Machine Unlearning}
\label{ssec:unlrn}

Let $\X$ be the data domain. A database $\D$ is an ordered set of $\n$ records from $\X$. We use $\C$ to denote the space of models. A \emph{learning algorithm} ${\Lrn: \X^\n \rightarrow \C}$ inputs a database ${\D \in \X^\n}$ and returns a (possibly random) model in $\C$. \citet{ginart2019making} defines a \emph{data deletion operation} for a \emph{machine learning} algorithm $\Lrn$ as follows. 
\begin{definition}[Data deletion operation~{\citep{ginart2019making}}]
	\label{def:ginart}
	Algorithm $\Unlrn:\X^\n \times \X^\reps \times \C \rightarrow \C$ is a \emph{data deletion operation} for a learning algorithm $\Lrn: \X^\n \rightarrow \C$ if for any database $\D \in \X^\n$, $\Unlrn(\D, S, \Lrn(\D)) \approxDP{0}{0} \Lrn(\D \setminus S)$ for all subset $S \subset \D$ of size $\reps$ selected \emph{independently} of $\Lrn(\D)$.
\end{definition}
%
%

In our paper, we mostly consider batched replacement\footnote{\footnotesize We consider replacements instead of deletion operations to ensure that database size doesn't change.} \emph{edit requests} as stated below.

%
\begin{definition}[Edit request]
	A \emph{replacement operation} $\langle \ind, \y \rangle \in [\n] \times \X$ on a database ${\D = (\x_1, \cdots, \x_n) \in \X^\n}$ performs the following modification:
	\begin{equation}
		\D \circ \langle \ind, \y \rangle = (\x_1, \cdots, \x_{\ind-1}, \y, \x_{\ind+1}, \cdots, \x_\n).
	\end{equation}
	Let $\reps \leq \n$ and $\U^\reps = [\n]^\reps_{\neq} \times \X^\reps$. An \emph{edit request} $\up = \{\langle \ind_1, \y_1\rangle, \cdots, \langle \ind_\reps, \y_\reps\rangle \} \in \U^\reps$ on $\D$ is defined as batch of $\reps$ replacement operations modifying distinct indices atomically, i.e. 
	\begin{equation}
		\D \circ \up = \D \circ \langle \ind_1, \y_1\rangle \circ \cdots \circ \langle \ind_\reps, \y_\reps\rangle,
	\end{equation}
	where $\ind_i \neq \ind_j$ for all $i \neq j$.
\end{definition}

Similar to~\citet{ginart2019making}, we define a \emph{deletion} or an \emph{unlearning algorithm} as a (possibly stochastic) mapping ${\Unlrn: \X^\n \times \U^\reps \times \C \rightarrow \C}$ that takes in a database $\D \in \X^\n$, an edit request $\up \in \U^\reps$ and the current model in $\C$, and outputs an updated model in $\C$. We adopt the online setting of~\citet{neel2021descent} in which a stream of edit requests $(\up_i)_{i\geq1} \stackrel{\mathrm{def}}{=} (\up_1, \up_2, \cdots)$, with $\up_i \in \U^\reps$, arrive in sequence. In this formulation, the \emph{data curator} (characterized by $(\Lrn, \Unlrn)$) executes the learning algorithm $\Lrn$ on the initial database $\D_0 \in \X^\n$ during the setup stage before arrival of the first edit request to generate the initial model $\hat\Thet_0 \in \C$, i.e. $\hat\Thet_0 = \Lrn(\D_0)$. Thereafter at any edit step $i \geq 1$, to reflect an incoming edit request $\up_i \in \U^\reps$ that transforms $\D_{i-1} \circ \up_i \rightarrow \D_i$, the curator executes the unlearning algorithm $\Unlrn$ on current database $\D_{i-1}$, the edit request $\up_i$, and the current model $\hat\Thet_{i-1}$ for generating the next model $\hat\Thet_i \in \C$, i.e. $\Unlrn(\D_{i-1}, \up_i, \hat\Thet_{i-1}) = \hat\Thet_{i-1}$. Furthermore, the curator keeps secret the sequence $(\hat\Thet_i)_{i\geq0} \stackrel{\mathrm{def}}{=} (\hat\Thet_0, \hat\Thet_1, \cdots)$ of (un)learned models, only releasing publishable objects $\pub_i = \publish(\hat\Thet_i)$, for all $i \geq 0$, generated using a publish function $\publish: \C \rightarrow \Pub$. Here $\Pub$ is the space of publishable objects like model predictions on an external dataset or noisy model releases.

\citet{ginart2019making} note that the assumption of independence between a deletion request and the preceding model in Definition~\ref{def:ginart} might not always hold. In real world, deletion requests could often be \emph{adaptive}, i.e., may depend on the prior published objects. For instance, security researchers may demonstrate privacy attacks targeting minority subpopulation on publicly available models, causing people in that subpopulation to request deletion of their information from training data. \citet{gupta2021adaptive} model such an interactive environment through an \emph{adaptive update requester}. We provide the following generalized definition of an \citet{gupta2021adaptive}'s update requester and describe its interaction with a curator in Algorithm~\ref{alg:interact}.

%
\begin{definition}[Update requester~\citep{gupta2021adaptive}]
	\label{dfn:updreq}
	The update sequence $(\up_i)_{i\geq1}$ is generated by an update requester $\updreq$ that inputs a subset of interaction history between herself and the curator $(\Lrn, \Unlrn)$, and outputs a new edit request for the current round. We quantify the strength of $\updreq$ with two integers $(\pubs, \reps)$. Here $\pubs$ is the maximum number of prior published objects that the requester $\updreq$ has access to for generating the subsequent request and $\reps$ is the number of records that can be edited per request. More formally, a \emph{$\pubs$-adaptive $\reps$-requester} is a mapping ${\updreq: \Pub^{\leq\pubs} \times \U^{\reps *} \rightarrow \U^\reps}$. Given a sorted list of observable indices $\vec{s} = (s_1, \cdots, s_\pubs) \in \N^\pubs$ the $i^{\text{th}}$ edit request $\up_i$ generated by $\updreq$ on interaction with $(\Lrn, \Unlrn)$ is defined as
	\vspace{-0.1cm}
	\begin{equation}
		\up_i = \updreq(\underbrace{\pub_{s_1}, \pub_{s_2}, \cdots, \pub_{s_j}}_{\stackrel{\mathrm{def}}{=}\pub_{\vec{s} < i}}; \underbrace{\up_1, \up_2, \cdots, \up_{i-1}}_{\stackrel{\mathrm{def}}{=} \up_{< i}}), 
	\end{equation}
	where $s_j$ is the largest index in $\vec{s}$ that is less than $i$.

\end{definition}
\vspace{-0.4cm}
\begin{algorithm}[htbp!]
	\caption{Interaction between $(\Lrn, \Unlrn)$ and requester $\updreq$.}
        \label{alg:interact}
        \begin{algorithmic}[1]
		\REQUIRE Database $\D_0 \in \X^\n$, observable indices $\vec{s} \in \N^{\pubs}$.
		\STATE {Initialize $\hat\Thet_0 \leftarrow \Lrn(\D_0)$}
		\STATE {Publish $\pub_0 \leftarrow \publish(\hat\Thet_0)$}
		\FOR{$i = 1, 2, \cdots$}
			\STATE {Get next request $\up_i \leftarrow \updreq(\pub_{\vec{s} < i}; \up_{<i})$}
			\STATE {Update model $\hat\Thet_i \leftarrow \Unlrn(\D_{i-1}, \up_i, \hat\Thet_{i-1})$}
			\STATE {Publish $\pub_i \leftarrow \publish(\hat\Thet_i)$}
			\STATE {Update database $\D_i \leftarrow \D_{i-1} \circ \up_i$}
		\ENDFOR
	\end{algorithmic}
\end{algorithm}
\vspace{-0.3cm}
We denote $0$-adaptive requesters as non-adaptive and by $\infty$-adaptivity we mean requesters that have access to the entire history of interaction transcript $(\pub_{<i};\up_{<i})$ at step $i$. \citet{neel2021descent} and \citet{gupta2021adaptive} define non-adaptive and adaptive unlearning\footnote{\footnotesize Definition~\ref{dfn:unlearning} of adaptive unlearning is stronger than \citet{gupta2021adaptive}'s since theirs require only one-sided indistinguishability with $(1 - \gamma)$ probability over generated edit requests $\up_{\leq i}$.} as follows.

\begin{definition}[(Adaptive) machine unlearning~\citep{neel2021descent,gupta2021adaptive}]
\label{dfn:unlearning} We say that $\Unlrn$ is a $(\eps, \del)$-\emph{non-adaptive-unlearning} algorithm for $\Lrn$ under a publish function $\publish$, if for all initial databases $\D_0 \in \X^\n$ and all non-adaptive $1$-requesters $\updreq$, the following condition holds. For every edit step $i \geq 1$, and for all generated edit sequences ${\up_{\leq i} \stackrel{\mathrm{def}}{=} (\up_1, \cdots, \up_i)}$,
\begin{equation}
	\label{eqn:unlearn_neel}
	\publish(\Unlrn(\D_{i-1}, \up_i, \hat\Thet_{i-1})) \big|_{\up_{\leq i}} \approxDP{\eps}{\del} \publish(\Lrn(\D_i)).
\end{equation}
If \eqref{eqn:unlearn_neel} holds for all $\infty$-adaptive $1$-requesters $\updreq$, we say that $\Unlrn$ is an $(\eps, \del)$-\emph{adaptive-unlearning} algorithm for $\Lrn$.
\end{definition}

\section{Existing Deletion Definitions are Unsound and Incomplete}
\label{sec:deletion_vs_unlearn}


In this section we investigate the failure of prior definitions of certified data-deletion proposed for ensuring the "Right to be Forgotten" (RTBF) guideline. In particular, we shed light on multiple reasons why both adaptive and non-adaptive machine unlearning as described in Definition~\ref{dfn:unlearning} and several other definitions in literature are flawed notions of data deletion for enforcing RTBF. 

{\bf Threat model.} Suppose, for an arbitrary step $i \geq 1$, an adversary is interested in finding out the identity of a record in the database $\D_{i-1}$ that is being replaced by edit request $\up_i$. Our adversary only has access to releases by the curator \emph{after deletion}\footnote{\footnotesize Our threat model doesn't include attacks which involve comparing releases before and after deletion (such as~\citet{chen2021machine}'s). These types of attacks are not covered by RTBF as they rely on information published before deletion request.}, i.e. the infinite post-deletion sequence $\pub_{\geq i} \stackrel{\mathrm{def}}{=} (\pub_i, \pub_{i+1}, \cdots)$. The adversary in our threat model is assumed to also have some understanding of how users may react to published data, but does not have any access to the published data or user requests. That is, our adversary has some knowledge about the \emph{dependence relationship} between the published objects random variables $\pub_{<i} \stackrel{\mathrm{def}}{=} (\pub_0, \cdots, \pub_{i-1})$ and the corresponding edit requests random variables $\up_{<i} \stackrel{\mathrm{def}}{=} (\up_1, \cdots, \up_{i-1})$, but does not observe these random variables. For example, adversary may know that if a certain individual is identified in the published object $\pub_0$, many members of his group will request for deletion $\up_1$. Our adversary can use her knowledge about this dependence to infer the affiliation of a deleted individual by evaluating the representations of different groups in subsequent release $\pub_1$. To capture the worst-case scenario, we model the adversary as having the ability to design an adaptive requester $\updreq$ (as defined in Definition~\ref{dfn:updreq}) that interacts with the data curator in previous $i-1$ steps, but the adversary does not have access to the interaction transcript $(\pub_{<i}; \up_{<i})$ of $\updreq$. 

{\bf Unsoundness due to adaptivity.} We highlight the problem with unlearning on adaptive requests with a simple example. For a data domain $\X = \{-2, -1, 1, 2\}$, consider the following algorithm pair $(\Lrn, \Unlrn)$ for any database $\D \subset \X$ and deletion $S \subset \D$.
\begin{equation}
	\Lrn(\D) = \sum_{\x \in \D} \x, \quad \text{and} \quad \Unlrn(\D, S, \Lrn(\D)) = \sum_{\x \in \D \setminus S} \x.
\end{equation}
Note that pair $(\Lrn, \Unlrn)$ satisfies \citet{ginart2019making}'s Definition~\ref{def:ginart} of a data deletion operation as for any $\D \subset \X$ and any $S \subset \D$, we have $\Unlrn(\D, S, \Lrn(\D)) = \Lrn(\D \setminus S)$. Now consider two neighbouring databases $\D = \{-2, -1, 2\}$, $\D' = \{-2, 1, 2\}$ and the following dependence between the learned model $\Lrn(\bar\D)$ and deletion request $S$:
\begin{equation}
	S = \begin{cases} \{\x \in \X | \x < 0\} &\text{if}\ \Lrn(\bar\D) < 0, \\ \{\x \in \X | \x \geq 0\} &\text{otherwise.}\end{cases}
\end{equation}
Knowing this dependence, an adversary can distinguish whether $\bar\D$ is $\D$ or $\D'$ by looking only at $\Unlrn(\bar\D, \x, \Lrn(\bar\D))$. This is because if $\bar\D = \D$, then the output after deletion is $2$, and if $\bar\D = \D'$ the output is $-2$. Note that even though $\Unlrn$ perfectly imitates retraining via $\Lrn$ and the adversary does not observe either the model $\Lrn(\bar\D)$ or the request $S$, she can still ascertain the identity ($-1$ or $1$) of a deleted record.

\citet{ginart2019making}'s Definition~\ref{def:ginart} isn't suited for adaptive deletion as it explicitly requires deletion requests to be selected independently of the learned model. However, \citet{gupta2021adaptive}'s Definition~\ref{dfn:unlearning} of an adaptive-unlearning algorithm seeks to ensure RTBF specifically when requests could be adaptive. In the following theorem, we show using a similar construction that algorithms certified to be $(0,0)$-adaptive-unlearning can still blatantly violate privacy of deleted records under adaptivity.


%
\begin{theorem}
	\label{thm:adaptive_violation}
	There exists an algorithm pair $(\Lrn, \Unlrn)$ satisfying $(0, 0)$-adaptive-unlearning under publish function $\publish(\thet) = \thet$ such that by designing a $1$-adaptive $1$-requester $\updreq$, an adversary can infer the identity of a record deleted by edit $\up_i$, at any arbitrary step $i > 3$, with probability at-least $1 - (1/2)^{i-3}$ from a single post-edit release $\pub_i$, even with no access to $\updreq$'s transcript $(\pub_{<i}; \up_{<i})$.
\end{theorem}
In Appendix~\ref{app:two_stage_unlearning} we show that other unlearning definitions in literature, like that of~\citet{sekhari2021remember} and~\citet{guo2019certified}, are also unsound under adaptivity.

{\bf Unsoundness due to secret states.} Both adaptive and non-adaptive unlearning guarantees in Definition~\ref{dfn:unlearning} are bounds on information leakage about a deleted record through a \emph{single released output}. However, our adversary can observe multiple (potentially infinite) releases after deletion. We identify a yet another reason for violation of RTBF under Definition~\ref{dfn:unlearning}, even when edit requests are non-adaptive. This vulnerability arises because Definition~\ref{dfn:unlearning} permits the curator to store secret models while requiring indistinguishability only over the output of a publishing function $\publish$. These secret models may propagate encoded information about records even after their deletion from the database. So, every subsequent release by an unlearning algorithm can reveal new information about a record that was purportedly erased multiple edits earlier. We demonstrate in the following theorem that a certified unlearning algorithm can reveal a limited amount of information about a deleted record per release so as not to break the unlearning certification, yet eventually reveal everything about the record to an adversary that observes enough future releases.
\begin{theorem}
\label{thm:unlrn_vs_forget}
For every $\eps > 0$, there exists a pair $(\Lrn, \Unlrn)$ of algorithms that satisfy $(\eps, 0)$-non-adaptive-unlearning under some publish function $\publish$ such that for all non-adaptive $1$-requesters $\updreq$, their exists an adversary that can correctly infer the identity of a record deleted at any arbitrary edit step $i \geq 1$ by observing only the post-edit releases $\pub_{\geq i}$.
\end{theorem}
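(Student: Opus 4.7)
The plan is to construct a stateful pair $(\Lrn, \Unlrn)$ that quietly preserves every deleted record in a growing internal \emph{cemetery} while revealing only one randomized-response bit per release, so that each individual release is $(\eps,0)$-indistinguishable from fresh retraining but the full stream of releases allows exact reconstruction. Take $\X = \{0,1\}^L$, let each state be $\hat\Thet = (\D, H, j)$ with $H$ an ordered list of all previously deleted records and $j$ a step counter; set $\Lrn(\D_0) = (\D_0, [\,], 0)$, and let $\Unlrn(\D_{j-1},\up_j,\hat\Thet_{j-1})$ apply $\up_j$ to the database, append the replaced record $\x_j$ to $H$, and increment $j$. Fix a publicly known mapping $\sigma: \N \to \N \times [L]$, $\sigma(j)=(a_j,b_j)$, with $a_j \leq j$ for all $j$, arranged so that every pair $(a,b) \in \N \times [L]$ has infinite preimage under $\sigma$ (for instance, enumerate all pairs $(a,b) \in [r] \times [L]$ in successive rounds $r = 1, 2, \ldots$). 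Let $\publish$ return a uniformly random bit when $H$ is empty, and otherwise the $\eps$-randomized response $\mathrm{RR}_\eps(H[a_j][b_j])$ that keeps the input bit with probability $\tfrac{e^\eps}{e^\eps+1}$ and flips it otherwise.

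The unlearning certification is then a one-bit randomized-response calculation. Fix any non-adaptive $1$-requester, any step $i$, and condition on $\up_{\leq i}$; the state $\hat\Thet_i$ produced by $i$ applications of $\Unlrn$ is deterministic, so $\publish(\Unlrn(\D_{i-1},\up_i,\hat\Thet_{i-1}))$ is Bernoulli, equal to the fixed bit $H[a_i][b_i]$ with probability $\tfrac{e^\eps}{e^\eps+1}$, while $\publish(\Lrn(\D_i))$ is a fair coin (retraining has empty history). The two pointwise likelihood ratios are bounded by $\tfrac{2e^\eps}{e^\eps+1}$ and $\tfrac{e^\eps+1}{2}$, both at most $e^\eps$ for every $\eps>0$, verifying the $(\eps,0)$-indistinguishability required by Definition~\ref{dfn:unlearning}. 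For the adversary attacking a target step $i$, she knows $\sigma$ and the arrival order of releases, so she can partition her observations $\pub_{\geq i}$ into $L$ subsamples $J_k = \{j \geq i : \sigma(j) = (i,k)\}$, each infinite by the choice of $\sigma$ and consisting of i.i.d.\ $\mathrm{RR}_\eps$ copies of $\x_i[k]$. By the strong law of large numbers, the empirical frequency in $J_k$ converges almost surely to $\tfrac{e^\eps}{e^\eps+1}$ if $\x_i[k]=0$ and to $\tfrac{1}{e^\eps+1}$ otherwise; thresholding at $\tfrac{1}{2}$ recovers each bit a.s., and assembling the $L$ bits recovers $\x_i$.

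The main conceptual hurdle is the tension between single-release privacy and cumulative leakage that the schedule $\sigma$ orchestrates: each released bit must be close to a record-independent distribution to satisfy $(\eps,0)$-unlearning, yet enough releases must uniquely determine every deleted record. Randomized response reconciles these demands exactly, because a single RR bit is $(\eps,0)$-close to a uniform bit while an infinite i.i.d.\ RR sequence on the same underlying bit concentrates on it by SLLN; $\sigma$ funnels that concentration record-by-record and bit-by-bit, and the side condition $a_j \leq j$ guarantees that the RR always targets a record actually present in the history so the mechanism is well defined on every reachable state. The construction is deliberately minimal, but it already demonstrates that any unlearning notion phrased solely as a bound on a single release, no matter how tight, cannot on its own prevent recovery of deleted data when the algorithm is permitted to retain internal state across releases.
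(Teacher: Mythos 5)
Your construction is correct and establishes the theorem, but it is genuinely different from the paper's. The paper's $(\Lrn,\Unlrn)$ answers a counting query: the secret state $\langle \mathrm{cnt}_i,\mathrm{del}_i,i\rangle$ holds only an aggregate of the deleted records, $\publish$ adds the bounded bias $\mathrm{del}_i/i$ plus Laplace noise, the Laplace mechanism gives $(\eps,0)$-unlearning, and the adversary's power is argued by showing $\sum_{j\geq i}\eps/j = \infty$, i.e. the worst-case log-likelihood ratio over $\pub_{\geq i}$ diverges. You instead keep a full ``cemetery'' $H$ of every deleted record, publish one $\eps$-randomized-response bit of one record per release along a fixed schedule $\sigma$, certify unlearning by a one-bit RR-vs-uniform likelihood-ratio calculation, and recover the deleted record by grouping releases into infinite i.i.d.\ subsamples $J_k$ and invoking the strong law of large numbers bit by bit. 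Your adversary argument is in fact the sharper of the two: it exhibits an explicit decoder that reconstructs $\x_i$ exactly with probability one. The paper's argument establishes that the accumulated budget diverges, but the per-release Laplace shifts are only $1/j$, so the squared Hellinger distances are $\Theta(1/j^2)$ and summable, and by Kakutani's dichotomy the two infinite release streams remain mutually absolutely continuous --- so divergence alone does not yield a zero-error decoder. Your RR schedule keeps the per-release signal on the target record at $\Theta(1)$ infinitely often, which is exactly what lets SLLN close the gap to almost-sure identification. The one cosmetic slip is in the final SLLN sentence: the empirical frequency of the published bit converges to $\tfrac{e^\eps}{e^\eps+1}$ when $\x_i[k]=1$ (not $0$), but thresholding at $1/2$ is unaffected.
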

Although \citet{ginart2019making}'s Definition~\ref{def:ginart} (and those of \citet{guo2019certified} and \citet{sekhari2021remember}) directly release the (un)learned models without applying any explicit publish function $\publish$, the above vulnerability might still arise in the online setting if a certified deletion operation relies on data-dependent states that aren't updated with database. We remark that \citet{ginart2019making}, in their online formulation, permits a deletion operation to maintain "arbitrary metadata like data structures or partial computations that can be leveraged to help with subsequent deletions", and so could be susceptible to the vulnerability we describe. 
%

{\bf Incompleteness.} 
Another shortcoming with existing unlearning definitions is that many valid deletion algorithms may fail to satisfy them. For instance, consider a (useless) mechanism $\Unlrn$ that outputs a fixed untrained model in $\thet \in \C$ regardless of its inputs. It is easy to see that $\Unlrn$ is a valid deletion algorithm for any learning algorithm as $\Unlrn(\cdot)$ (and therefore $\publish(\Unlrn(\cdot))$ for any $\publish$) does not depend on the input database or the learned model. However, this $\Unlrn$ does not satisfy Definition~\ref{def:ginart} or Definition~\ref{dfn:unlearning} for any reasonable learning algorithm $\Lrn$. In Appendix~\ref{app:two_stage_unlearning} we show that unlearning definitions of~\citet{guo2019certified} and~\citet{sekhari2021remember} are also incomplete.

\section{Redefining Deletion in Machine Learning}
\label{sec:deletion}

In this section, we redefine \emph{data deletion in Machine Learning} to address the problems with existing notions of unlearning that we demonstrate in the preceding section. The first change we propose is to rule out the possibility of information leakage through internal data structures (as shown in Theorem~\ref{thm:unlrn_vs_forget}) by requiring deletion algorithms to be \emph{stateless}. That is, a data-deletion algorithm $\Unlrn$ cannot depend on any secret data-dependent states and the generated (un)learned models are released without applying any publish function $\publish$ (or equivalently, by allowing only an identity publish function $\publish(\thet) = \thet$ in Algorithm~\ref{alg:interact}). 

Secondly, we propose a definition of \emph{data-deletion} that fixes the security blindspot of existing unlearning guarantees. As demonstrated in Section~\ref{sec:deletion_vs_unlearn}, adaptive requests can encode patterns specific to a target record in the database which persists even after deletion of the target record, making any indistinguishable-to-retraining based data-deletion guarantee unreliable.
In the following definition, we account for the worst-case influence of adaptive requests by measuring the indistinguishability of a data-deletion mechanism's output from that of some mechanism that never sees the deleted record or edit requests influenced by it.

%
\begin{definition}[$(\q,\eps)$-data-deletion under $\pubs$-adaptive $\reps$-requesters]
\label{dfn:deletion}
Let $\q > 1$, $\eps \geq 0$, and $\pubs, \reps \in \N$. We say that an algorithm pair $(\Lrn,\Unlrn)$ satisfies \emph{$(\q, \eps)$-data-deletion under $\pubs$-adaptive $\reps$-requesters} if the following condition holds for all $\pubs$-adaptive $\reps$-requester $\updreq$. For every step $i\geq1$, there exists a randomized mapping $\pI^\updreq_i: \X^\n \rightarrow \C$ such that for all initial databases $\D_0 \in \X^\n$,
\begin{equation}
	\label{eqn:deletion}
	\Ren{\q}{\Unlrn(\D_{i-1}, \up_i, \hat\Thet_{i-1})}{\pI^\updreq_i(\D_0 \circ \langle \ind, \y \rangle)} \leq \eps, 
\end{equation}
for all $\up_i \in \U^\reps$ and all $\langle \ind, \y \rangle \in \up_i$.
\end{definition}
We prove that the above definition is a sound guarantee of RTBF. Suppose that an adversary is interested in identifying a record at index `$\ind$' in $\D_0$ that is being replaced with record $\y \in \X$ by one of the replacement operations in edit request $\up_i \in \U^\reps$. Note that random variable $\pI^\updreq_i(\D_0 \circ \langle\ind,\y\rangle)$ contains no information about the deleted record $\D_0[\ind]$. Inequality~\eqref{eqn:deletion} above implies that even with the power of designing an adaptive requester $\updreq$, no adversary observing the unlearned model $\Unlrn(\D_{i-1}, \up_i, \hat\Thet_{i-1})$ can be too confident that the observation was \emph{not} $\pI^\updreq_i(\D_0 \circ \langle\ind,\y\rangle)$ instead (cf. Remark~\ref{rem:renyi_onesided}). We support this argument with the following guarantee.

%
\begin{theorem}[Data-deletion Definition~\ref{dfn:deletion} is sound]
	\label{thm:soundness}
	If the algorithm pair $(\Lrn, \Unlrn)$ satisfies $(\q, \eps)$-data-deletion guarantee under all $\pubs$-adaptive $\reps$-requesters, then even with the power of designing an $\pubs$-adaptive $\reps$-requester $\updreq$ that interacts with the curator before deletion of a target record at any step $i\geq1$, any adversary observing only the post-deletion models $(\hat\Thet_i, \hat\Thet_{i+1}, \cdots)$ has its membership inference advantage for inferring a deleted target bounded as 
\begin{equation}
	\label{eqn:adv_sound}
	\text{Adv}(\text{MI}) \leq \min\left\{\sqrt{2\eps},\frac{\q e^{\eps(\q-1)/\q}}{\q - 1}[2(\q - 1)]^{1/\q} - 1\right\}.
\end{equation}
\end{theorem}
As $\q \rightarrow \infty$, the bound in \eqref{eqn:adv_sound} approaches $\min\left\{\sqrt{2\eps}, e^\eps - 1\right\}$. 
Note that the bound in \eqref{eqn:adv_sound} approaches $0$ as $\eps \rightarrow 0$, so Definition~\ref{dfn:deletion} is sound. 

\begin{remark}[Data-deletion generalizes prior unlearning definitions under non-adaptivity]
	\label{rem:non_adap_independence}
	A non-adaptive requester $\updreq$ is equivalent to fixing the request sequence $(\up_i)_{i\geq1}$ a-priori. Since for any $\langle \ind, \y \rangle \in \up_i$, database $\D_{i-1} \circ \up_i = (\D_0 \circ \langle \ind, \y \rangle) \circ \up_1 \circ \cdots \circ \up_i$, note that database ${\D_{i-1}\circ\up_i}$ is a function of $\D_0 \circ \langle \ind, \y \rangle$ given a non-adaptive $\updreq$. So if $\updreq$ is non-adaptive, we can set ${\pI^\updreq_i(\D_0 \circ \langle \ind, \y \rangle) = \pI(\D_{i-1}\circ\up_i)}$ in \eqref{eqn:deletion} for any randomized map ${\pI: \X^\n \rightarrow \C}$, including the learning algorithm $\Lrn$.
\end{remark}


\subsection{Link to Differential Privacy}
A DP guarantee on $\Lrn$ and $\Unlrn$ is a bound on the information contained in an (un)learned model about individual records present in a database. From the post-processing and composition property, a DP guarantee also bounds the worst case dependence that adaptive requests can introduce between a target record and rest of the database. By virtue of this property, we show a reduction from adaptive to non-adaptive data-deletion in the following theorem when $\Lrn$ and $\Unlrn$ also satisfy R\'enyi DP.

\begin{theorem}[From adaptive to non-adaptive deletion]
	\label{thm:reduction}
	If an algorithm pair $(\Lrn, \Unlrn)$ satisfies $(\q, \epsdd)$-data-deletion under all non-adaptive $\reps$-requesters and is also $(\q, \epsdp)$-R\'enyi DP with respect to records not being deleted, then it also satisfies $(\q, \epsdd + \pubs \epsdp)$-data-deletion under all $\pubs$-adaptive $\reps$-requesters.
\end{theorem}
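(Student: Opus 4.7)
The plan is to lift the non-adaptive data-deletion guarantee to the adaptive setting by coupling the real process with a ``ghost'' process in which the target record is excised, and then to bound their R\'enyi divergence using the chain rule for R\'enyi divergence together with the $(\q,\epsdp)$-RDP guarantee.

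To set up, I fix an $\pubs$-adaptive $\reps$-requester $\updreq$, initial database $\D_0$, edit step $i$, and realised request $\up = \{\langle \ind_\up, \y_\up \rangle\}$ with target $x^{*} = \D_{i-1}[\ind_\up]$. I construct a ghost process on a neighbouring database $\tilde\D_0$ (or, if $x^{*}$ was introduced via a prior edit, a coupled process in which that edit's replacement value is swapped to an arbitrary $x'\in\X$), running the same requester $\updreq$ and sharing the internal randomness of $\Lrn$ and $\Unlrn$. Since the ghost process never depends on $x^{*}$, its observations $\tilde{\mathbf{O}}$, induced edit sequence, intermediate state $\hat{\tilde\Thet}_{i-1}$, and ghost database $\tilde\D_{i-1}$ are all independent of $x^{*}$. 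I then define $\pI^\up_i$ as the deleted-record-independent distribution promised by the $(\q,\epsdd)$-non-adaptive data-deletion guarantee applied to the ghost process at step $i$ with request $\up$; by construction $\pI^\up_i \perp x^{*}$, satisfying Definition~\ref{dfn:deletion}.

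To bound the divergence, I invoke the chain rule
\[
\Ren{\q}{P_{X,Y}}{Q_{X,Y}} \leq \Ren{\q}{P_X}{Q_X} + \sup_{x}\Ren{\q}{P_{Y\mid X=x}}{Q_{Y\mid X=x}},
\]
with $X$ the realised request sequence under the coupled real/ghost processes and $Y = \Unlrn(\D_{i-1},\up,\hat\Thet_{i-1})$ the final output; post-processing to the marginal on $Y$ then delivers the stated bound. The request sequence is a deterministic post-processing of only the $\pubs$ observed releases (since $\updreq$ is $\pubs$-adaptive), and by adaptive RDP composition over these $\pubs$ releases---each $\epsdp$-RDP under the one-record change defining the ghost---I obtain $\Ren{\q}{P_X}{Q_X} \leq \pubs\,\epsdp$. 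Conditioning on a fixed request sequence makes the process non-adaptive, so the $(\q,\epsdd)$-non-adaptive data-deletion guarantee applied within the ghost run bounds the conditional divergence $\Ren{\q}{P_{Y\mid X=x}}{Q_{Y\mid X=x}}$ by $\epsdd$. Summing and post-processing yields $\Ren{\q}{\Unlrn(\D_{i-1},\up,\hat\Thet_{i-1})}{\pI^\up_i} \leq \epsdd + \pubs\,\epsdp$.

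The main technical obstacle is executing the adaptive RDP composition across the $\pubs$ observed releases cleanly: between consecutive observations, several unobserved $\Unlrn$ steps act on internal state that is never exposed to the requester yet still transmits the target record's influence through to subsequent exposed releases. The way to handle this is to view the process as an adaptive $\pubs$-fold composition whose $j$-th exposed output is $\hat\Thet_{s^j}$, absorbing the unobserved intermediate $\Unlrn$ applications into each stage's ``mechanism,'' and verifying that the $(\q,\epsdp)$-RDP premise still gives an $\epsdp$ per-stage Rényi contribution under the one-record coupling. A secondary subtlety is that the origin of $x^{*}$ may lie in an earlier adaptive edit rather than in $\D_0$; in that case the one-record change must be shifted to the appropriate edit request's replacement value, and the requester's coupling across real and ghost must be arranged so that the modification remains a single-record swap throughout the trajectory.
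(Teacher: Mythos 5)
Your proof takes essentially the same route as the paper's: you couple the real run with a neighboring-database ("ghost") run, define $\pI^\up_i$ via the non-adaptive guarantee applied to the ghost process, decompose the joint Rényi divergence via the chain rule into a term over the request sequence (bounded by $\pubs\,\epsdp$ through adaptive RDP composition over the $\pubs$ observed releases) and a conditional term on the final output given fixed requests (bounded by $\epsdd$), and finish with data processing to the marginal. The paper writes the chain-rule step out explicitly via the joint densities rather than invoking it as a named inequality, but the decomposition, the use of the $\pubs$ observed releases, and the independence argument for $\pI^\up_i$ match yours step for step; the subtlety you flag about absorbing unobserved intermediate $\Unlrn$ calls into each stage's mechanism is left equally implicit in the paper's one-line "via Rényi composition."
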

\begin{remark} 
	\citet{gupta2021adaptive} also prove a reduction from adaptive to non-adaptive unlearning (Definition~\ref{dfn:unlearning}) under differential privacy. We remark that our reduction is fundamentally different from theirs as they require DP to hold with with regard to a change of description of internal randomness as opposed to standard data item replacement in ours. We discuss the key differences in Appendix~\ref{subsec:gupta_compare}.
\end{remark}

To complete the picture, we show in the following theorem that to satisfy data-deletion under adaptive requests, both $\Lrn$ and $\Unlrn$ must preserve the privacy of existing records.
\begin{theorem}[Privacy of remaining records is necessary for adaptive deletion]
	\label{thm:dp_necessary}
	Let $\Test: \C \rightarrow \{0, 1\}$ be a membership inference test for $\Lrn$ to distinguish between neighbouring databases $\D, \D' \in \X^\n$. Similarly, let $\Testb: \C \rightarrow \{0, 1\}$ be a membership inference test for $\Unlrn$ to distinguish between $\bar\D, \bar\D' \in \X^\n$ that are neighbouring after applying edit $\bar\up \in \U^1$. If $\text{Adv}(\Test) > \del$ and $\text{Adv}(\Testb) > \del$, then the pair $(\Lrn, \Unlrn)$ cannot satisfy $(\q, \eps)$-data-deletion under $1$-adaptive $1$-requester for any
	\begin{equation}
		\label{eqn:dp_necessary}
		\hspace{-0.18cm}\eps < \max\left\{\frac{\del^4}{2}, \log(\q - 1)  + \frac{\q}{\q-1}\log\left(\frac{1+\del^2}{\q2^{1/\q}} \right)\right\}.
	\end{equation}

\end{theorem}
%
%


\subsection{(Un)Learning Framework: ERM}
\label{ssec:erm}

Let space of model parameters be $\domain$ and $\loss(\thet; \x): \domain \times \X \rightarrow \R$ be a loss function of a parameter $\thet \in \domain$ for a record $\x \in \X$. We consider the problem of \emph{empirical risk minimization} (ERM) of the average $\loss(\thet;\x)$ over records in the database $\D \in \X^\n$ under $\ltwo$ regularization, that is, the minimization objective is
\begin{equation}
	\label{eqn:objective}
	\hspace{-0.1cm}\rLoss_{\D}(\thet) =  \frac{1}{\n} \sum_{\x \in \D} \loss(\thet;\x) + \reg(\thet), \ \text{with}\ \reg(\thet) = \frac{\cvx\norm{\thet}^2}{2}.
\end{equation}
The \emph{excess empirical risk} of a model $\Thet$ on $\D$ is defined as 
\begin{equation}
	\err(\Thet;\D) = \expec{}{\rLoss_{\D}(\Thet) - \rLoss_{\D}(\thet_\D^*)},  
\end{equation}
where ${\thet_\D^* = \underset{\thet \in \domain}{\arg\min}\ \rLoss_{\D}(\thet)}$, and expectation is over $\Thet$. 

{\bf Problem Definition.} Let constants $\q > 1$, $0 < \epsdd \leq \epsdp$, and $\risk > 0$. Our goal in this paper is to design a learning mechanism $\Lrn: \X^\n \rightarrow \domain$ and a deletion mechanisms $\Unlrn: \X^\n \times \U^\reps \times \domain \rightarrow \domain$ for ERM such that 
\begingroup
\renewcommand\labelenumi{\bf(\theenumi.)}
\begin{enumerate}
	\item both $\Lrn$ and $\Unlrn$ satisfy $(\q, \epsdp)$-R\'enyi DP with respect to records in the input database,
	\item pair $(\Lrn, \Unlrn)$ satisfies $(\q, \epsdd)$-data-deletion guarantee for all non-adaptive $\reps$-requesters $\updreq$,
	\item and, all models $(\hat\Thet_i)_{i \geq 0}$ produced by $(\Lrn, \Unlrn, \updreq)$ on any ${\D_0 \in \X^\n}$ have $\err(\hat\Thet_i; \D_i) \leq \risk$. 
\end{enumerate}
\endgroup
Objectives {\bf(1.)} and {\bf(2.)} together ensure that $(\Lrn, \Unlrn)$ satisfy data-deletion for adaptive requests, and objective {\bf(3.)} ensures (un)learned models are useful\footnote{\footnotesize Constraint $\risk$ in {\bf(3.)} should be close to the optimal excess risk attainable by ERM on $\D_i$ under $(\q, \epsdp)$-R\'enyi DP.}.  

A deletion algorithm $\Unlrn$ is only useful if it's computationally cheaper than retraining with $\Lrn$. We judge the benefit of $\Unlrn$ over $\Lrn$ for $i^{\text{th}}$ request $\up_i$ by difference in retraining $\text{Cost}(\Lrn; \D_{i-1}\circ \up_i)$ and deletion ${\text{Cost}(\Unlrn; \D_{i-1}, \up_i, \hat\Thet_{i-1})}$.


\section{Deletion Using Noisy Gradient Descent}
\label{sec:noisggd}

%
This section proposes a simple and effective data-deletion solution based on Noisy-GD~\citep{abadi2016deep,bassily2014private,chaudhuri2011differentially}, a popular privacy-preserving ERM mechanism described in Algorithm~\ref{alg:noisygd}. Appendix~\ref{app:dp_guarantees} provides its R\'enyi DP guarantees. 

\begin{algorithm}[htbp!]
	\caption{Noisy-GD: Noisy Gradient Descent}
        \label{alg:noisygd}
        \begin{algorithmic}[1]
                \REQUIRE Database $\D \in \X^\n$, model $\Thet \in \domain$, number of iterations $\K \in \N$.
		\STATE {Initialize $\Thet_0 = \Thet$}
		\FOR{$k = 0, 1, \cdots, \K - 1$}
			\STATE {$\graD{\Loss_\D(\Thet_{\step k})} = \frac{1}{\n} \sum_{\x \in \D} \graD{\loss(\Thet_{\step k};\x)} + \graD{\reg(\Thet_{\step k}}) $}
			\STATE {$\Thet_{\step(k+1)} = \Thet_{\step k} - \step \graD{\Loss_\D(\Thet_{\step k})} + \sqrt{2\step}\Gaus{0}{\noise^2\Id}$} \label{alg:ngd:updatestep}
		\ENDFOR
		\STATE {\bf{return} $\Thet_{\step \K}$}
	\end{algorithmic}
\end{algorithm}

Our proposed approach falls under the Descent-to-Delete framework proposed by~\citet{neel2021descent}, wherein, after each deletion request $\up_i$, we run Noisy-GD starting from the previous model $\hat\Thet_{i-1}$ and perform a small number of gradient descent steps over records in the modified database $\D_i = \D_{i-1} \circ \up_i$; sufficient to erase information regarding deleted records in the subsequent model $\hat\Thet_i$. Our algorithms $(\Lrn_{\Nsgd}, \Unlrn_{\Nsgd})$ is defined as follows.
\begin{definition}[$\Nsgd$ based data-deletion solution]
	\label{dfn:noisy_gd_unlrn}
	Let $\K_\Lrn, \K_\Unlrn \in \N$ and $\rhO$ be a Gaussian weight initialization distribution in $\domain$. For any $\D \in \X^\n$, our learning algorithm $\Lrn_\Nsgd: \X^\n \rightarrow \domain$ is defined as
	\begin{equation}
		\Lrn_\Nsgd(\D) = \Nsgd(\D, \Thet, \K_\Lrn),
	\end{equation}
	where $\Thet \sim \rhO$. And, for any edit request $\up \in \U^\reps$ on database $\D \in \X^\n$ and any model $\Thet \in \domain$, our deletion algorithm $\Unlrn_\Nsgd: \X^\n \times \U^\reps \times \domain \rightarrow \domain$ is defined as
	\begin{equation}
		\Unlrn_\Nsgd(\D, \up, \Thet) = \Nsgd(\D \circ \up_i, \Thet, \K_\Unlrn).
	\end{equation}
	Our curator $(\Lrn_\Nsgd, \Unlrn_\Nsgd)$ with any initial database $\D_0 \in \X^\n$ interacts with any update requester $\updreq$ as described in Algorithm~\ref{alg:ngd:updatestep} with publish function $\publish(\thet) = \thet$.
\end{definition}
For this setup, our objective is to provide conditions under which the algorithm pair $(\Lrn_\Nsgd, \Unlrn_\Nsgd)$ satisfies objectives {\bf (1.)}, {\bf (2.)}, and {\bf (3.)} as stated in the problem definition and analyze the computational savings of using $\Unlrn_\Nsgd$ over $\Lrn_\Nsgd$ in terms of gradient complexity.

\subsection{Deletion and Utility Under Convexity}
\label{sec:deletion_convex}

We give the following set of guarantees for algorithm pair $(\Lrn_\Nsgd, \Unlrn_\Nsgd)$ when loss function $\loss(\thet;\x)$ is convex.
\begin{theorem}[Utility, privacy, deletion, and computation tradeoffs]
	\label{thm:unlearning_accuracy_convex}
	Let constants ${\cvx, \smh, \lip > 0}$, ${\q > 1}$, and ${0 < \epsdd \leq \epsdp}$. Define constant $\kappa = \frac{\cvx + \smh}{\cvx}$. Let the loss function $\loss(\thet;\x)$ be twice differentiable, convex, $\lip$-Lipschitz, and $\smh$-smooth, the regularizer be $\reg(\thet) = \frac{\cvx}{2}\norm{\thet}^2$. If the learning rate be $\step = \frac{1}{2(\cvx + \smh)}$, the gradient noise variance is ${\noise^2 = \frac{4\q\lip^2}{\cvx \epsdp\n^2}}$, and the weight initialization distribution is ${\rhO = \Gaus{0}{\frac{\noise^2}{\cvx(1 - \step\cvx/2)\Id}}}$, then 
%
\begingroup
\renewcommand\labelenumi{\bf(\theenumi.)}
\begin{enumerate}
	\item both $\Lrn_\Nsgd$ and $\Unlrn_\Nsgd$ are $(\q, \epsdp)$-R\'enyi DP for any $\K_\Lrn, \K_\Unlrn \geq 0$,
	\item pair $(\Lrn_\Nsgd, \Unlrn_\Nsgd)$ satisfies $(\q, \epsdd)$-data-deletion all non-adaptive $\reps$-requesters
		\begin{equation}
			\text{if} \quad \K_\Unlrn \geq 4\kappa \log \frac{\epsdp}{\epsdd},
		\end{equation}
	\item and all models in sequence $(\hat\Thet_i)_{i \geq 0}$ produced by interaction between $(\Lrn_\Nsgd, \Unlrn_\Nsgd)$ and $\updreq$ on any ${\D_0 \in \X^\n}$, where $\updreq$ is any $\reps$-requester, have an excess empirical risk $\err(\hat\Thet_i; \D_i) = O\left(\frac{\q\dime}{\epsdp\n^2}\right)$ if
		\begin{align}
				\K_\Lrn \geq 4\kappa\log \left( \frac{\epsdp\n^2}{4\q\dime} \right), \quad \text{and} \quad \K_\Unlrn \geq 4\kappa \log \max\left\{5\kappa, \frac{8\epsdp\reps^2}{\q\dime}\right\}.
		\end{align}
\end{enumerate}
\endgroup
\end{theorem}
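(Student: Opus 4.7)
Part~1 is an immediate consequence of Theorem~\ref{thm:chourasia}. Substituting the prescribed $\noise^2 = 4\q\lip^2/(\cvx\epsdp\n^2)$ into its RDP bound gives
\[
\frac{4\q\lip^2}{\cvx\noise^2\n^2}\bigl(1 - e^{-\cvx\step\K/2}\bigr) \;=\; \epsdp\bigl(1 - e^{-\cvx\step\K/2}\bigr) \;\leq\; \epsdp
\]
uniformly in $\K$ and independent of starting point. Since Theorem~\ref{thm:chourasia} is initialization-agnostic, both $\Lrn_\Nsgd$ (from $\rhO$) and $\Unlrn_\Nsgd$ (from $\hat\Thet_{i-1}$) are $(\q, \epsdp)$-RDP for every $\K_\Lrn, \K_\Unlrn \geq 0$.

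\textbf{Part~2 (data-deletion).} Since the requester is non-adaptive, Remark~\ref{rem:non_adap_independence} allows me to pick $\pI^\up_i$ to be any randomized function of $\D_i = \D_{i-1}\circ\up$. I choose
\[
\pI^\up_i \;\eqdef\; \mathrm{Law}\bigl(\Nsgd(\D_i, \Thet_0', \K_\Unlrn)\bigr), \quad \Thet_0' \sim \rhO,
\]
the output of the deletion procedure freshly re-initialized from $\rhO$; this is manifestly independent of $\D_{i-1}[\ind_\up]$. The task then reduces to bounding the R\'enyi divergence between two Noisy-GD chains on the common dataset $\D_i$, initialized at $\hat\Thet_{i-1}$ and $\rhO$. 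Convexity and $\smh$-smoothness of $\loss$ with $\cvx$-strong convexity of $\reg$ make $\rLoss_{\D_i}$ both $\cvx$-strongly convex and $(\cvx+\smh)$-smooth, so a synchronously-coupled, shifted R\'enyi analysis in the style of Theorem~\ref{thm:chourasia} delivers per-step contraction of factor $e^{-\cvx\step/2}$ at $\step = 1/(2(\cvx+\smh))$, yielding
\[
\Ren{\q}{\Unlrn_\Nsgd(\D_{i-1},\up,\hat\Thet_{i-1})}{\pI^\up_i} \;\leq\; e^{-\cvx\step\K_\Unlrn/2}\,\Ren{\q}{\hat\Thet_{i-1}}{\rhO}.
\]
An induction over the edit sequence, using that $\rhO$ is the stationary of the regularizer-only linearized Langevin and invoking the uniform-in-$\K$ bound from Part~1, keeps $\Ren{\q}{\hat\Thet_{i-1}}{\rhO} \leq \epsdp$. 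Under the hypothesis $\K_\Unlrn \geq 4\kappa\log(\epsdp/\epsdd)$, the exponent $\cvx\step\K_\Unlrn/2 = \K_\Unlrn/(4\kappa) \geq \log(\epsdp/\epsdd)$ collapses the bound to $\epsdd$.

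\textbf{Part~3 (accuracy).} Standard Noisy-GD risk analysis under $\cvx$-strong convexity and $(\cvx+\smh)$-smoothness gives
\[
\err(\Thet_\K;\D) \;\leq\; e^{-\cvx\step\K/2}\,\expec{}{\rLoss_\D(\Thet_0) - \rLoss_\D(\thet_\D^*)} \;+\; O\!\left(\frac{\dime\noise^2}{\cvx}\right),
\]
the second term being the stationary-variance floor, which the prescribed $\noise^2$ makes exactly $O(\q\dime/(\epsdp\n^2))$. For $\Lrn_\Nsgd$, $\Thet_0 \sim \rhO$ gives initial bias $O(\dime\noise^2)$; the condition $\K_\Lrn \geq 4\kappa\log(\epsdp\n^2/(4\q\dime))$ contracts it below the floor. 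For $\Unlrn_\Nsgd$, Lipschitz sensitivity of $\thet^*$ under strong convexity gives $\|\thet^*_{\D_{i-1}} - \thet^*_{\D_i}\| = O(\lip\reps/(\cvx\n))$ for an $\reps$-edit; propagation through smoothness inflates $\hat\Thet_{i-1}$'s bias for $\D_i$ by $O(\smh\lip^2\reps^2/(\cvx^2\n^2))$, and $\K_\Unlrn \geq 4\kappa\log\max\{5\kappa,\,8\epsdp\reps^2/(\q\dime)\}$ iterations suffice to restore the floor. An inductive accounting over $i$ then propagates the risk bound along the entire edit sequence.

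\textbf{Principal difficulty.} The crux is the R\'enyi contraction lemma for Noisy-GD on strongly convex smooth losses, together with the inductive invariant $\Ren{\q}{\hat\Thet_i}{\rhO} \leq \epsdp$ along arbitrarily long edit histories. Both require extending the continuous-interpolation, shifted-divergence, and log-Sobolev machinery of~\citet{chourasia2021differential}: differentiating the R\'enyi divergence along the Langevin flow associated with $\rLoss_{\D_i}$, absorbing the gradient-step discretization error via $\smh$-smoothness, and exploiting the closed-form Gaussian stationary of the linearized drift to anchor the induction through each edit. Once that technical core is established, Parts~1--3 follow by substitution and straightforward bookkeeping of the constants.
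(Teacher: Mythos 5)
Part~1 is correct and matches the paper: plug $\noise^2 = 4\q\lip^2/(\cvx\epsdp\n^2)$ into Theorem~\ref{thm:chourasia} and note that the bound is uniform in $\K$. Part~3 is a compressed but essentially faithful account of the paper's argument (Lemma~\ref{lem:excess_risk_convex} for per-run contraction to the variance floor, Lemma~\ref{lem:excess_risk_after_edit} for the $O(\reps^2\lip^2/(\cvx\n^2))$ inflation from an $\reps$-edit, and an induction over $i$).

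Part~2, however, has a genuine gap. Your construction of $\pI^\up_i$ as the law of $\Nsgd(\D_i, \Thet_0', \K_\Unlrn)$ with $\Thet_0' \sim \rhO$ is a legitimate choice under Remark~\ref{rem:non_adap_independence}, and the per-step R\'enyi contraction at rate $e^{-\cvx\step/2}$ for two chains driven by the same strongly-convex loss is plausible by the LSI machinery. But the pivotal inductive invariant you invoke, $\Ren{\q}{\hat\Thet_{i-1}}{\rhO} \leq \epsdp$, is false. The RDP bound from Part~1 controls the divergence between outputs on two \emph{neighboring} databases, not the divergence between a Noisy-GD output and its own initialization $\rhO$. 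After $\K_\Lrn$ steps, $\hat\Thet_0$ concentrates near $\thet^*_{\D_0}$ (which satisfies $\norm{\thet^*_{\D_0}} \leq \lip/\cvx$) with variance $\approx \noise^2/\cvx$, while $\rhO$ is centered at $0$ with the same variance; a Gaussian-vs-Gaussian computation gives $\Ren{\q}{\hat\Thet_0}{\rhO} \approx \q\lip^2/(2\cvx\noise^2) = \epsdp\n^2/8$. So your seed for Phase~II is $\Theta(\epsdp\n^2)$, not $\epsdp$, and contracting it below $\epsdd$ requires $\K_\Unlrn \gtrsim 4\kappa\log(\epsdp\n^2/\epsdd)$ --- an extra $\Omega(\kappa\log\n)$ iterations compared to the claimed bound. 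Worse, the invariant is not even self-propagating: the dynamics push the law away from $\rhO$, not toward it, so there is no reason the divergence from $\rhO$ stays bounded across edits.

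The paper sidesteps this by a different choice of $\pI^\up_i$: it constructs a \emph{counterfactual chain} $(\hat\Thet_j')_{j\geq 0}$ run in lockstep on a neighboring initial database $\D_0'$ with $\D_0 \circ \up = \D_0' \circ \up$, and takes $\pI^\up_i = \mathrm{Law}(\Unlrn(\D_{i-1}',\up,\hat\Thet_{i-1}'))$. Because the two chains differ in only one record, the divergence accumulated through Phase~I (iterations before the deletion) is governed precisely by the per-record sensitivity and is bounded by $4\q\lip^2/(\cvx\noise^2\n^2) = \epsdp$ --- an $\n^2$-fold smaller seed than yours. Phase~II then applies the same exponential contraction you describe, but starting from $\epsdp$ rather than $\epsdp\n^2$, which is what yields $\K_\Unlrn \geq 4\kappa\log(\epsdp/\epsdd)$. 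To repair your argument you would need to replace $\rhO$ by a reference whose divergence from $\hat\Thet_{i-1}$ is uniformly $O(\epsdp)$; the paper's neighboring-database coupling is exactly that device.
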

%

Our excess empirical risk upper bound in Theorem~\ref{thm:unlearning_accuracy_convex} matches the theoretical lower bound of $\Omega(\min\left\{1, \frac{\dime}{\eps^2\n^2}\right\})$ in \citet{bassily2014private} for the best attainable empirical risk of any $(\eps, \del)$-DP algorithms on Lipschitz, smooth, strongly-convex loss functions\footnote{\footnotesize Recall from Remark~\ref{rem:renyi_onesided} that $(\q,\epsdp)$-R\'enyi DP implies $(\eps, \del)$-DP for ${\q = 1 + \frac{2}{\eps} \log(1/\del)}$ and $\epsdp = \eps/2$. When $\eps = \Theta(\log(1/\del))$, one can evaluate that $\frac{\q}{\epsdp} = \Theta(\frac{\log(1/\del)}{\eps^2})$.}. Thus, our deletion algorithm $\Unlrn_\Nsgd$ incurs no additional loss in utility, yet saves substantial computation costs. Our deletion algorithm is stateless and offers a computation saving of $\Omega(\n\log \min\{\frac{\n}{\reps}, \n \sqrt{\frac{\epsdd}{\q\dime}}\})$ in gradient complexity per-request (i.e., $\n (\K_\Lrn - \K_\Unlrn)$) while guaranteeing privacy, adaptive deletion, and optimal utility. This saving is better than all existing unlearning algorithms in literature that we know of, and we present a detailed comparison in Table~\ref{tab:comparision}.

Also, observe that for satisfying $(\q, \epsdp)$-R\'enyi DP and $(\q, \epsdd)$-data-deletion for non-adaptive $\reps$-requesters, the number of iterations $\K_\Unlrn$ needed is independent of the size, $\reps$, of the deletion batch, depending solely on the ratio $\frac{\epsdd}{\epsdp}$. However, the number of iterations required for ensuring optimal utility with differential privacy grows with $\reps$. We highlight that when deletion batches are sufficiently small, i.e., $\reps \leq \sqrt{\frac{\q\dime}{\epsdd}}$, doing enough unlearning iterations for satisfying $(\q, \epsdd)$-data-deletion guarantee is also sufficient for ensuring optimal utility of unlearned model under $(\q, \epsdp)$-R\'enyi DP constraint. 

\begin{table*}[h!]
	\begin{center}
		\begin{tabular}{llllll}
			\multicolumn{1}{c}{\textbf{Unlearning Algorithm}}    				& \multicolumn{1}{c}{\textbf{\begin{tabular}[c]{@{}c@{}}Requires secret \\ states?\end{tabular}}} 	& \multicolumn{1}{c}{\textbf{\begin{tabular}[c]{@{}c@{}}Compute savings\\ for $i$th edit\end{tabular}}} 			\\ \hline \\
			Noisy-m-A-SGD~\citenumber{Thm. 1}{ullah2021machine}			& No			                          		& $\Omega\left(\sqrt{\dime}\left(1 - \frac{\sqrt{\dime}}{\n}\right)\right)$  		\\
			Perturbed-GD~\citenumber{Thm. 9}{neel2021descent}  			& Yes                         					& $\Omega\left(\n \log \left(\frac{\eps\n}{\sqrt\dime}\right)\right)$                 	\\  
			Perturbed-GD~\citenumber{Thm. 28}{neel2021descent}			& No                          					& $\Omega\left(\n \log \left(\frac{\eps\n}{\log^2(i \dime)\sqrt\dime}\right)\right)$  	\\
		Noisy-GD [Thm.~\ref{thm:unlearning_accuracy_convex}, Ours]			& No                        					& $\Omega\left(\n \log \min\left\{\n, \frac{\eps\n}{\sqrt\dime} \right\}\right)$    	\\ 
		\end{tabular}
		\vspace{-0.28cm}
	\end{center}
	\caption{\label{tab:comparision}Comparison of the computation savings in gradient complexity per edit request along with requirement of secret states with prior unlearning algorithms. Edit requests are non-adaptive and modify $\reps=1$ record in $\n$-sized databases. We assume the loss $\loss(\thet;\x)$ of models in $\domain$ to be convex, $1$-Lipschitz, and $O(1)$-smooth, and $\ltwo$ regularization constant to be $O(1)$. For a fair comparison, we require that each of them satisfy $(1 + \frac{2}{\eps}\log(1/\del), \frac{\eps}{2})$-data-deletion guarantee (which implies one-sided $(\eps, \del)$-unlearning (cf. Remark~\ref{rem:renyi_onesided} \&~\ref{rem:non_adap_independence})) and have the same excess empirical risk bound $\alpha = O(1)$.}
\end{table*}
\subsection{Deletion and Utility under Non-Convexity}
\label{sec:unlrn_nonconvex}

For non-convex loss function $\loss(\thet;\x)$, we provide the following set of guarantees for pair $(\Lrn_\Nsgd, \Unlrn_\Nsgd)$.
\begin{theorem}[Accuracy, privacy, deletion, and computation tradeoffs]
	\label{thm:deletion_accuracy_nonconvex}
	Let constants $\cvx, \smh, \lip$, $\noise^2, \step > 0$, constants $\q, \B > 1$, and constants $0 < \epsdd \leq \epsdp < \dime$. Let the loss function $\loss(\thet;\x)$ be $\frac{\noise^2\log(\B)}{4}$-bounded, $\lip$-Lipschitz and $\smh$-smooth, the regularizer be $\reg(\thet) = \frac{\cvx}{2}\norm{\thet}^2$, and the weight initialization distribution be $\rhO = \Gaus{0}{\frac{\noise^2}{\cvx}\Id}$. Then, 
	\begingroup
	\renewcommand\labelenumi{\bf(\theenumi.)}
	\begin{enumerate}
		\item both $\Lrn_\Nsgd$ and $\Unlrn_\Nsgd$ are $(\q, \epsdp)$-R\'enyi DP for any $\step \geq 0$ and any $\K_\Lrn, \K_\Unlrn \geq 0$
			\begin{equation}
				\label{eqn:ref_noise_main}
				\hspace{-0.2cm}
				\text{if} \quad \noise^2 \geq \frac{\q\lip^2}{\epsdp\n^2} \cdot \step \max\{\K_\Lrn, \K_\Unlrn\},
			\end{equation}
		\item pair $(\Lrn_\Nsgd, \Unlrn_\Nsgd)$ satisfy $(\q, \epsdd)$-data-deletion under all non-adaptive $\reps$-requesters for any $\noise^2 > 0$, if learning rate is $\step \leq \frac{\cvx \epsdd}{64\dime\q\B(\smh + \cvx)^2}$ and number of iterations satisfy
			\begin{align}
				\label{eqn:ref_iter_main}
				\hspace{-0.2cm}
					\K_\Lrn \geq \frac{2\B}{\cvx\step} \log \left(\frac{\q \log(\B)}{\epsdd}\right), \ \text{and} \ \K_\Unlrn \geq \K_\Lrn - \frac{2\B}{\cvx\step} \log \left(\frac{\log(\B)}{2\left(\epsdd + \frac{\reps}{\n} \log(\B)\right)}\right),
			\end{align}
		\item and all models in sequence $(\hat\Thet_i)_{i\geq0}$ output by $(\Lrn_\Nsgd, \Unlrn_\Nsgd, \updreq)$ on any $\D_0 \in \X^\n$, where $\updreq$ is an $\reps$-requester, satisfy $\err(\hat\Thet_i; \D_i) = \tilde O\left(\frac{\dime\q}{\epsdp\n^2} + \frac{1}{\n}\sqrt{\frac{\q\epsdd}{\epsdp}}\right)$ when inequalities in~\eqref{eqn:ref_noise_main} and~\eqref{eqn:ref_iter_main} are equalities.
	\end{enumerate}
	\endgroup
\end{theorem}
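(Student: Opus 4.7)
The plan is to prove the three parts largely independently, leveraging the log-Sobolev inequality (LSI) of the regularized Gibbs measure in the bounded-loss setting.

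Part 1 (RDP) follows immediately from Theorem~\ref{thm:abadi}: since $\loss(\thet;\x)$ is $\lip$-Lipschitz, Noisy-GD run for $\K$ steps is $(\q,\q\lip^2\step \K/(\noise^2 \n^2))$-RDP, and the lower bound~\eqref{eqn:ref_noise_main} on $\noise^2$ yields $(\q,\epsdp)$-RDP for both $\Lrn_\Nsgd$ ($\K=\K_\Lrn$) and $\Unlrn_\Nsgd$ ($\K=\K_\Unlrn$).

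Part 2 (Data-deletion) is the technical core. I would take the reference distribution to be the Gibbs measure $\pI^\up_i = \pi_{\D_i} \propto \exp(-\rLoss_{\D_i}(\thet)/\noise^2)$ on the updated database $\D_i = \D_{i-1}\circ\up$. Under non-adaptive requests $\D_i$ carries no information about the deleted record (Remark~\ref{rem:non_adap_independence}), so $\pI^\up_i$ is independent of $\D_{i-1}[\ind_\up]$. To bound $\Ren{\q}{\hat\Thet_i}{\pi_{\D_i}}$ by $\epsdd$, I would first establish an LSI for $\pi_{\D_i}$ with constant of order $\cvx/(\noise^2\B)$ by combining Bakry--\'Emery on the strongly log-concave base $\propto\exp(-\cvx\|\thet\|^2/(2\noise^2))$ with Holley--Stroock perturbation, using the fact that $\loss/\noise^2$ has oscillation at most $\log(\B)/2$. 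This LSI gives exponential R\'enyi contraction along the continuous Langevin diffusion targeting $\pi_{\D_i}$, and an LSI-based discretization analysis in the style of Vempala--Wibisono (adapted to order-$\q$ R\'enyi divergence) produces $\Ren{\q}{\hat\Thet_i}{\pi_{\D_i}} \leq e^{-\cvx\step\K_\Unlrn/\B}\cdot\Ren{\q}{\hat\Thet_{i-1}}{\pi_{\D_i}} + O(\dime\q(\smh+\cvx)^2\step/\cvx)$. The initial divergence $\Ren{\q}{\hat\Thet_{i-1}}{\pi_{\D_i}}$ is controlled recursively through the edit history, inserting the $\infty$-R\'enyi bound $\Ren{\infty}{\pi_{\D_{i-1}}}{\pi_{\D_i}} = O(\reps\log(\B)/\n)$ that follows from the bounded-range perturbation of at most $\reps$ loss terms; the $\K_\Lrn$ bound in~\eqref{eqn:ref_iter_main} ensures the base case $\Ren{\q}{\hat\Thet_0}{\pi_{\D_0}}\leq\epsdd/2$. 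The step-size ceiling $\step\leq\cvx\epsdd/(64\dime\q\B(\smh+\cvx)^2)$ is chosen precisely to keep the discretization bias below $\epsdd/2$, so contraction drives the total under $\epsdd$.

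Part 3 (Utility) uses the triangle decomposition
\[
\err(\hat\Thet_i;\D_i) \leq \left(\mathbb{E}\,\rLoss_{\D_i}(\hat\Thet_i) - \mathbb{E}_{\thet\sim\pi_{\D_i}}\rLoss_{\D_i}(\thet)\right) + \left(\mathbb{E}_{\thet\sim\pi_{\D_i}}\rLoss_{\D_i}(\thet) - \min\rLoss_{\D_i}\right).
\]
The second (Gibbs-bias) term is $\tilde O(\dime\noise^2/\cvx)$ by a standard Gaussian-integration bound against the LSI measure; substituting the equality case of~\eqref{eqn:ref_noise_main} and reading off $\step\K_\Lrn$ from~\eqref{eqn:ref_iter_main} yields $\tilde O(\dime\q/(\epsdp\n^2))$. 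The first term is bounded by $\lip \cdot \Was{\hat\Thet_i}{\pi_{\D_i}}$ via Lipschitzness of $\rLoss_{\D_i}$; the Otto--Villani transport inequality on the LSI measure $\pi_{\D_i}$ converts the R\'enyi bound into $\Was{\hat\Thet_i}{\pi_{\D_i}}^2 \lesssim \epsdd\noise^2\B/\cvx$, giving the $\tilde O(\frac{1}{\n}\sqrt{\q\epsdd/\epsdp})$ contribution.

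The main obstacle will be the LSI-based R\'enyi discretization analysis of Noisy-GD in the non-convex regime: standard coupling/contraction arguments break, so one must work with the R\'enyi differential inequality along the Fokker--Planck flow and carefully track the $\q$-dependence of the bias. The secondary difficulty is the recursive bookkeeping across the edit history, which requires the contraction rate $e^{-\cvx\step\K_\Unlrn/\B}$ to dominate the cumulative $O(\reps\log(\B)/\n)$ jumps between successive Gibbs measures; the step-size ceiling and iteration lower bounds in~\eqref{eqn:ref_iter_main} are tuned exactly to balance these three competing quantities while simultaneously satisfying the RDP constraint~\eqref{eqn:ref_noise_main}.
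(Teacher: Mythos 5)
Your proposal mirrors the paper's proof almost exactly: Part 1 via Theorem~\ref{thm:abadi}; Part 2 via the Gibbs reference $\pI(\D_{i})\propto e^{-\Loss_{\D_i}/\noise^2}$ justified by Remark~\ref{rem:non_adap_independence}, an LSI with constant $\cvx/\B$ from the Gaussian base plus Holley--Stroock perturbation, a Vempala--Wibisono/Chewi-style R\'enyi discretization bound (the paper's Theorem~\ref{thm:convergence}, which additionally carries a multiplicative $\q$ prefactor on the contraction term that you should keep to reproduce the $\log\q$ inside~\eqref{eqn:ref_iter_main}), the bounded-perturbation $\infty$-R\'enyi jump of $\reps\log(\B)/\n$ with the weak triangle inequality, and an induction on $i$ with the step-size ceiling controlling the discretization bias; and Part 3 via the same two-term decomposition around the Gibbs measure, Theorem~\ref{thm:gibbs_optimal} for the Gibbs sub-optimality, and Otto--Villani to turn the R\'enyi bound into a Wasserstein bound. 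The only imprecision is calling $\rLoss_{\D_i}$ Lipschitz in Part 3 (the $\ltwo$ regularizer is not globally Lipschitz); the paper instead expands via $(\cvx+\smh)$-smoothness and bounds a first-order inner-product term, but this does not change the resulting $\tilde O$ rate.
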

The R\'enyi DP result in {\bf(1.)} is a restatement of~\citet[Theorem 1]{abadi2016deep} (discussed further in Appendix~\ref{app:dp_guarantees}). Our deletion and utility results in {\bf(2.)} and {\bf(3.)} build on recent breakthroughs in rapid convergence guarantees of Noisy-GD under isoperimetry~\citep{vempala2019rapid,chewi2021analysis}. 


Under non-convexity, all prior works on deletion have focused on empirical analysis for utility. As far as we know, we are the first to provide utility guarantees in this setting. Moreover, our non-convex utility bound exceeds the optimal privacy-preserving utility under convexity by only a factor of $\tilde O\big(\frac{1}{\n}\sqrt{\frac{\q\epsdd}{\epsdp}}\big)$, which becomes small for large databases or small deletion to privacy budget ratio. 

Our result offers a strict computational benefit in using $\Unlrn_\Nsgd$ whenever the fraction of edited records in a single update request satisfies $\frac{\reps}{\n} \leq \frac{1}{2} - \frac{\epsdd}{\log \B}$. For instance, in the deletion regime where we want ${\epsdd = \log(\B)/4}$, relying on $\Unlrn_\Nsgd$ rather than retraining with $\Lrn_\Nsgd$ is $\Omega(\dime\n \log \frac{\n}{\reps})$ cheaper.

\begin{remark}
	Both Theorems~\ref{thm:unlearning_accuracy_convex} and~\ref{thm:deletion_accuracy_nonconvex} also hold when gradients $\graD{\loss(\thet;\x)}$ are clipped to $\lip$ instead of assuming $\lip$-Lipschitzness. Appendix~\ref{ssec:clipping} discusses how gradient clipping is compatible with other assumptions we make.
\end{remark}

\section{Conclusions}
\label{sec:conclusion}

We showed that current data deletion methods in literature are inadequate under both adaptive and non-adaptive requests, and proposed a new notion of data deletion that aligns with the "Right to be Forgotten." We also showed the importance of protecting the privacy of existing records in order to ensure privacy of deleted records for adaptive deletion requests, and provide a general reduction from adaptive to non-adaptive deletion guarantees under DP. Our results on Noisy-GD based deletion algorithm, for both convex and non-convex losses, show significant computation savings compared to retraining at no loss in utility.

\section*{Acknowledgements}

We would like to thank Martin Strobel and Hannah Brown for their feedback on earlier versions of this paper. We would also like to thank Reza Shokri for constructive remarks on presentation of ideas in the paper.

\bibliography{references}
\bibliographystyle{plainnat}

\newpage
\appendix
\noptcrule
\part{Appendix}
\parttoc
\newpage



\section{Table of Notations}

\begin{table}[H]
	\begin{minipage}{\columnwidth}
	\small
	\begin{center}
	\begin{tabular}{ll}
		\toprule
		\textbf{Symbol}					& \textbf{Meaning}						\\
		\hline
		$\C$						& Arbitrary model parameter space.				\\
		$\Pub$						& Space of publishable objects.				\\
		$\dime, \R^\dime$				& Dimension of model parameters and $\dime$-dimensional Euclidean space. \\
		$\n$						& Database size.						\\
		$\X,\X^\n$				  	& Data universe and Domain of all datasets of size $\n$.	\\
		$\nU, \nU', \pI, \mU$				& Arbitrary distributions on $\C$ or on $\domain$.		\\
		$\updreq$					& An edit requester.		\\
		$\reps, \pubs$					& Integers representing the power of an adaptive requester. 	\\
		$\U, \U^\reps$					& Space of singular and batched replacement edits in $[\n] \times \X$.		\\
		$\up, \up_i, \Up_i$				& Arbitrary edit request, $i^{th}$ edit request in $\U^\reps$ and its random variable.		\\
		$\D, \D_i$					& An example database and database after $i^{th}$ update.	\\
		$\x, \y$					& Singular data records from universe $\X$.			\\
		$\step$						& Step size or learning rate in Noisy-GD.			\\
		$\noise^2$					& Variance scaling used in weight initialization distribution or gradient noise.\\
		$\loss(\thet;\x)$				& Twice continuously differentiable loss function on models in $\domain$.  	\\
		$\reg(\thet)$					& $\ltwo$ regularizer $\cvx\norm{\thet}^2/2$.			\\
		$\Loss(\thet), \Loss_\D(\thet)$			& Arbitrary optimization objective and an $\reg(\thet)$ regularized objective on $\D$ over $\loss(\thet;\x)$.  	\\
		$\err(\Thet;\D)$				& Excess empirical risk of random model $\Thet$ over objective $\rLoss_\D$.	\\
		$\pI(\D)$					& An mapping from $\X^\n$ to distributions on $\domain$; sometimes distributions are Gibbs.	\\
		$\Lambda_\D$					& Normalization constant of the Gibbs distribution $\pI(\D)$.	\\
		$\pI^\up_i$					& A distribution independent of record deleted by request $\up$ on database $\D_{i-1}$.		\\
		$\T_k$						& A map over $\domain$.						\\
		$\rhO$						& Weight initialization distribution for Noisy-GD.		\\
 		$\ve, \ve'$					& Vector fields on $\mathbb R^d$.				\\
 		$\thet^*_\D, \thet^*_{\D_i}$			& Risk minimizer for $\Loss_\D$ and $\Loss_{\D_i}$.		\\
 		$\q$						& Order of R\'enyi divergence.					\\
 		$\epsdp, \epsdd$				& Differential privacy budget and data-deletion budget in $\q$-R\'enyi divergence.		\\
		$\eps, \del$					& Parameters for DP-like indistinguishability. 			\\
		$\Lrn, \Lrn_\Nsgd$				& Learning algorithm and Noisy-GD based learning algorithm respectively.	\\
		$\Unlrn, \Unlrn_\Nsgd$				& Data-deletion algorithm and Noisy-GD based data-deletion algorithm respectively.		\\
		$\K_\Lrn, \K_\Unlrn$				& Number of learning and data-deletion iterations in Noisy-GD.	\\
		$k, t$						& Index of a Noisy-GD iteration and continuous time variable for tracing diffusions.		\\
		$\Thet_{\step k}, \Thet_{\step k}'$		& Parameters at iteration $k$ of Noisy-GD.		\\
		$\Thet_t, \Thet_t'$				& Parameters at time $t$ of tracing diffusion for Noisy-GD.	\\
		$\mU_t, \mU_t'$					& Probability density for $\Thet_t, \Thet_t'$.			\\
 		$\Z, \Z_k, \Z_k'$				& Random variables taken from $\Gaus{0}{\Id}$.			\\
		$\dif{\Z_t}, \dif{\Z_t'}$			& Two independent Weiner process.				\\
 		$\cvx, \smh, \B, \lip$				& $\ltwo$ regularizer constant and smoothness, boundedness, and Lipschitzness constants.	\\
		$\Clip_\lip(\cdot)$				& Operator that clips vectors in $\domain$ to a magnitude of $\lip$.		\\
		$\Ren{\q}{\nU}{\nU'}, \Eren{\q}{\nU}{\nU'}$	& R\'enyi divergence and $\q^{\text{th}}$ moment of likelihood ratio r.v. between $\nU$ and $\nU'$.			\\
		$\Fis{\nU}{\nU'}, \Gren{\q}{\nU}{\nU'}$		& Fisher and $\q$-R\'enyi Information of distribution of $\nU$ w.r.t $\nU'$.	\\
		$\Was{\nU}{\nU'}$				& Wasserstein distance between distribution $\nU$ and $\nU'$.	\\
		$\KL{\nU}{\nU'}$				& Kullback-Leibler divergence of distribution $\nU$ w.r.t. $\nU'$.		\\
		$P_t, \Gen, \Gen^*$				& Markov semigroup, its infinitesimal generator, and its Fokker-Planck operator. \\
		$\Ent_\pI(f^2)$			 		& Entropy of function $f^2$ under any arbitrary distribution $\pI$.		\\
		$\entropy(\cdot)$				& Differential entropy of a distribution. 			\\
		$\LS(\lsi)$					& Log-sobolev inequality with constant $\lsi$.			\\
		\bottomrule
	\end{tabular}
	\end{center}
	\end{minipage}
\end{table}


\section{Divergence Measures and Their Properties}
\label{ssec:appendix_indistinguishability}

Let $\Thet, \Thet' \in \C$ be two random variables with probability measures $\nU, \nU'$ respectively. We abuse the notations to denote respective probability densities with $\nU, \nU'$ as well.
We say that $\nU$ is absolutely continuous with respect to $\nU'$ (denoted by $\nU \ll \nU'$) if for all measurable sets $\Out \subset \C$, $\nU(\Out) = 0$ whenever $\nU'(\Out) = 0$. 
\begin{definition}[$(\eps, \del)$-indistinguishability~\citep{dwork2014algorithmic}]
	We say $\nU$ and $\nU'$ are $(\eps,\del)$-\emph{indistinguishable} if for all $\Out \subset \C$,
	\begin{equation}
		\prob{\Thet \sim \nU}{\Thet \in \Out} \leq e^\eps \prob{\Thet' \sim \nU'}{\Thet' \in \Out} + \del \quad \text{and} \quad \prob{\Thet' \sim \nU'}{\Thet' \in \Out} \leq e^\eps \prob{\Thet \sim \nU}{\Thet \in \Out} + \del.
	\end{equation}
\end{definition}
In this paper, we measure indistinguishability in terms of R\'enyi divergence.
\begin{definition}[R\'enyi divergence~\citep{renyi1961measures}]
	\emph{R\'enyi divergence} of $\nU$ w.r.t. $\nU'$ of order $\q > 1$ is defined as
	\begin{equation}
		\label{eqn:renyi_dfn}
		\Ren{\q}{\nU}{\nU'} = \frac{1}{\q - 1} \log \Eren{\q}{\nU}{\nU'}, \quad
		\text{where} \quad \Eren{\q}{\nU}{\nU'} = \expec{\thet \sim \nU'}{\left(\frac{\nU(\thet)}{\nU'(\thet)}\right)^\q},
	\end{equation}
	when $\nU$ is \emph{absolutely continuous} w.r.t. $\nU'$ (denoted as $\nU \ll \nU'$). If $\nU \not\ll \nU'$, we'll say $\Ren{\q}{\nU}{\nU'} = \infty$. We abuse the notation $\Ren{\q}{\Thet}{\Thet'}$ to denote divergence $\Ren{\q}{\nU}{\nU'}$ between the measures of $\Thet, \Thet'$.
\end{definition}
A bound on R\'enyi divergence implies a one-directional $(\eps,\del)$-indistinguishability as described below.
\begin{theorem}[Conversion theorem of R\'enyi divergence~{\citep[Proposition 3]{mironov2017renyi}}]
	Let $\q > 1$ and $\eps > 0$. If distributions $\nU, \nU'$ satisfy $\Ren{\q}{\nU}{\nU'} < \eps_0$, then for any $\Out \subset \C$, 
\begin{equation}
	\prob{\Thet \sim \nU}{\Thet \in \Out} \leq e^\eps \prob{\Thet' \sim \nU'}{\Thet' \in \Out} + \del,
\end{equation}
for $\eps = \eps_0 + \frac{\log 1/\del}{\q - 1}$ and any $0 < \del < 1$.
\end{theorem}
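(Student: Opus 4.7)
The plan is to follow the standard ``tail splitting plus Markov on the likelihood ratio'' template. Set $r(\thet) = \nU(\thet)/\nU'(\thet)$ (assuming $\nU \ll \nU'$; otherwise $\Ren{\q}{\nU}{\nU'} = \infty$ and there is nothing to prove). Fix a threshold $\eps > 0$ (to be chosen later) and define the ``bad set''
\[
	B = \{\thet \in \C : r(\thet) > e^{\eps}\}.
\]
For any measurable $\Out \subset \C$, decompose
\[
	\prob{\Thet \sim \nU}{\Thet \in \Out} = \prob{\Thet \sim \nU}{\Thet \in \Out \setminus B} + \prob{\Thet \sim \nU}{\Thet \in \Out \cap B}.
\]
On $\Out \setminus B$ we have $\nU(\thet) \leq e^{\eps} \nU'(\thet)$ pointwise, so the first term is at most $e^{\eps}\,\prob{\Thet' \sim \nU'}{\Thet' \in \Out}$. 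The second term is at most $\prob{\Thet \sim \nU}{\Thet \in B}$, and showing this is $\leq \del$ is the only real work.

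To bound $\prob{\Thet \sim \nU}{\Thet \in B}$, I would exploit that the $\q$-th moment of $r$ under $\nU'$ is exactly $\Eren{\q}{\nU}{\nU'} \leq e^{(\q - 1)\eps_0}$ by the R\'enyi hypothesis. Write
\[
	\prob{\Thet \sim \nU}{\Thet \in B} = \int_B \nU(\thet)\,\dif{\thet} = \int_B \nU'(\thet) \, r(\thet)\, \dif{\thet}.
\]
On $B$ we have $r(\thet)^{\q - 1} > e^{(\q - 1)\eps}$, hence $r(\thet) < e^{-(\q - 1)\eps}\, r(\thet)^{\q}$. Substituting gives
\[
	\prob{\Thet \sim \nU}{\Thet \in B} \leq e^{-(\q - 1)\eps} \int_B \nU'(\thet)\, r(\thet)^{\q}\, \dif{\thet} \leq e^{-(\q - 1)\eps}\, \Eren{\q}{\nU}{\nU'} \leq e^{(\q - 1)(\eps_0 - \eps)}.
\]
This is the essential Markov-style step, and it is where the inequality $\Ren{\q}{\nU}{\nU'} \leq \eps_0$ gets used.

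Finally, pick $\eps$ so that the tail bound equals $\del$: setting $(\q - 1)(\eps_0 - \eps) = \log \del$ and solving yields exactly $\eps = \eps_0 + \frac{\log 1/\del}{\q - 1}$, valid for any $0 < \del < 1$. Plugging this choice back into the two-term decomposition gives the claimed inequality
\[
	\prob{\Thet \sim \nU}{\Thet \in \Out} \leq e^{\eps}\, \prob{\Thet' \sim \nU'}{\Thet' \in \Out} + \del.
\]
There is no genuine obstacle here: the only subtle point is recognizing that on the bad set $B$ one gains a factor of $e^{(\q - 1)\eps}$ when passing from the first moment of $r$ (which equals the left-hand probability) to the $\q$-th moment (which is controlled by R\'enyi), and that this extra factor is exactly what lets one trade the ``$\del$ slack'' for a multiplicative factor of $\frac{\log 1/\del}{\q - 1}$ in $\eps$.
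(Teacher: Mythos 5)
Your proof is correct, and it is essentially identical to the argument in the cited reference \citep[Proposition~3]{mironov2017renyi}; the paper itself does not reprove this statement but simply invokes Mironov's result. Both arguments split over the bad set $B = \{\thet : \nU(\thet)/\nU'(\thet) > e^\eps\}$, bound the good part pointwise by $e^\eps$, and bound $\nU(B)$ by the Markov-style trick of writing $r < e^{-(\q-1)\eps} r^\q$ on $B$ to pull in the $\q$-th moment $\Eren{\q}{\nU}{\nU'} \leq e^{(\q-1)\eps_0}$, then solving $e^{(\q-1)(\eps_0 - \eps)} = \del$ for $\eps$.
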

We use the following properties of R\'enyi divergence in some of our proofs.
\begin{theorem}[Mononicity of R\'enyi divergence~{\citep[Proposition 9]{mironov2017renyi}}]
	\label{thm:monotonicity}
	For $1 \leq \q_0 < \q$, and arbitrary probability measures $\nU$ and $\nU'$ over $\C$, $\Ren{\q_0}{\nU}{\nU'} \leq \Ren{\q}{\nU}{\nU'}$.
\end{theorem}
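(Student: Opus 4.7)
The plan is to prove monotonicity by reducing $R_q(\nu\|\nu')$ to an expectation under $\nu$ and then invoking Jensen's inequality on a suitable convex function. The first step is the standard change-of-measure identity
\[
\Eren{\q}{\nU}{\nU'} \;=\; \expec{\thet \sim \nU'}{(\nU/\nU')^\q(\thet)} \;=\; \expec{\thet \sim \nU}{(\nU/\nU')^{\q-1}(\thet)},
\]
which lets me write $\Ren{\q}{\nU}{\nU'} = \tfrac{1}{\q-1}\log \expec{\thet \sim \nU}{(\nU/\nU')^{\q-1}(\thet)}$. (If $\nU \not\ll \nU'$, the right-hand side is $+\infty$ for both orders and the claim is vacuous, so I assume absolute continuity throughout.)

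For the main case $1 < \q_0 < \q$, the key observation is that the exponent $r \eqdef (\q-1)/(\q_0-1)$ satisfies $r>1$, so $t \mapsto t^{r}$ is convex on $[0,\infty)$. Applying Jensen's inequality to the $\nU$-expectation of $\bigl((\nU/\nU')^{\q_0-1}\bigr)^{r}$ yields
\[
\expec{\thet \sim \nU}{(\nU/\nU')^{\q-1}(\thet)} \;\geq\; \Bigl(\expec{\thet \sim \nU}{(\nU/\nU')^{\q_0-1}(\thet)}\Bigr)^{(\q-1)/(\q_0-1)}.
\]
Taking $\log$ (monotone) and dividing by $\q-1>0$ gives $\Ren{\q}{\nU}{\nU'} \geq \Ren{\q_0}{\nU}{\nU'}$, as desired.

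The remaining edge case is $\q_0 = 1$, where $\Ren{1}{\nU}{\nU'}$ is interpreted as the KL divergence $\KL{\nU}{\nU'} = \expec{\thet \sim \nU}{\log(\nU/\nU')(\thet)}$. Here I would apply Jensen's inequality to the concave function $\log$: writing $\KL{\nU}{\nU'} = \tfrac{1}{\q-1} \expec{\thet\sim\nU}{\log (\nU/\nU')^{\q-1}(\thet)}$ and pulling the $\log$ outside the expectation yields the upper bound $\tfrac{1}{\q-1}\log \expec{\thet\sim\nU}{(\nU/\nU')^{\q-1}(\thet)} = \Ren{\q}{\nU}{\nU'}$. (Alternatively one can derive $\q_0=1$ from the $\q_0>1$ case by continuity, using $\lim_{\q_0\downarrow 1}\Ren{\q_0}{\nU}{\nU'} = \KL{\nU}{\nU'}$, but the direct Jensen argument is shorter.)

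I do not anticipate a serious obstacle here; the only subtlety is making sure the two cases are stitched together and that expectations such as $\expec{\thet\sim\nU}{(\nU/\nU')^{\q-1}}$ are well defined (possibly $+\infty$, in which case the inequality is trivial). The proof is essentially two one-line applications of Jensen's inequality to the right convex/concave function.
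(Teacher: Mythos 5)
Your proof is correct. The paper itself does not prove this statement — it is cited from \citet[Proposition 9]{mironov2017renyi}, who in turn defer to van Erven and Harremoës — so there is no internal proof to compare against; but your argument is exactly the standard one: rewrite $\Eren{\q}{\nU}{\nU'}$ as an expectation under $\nU$ of $(\nU/\nU')^{\q-1}$, apply Jensen's inequality to the convex map $t\mapsto t^{(\q-1)/(\q_0-1)}$ for $\q_0>1$, and handle $\q_0=1$ separately with Jensen for the concave $\log$. The edge cases you flag are handled correctly: when $\nU\not\ll\nU'$ both sides are $\infty$ by the paper's convention, and when $\Eren{\q_0}{\nU}{\nU'}=\infty$ the Jensen bound forces $\Eren{\q}{\nU}{\nU'}=\infty$ as well, so the inequality is preserved.
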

\begin{theorem}[R\'enyi composition~{\citep[Proposition 1]{mironov2017renyi}}]
	If $\Lrn_1, \cdots, \Lrn_k$ are randomized algorithms satisfying, respectively, $(\q, \eps_1)\text{-R\'enyi DP}, \cdots, (\q, \eps_k)\text{-R\'enyi DP}$ then their composed mechanism defined as $(\Lrn_1(\D), \cdots, \Lrn_k(\D))$ is $(\q, \eps_1 + \cdots + \eps_k)$-R\'enyi DP. Moreover, $i^{th}$ algorithm can be chosen on the basis of the outputs of algorithms $\Lrn_1, \cdots, \Lrn_{i-1}$.
\end{theorem}
\begin{theorem}[Weak triangle inequality of R\'enyi divergence~{\citep[Proposition 12]{mironov2017renyi}}]
	\label{thm:triangle_inequality}
	For any distribution $\rhO$ on $\C$, the R\'enyi divergence of $\nU$ w.r.t. $\nU'$ satisfies the following weak triangle inequality:
	\begin{equation}
		\label{eqn:triangle_inequality}
    		\Ren{\q}{\nU}{\nU'} \leq \Ren{\q}{\nU}{\rhO} +  \Ren{\infty}{\rhO}{\nU'}.
    	\end{equation}
\end{theorem}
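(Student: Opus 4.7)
The plan is to prove the weak triangle inequality by a change-of-measure trick applied directly to the exponentiated R\'enyi divergence $\Eren{\q}{\cdot}{\cdot}$, and then taking logs at the end. If either $\Ren{\q}{\nU}{\rhO}$ or $\Ren{\infty}{\rhO}{\nU'}$ is infinite, the inequality holds trivially, so I can assume both are finite; in particular $\nU \ll \rhO$ and $\rhO \ll \nU'$, which chains to $\nU \ll \nU'$ so that $\Ren{\q}{\nU}{\nU'}$ is defined via a density ratio.

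First, I would rewrite the moment of the likelihood ratio that defines $\Eren{\q}{\nU}{\nU'}$ by inserting a factor of $\rhO/\rhO$ in the integrand. Specifically, starting from
\begin{equation}
\Eren{\q}{\nU}{\nU'} = \int \left(\frac{\nU(\thet)}{\nU'(\thet)}\right)^{\!\q} \nU'(\thet)\,\dif{\thet} = \int \left(\frac{\nU(\thet)}{\rhO(\thet)}\right)^{\!\q} \left(\frac{\rhO(\thet)}{\nU'(\thet)}\right)^{\!\q-1} \rhO(\thet)\,\dif{\thet},
\end{equation}
so that the integral is now an expectation under $\rhO$ of the product of $(\nU/\rhO)^\q$ and $(\rhO/\nU')^{\q-1}$. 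The next step is to bound $(\rhO/\nU')^{\q-1}$ uniformly using the definition of $\Ren{\infty}{\rhO}{\nU'} = \log \mathrm{ess\,sup}_{\thet \sim \rhO} (\rhO(\thet)/\nU'(\thet))$, which gives $\rhO(\thet)/\nU'(\thet) \leq e^{\Ren{\infty}{\rhO}{\nU'}}$ for $\rhO$-almost every $\thet$, and thus $(\rhO/\nU')^{\q-1} \leq e^{(\q-1)\Ren{\infty}{\rhO}{\nU'}}$ $\rhO$-a.s.

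Pulling this constant out of the integral yields
\begin{equation}
\Eren{\q}{\nU}{\nU'} \leq e^{(\q-1)\Ren{\infty}{\rhO}{\nU'}} \int \left(\frac{\nU(\thet)}{\rhO(\thet)}\right)^{\!\q} \rhO(\thet)\,\dif{\thet} = e^{(\q-1)\Ren{\infty}{\rhO}{\nU'}} \cdot \Eren{\q}{\nU}{\rhO}.
\end{equation}
Recalling from Definition~\ref{dfn:renyi} that $\Eren{\q}{\nU}{\rhO} = e^{(\q-1)\Ren{\q}{\nU}{\rhO}}$, taking logarithms on both sides and dividing by $\q-1 > 0$ gives the claimed inequality $\Ren{\q}{\nU}{\nU'} \leq \Ren{\q}{\nU}{\rhO} + \Ren{\infty}{\rhO}{\nU'}$.

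There is no real obstacle beyond bookkeeping: the only subtlety is making sure the essential-supremum bound from $\Ren{\infty}{\rhO}{\nU'}$ is applied with respect to the correct measure (namely $\rhO$, which is exactly the measure governing the rewritten integral), and verifying the absolute continuity chain so that all density ratios make sense. Everything else is a direct algebraic manipulation of the integral defining $\Eren{\q}{\cdot}{\cdot}$, followed by a single pointwise bound and an application of the definition of $\Ren{\q}{\nU}{\rhO}$.
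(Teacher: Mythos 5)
The paper does not prove this result; it cites \citet[Proposition 12]{mironov2017renyi} directly, so there is no in-paper argument to compare against. Your proof is correct and is the standard change-of-measure argument for this inequality: rewriting $\Eren{\q}{\nU}{\nU'}$ as an integral against $\rhO$ of $(\nU/\rhO)^\q (\rhO/\nU')^{\q-1}$, bounding the second factor $\rhO$-almost surely by $\exp\bigl((\q-1)\Ren{\infty}{\rhO}{\nU'}\bigr)$, pulling that constant out, and taking logarithms. You correctly dispatch the degenerate infinite-divergence cases up front and track the absolute continuity chain $\nU \ll \rhO \ll \nU'$, which is precisely what makes the inserted density ratios well-defined and ensures the essential supremum is taken with respect to the right reference measure; this is also essentially the argument underlying the cited proposition.
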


Another popular notion of information divergence is the Kullback-Leibler divergence.
\begin{definition}[Kullback-Leibler divergence~\citep{kullback1951information}]
	\emph{Kullback-Leibler} (KL) divergence $\KL{\nU}{\nU'}$ of $\nU$ w.r.t. $\nU'$ is defined as
	\begin{equation}
		\KL{\nU}{\nU'} = \expec{\thet \sim \nU}{\log \frac{\nU(\thet)}{\nU'(\thet)}}.
	\end{equation}
\end{definition}
R\'enyi divergence generalizes Kullback-Leibler divergence as ${\lim_{\q \rightarrow 1} \Ren{\q}{\nU}{\nU'} = \KL{\nU}{\nU'}}$ ~\citep{van2014renyi}. Some other divergence notions that we rely on are the following.
\begin{definition}[Wasserstein distance~\citep{vaserstein1969markov}]
	\emph{Wasserstein distance} between $\nU$ and $\nU'$ is
	\begin{equation}
		\Was{\nU}{\nU'} = \underset{\Pi}{\inf} \expec{\Thet, \Thet' \sim \Pi}{\norm{\Thet - \Thet'}^2}^{\frac{1}{2}},
	\end{equation}
	where $\Pi$ is any joint distribution on $\C \times \C$ with $\nU$ and $\nU'$ as its marginal distributions.
\end{definition}

\begin{definition}[Relative Fisher information~\citep{otto2000generalization}]
	If $\nU \ll \nU'$ and $\frac{\nU}{\nU'}$ is differentiable, then \emph{relative Fisher information} of $\nU$ with respect to $\nU'$ is defined as
	\begin{equation}
		\label{eqn:fisher_info}
		\Fis{\nU}{\nU'} =  \expec{\thet \sim \nU}{\norm{\graD{\log \frac{\nU(\thet)}{\nU'(\thet)}}}^2}.
	\end{equation}
\end{definition}
\begin{definition}[Relative R\'enyi information~\citep{vempala2019rapid}]
	Let $\q > 1$. If $\nU \ll \nU'$ and $\frac{\nU}{\nU'}$ is differentiable, then \emph{relative R\'enyi information} of $\nU$ with respect to $\nU'$ is defined as
	\begin{equation}
		\label{eqn:renyi_info}
		\Gren{\q}{\nU}{\nU'} = \frac{4}{\q^2} \expec{\thet \sim \nU'}{\norm{\graD{\left(\frac{\nU(\thet)}{\nU'(\thet)}\right)^{\q/2}}}^2} =  \expec{\thet \sim \nU'}{\left(\frac{\nU(\thet)}{\nU'(\thet)}\right)^{\q - 2}\norm{\graD{\left(\frac{\nU(\thet)}{\nU'(\thet)}\right)}}^2}.
	\end{equation}
\end{definition}

\section{Proofs for Section~\ref{sec:deletion_vs_unlearn}}
\label{app:proofs_deletion_vs_unlearn}

\begin{reptheorem}{thm:adaptive_violation}
	There exists an algorithm pair $(\Lrn, \Unlrn)$ satisfying $(0, 0)$-adaptive-unlearning under publish function $\publish(\thet) = \thet$ such that by designing a $1$-adaptive $1$-requester $\updreq$, an adversary can infer the identity of a record deleted by edit $\up_i$, at any arbitrary step $i > 3$, with probability at-least $1 - (1/2)^{i-3}$ from a single post-edit release $\pub_i$, even with no access to $\updreq$'s transcript $(\pub_{<i}; \up_{<i})$.
\end{reptheorem}
\begin{proof}
Let data universe be $\X$, the internal state space $\C$, and publishable outcome space $\Pub$ be $\R$. Consider the task of releasing a sequence of medians using function $\median : \R^* \rightarrow \R$ in the online setting when the initial database $\D_0 \in \X^\n$ is being modified by some adaptive requester $\updreq$. Given a database $\D \in \X^\n$, our learning algorithm is defined as $\Lrn(\D) = \median(\D)$. For an arbitrary edit request $\up \in \U^\reps$, our unlearning algorithm is defined as $\Unlrn(\D, \up, \bullet) = \median(\D \circ \up)$. Let the publish function $\publish: \C \rightarrow \Pub$ be an identity function, i.e. $\publish(\thet) = \thet$. 

For any initial database $\D_0 \in \X^\n$ and an adaptive sequence $(\up_i)_{i\geq1}$ generated by any $\infty$-adaptive $1$-requester $\updreq$, note that
\begin{equation}
	\publish(\Unlrn(\D_{i-1}, \up_i, \bullet)) = \publish(\Lrn(\D_i)), \quad \text{for all} \ i \geq 1 \ \text{and any} \ \bullet \in \C.
\end{equation}
Therefore, $\Unlrn$ is a $(0,0)$-adaptive unlearning algorithm for $\Lrn$ under $\publish$.

Now suppose that $\n$ is odd and $\D_0$ consists of unique entries. W.L.O.G assume that the median record $\median(\D_0)$ is at index $\ind^{m}$ and its owner will be deleting it at step $i$ by sending a non-adaptive edit request $\up_i = \{\langle \ind^{m}, \y \rangle\}$ such that $\y \neq \median(\D_0)$. We design the following $1$-adaptive $1$-requester $\updreq$ that sends edit requests in the first $i-1$ steps to ensure with high probability that the published outcome at step $i$ remains the deleted record, i.e., $\median(\D_i) = \median(\D_0)$:
\begin{equation}
	\updreq(\pub_0, \up_1, \up_2, \cdots, \up_{j-1}) = \{\langle \ind_j, \pub_0 \rangle\} \quad \forall \ 1 \leq j < i,
\end{equation}
where $\ind_j$ is randomly sampled from ${[\n] \setminus \{\ind_1, \cdots, \ind_{j-1}\}}$ without replacement. Note that by the end of interaction, $\updreq$ replaces at-least $i-2$ unique records in $\D_0$ with $\pub_0 = \median(\D_0)$. If one of those original records was larger than $\median(\D_0)$ and another was smaller than $\median(\D_0)$, then it is guaranteed that $\median(\D_i) = \median(\D_0)$. Therefore, $\prob{}{\median(\D_i) = \median(\D_0)}$ is at-least
\begin{align*}
     &\prob{}{\exists \ind^l, \ind^u \in \{ \ind_1, \cdots, \ind_{i-1}\} \ \text{s.t.} \ \D_0[\ind^l] < \D_0[\ind^m] < \D_0[\ind^u]}  \\ 
     &\quad\quad\quad\quad\quad\quad \geq 1 - 2 \times {\lfloor \n \rfloor/2 \choose i-2} \bigg/ {\n \choose i - 2} \geq 1 - \left(\frac{1}{2}\right)^{i-3}.
\end{align*}
\end{proof}

\begin{reptheorem}{thm:unlrn_vs_forget}
For every $\eps > 0$, there exists a pair $(\Lrn, \Unlrn)$ of algorithms that satisfy $(\eps, 0)$-non-adaptive-unlearning under some publish function $\publish$ such that for all non-adaptive $1$-requesters $\updreq$, their exists an adversary that can correctly infer the identity of a record deleted at any arbitrary edit step $i \geq 1$ by observing only the post-edit releases $\pub_{\geq i}$.
\end{reptheorem}

\begin{proof}
	For a query $\query: \X \rightarrow \{0,1\}$, consider the task of learning the count over a database that is being edited online by a non-adaptive $1$-requester $\updreq$. Since $\updreq$ is non-adaptive by assumption, it is equivalent to the entire edit sequence $\{\up_i\}_{i \geq 1}$ being fixed before interaction. We design an algorithm pair $(\Lrn, \Unlrn)$ for this task with secret model space being $\C = \N^3$ and published outcome space being $\Pub = \R$, with the publish function being $\publish(\langle a, b, c \rangle) = a + b/c + \lap{\frac{1}{\eps}}$ (with the convention that $b/c = 0$ if $b=c=0$). At any step $i \geq 0$, our internal model $\hat\Thet_i = \langle \mathrm{cnt}_i, \mathrm{del}_i, i \rangle$ encodes the current count of $\query$ on database $\D_i$, the count of $\query$ on records previously deleted by $\up_{\leq i}$, and the current step index $i$. Our learning algorithm initializes the secret model as $\hat\Thet_0 = \Lrn(\D_0) = \langle \sum_{\x \in \D_0} \query(\x), 0, 0 \rangle$, and, for an edit request $\up_i = \{\langle \ind_i, \y_i \rangle\}$, our algorithm $\Unlrn$ updates the secret model $\hat\Thet_{i-1} \rightarrow \hat\Thet_i$ following the rule
\begin{align*}
	\hat\Thet_i = \Unlrn(\D_{i-1}, \up_i, \hat\Thet_{i-1}) = \langle\mathrm{cnt}_i, \mathrm{del}_i, i\rangle \ \text{where}\ 
	\begin{cases} 
		\mathrm{cnt}_i = \mathrm{cnt}_{i-1} + \query(\y_i) - \query(\D_{i-1}[\ind_i]), \\
		\mathrm{del}_i = \mathrm{del}_{i-1} + \query(\D_{i-1}[\ind_i]).
	\end{cases}
\end{align*}
Note that $\forall i \geq 1$, $\Delta_i \stackrel{\text{def}}{=} \mathrm{del}_i / i \in [0, 1]$. Therefore, from properties of Laplace mechanism~\citep{dwork2014algorithmic}, it is straightforward to see that for all $i \geq 1$,
\begin{align*}
	\publish(\Unlrn(\D_{i-1}, \up_i, \hat\Thet_{i-1})) \big| \up_{\leq i} &= \sum_{\x \in \D_i} \query(\x) + \Delta_i + \lap{\frac{1}{\eps}} \\
	&\approxDP{\eps}{0} \sum_{\x \in \D_i} \query(\x) + \lap{\frac{1}{\eps}} = \publish(\Lrn(\D_i)).
\end{align*}
Hence, $\Unlrn$ is an $(\eps, 0)$-unlearning algorithm for $\Lrn$ under $\publish$.

To show that an adversary can still infer the identity of record deleted by edit request $\up_i = (\ind_i, \bullet)$, consider a database $\D_{i-1}'$ that differs from $\D_{i-1}$ only at index $\ind_i$ such that $\query(\D_{i-1}'[\ind_i]) \neq \query(\D_{i-1}[\ind_i])$. Let random variable sequences $\pub_{\geq i}$ and $\pub_{\geq i}'$ denote the releases by $\Unlrn$ in the scenarios that the $(i-1)^{\text{th}}$ database was $\D_{i-1}$ and $\D_{i-1}'$ respectively. The divergence between these two random variable sequences reflect the capacity of any adversary to infer the record deleted by $\up_i$. Since, we have identical databases after $\up_i$, i.e. $\D_{j-1} \circ \up_j = \D_{j-1}' \circ \up_j$ for all $j \geq i$, note that both $\pub_j$ and $\pub_j'$ are independent Laplace distributions with a shift of exactly $\frac{1}{j}$ units. Therefore,
	\begin{align*}
		\max_{\Out \subset \Pub^*} \log \frac{\prob{}{\pub_{\geq i} \in \Out}}{\prob{}{\pub_{\geq i}' \in \Out}} &= \sum_{j=i}^{\infty} \max_{\Out_j \subset \R} \log \frac{\prob{}{\pub_j \in \Out_j}}{\prob{}{\pub_j' \in \Out_j}} 
	    =  \sum_{j=i}^{\infty} \log e^{\eps / j} = \infty. 
	\end{align*}
	\vspace{-0.4cm}
\end{proof}

\subsection{Unsoundness and Incompleteness of Offline Unlearning Definitions}
\label{app:two_stage_unlearning}

In this subsection, we show that our criticisms on soundness and completeness of unlearning notions under adaptive requests in Section~\ref{sec:deletion_vs_unlearn} also apply to the following unlearning definition variants of \citet{guo2019certified, sekhari2021remember}.

\begin{definition}[$(\eps, \del)$-certified removal~\citep{guo2019certified}]
	\label{def:guo}
	A removal mechanism $\Unlrn$ performs \emph{$(\eps, \del)$-certified removal} for learning algorithm $\Lrn$ if for all databases $\D \subset \X$ and deletion subset $S \subset \D$,
\begin{equation}
	\Unlrn(\D, S, \Lrn(\D)) \stackrel{\eps,\del}{\approx} \Lrn(\D\setminus S).
\end{equation}
\end{definition}
\begin{definition}[$(\eps, \del)$-unlearning~\citep{sekhari2021remember}]
	\label{def:sekhari}
	For all $\D \subset \X$ of size $\n$ and deletion subset $S \subset \D$ such that $|S| \leq m$, a learning algorithm $\Lrn$ and an unlearning algorithm $\Unlrn$ is \emph{$(\eps, \del)$-unlearning} if
\begin{equation}
	\Unlrn(T(\D), S, \Lrn(\D)) \stackrel{\eps,\del}{\approx} \Unlrn(T(\D \setminus S), \varnothing, \Lrn(\D \setminus S)),
\end{equation}
where $\varnothing$ denotes the empty set and $T(\D)$ denotes the data statistics available to $\Unlrn$ about $\D$.
\end{definition}
%


{\bf Unsoundness.} Unlike Definition~\ref{def:ginart}, Definitions~\ref{def:guo} and~\ref{def:sekhari} make no assumptions about dependence between the deletion request $S$ and the learned model $\Lrn(\D)$. So, request $S$ can depend on $\Lrn(\D)$. This dependence is common in the real world; for example, a user deletes her information if she doesn't like what model $\Lrn(\D)$ reveals about her. We recall the example we provide in Section~\ref{sec:deletion_vs_unlearn} to show that Definitions~\ref{def:guo} and~\ref{def:sekhari}, are unsound under adaptivity.
 
For the universe of records $\X = \{-2, -1, 1, 2\}$, consider the following learning and unlearning algorithms:
\begin{equation}
	\Lrn(\D) = \sum_{\x \in \D} \x, \quad \text{and} \quad \Unlrn(\D, S, \Lrn(\D)) = \sum_{\x \in \D \setminus S} \x.
\end{equation}
Note that for any $\D \subset \X$ and any $S \subset \D$, the above algorithm pair $(\Lrn, \Unlrn)$ satisfies Definitions~\ref{def:guo},~\ref{def:sekhari} and~\ref{def:ginart} for $\eps = \del = 0$ and $T(\D) = \D$. Suppose the adversary is aware that the following dependence holds between the learned model $\Lrn(\D)$ and deletion request $S$:
\begin{equation}
	S = \begin{cases} 
		\{\x < 0: \forall \x \in \X\}&\text{if} \ \Lrn(\D) < 0,\\
		\{\x > 0: \forall \x \in \X\}&\text{otherwise}.
	\end{cases}
\end{equation}
Consider two neighbouring databases $\D_{-1} = \{-2, -1, 2\}$ and $\D_{1} = \{-2, 1, 2\}$. Knowing the above dependence, an adversary can determine whether $\D = \D_{-1}$ or $\D = \D_{1}$ by looking only at $\Unlrn(\D, S, \Lrn(\D))$. This is because if $\D = \D_{-1}$, then the observation after unlearning is $2$, and if $\D = \D_{1}$, the observation after unlearning is $-2$. So, even though $(\Lrn, \Unlrn)$ satisfies the guarantees of \citet{guo2019certified} and \citet{sekhari2021remember}, it blatantly reveals the identity ($-1$ or $1$) of a deleted record to an adversary observing only the post-deletion release.

Note that \citet{ginart2019making}'s Definition~\ref{def:ginart} assumes that the requests $S$ is selected independently of the learned model $\Lrn(\D)$. So, our construction does not apply, keeping the possibility that their definition is sound. We remark, however, that algorithms satisfying their definitions cannot be trusted in settings where we expect some dependence between deletion requests and the learned models.

{\bf Incompleteness.} Definitions~\ref{def:guo} and~\ref{def:sekhari} are also incomplete. Consider an unlearning algorithm $\Unlrn$ that outputs a fixed output $\x_1 \in \X$ if the deletion request $S = \varnothing$ and outputs another fixed output $\x_2 \in \X$ if the deletion request $S \neq \varnothing$. It is easy to see that $\Unlrn$ is a valid deletion algorithm as its output does not depend on the input database $\D$ or the learned model $\Lrn(\D)$. However, note that $\Unlrn$ does not satisfy the unlearning Definition~\ref{def:sekhari}, for any learning algorithm $\Lrn$. And, for a learning algorithm $\Lrn(\D) = \sum_{\x \in \D} \x$, one can also verify that the pair $(\Lrn, \Unlrn)$ does not satisfy Definitions~\ref{def:guo} either.

\section{Proofs for Section~\ref{sec:deletion}}
\label{app:proofs_deletion}

\begin{reptheorem}{thm:soundness}[Data-deletion Definition~\ref{dfn:deletion} is sound]
	If the algorithm pair $(\Lrn, \Unlrn)$ satisfies $(\q, \eps)$-data-deletion guarantee under all $\pubs$-adaptive $\reps$-requesters, then even with the power of designing an $\pubs$-adaptive $\reps$-requester $\updreq$ that interacts with the curator before deletion of a target record at any step $i\geq1$, any adversary observing only the post-deletion releases $(\hat\Thet_i, \hat\Thet_{i+1}, \cdots)$ has its membership inference advantage for inferring a deleted target bounded as 
\begin{equation}
	\label{eqn:adv_sound_app}
	\text{Adv}(\text{MI}) \leq \min\left\{\sqrt{2\eps},\frac{\q e^{\eps(\q-1)/\q}}{\q - 1}[2(\q - 1)]^{1/\q} - 1\right\}.
\end{equation}
\end{reptheorem}
\begin{proof}
	For an arbitrary step $i\geq1$, suppose one of the replacement operations in the edit request $\up_i \in \U^\reps$ replaces a record at index `$\ind$' from the database $\D_{i-1}$ with `$\y$'. In the worst case, this record $\D_{i-1}[\ind]$ might have been there from the start, i.e. $\D_0[\ind] = \D_0[\ind]$, and influenced all the decisions of the adaptive requester $\updreq$ in the edit steps $1, \cdots, i-1$. To prove soundness, we need to show that if $(\Lrn, \Unlrn)$ satisfies $(\q, \eps)$-data-deletion, then even in this worst-case scenario, no adaptive adversary can design a membership inference test $\text{MI}(\hat\Thet_i, \hat\Thet_{i+1}, \cdots) \in \{0, 1\}$ that can distinguish with high probability the null hypothesis $H_0 = \{\D_0[\ind] = \x\}$ from the alternate hypothesis $H_1 = \{\D_0[\ind] = \x'\}$ for any $\x, \x' \in \X$. That is, the advantage of any test $\text{MI}$, defined as
\begin{equation}
	\text{Adv}(\text{MI}) \stackrel{\text{def}}{=} \prob{}{\text{MI}(\hat\Thet_i, \hat\Thet_{i+1}, \cdots) = 1 |H_0} - \prob{}{\text{MI}(\hat\Thet_i, \hat\Thet_{i+1}, \cdots) = 1 |H_1},
\end{equation}
must be small. Since after processing edit request $\up_i$, the databases $\D_i, \D_{i+1}, \cdots$ no longer contain the deleted record $\D_{i-1}[\ind]$, the data-processing inequality implies that future models $\hat\Thet_{i+1}, \hat\Thet_{i+2}, \cdots$ cannot have more information about $\D_{i-1}[\ind]$ that what is present in $\hat\Thet_i$. Therefore, any test $\text{MI}(\hat\Thet_i, \hat\Thet_{i+1}, \cdots)$ has a smaller advantage than the optimal test $\text{MI}^*(\hat\Thet_i) \in \{0,1\}$ that only uses $\hat\Thet_i$.

Also, since $(\Lrn, \Unlrn)$ satisfy $(\q, \eps)$-data-deletion for any $\pubs$-adaptive $\reps$-requester $\updreq$, we know from Definition~\ref{dfn:deletion} that there exists a mapping $\pI^\updreq_i$ such that for all $\D_0 \in \X^\n$, the model $\hat\Thet_i$ generated by the interaction between $(\Lrn, \Unlrn, \updreq)$ on $\D_0$ after $i$th edit satisfies the inequality $\Ren{\q}{\hat\Thet_i}{\pI^\updreq_i(\D_0\circ\langle \ind, \y \rangle)} \leq \eps$. As the database $\D_0 \circ \langle \ind, \y \rangle$ is identical under both hypothesis $H_0$ and $H_1$, we have $\Ren{\q}{\hat\Thet_i|H_b}{\bar\Thet} \leq \eps$ for $b \in \{0, 1\}$, where $\bar\Thet = \pI^\updreq_i(\D_0\circ\langle \ind, \y \rangle)$. From R\'enyi divergence to $(\eps, \del)$-indistinguishability conversion described in Remark~\ref{rem:renyi_onesided}, we get
\begin{align}
	\prob{}{\text{MI}^*(\hat\Thet_i) = 1|H_0} &\leq e^{\eps'(\del)} \prob{}{\text{MI}^*(\bar\Thet) = 1} + \del,\ \text{and} \\
	\prob{}{\text{MI}^*(\hat\Thet_i) = 0|H_1} &\leq e^{\eps'(\del)} \prob{}{\text{MI}^*(\bar\Thet) = 0} + \del,
\end{align}
where $\eps'(\del) = \eps + \frac{\log 1/\del}{\q - 1}$ for any $0 < \del < 1$. On adding the two inequalities, we get:
\begin{align*}
	\text{Adv}(\text{MI}) \leq \text{Adv}(\text{MI}^*) &= \prob{}{\text{MI}^*(\hat\Thet_i) = 1|H_0} - \prob{}{\text{MI}^*(\hat\Thet_i) = 1|H_1} \\
							   &\leq \min_{\del} e^{\eps'(\del)} - 1 + 2\del \\
							   &= \frac{\q e^{\eps(\q-1)/\q}}{\q - 1}[2(\q - 1)]^{1/\q} - 1
\end{align*}
Alternatively, from monotonicity of R\'enyi divergence w.r.t. order $\q$ and the fact that R\'enyi divergence converges to KL divergence as $\q \rightarrow 1$, we have from $\Ren{\q}{\hat\Thet_i|H_b}{\bar\Thet} \leq \eps$ for $b \in \{0, 1\}$ that
\begin{align*}
	&\KL{\hat\Thet_i|H_b}{\bar\Thet} \leq \Ren{\q}{\hat\Thet_i|H_b}{\bar\Thet} \leq \eps \\
	\implies&\TV{\hat\Thet_i|H_b;\bar\Thet} \leq \sqrt{\frac{\eps}{2}} \tag{From Pinkser inequality},
\end{align*}
for $b \in \{0, 1\}$. So, from triangle inequality on total variation distance, we have
\begin{equation}
	\TV{\hat\Thet_i|H_0;\hat\Thet_i|H_1} \leq \TV{\hat\Thet_i|H_0;\bar\Thet} + \TV{\hat\Thet_i|H_0;\bar\Thet} \leq \sqrt{2\eps}.
\end{equation}
So, advantage of any membership inference attack $\text{MI}$ must have an advantage satisfying
\begin{equation}
	\text{Adv}(\text{MI}) = \prob{}{\text{MI}(\hat\Thet_i) = 1|H_0} - \prob{}{\text{MI}(\hat\Thet_i) = 1|H_1} \leq \sqrt{2\eps}.
\end{equation}
\end{proof}

\begin{reptheorem}{thm:dp_necessary}[Privacy of remaining records is necessary for adaptive deletion]
	Let $\Test: \C \rightarrow \{0, 1\}$ be a membership inference test for $\Lrn$ to distinguish between neighbouring databases $\D, \D' \in \X^\n$. Similarly, let $\Testb: \C \rightarrow \{0, 1\}$ be a membership inference test for $\Unlrn$ to distinguish between $\bar\D, \bar\D' \in \X^\n$ that are neighbouring after applying edit $\bar\up \in \U^1$. If $\text{Adv}(\Test) > \del$ and $\text{Adv}(\Testb) > \del$, then the pair $(\Lrn, \Unlrn)$ cannot satisfy $(\q, \eps)$-data-deletion under $1$-adaptive $1$-requester for any
	\begin{equation}
		\label{eqn:dp_necessary_app}
		\eps < \max\left\{\frac{\del^4}{2}, \log(\q - 1)  + \frac{\q}{\q-1}\log\left(\frac{1+\del^2}{\q2^{1/\q}} \right)\right\}.
	\end{equation}
\end{reptheorem}
\begin{proof}
By assumption, we know that there exists tests $\Test, \Testb: \C \rightarrow \{0,1\}$ such that
\begin{equation}
	\text{Adv}(\Test) \stackrel{\mathrm{def}}{=} \prob{}{\Test(\Lrn(\D)) = 1} - \prob{}{\Test(\Lrn(\D')) = 1} > \del,
\end{equation}
and for all $\thet \in \C$,
\begin{equation}
	\text{Adv}(\Testb) \stackrel{\text{def}}{=} \prob{}{\Testb(\Unlrn(\bar\D,\bar\up,\thet)) = 1} - \prob{}{\Testb(\Unlrn(\bar\D',\bar\up,\thet)) = 1} > \del.
\end{equation}
Define $\Out' = \{\thet \in \C| \Test(\thet) = 1\}$ and $\Outb' = \{\thet \in \C| \Testb(\thet) = 1\}$. We have that the total variation distance between $\Lrn(\D)$ and $\Lrn(\D')$ is lower bounded as
\begin{align}
	\TV{\Lrn(\D);\Lrn(\D')} &= \sup_{\Out \subset \C} |\prob{}{\Lrn(\D) \in \Out} - \prob{}{\Lrn(\D') \in \Out}| \\
				&> \prob{}{\Lrn(\D) \in \Out'} - \prob{}{\Lrn(\D') \in \Out'} \\
				&= \prob{}{\Test(\Lrn(\D)) = 1} - \prob{}{\Test(\Lrn(\D')) = 1} > \del.
\end{align}
Similarly, we also have that for all $\thet \in \C$, the total variation distance between $\Unlrn(\bar\D, \bar\up, \thet)$ and $\Unlrn(\bar\D', \bar\up, \thet)$ is lower bounded as
\begin{align}
	\TV{\Unlrn(\bar\D, \bar\up, \thet);\Unlrn(\bar\D',\bar\up,\thet)} &= \sup_{\Out \subset \C} |\prob{}{\Unlrn(\bar\D, \bar\up,\thet) \in \Out} -  \prob{}{\Unlrn(\bar\D', \bar\up,\thet) \in \Out}| \\
									  &> \prob{}{\Unlrn(\bar\D, \bar\up,\thet) \in \Outb'} - \prob{}{\Unlrn(\bar\D', \bar\up,\thet) \in \Outb'} \\
									  &= \prob{}{\Testb(\Unlrn(\bar\D,\bar\up,\thet)) = 1} - \prob{}{\Testb(\Unlrn(\bar\D',\bar\up,\thet)) = 1} > \del.
\end{align}
%

Assume W.L.O.G. that $\bar\up$ replaces at index $\n$ and the edited databases $\bar\D \circ \up, \bar\D' \circ \up$ differs only at index $1$. Also assume that $\D, \D'$ differs at index $\n$. 

Recall from Definition~\ref{dfn:deletion} that satisfying $(\q, \eps)$-data-deletion under $1$-adaptive $1$-requesters requires existence of a map $\pI^\updreq_\n: \X^\n \rightarrow \C$ for each $\updreq$ such that for all $\D_0 \in \X^\n$,
\begin{equation}
	\label{eqn:desired_dd_bnd}
	\Ren{\q}{\Unlrn(\D_{\n-1}, \up_\n, \hat\Thet_{\n-1})}{\pI^\updreq_\n(\D_0 \circ \up_\n)} \leq \eps, 
\end{equation}

To prove the theorem statement, we show that for a starting database $\D_0 \in \{\D, \D'\}$ and an edit request $\up_\n = \bar\up$ that deletes the differing record in choices of $\D_0$ at edit step $\n$, there exists a $1$-adaptive $1$-requester $\updreq$ that sends adaptive edit requests $\up_1, \cdots, \up_{\n-1}$ in the first $\n-1$ steps such that no map $\pI^\updreq_\n$ exists that satisfies \eqref{eqn:desired_dd_bnd} for both choices of $\D_0$ when $\eps$ follows inequality~\eqref{eqn:dp_necessary_app}.

Consider the following construction of $1$-adaptive $1$-requester $\updreq$ that only observes the first model $\hat\Thet_0 = \Lrn(\D_0)$ and generates the edit requests $(\up_1, \cdots, \up_{\n-1})$ as follows:
\begin{equation}
	\updreq(\hat\Thet_0; \up_1, \up_2, \cdots, \up_{i-1}) = \begin{cases}
		\langle i, \bar\D[i]\rangle &\text{if} \ \Test(\hat\Thet_0) = 1, \\
		\langle i, \bar\D'[i]\rangle &\text{otherwise}.
	\end{cases}
\end{equation}
This requester $\updreq$ transforms any initial database $\D_0$ to $\D_{\n-1} = \bar\D$ if the outcome $\Test(\hat\Thet_0) = 1$, otherwise to $\D_{\n-1} = \bar\D'$. Consider an adversary that does not observe the interaction transcript $(\hat\Thet_{<n}; \up_{<\n})$, but is interested in identifying whether $\D_0$ was $\D$ or $\D'$. The adversary gets to observe only the output $\hat\Thet_\n = \Unlrn(\D_{\n-1}, \up_\n, \hat\Thet_{\n-1})$ generated after processing the edit request $\up_\n = \bar\up$. On this observation, the adversary runs the membership inference test $\text{MI}(\hat\Thet_\n) = \Testb(\hat\Thet_\n)$. The membership inference advantage of $\text{MI}$ is
\begin{align*}
	\text{Adv}(\text{MI};\D, \D') &\stackrel{\text{def}}{=} \prob{}{\text{MI}(\hat\Thet_\n) = 1| \D_0 = \D} - \prob{}{\text{MI}(\hat\Thet_\n) = 1| \D_0 = \D'} \\
				      &=\sum_{b\in \{0,1\}} \prob{}{\Testb(\hat\Thet_\n) = 1| \Test(\hat\Thet_0) = b}\times \prob{}{\Test(\hat\Thet_0) = b| \D_0 = \D} \\
				      &\quad - \sum_{b\in \{0,1\}} \prob{}{\Testb(\hat\Thet_\n) = 1| \Test(\hat\Thet_0) = b}\times \prob{}{\Test(\hat\Thet_0) = b| \D_0 = \D'} \\
				      &= \left(\prob{}{\Testb(\hat\Thet_\n) = 1| \D_{\n-1} = \bar\D} - \prob{}{\Testb(\hat\Thet_\n) = 1| \D_{\n-1} = \bar\D'}\right) \text{Adv}(\Test;\D,\D') \\
				      &= \text{Adv}(\Testb; \bar\D, \bar\D', \bar\up) \times \text{Adv}(\Test;\D,\D') > \del^2.
\end{align*}
So, from the contrapositive of our soundness Theorem~\ref{thm:soundness}, we have that $(\Lrn, \Unlrn)$ cannot be an $(\eps, \q)$-data-deletion algorithm for $\eps$ and $\q$ satisfying
\begin{align}
	&\del^2 > \min\left\{\sqrt{2\eps},\frac{\q e^{\eps(\q-1)/\q}}{\q-1}[2(\q - 1)]^{1/\q} - 1\right\} \\
	\iff& \eps < \max\left\{\frac{\del^4}{2}, \log(\q - 1)  + \frac{\q}{\q-1}\log\left(\frac{1+\del^2}{\q2^{1/\q}}\right)\right\}.
\end{align}
\end{proof}

\begin{reptheorem}{thm:reduction}[From adaptive to non-adaptive deletion]
	If an algorithm pair $(\Lrn, \Unlrn)$ satisfies $(\q, \epsdd)$-data-deletion under all non-adaptive $\reps$-requesters and is also $(\q, \epsdp)$-R\'enyi DP with respect to records not being deleted, then it also satisfies $(\q, \epsdd + \pubs \epsdp)$-data-deletion under all $\pubs$-adaptive $\reps$-requesters.
\end{reptheorem}
\begin{proof}
	To prove this theorem, we need to show that for any $\pubs$-adaptive $\reps$-requester $\updreq$, there exists a construction for a map $\pI^\updreq_i: \X^\n \rightarrow \C$ such that for all $\D_0 \in \X^\n$, the sequence of model $(\hat\Thet_i)_{i\geq0}$ generated by the interaction between $(\updreq, \Lrn, \Unlrn)$ on $\D_0$ satisfies the following inequality for all $i\geq1$:
	\begin{equation}
		\label{eqn:const_prof}
		\Ren{\q}{\Unlrn(\D_{i-1}, \up_i, \hat\Thet_{i-1})}{\pI^\updreq_i(\D_0 \circ \langle \ind, \y \rangle)} \leq \epsdd + \pubs \epsdp, \quad \text{for all}\ \up_i \in \U^\reps\ \text{and}\ \langle \ind, \y \rangle \in \up_i.
	\end{equation}
	Fix a database $\D_0 \in \X^\n$ and an edit request $\up_i \in \U^\reps$. Let $\D_0' \in \X^\n$ be a neighbouring database defined to be $\D'_0 = \D_0 \circ \langle \ind, \y\rangle$ for an arbitrary replacement operation $\langle \ind, \y \rangle \in \up_i$. Given any $\pubs$-adaptive $\reps$-requester $\updreq$, let $(\hat\Thet_i)_{i \geq 0}$ and $(\Up_i)_{i \geq 1}$ be the sequence of released model and edit request random variables generated on $\updreq$'s interaction with $(\Lrn, \Unlrn)$ with initial database as $\D_0$. Similarly, let $(\hat\Thet_i')_{i \geq 0}$ and $(\Up_i')_{i \geq 1}$  be the corresponding sequences generated due to the interaction among $(\updreq, \Lrn, \Unlrn)$ on $\D_0'$.

	Since $(\Lrn, \Unlrn)$ is assumed to satisfy $(\q, \epsdd)$-data-deletion guarantee under non-adaptive $\reps$-requesters, recall from Remark~\ref{rem:non_adap_independence} that there exists a mapping $\pI: \X^\n \rightarrow \C$ such that for any fixed edit sequence $\up_{\leq i} \stackrel{\text{def}}{=} (\up_1, \up_2, \cdots, \up_i)$, 
	\begin{align}
		&\Ren{\q}{\hat\Thet_i|_{\Up_{\leq i} = \up_{\leq i}}}{\pI(\D_0 \circ \up_{\leq i})} \leq \epsdd \\
		\implies&\Ren{\q}{\Unlrn(\D_0\circ\Up_{<i}, \up_i, \hat\Thet_i)|_{\Up_{<i} = \up_{<i}}}{\pI(\D_0 \circ \Up_{<i}' \circ \up_i)|_{\Up_{<i} = \up_{<i}}} \leq \epsdd. \label{eqn:non_adap_dp_adap_1}
	\end{align}
	Note that since the replacement operation $\langle \ind, \y \rangle$ is part of the edit request $\up_i$, we have $\D_0 \circ \Up_{< i}' \circ \up_i = \D_0' \circ \Up_{< i}' \circ \up_i$. Moreover, since the sequence $\Up_{<i}'$ of edit requests is generated by the interaction of $(\updreq, \Lrn, \Unlrn)$ on $\D_0' = \D_0 \circ \langle \ind, \up \rangle$ and the $i$th edit request $\up_i$ is fixed beforehand, we can define a valid construction of a map $\pI^\updreq_i:\X^\n \rightarrow \C$ as per Definition~\ref{dfn:deletion} as follows:
	\begin{equation}
		\pI^\updreq_i(\D_0 \circ \langle \ind, \y \rangle) = \pI(\D_0' \circ \Up_{<i}' \circ \up_i).
	\end{equation}
	For brevity, let $\hat\Thet_\up = \Unlrn(\D_0 \circ \Up_{<i}, \up_i, \hat\Thet_{i-1})$, and $\hat\Thet_\up' = \pI^\updreq_i(\D_0 \circ \langle \ind, \y \rangle)$. For this construction, we prove the requisite bound in \eqref{eqn:const_prof} as follows. 
	\begin{align*}
		\Ren{\q}{\hat\Thet_\up}{\hat\Thet_\up'} &\leq \Ren{\q}{(\hat\Thet_\up, \Up_{< i})}{(\hat\Thet_\up', \Up_{< i}')} \tag{Data processing inequality~\cite[Theorem 1]{van2014renyi}} \\
						 &= \frac{1}{\q - 1} \log \int_{\thet} \sum_{\up_{< i}} \frac{J(\thet, \up_{<i})^\q}{J'(\thet, \up_{<i})^{\q - 1}} \dif{\thet} \tag{$J$ \& $J'$ are joint PDFs of $(\hat\Thet_\up, \Up_{< i})$ \& $(\hat\Thet_\up', \Up_{< i}')$} \\
						 &= \frac{1}{\q - 1} \log \sum_{\up_{< i}} \frac{\prob{}{\Up_{< i} = \up_{< i}}^\q}{\prob{}{\Up_{< i}' = \up_{< i}}^{\q-1}}  \left\{\int_{\thet} \frac{p_{\hat\Thet_\up|\Up_{< i} = \up_{< i}}(\thet)^\q}{p_{\hat\Thet_\up'|\Up_{< i}' = \up_{< i}}(\thet)^{\q-1}} \dif{\thet} \right\} \\
						 &\leq \frac{1}{\q - 1} \log \sum_{\up_{< i}} \frac{\prob{}{\Up_{< i} = \up_{< i}}^\q}{\prob{}{\Up_{< i}' = \up_{< i}}^{\q-1}} \exp((\q - 1)\epsdd) \tag{From \eqref{eqn:non_adap_dp_adap_1}}\\
						 &= \epsdd + \Ren{\q}{\Up_{< i}}{\Up_{< i}'} \\
						 &\leq \epsdd + \Ren{\q}{\left(\hat\Thet_{s^1}, \cdots, \hat\Thet_{s^\pubs}\right)}{\left(\hat\Thet_{s^1}', \cdots, \hat\Thet_{s^\pubs}'\right)} \tag{If $\updreq$ sees outputs at steps $s^1,\cdots,s^\pubs$} \\
						 &\leq  \epsdd + \pubs \epsdp. \tag{Via R\'enyi composition}
	\end{align*}

\end{proof}

\subsection{Our Reduction Theorem~\ref{thm:reduction} versus \citet{gupta2021adaptive}'s Reduction}
\label{subsec:gupta_compare}

Adaptive unlearning guarantee in \citep[Definition 2.3]{gupta2021adaptive} is designed to ensure that no adaptive requester $\updreq$ can force the output distribution of the unlearning algorithm $\Unlrn(\D_{i-1}, \up_i, \hat\Theta_{i-1})$ to diverge substantially from that of retraining algorithm $\Lrn(\D_i)$ with high probability. Such an attack is possible in unlearning algorithms that rely on some persistent states that are only randomized once during initialization. For example, \citet{bourtoule2021machine}'s SISA unlearning algorithm randomly partitions the initial database $\D_0$ during setup and uses the same partitioning for processing edit requests, deleting records from respective shards on request. \citet{gupta2021adaptive} show that an adaptive update requester $\updreq$ can interactively send deletion requests $\up_1, \cdots, \up_i$ to SISA so that after some time, the partitioning of remaining records in $\D_i = \D_0 \circ \up_1 \cdots \up_i$ follows a pattern that is unlikely to occur on repartitioning of $\D_i$ if we execute $\Lrn(\D_i)$. 

They provide a general reduction~\citep[Theorem 3.1]{gupta2021adaptive} from adaptive to non-adaptive unlearning guarantee under differential privacy. Their reduction relies on DP with regards to a change in the description of learning/unlearning algorithm's internal randomness and not with regards to the standard replacement of records. DP with respect to internal description of randomness means that an adversary observing an unlearned model remains uncertain about persistent states like database partitioning in SISA during setup. So from a triangle inequality type argument, \citet{gupta2021adaptive} show that with DP with respect to learning/unlearning algorithms' coins along with a non-adaptive unlearning guarantee implies an adaptive unlearning guarantee.

Our work shows that satisfying adaptive unlearning definition of \citet{gupta2021adaptive} still does not guarantee data deletion. In Theorem~\ref{thm:adaptive_violation}, we demonstrate that there exists an algorithm pair $(\Lrn, \Unlrn)$ satisfying adaptive unlearning Definition~\ref{dfn:unlearning} (a strictly stronger version of \citep[Definition 2.3]{gupta2021adaptive}), but still causes blatant non-privacy of deleted records in post-deletion release. The vulnerability we identify occurs because an adaptive requester can learn the identity of any target record before it is deleted and re-encode it back in the curator's database by sending edit requests. Because of this, an adversary (who knows how the adaptive requester works but does not have access to the requester's interaction transcript) can extract the identity of the target record from the model released after processing the deletion request. In our work, we argue that a reliable (and necessary) way to prevent this attack is to make sure that no adaptive requester ever learns the identity of a target record from the pre-deletion model releases it has access to. Consequently, our reduction in Theorem~\ref{thm:reduction} from adaptive to non-adaptive requests relies on differential privacy with respect to the standard replacement of records instead.


\section{Calculus Refresher}
\label{ssec:appendix_cal}

Given a twice continuously differentiable function $\Loss:\C \rightarrow \R$, where $\C$ is a closed subset of $\R^\dime$, its gradient
$\graD{\Loss}: \C \rightarrow \R^\dime$ is the vector of partial derivatives
\begin{equation}
    \label{eqn:grad_dfn}
    \graD{\Loss(\thet)} = \left(\doh{\Loss(\thet)}{\thet_1},\cdots,
    \doh{\Loss(\thet)}{\thet_2}\right).
\end{equation}
Its Hessian $\hesS{\Loss}:\C \rightarrow \R^{\dime \times \dime}$ is the matrix of second partial
derivatives
\begin{equation}
    \label{eqn:hess_dfn}
    \hesS{\Loss(\thet)} = \left(\doh{^2\Loss(\thet)}{\thet_i\thet_j}\right)_{1 \leq i,j \leq \dime}.
\end{equation}
Its Laplacian $\lapL{\Loss}:\C \rightarrow \R$ is the trace of its Hessian $\hesS{\Loss}$, i.e.,
\begin{equation}
    \label{eqn:lapl_defn}
    \lapL{\Loss(\thet)} = \tra{\hesS{\Loss(\thet)}}.
\end{equation}

Given a differentiable vector field $\ve = \left(\ve_1, \cdots, \ve_\dime\right) :\C \rightarrow \R^\dime$, its 
divergence $\divR{\ve}: \C \rightarrow \R$ is
\begin{equation}
    \label{eqn:divr_defn}
    \divR{\ve}(\thet) = \sum_{i=1}^\dime \doh{\ve_i(\thet)}{\thet_i}.
\end{equation}

Some identities that we would rely on:
\begin{enumerate}
    \item Divergence of gradient is the Laplacian, i.e.,
        \begin{equation}
            \label{eqn:divr_lapl_eq}
            \divR{\graD{\Loss}}(\thet) = \sum_{i=1}^\dime \doh{^2\Loss(\thet)}{\thet_i^2}
            = \lapL{\Loss(\thet)}.
        \end{equation}
    \item For any function $f: \C \rightarrow \R$ and a vector field $\ve: \C \rightarrow
        \R^\dime$ with sufficiently fast decay at the border of $\C$,
        \begin{equation}
            \label{eqn:divr_dotp_eq}
            \int_{\C} \dotP{\ve(\thet)}{\graD{f(\thet)}\dif{\thet}}
            = - \int_{\C} f(\thet)(\divR{\ve})(\thet)\dif{\thet}.
        \end{equation}
    \item For any two functions $f, g: \C \rightarrow \R$, out of which at least for one the
        gradient decays sufficiently fast at the border of $\C$, the following also holds.
        \begin{align}
            \label{eqn:lapl_dotp_eq}
            \int_{\C} f(\thet)\lapL{g(\thet)} \dif{\thet} 
            = - \int_{\C} \dotP{\graD{f(\thet)}}{\graD{g(\thet)}} \dif{\thet} = \int_{\C} g(\thet)\lapL{f(\thet)} \dif{\thet}.
        \end{align}
    \item Based on Young's inequality, for two vector fields $\ve_1, \ve_2: \C \rightarrow \R^d$,
        and any $a, b \in \R$ such that $ab =1$, the following inequality holds.
        \begin{equation}
            \label{eqn:young_ineq}
            \dotP{\ve_1}{\ve_2} (\thet) \leq \frac{1}{2a}\norm{\ve_1(\thet)}^2 
            + \frac{1}{2b}\norm{\ve_2(\thet)}^2.
        \end{equation}
\end{enumerate}
Wherever it is clear, we would drop $(\thet)$ for brevity. For example, we would represent
$\divR{\ve}(\thet)$ as only $\divR{\ve}$.


\section{Loss Function Properties}
\label{sec:loss_properties}

In this section, we provide the formal definition of various properties that we assume in the paper. Let $\loss(\thet;\x): \domain \times \X \rightarrow \R$ be a loss function on $\domain$ for any record $\x \in \X$.
\begin{definition}[Lipschitzness]
	A function $\loss(\thet;\x)$ is said to be \emph{$\lip$ Lipschitz continuous} if for all $\thet, \thet' \in \domain$ and any $\x \in \X$,
	\begin{equation}
		|\loss(\thet;\x) - \loss(\thet';\x)| \leq \lip \norm{\thet - \thet'}.
	\end{equation}
	If $\loss(\thet;\x)$ is differentiable, then it is $\lip$-Lipschitz if and only if ${\graD{\loss(\thet;\x)} \leq \lip}$ for all $\thet\in \domain$.
\end{definition}
\begin{definition}[Boundedness]
	A function $\loss(\thet;\x)$ is said to be \emph{$\B$-bounded} if for all $\x \in \X$, its output takes values in 
	range $[-\B, \B]$.
\end{definition}

\begin{definition}[Convexity]
	A continuous differential function $\loss(\thet;\x)$ is said to be \emph{convex} if for all 
	$\thet, \thet' \in \domain$ and $\x \in \X$,
	\begin{equation}
		\loss(\thet';\x) \geq \loss(\thet;\x) + \dotP{\graD{\loss(\thet;\x)}}{\thet' - \thet},
	\end{equation}
	and is said to be $\cvx$-strongly convex if
	\begin{equation}
		\loss(\thet';\x) \geq \loss(\thet;\x) + \dotP{\graD{\loss(\thet;\x)}}{\thet' - \thet} + \frac{\cvx}{2} \norm{\thet' - \thet}^2.
	\end{equation}
\end{definition}
\begin{theorem}[{\citep[Theorem 2.1.4]{nesterov2003introductory}}]
	A twice continuously differentiable function $\loss(\thet;\x)$ is \emph{convex} if and only if for all $\thet \in \domain$ and $\x \in \X$, its hessian matrix $\hesS{\loss(\thet;\x)}$ is positive semidefinite, i.e., $\hesS{\loss(\thet;\x)} \succcurlyeq 0$ and is $\cvx$-strongly convex if its hessian matrix satisfies $\hesS{\loss(\thet;\x)} \succcurlyeq \cvx \Id$.
\end{theorem}

\begin{definition}[Smoothness]
	A continuously differentiable function $\loss(\thet;\x)$ is said to be \emph{$\smh$-Smooth} if for all $\thet, \thet' \in \domain$ and $\x \in \X$,
	\begin{equation}
		\norm{\graD{\loss(\thet;\x)} - \graD{\loss(\thet';\x)}} \leq \smh \norm{\thet - \thet'}.
	\end{equation}
\end{definition}
\begin{theorem}[{\citep[Theorem 2.1.6]{nesterov2003introductory}}]
	A twice continuously differentiable convex function $\loss(\thet;\x)$ is $\smh$-smooth if and only if for all $\thet \in \domain$ and $\x \in \X$,
	\begin{equation}
		\hesS{\loss(\thet;\x)} \preccurlyeq \smh \Id.
	\end{equation}
\end{theorem}
%

\subsection{Effect of Gradient Clipping}
\label{ssec:clipping}

First order optimization methods on a continuously differentiable loss function $\loss(\thet;\x)$ over a database $\D \in \X^\n$ with gradient clipping ${\Clip_\lip(\ve) = \ve / \max \left(1, \frac{\norm{\ve}}{\lip}\right)}$ is equivalent to optimizing 
\begin{equation}
	\label{eqn:surrogate_objective_appendix}
	\rLoss_{\D}(\thet) =  \frac{1}{|\D|} \sum_{\x \in \D} \rloss(\thet;\x) + \reg(\thet),
\end{equation}
where $\rloss(\thet;\x)$ is a surrogate loss function that satisfies $\graD{\rloss(\thet;\x)} = \Clip_\lip(\graD{\loss(\thet;\x)})$. This surrogate loss function inherits convexity, boundedness, and smoothness properties of $\loss(\thet;\x)$, as shown below. 

\begin{lemma}[Gradient clipping retains convexity]
	\label{lem:clipping_retains_convexity}
	If $\loss(\thet;\x)$ is a twice continuously differentiable convex function for every $\x \in \domain$, then surrogate loss 
	$\rloss(\thet;\x)$ resulting from gradient clipping is also convex for every $\x \in \domain$.
\end{lemma}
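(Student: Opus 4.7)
The approach is to apply the second-order characterization of convexity from Theorem~2.1.4 of \citet{nesterov2003introductory} (stated just above this lemma): since $\rloss(\cdot;\x)$ inherits $C^2$ smoothness from $\loss$ on the open set where $\norm{\graD\loss(\thet;\x)} \neq \lip$ and is $C^1$ globally, it suffices to verify $\hesS{\rloss(\thet;\x)} \succcurlyeq 0$ wherever it is well-defined and then appeal to continuity across the codimension-one boundary. The key step is to differentiate the defining identity $\graD{\rloss(\thet;\x)} = \Clip_\lip(\graD{\loss(\thet;\x)})$ via the chain rule and analyze the resulting matrix product.

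Writing $\phi_\lip(\ve) = \ve \cdot \min\{1, \lip/\norm{\ve}\}$, a direct computation yields $J_{\phi_\lip}(\ve) = \Id$ when $\norm{\ve} < \lip$, and $J_{\phi_\lip}(\ve) = \frac{\lip}{\norm{\ve}}\bigl(\Id - \frac{\ve\ve^T}{\norm{\ve}^2}\bigr)$ when $\norm{\ve} > \lip$; both of these are symmetric positive semi-definite, the second being a rescaled orthogonal projector onto $\ve^\perp$. By the chain rule,
\[ \hesS{\rloss(\thet;\x)} = J_{\phi_\lip}(\graD\loss(\thet;\x)) \cdot \hesS{\loss(\thet;\x)}. \]
In the non-clipping region $\norm{\graD\loss(\thet;\x)} < \lip$, the Jacobian is the identity, so $\hesS \rloss = \hesS \loss \succcurlyeq 0$ by convexity of $\loss$ via Nesterov's theorem; this disposes of the easy regime.

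The main obstacle lies in the clipped region $\norm{\graD\loss(\thet;\x)} > \lip$, where $\hesS \rloss$ is a product of two symmetric PSD matrices, and such a product is in general neither symmetric nor positive semi-definite. To overcome this, I would use the fact that $\hesS \rloss$ must itself be symmetric (being the Hessian of the $C^2$ scalar $\rloss$ on this open region), which forces the two factors $J_{\phi_\lip}(\graD\loss)$ and $\hesS \loss$ to commute. Commuting symmetric positive semi-definite matrices are simultaneously diagonalizable, so their product is symmetric with non-negative eigenvalues, i.e., positive semi-definite. As a more robust alternative route, I would instead verify convexity along one-dimensional slices $s \mapsto \rloss(\thet + s\ve;\x)$, using that the restriction of $\loss$ to the line is a convex scalar function and exploiting the rigidity of the vector field $\Clip_\lip(\graD\loss)$ being a conservative field. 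Combined with the non-clipping case and continuity at the boundary, this yields $\hesS \rloss \succcurlyeq 0$ globally and hence convexity of $\rloss(\cdot;\x)$ for every $\x \in \X$.
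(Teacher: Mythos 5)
Your proof is correct, but it takes a genuinely different route from the paper's — and your route is actually the more rigorous one in the clipped regime. The paper also reduces to checking the sign of the Hessian of $\rloss$ via the projection characterization of $\Clip_\lip$, but in the clipped region it asserts that the derived orthogonality $\dotP{\hesS\rloss\,\hat\ve}{\graD\loss - \graD\rloss} = 0$ for all $\hat\ve$ forces $\hesS\rloss = 0$. This is too strong: that condition only places the radial direction $\graD\loss - \graD\rloss$ in $\ker(\hesS\rloss)$. (Indeed, for $\loss(\thet) = \tfrac{a}{2}\norm{\thet}^2$ one has $\rloss = \lip\norm{\thet} + c$ and $\hesS\rloss = \tfrac{\lip}{\norm{\thet}}\bigl(\Id - \tfrac{\thet\thet^T}{\norm{\thet}^2}\bigr) \neq 0$ in dimension $\dime \geq 2$.) Your chain-rule identity $\hesS\rloss = J_{\phi_\lip}(\graD\loss)\,\hesS\loss$ with the correct Jacobian $J_{\phi_\lip}(\ve) = \tfrac{\lip}{\norm{\ve}}\bigl(\Id - \tfrac{\ve\ve^T}{\norm{\ve}^2}\bigr)$ gets this right, and the commutation argument --- that symmetry of $\hesS\rloss$ forces the two symmetric PSD factors to commute, hence to have a PSD product --- closes the gap correctly. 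What your route buys is a proof that doesn't quietly over-claim; what it costs is that the ``forced symmetry'' step leans explicitly on the existence and $C^2$-regularity of $\rloss$ on the open clipped region, which both proofs tacitly assume (the clipped gradient field need not actually be conservative for an arbitrary convex $\loss$, so this is an assumption, not a freebie). One caution: your fallback ``1D slice'' route is not a drop-in substitute --- clipping a monotone vector field can decrease a fixed directional component --- so the argument really rests on the commutation observation, which is sound.
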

\begin{proof}
	Note that the clip operation ${\Clip_\lip(\ve)}$ is a closed-form solution of the orthogonal projection onto a closed
	ball of radius $\lip$ and centered around origin, i.e.
	\begin{equation}
		{\Clip_\lip(\ve)} = \underset{\norm{\ve'} \leq \lip}{\arg\min}\ \norm{\ve - \ve'}.
	\end{equation}
	By properties of orthogonal projections on closed convex sets, for every $\ve, \ve' \in \domain$,
	\begin{equation}
		\dotP{\ve' - \Clip_\lip(\ve)}{\ve - \Clip_\lip(\ve)} \leq 0 \quad \text{if and only if}	\ \norm{\ve'} \leq \lip.
	\end{equation}
	Therefore, for any $\thet \in \domain$, and $\x \in \X$, we have
	\begin{align}
		&\dotP{\graD{\rloss(\thet + h\hat\ve;\x)} - \graD{\rloss(\thet;\x)}}{\graD{\loss(\thet;\x)} - \graD{\rloss(\thet;\x)}} \leq 0, \\
		&\dotP{\graD{\rloss(\thet;\x)} - \graD{\rloss(\thet + h\hat\ve;\x)}}{\graD{\loss(\thet + h\hat\ve;\x)} - \graD{\rloss(\thet + h\hat\ve;\x)}} \leq 0,
	\end{align}
	for all unit vectors $\hat\ve \in \domain$ and magnitude $h > 0$. For the directional derivative of vector field $\graD{\rloss(\thet;\x)}$ along $\hat\ve$, defined as $\graD{_{\hat\ve} \graD{\rloss(\thet;\x)}} = \lim_{h \rightarrow 0^+} \frac{\graD{\rloss(\thet + h\hat\ve;\x)} - \graD{\rloss(\thet;\x)}}{h}$, the above two inequalities imply
	\begin{equation}
		\dotP{\graD{_{\hat\ve} \graD{\rloss(\thet;\x)}}}{\graD{\loss(\thet;\x)} - \graD{\rloss(\thet;\x)}} = 0,
	\end{equation}
	for all $\hat\ve$. Therefore, when $\graD{\rloss(\thet;\x)} \neq \graD{\loss(\thet;\x)}$, we must have $\hesS{\rloss(\thet;\x)} = 0$. And, when $\graD{\loss(\thet;\x)} = \graD{\rloss(\thet;\x)}$, gradients aren't clipped, 
	which implies the rate of change of $\loss(\thet;\x)$ along any direction $\hat\ve$ is 
	\begin{align*}
		\graD{_{\hat\ve} \cdot \graD{\rloss(\thet;\x)}} &= \lim_{h \rightarrow 0^+} \dotP{\frac{\graD{\rloss(\thet + h\hat\ve;\x)} - \graD{\loss(\thet;\x)}}{h}}{\hat\ve} \\
		&= \begin{cases} \hat\ve^\top\hesS{\loss(\thet;\x)}\hat\ve &\text{if}\ \exists h > 0 \ \text{s.t.}\ \graD{\rloss(\thet + h\hat\ve;\x)} = \graD{\loss(\thet + h\hat\ve;\x)} \\ 0 &\text{otherwise}\end{cases} \geq 0.
	\end{align*}
\end{proof}

\begin{lemma}[Gradient clipping retains boundedness]
	\label{lem:clipping_retains_boundedness}
	If $\loss(\thet;\x)$ is a continuously differentiable and $\B$-bounded function for every $\x \in \X$, then 
	the surrogate loss $\rloss(\thet;\x)$ resulting from gradient clipping is also $\B$-bounded.
\end{lemma}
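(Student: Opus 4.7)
The plan is to compare the surrogate loss $\rloss(\cdot;\x)$ to the original bounded loss $\loss(\cdot;\x)$ along gradient-flow trajectories of $\loss$, exploiting the fact that the clip operator preserves direction while never increasing magnitude. The intuition is that $\rloss$ has a damped copy of $\loss$'s gradient, so its total variation along any descent trajectory is dominated by that of $\loss$, which in turn is bounded by $2\B$.

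Fix $\x \in \X$ and suppress it from the notation. Since $\Clip_\lip$ is a radial projection onto the ball of radius $\lip$, I can write $\graD\rloss(\thet) = c(\thet)\,\graD\loss(\thet)$ with $c(\thet) := \min\{1, \lip/\norm{\graD\loss(\thet)}\} \in (0,1]$ wherever $\graD\loss(\thet) \neq 0$, and both gradients vanish at critical points of $\loss$. Let $\gamma_{\thet_0}(t)$ denote the solution of $\dot\gamma = -\graD\loss(\gamma)$ with $\gamma(0) = \thet_0$. Along this trajectory the rates $\dot\loss(\gamma) = -\norm{\graD\loss(\gamma)}^2$ and $\dot\rloss(\gamma) = -c(\gamma)\norm{\graD\loss(\gamma)}^2$ are both non-positive with $|\dot\rloss(\gamma)| \leq |\dot\loss(\gamma)|$ pointwise in $t$. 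Integrating from $t=0$ to $t=\infty$, and assuming the flow converges to a critical point $\thet^*_{\thet_0}$ of $\loss$ (which holds under mild coercivity consistent with $\loss$ being bounded), yields the oscillation bound
\begin{equation}
	0 \;\leq\; \rloss(\thet_0) - \rloss(\thet^*_{\thet_0}) \;\leq\; \loss(\thet_0) - \loss(\thet^*_{\thet_0}) \;\leq\; 2\B.
\end{equation}
To convert this into an absolute $\B$-bound, I fix the per-$\x$ additive constant of $\rloss$ by declaring $\rloss(\thet^\star;\x) = \loss(\thet^\star;\x)$ at a global minimizer $\thet^\star$ of $\loss(\cdot;\x)$; this is consistent because $\graD\loss(\thet^\star) = 0$ makes $\thet^\star$ a critical point of $\rloss$ as well. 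When every gradient-flow trajectory converges to $\thet^\star$, the displayed inequality immediately yields $\rloss(\thet_0) \in [\loss(\thet^\star), \loss(\thet_0)] \subseteq [-\B, \B]$, completing the proof.

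The main obstacle is handling losses whose gradient flow has multiple limit points, for which different initial conditions may flow to critical points with different values, and the identity $\rloss(\thet^*_{\thet_0}) = \loss(\thet^*_{\thet_0})$ need not hold at every limit. I would address this by exploiting that the clipped gradient field inherits conservativeness from $\graD\loss$ (so its line integral around any closed loop vanishes), and then propagate the additive constant between basins by concatenating flow segments with connecting curves on which the per-segment oscillation bound above is iterated. Alternatively, one can simply invoke the structural assumptions used elsewhere in the paper (Lipschitz, smooth, bounded loss combined with a strongly convex $\ltwo$ regularizer in $\rLoss_\D$), under which global uniqueness of the minimizer of the regularized objective removes the difficulty altogether and the clean argument above applies verbatim.
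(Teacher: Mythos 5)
Your approach is essentially the paper's in different clothes: the paper compares the path integral of $\graD{\rloss}$ with that of $\graD{\loss}$, noting that clipping shrinks the integrand pointwise, and concludes the oscillation of $\rloss$ is at most $2\B$; you run the same comparison along a gradient-flow trajectory of $\loss$, which makes the required monotonicity of the integrand explicit, and arrive at the same per-trajectory estimate. The only real difference is that you flag the gap the paper leaves silent: the estimate is valid only along a path on which $\loss$ is monotone, so one still has to compare points that flow to different critical levels.

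That gap is fatal, not cosmetic, and neither of your proposed repairs closes it. Concatenating flow segments across basins \emph{adds} the per-segment defects rather than cancelling them, which is exactly the failure mode; and appealing to the unique minimizer of the regularized $\rLoss_\D$ is beside the point, because the lemma concerns the per-sample surrogate $\rloss(\cdot;\x)$, whose own gradient flow is not that of $\rLoss_\D$. Indeed the statement is false as written, even if one also assumes $\smh$-smoothness: in $\dime = 1$, take a smooth $\B$-bounded $\loss$ whose derivative is periodic with a short steep ascent (magnitude above $\lip$) followed by a long gentle descent (magnitude below $\lip$) in each period, so that $\int_0^T \graD{\loss}\,\dif{\theta} = 0$ but $\int_0^T \Clip_\lip(\graD{\loss})\,\dif{\theta} < 0$ (clipping shaves more off the ascent than off the descent). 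Iterating periods drives the antiderivative $\rloss$ to $-\infty$ while $\loss$ stays bounded, so no additive constant makes $\rloss$ $\B$-bounded. The lemma needs an additional hypothesis --- for instance $\lip$ large enough that clipping never activates, which is exactly the $\lip$-Lipschitz regime assumed in the paper's main theorems, where the lemma is vacuous anyway.
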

\begin{proof}
	Since $\loss(\thet;\x)$ is continuously differentiable, its $\B$-boundedness implies path integral of 
	$\graD{\loss(\thet;\x)}$ along any curve between 
	$\thet, \thet' \in \domain$ is less than $2\B$. Since $\Clip_\lip(\cdot)$ operation clips the
	gradient magnitude, the path integral of $\graD{\rloss(\thet;\x)}$ is also less than $2\B$. That is, the maximum
	and minimum values that $\rloss(\thet;\x)$ takes differ no more than $2\B$. By adjusting the constant of 
	path integral, we can always ensure $\rloss(\thet;\x)$ takes values in range $[-\B, \B]$ without affecting
	first order optimization algorithms.
\end{proof}

\begin{lemma}[Gradient clipping retains smoothness]
	\label{lem:clipping_retains_smoothness}
	If $\loss(\thet;\x)$ is a continuously differentiable and $\smh$-smooth function for every $\x \in \domain$, then 
	surrogate loss $\rloss(\thet;\x)$ resulting from gradient clipping is also $\smh$-smooth for every $\x \in \domain$.
\end{lemma}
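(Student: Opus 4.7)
The plan is to observe that $\Clip_\lip(\cdot)$ is the orthogonal projection onto the closed ball $\bar B(0,\lip) \subset \domain$, a closed convex set, as already noted in the proof of Lemma~\ref{lem:clipping_retains_convexity}. Since orthogonal projection onto a closed convex subset of Euclidean space is non-expansive (1-Lipschitz with respect to $\norm{\cdot}$), the composition $\thet \mapsto \Clip_\lip(\graD{\loss(\thet;\x)})$ will inherit the Lipschitz constant of $\thet \mapsto \graD{\loss(\thet;\x)}$.

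Concretely, I would fix $\thet,\thet'\in\domain$ and any $\x\in\X$, and write
\begin{align*}
\norm{\graD{\rloss(\thet;\x)} - \graD{\rloss(\thet';\x)}}
&= \norm{\Clip_\lip(\graD{\loss(\thet;\x)}) - \Clip_\lip(\graD{\loss(\thet';\x)})} \\
&\leq \norm{\graD{\loss(\thet;\x)} - \graD{\loss(\thet';\x)}} \\
&\leq \smh \norm{\thet - \thet'},
\end{align*}
where the first inequality uses non-expansiveness of the projection $\Clip_\lip$, and the second uses $\smh$-smoothness of $\loss(\thet;\x)$. This immediately gives the desired bound.

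The only step that requires any justification is the non-expansiveness of $\Clip_\lip$. I would cite or briefly recall that for an orthogonal projection $\mathrm{Proj}_K$ onto a closed convex set $K \subset \domain$, the characterization $\dotP{\ve - \mathrm{Proj}_K(\ve)}{w - \mathrm{Proj}_K(\ve)} \leq 0$ for all $w \in K$ (used already in Lemma~\ref{lem:clipping_retains_convexity}) implies $\norm{\mathrm{Proj}_K(\ve) - \mathrm{Proj}_K(\ve')} \leq \norm{\ve - \ve'}$ by adding the two variational inequalities obtained at $\ve$ (with $w = \mathrm{Proj}_K(\ve')$) and at $\ve'$ (with $w = \mathrm{Proj}_K(\ve)$) and applying Cauchy--Schwarz.

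There is essentially no hard step here; the whole proof is a one-line composition argument once non-expansiveness of the clip map is in hand. The only thing to be careful about is that the result must hold pointwise for each fixed $\x$, but since both the clipping radius $\lip$ and the smoothness constant $\smh$ are uniform in $\x$, this is automatic.
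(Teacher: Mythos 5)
Your argument is correct and matches the paper's proof essentially verbatim: both identify $\Clip_\lip$ as the orthogonal projection onto a closed ball, invoke non-expansiveness of projections onto closed convex sets, and compose with the $\smh$-Lipschitzness of $\graD{\loss(\cdot;\x)}$. The only difference is that you spell out the standard variational-inequality argument for non-expansiveness, which the paper simply asserts.
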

\begin{proof}
	Note that the gradient clipping operation is equivalent to 
	an orthogonal projection operation into a ball of radius $\lip$, i.e. ${\Clip_\lip(\ve) = {\arg\min}_{\ve'} 
	\{\norm{\ve' - \ve} : \ve \in \domain, \norm{\ve'} \leq \lip \}}$.
	Since orthogonal projection onto a closed convex set is a $1$-Lipschitz operation, for any $\thet, \thet' \in \domain$,
	\begin{equation}
		\norm{\graD{\rloss(\thet;\x)} - \graD{\rloss(\thet';\x)}} \leq \norm{\graD{\loss(\thet;\x)} - \graD{\loss(\thet';\x)}} \leq \smh \norm{\thet - \thet'}.
	\end{equation}
\end{proof}
Additionally, the surrogate loss $\rloss(\thet;\x)$ is twice differentiable almost everywhere if $\loss(\thet;\x)$ is smooth, which follows from the following Rademacher's Theorem.
\begin{theorem}[Rademacher's Theorem~\citep{nekvinda1988simple}]
	If $f: \R^\n \rightarrow \R^\n$ is Lipschitz continuous, then $f$ is differentiable almost everywhere in $\R^\n$.
\end{theorem}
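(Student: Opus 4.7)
The plan is to reduce to the scalar case and then establish Fréchet differentiability via three successive steps: existence of directional derivatives a.e.\ in each fixed direction, identification of the directional derivative with an inner product against a canonically constructed coordinate-gradient, and the upgrade of this pointwise-linear information to genuine total differentiability. Since $f:\R^\n\to\R^\n$ is differentiable at $\thet$ iff each coordinate $f_i:\R^\n\to\R$ is, and each $f_i$ inherits the Lipschitz constant $\lip$, it suffices to treat scalar-valued Lipschitz $f:\R^\n\to\R$.

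For any fixed unit vector $\ve\in\R^\n$, the one-variable slice $t\mapsto f(\thet+t\ve)$ is Lipschitz in $t$, hence absolutely continuous on bounded intervals, hence differentiable at a.e.\ $t\in\R$ by the classical one-dimensional theorem. Applying Fubini along the orthogonal splitting $\R^\n = \ve^\perp \oplus \R\ve$, the set of $\thet$ where the directional derivative $\partial_\ve f(\thet)$ fails to exist has $\n$-dimensional Lebesgue measure zero. Taking a countable dense family $\{\ve_k\}$ of unit directions and intersecting the resulting full-measure sets produces a full-measure set on which every $\partial_{\ve_k} f(\thet)$ exists; in particular the candidate gradient $\graD{f(\thet)}$ built from the standard partial derivatives is well-defined a.e.

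The technical heart is the identity $\partial_\ve f(\thet) = \dotP{\graD{f(\thet)}}{\ve}$ a.e., which I would establish by testing against an arbitrary $\phi\in C_c^\infty(\R^\n)$. Using translation invariance of Lebesgue measure, $\int \frac{f(\thet+h\ve)-f(\thet)}{h}\phi(\thet)\,\dif{\thet} = -\int f(\thet)\frac{\phi(\thet)-\phi(\thet-h\ve)}{h}\,\dif{\thet}$. The difference quotient on the left is uniformly bounded by $\lip\cdot|\phi(\thet)|$ on the compact support of $\phi$, so dominated convergence as $h\to 0$ yields $\int\partial_\ve f(\thet)\phi(\thet)\,\dif{\thet} = -\int f(\thet)\dotP{\ve}{\graD{\phi(\thet)}}\,\dif{\thet}$, which by the same limit argument applied to each standard basis direction equals $\sum_i \ve_i\int\partial_{e_i}f(\thet)\phi(\thet)\,\dif{\thet}$. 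Varying $\phi$ gives the identity pointwise a.e.\ in $\thet$ for the fixed $\ve$, and intersecting over the countable dense family $\{\ve_k\}$ produces a single full-measure set $E$ on which the identity holds simultaneously for every $\ve_k$.

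The final and most delicate step is to promote directional differentiability in each of countably many directions into true Fréchet differentiability, i.e., $|f(\thet+\ve)-f(\thet)-\dotP{\graD{f(\thet)}}{\ve}|/\norm{\ve}\to 0$ as $\norm{\ve}\to 0$ uniformly in the direction $\ve/\norm{\ve}$. Fix $\thet\in E$ and $\varepsilon>0$, and use compactness of the unit sphere to cover it by finitely many balls of radius $\varepsilon/(4\lip)$ centered at some $\ve_{k_1},\dots,\ve_{k_m}$ from the dense family. For any $\ve=h\hat u$ with $\hat u$ in the ball around $\ve_{k_j}$, split the error via the triangle inequality into $[f(\thet+h\hat u)-f(\thet+h\ve_{k_j})] + [f(\thet+h\ve_{k_j})-f(\thet)-h\dotP{\graD{f(\thet)}}{\ve_{k_j}}] + h\dotP{\graD{f(\thet)}}{\ve_{k_j}-\hat u}$; the first and third terms are controlled by $(\lip+\norm{\graD{f(\thet)}})\cdot h\cdot \varepsilon/(4\lip)$, while the middle is $o(h)$ by the identity established in the previous paragraph applied at $\ve_{k_j}$. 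This is the main obstacle of the proof: without the joint use of the uniform Lipschitz bound and the compactness of $S^{\n-1}$, the countable family of a.e.\ directional statements would only yield Gâteaux, not Fréchet, differentiability, and it is precisely the quantitative Lipschitz estimate that lets one trade a sphere-wide uniform bound for the finite approximation supplied by compactness.
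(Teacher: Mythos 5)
The paper supplies no proof of this statement: it is quoted as a classical result and attributed to an external reference (\citet{nekvinda1988simple}, a ``simple proof'' of Rademacher's theorem), so there is no in-paper argument to compare against.

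Your outline is the standard textbook proof (Evans--Gariepy style) and it is correct. The measurability bookkeeping you leave implicit is the usual one: the set where a directional derivative fails to exist is Borel, being defined by $\limsup$ and $\liminf$ over a countable family of continuous difference quotients, and this is what legitimizes the Fubini slicing in your second step. In the final compactness step your constant $\varepsilon/(4\lip)$ does in fact work, because on the good set $E$ one has $\norm{\graD{f(\thet)}}\leq\lip$ --- obtained by taking the supremum of $|\dotP{\graD{f(\thet)}}{\ve_k}|=|\partial_{\ve_k}f(\thet)|\leq\lip$ over the dense family of unit directions --- so the first and third error terms together contribute at most $\varepsilon|h|/2$, and the finitely many middle $o(h)$ terms admit a common threshold in $h$. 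As a point of comparison, the reference the paper actually cites advertises a more elementary route that bypasses the distributional integration-by-parts identification $\partial_\ve f = \dotP{\graD{f}}{\ve}$ (your third step) and works more directly with Lebesgue density points of the measurable partial derivatives; your version trades that economy for a cleaner three-stage decomposition in which each step is an independently standard fact.
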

All our results in Section~\ref{sec:noisggd} rely on the above four properties on losses and therefore apply with gradient clipping instead of the Lipschitzness assumption.

\section{Additional Preliminaries and Proofs for Section~\ref{sec:noisggd}}

\subsection{Langevin Diffusion and Markov Semigroups}
\label{sec:langevin_diffusion}

Langevin diffusion process on $\domain$ with noise variance $\noise^2$ under the influence of a potential ${\Loss: \domain \rightarrow \R}$ is characterized by the Stochastic Differential Equation (SDE)
\begin{equation}
	\label{eqn:langevin_sde}
	\dif{\Thet_t} = - \graD{\Loss}(\Thet_t) \dif{t} + \sqrt{2\noise^2}\dif{\Z_t},
\end{equation}
where $\dif{\Z_t} = \Z_{t+\dif{t}} - \Z_t \sim \sqrt{\dif{t}}\Gaus{0}{\Id}$ is the $\dime$-dimensional Weiner process.

We present some preliminaries on the diffusion theory used in our analysis. Let $\p_t(\thet_0, \thet_t)$ denote the 
probability density function describing the distribution of $\Thet_t$, on starting from ${\Thet_0 = \thet_0}$ at time $t=0$. 
For SDE \eqref{eqn:langevin_sde}, the associated Markov semigroup $\mathbf P$, is defined as a family of operators $(P_t)_{t\geq 0}$,
such that an operator $P_t$ sends any real-valued measurable function $f: \domain \rightarrow \R$ to 
\begin{equation}
	P_t f(\thet_0) = \expec{}{f(\Thet_t) | \Thet_0 = \thet_0} = \int f(\thet_t) \p_t(\thet_0, \thet_t) \dif{\thet_t}.
\end{equation}
The infinitesimal generator $\Gen \overset{\text{def}}{=} \lim_{s\rightarrow0} \frac{1}{s} \left[P_{t+s} - P_s\right]$ for this diffusion semigroup is 
\begin{equation}
	\label{eqn:generator_langevin}
	\Gen f = \noise^2 \lapL{f} - \dotP{\graD{\Loss}}{\graD{f}}.
\end{equation}
This generator $\Gen$, when applied on a function $f(\thet_t)$, gives the infinitesimal change in the value of a function $f$ when $\thet_t$ 
undergoes diffusion as per \eqref{eqn:langevin_sde} for $\dif{t}$ time. That is,
\begin{equation}
	\label{eqn:generator_on_markov_operator}
	\partial{_t P_t f(\thet_0)} = \int \partial{_t} \p_t(\thet_0, \thet_t) f(\thet_t) \dif{\thet_t} = \int \p_t(\thet_0, \thet_t) \Gen f(\thet_t) \dif{\thet_t}.
\end{equation}

The dual operator of $\Gen$ is the Fokker-Planck operator $\Gen^*$, which is defined as the adjoint of generator $\Gen$, in the 
sense that
\begin{equation}
\int f \Gen^* g \dif{\thet} = \int g \Gen f \dif{\thet},
\end{equation}
for all real-valued measurable functions $f, g: \domain \rightarrow \R$. Note from \eqref{eqn:generator_on_markov_operator} 
that, this operator provides an alternative way to represent the rate of change of function $f$ at time $t$:
\begin{equation}
	\partial{_t P_t f(\thet_0)} = \int f(\thet_t) \Gen^* \p_t(\thet_0, \thet_t) \dif{\thet_t}.
\end{equation}
To put it simply, Fokker-Planck operator gives the infinitesimal change in the distribution of $\Thet_t$ with respect to time. For 
the Langevin diffusion SDE~\eqref{eqn:langevin_sde}, the Fokker-Planck operator is the following:
\begin{equation}
	\label{eqn:fokker_planck_langevin}
	\partial{_t} \p_t(\thet) = \Gen^* \p_t(\thet) = \divR{\p_t(\thet) \graD{\Loss(\thet)}} + \noise^2 \lapL{\p_t(\thet)}.
\end{equation}

From this Fokker-Planck equation, one can verify that the stationary or invariant distribution $\pI$ of Langevin diffusion, which is the solution
of $\partial{_t} \p_t = 0$, follows the Gibbs distribution
\begin{equation}
	\label{eqn:gibbs}
	\pI(\thet) \propto e^{-\Loss(\thet)/\noise^2}.
\end{equation}
Since $\pI$ is the stationary distribution, note that for any measurable function $f: \domain \rightarrow \R$,
\begin{equation}
	\label{eqn:generator_zero_expectation}
	\expec{\pI}{\Gen f} = \int f \Gen^* \pI \dif{\thet} = 0.
\end{equation}

\subsection{Isoperimetric Inequalities and Their Properties}
\label{ssec:isoperi}

Convergence properties of various diffusion semigroups have been extensively analyzed in literature under certain isoperimetric
assumptions on the stationary distribution $\pI$~\citep{bakry2014analysis}. One such property of interest is the 
\emph{logarithmic Sobolev ($\LS$) inequality}~\citep{gross1975logarithmic}, which we define next. 

The \emph{carr\'e du champ} operator $\carre$ of a diffusion semigroup with invariant measure $\mU$ is defined using its infinitesimal 
generator $\Gen$ as
\begin{equation}
	\carre(f,g) = \frac{1}{2} \left[\Gen(fg) - f\Gen g - g \Gen f \right],
\end{equation}
for every $f, g \in \mathbb{L}^2(\mU)$. Carr\'e du champ operator represent fundamental properties of a Markov semigroup that affect
its convergence behaviour. One can verify that Langevin diffusion semigroup's carr\'e du champ operator (on differentiable $f, g$) is
\begin{equation}
	\label{eqn:carre_langevin}
	\carre(f,g) = \noise^2 \dotP{\graD{f}}{\graD{g}}.
\end{equation}
We use shorthand notation $\carre(f) = \carre(f, f) = \noise^2 \norm{\graD{f}}$.

\begin{definition}[Logarithmic Sobolev Inequality (see~{\citet[p. 24]{bakry2014analysis}})]
	\label{dfn:def_lsi}
	A distribution with probability density $\pI$ is said to satisfy a \emph{logarithmic Sobolev inequality} ($\LS(\lsi)$) 
	(with respect to $\carre$ in \eqref{eqn:carre_langevin}) if for all functions $f \in \mathbb{L}^2(\mU)$ with continuous
	derivatives $\graD{f}$, 
	\begin{equation}
		\label{eqn:def_lsi}
		\Ent_\pI(f^2) \leq \frac{1}{2\lsi} \int \frac{\carre(f^2)}{f^2} \pI \dif{\thet} = \frac{2\noise^2}{\lsi} \int \norm{\graD{f}}^2 \pI \dif{\thet},
	\end{equation}
	where entropy $\Ent_\pI$ is defined as
	\begin{equation}
		\Ent_\pI(f^2) = \expec{\pI}{f^2 \log f^2} - \expec{\pI}{f^2} \log \expec{\pI}{f^2}.
	\end{equation}
\end{definition}

Logarithmic Sobolev inequality is a very non-restrictive assumption and is satisfied by a large class of distributions.
The following well-known result show that Gaussians satisfy $\LS$ inequality.
\begin{lemma}[LS inequality of Gaussian distributions (see~{\citet[p. 258]{bakry2014analysis}})]
	\label{lem:lsi_gaussian}
	Let $\rhO$ be a Gaussian distribution on $\domain$ with covariance $\noise^2/\cvx$ (i.e., the Gibbs distribution 
	\eqref{eqn:gibbs} with $\Loss(\cdot)$ being the $\ltwo$ regularizer $\reg(\thet) = \frac{\cvx}{2} \norm{\thet}^2$). Then $\rhO$ satisfies $\LS(\cvx)$ tightly
	(with respect to $\carre$ in \eqref{eqn:carre_langevin}), i.e.
	\begin{equation}
		\Ent_\rhO(f^2) = \frac{2\noise^2}{\cvx} \int \norm{\graD{f}}^2 \rhO \dif{\thet}.
	\end{equation}
	Additionally, if $\mU$ is a distribution on $\domain$ that satisfy $\LS(\lsi)$, then the convolution 
	$\mU \conv \rhO$, defined as the distribution of $\Thet + \Z$ where $\Thet \sim \mU$ and $\Z \sim \pI$, satisfies $\LS$ inequality with constant $\left(\frac{1}{\lsi} + \frac{1}{\cvx} \right)^{-1}$.
\end{lemma}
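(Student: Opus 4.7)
The plan is to establish the two claims separately, both via standard tools from the analysis of Markov semigroups. For the first claim I would invoke the Bakry--\'Emery curvature criterion applied to the Ornstein--Uhlenbeck semigroup whose stationary measure is $\rhO$. Specializing \eqref{eqn:generator_langevin} and \eqref{eqn:carre_langevin} to the potential $\Loss(\thet) = \tfrac{\cvx}{2}\norm{\thet}^2$ gives generator $\Gen f = \noise^2 \lapL{f} - \cvx \dotP{\thet}{\graD{f}}$ and carr\'e $\carre(f) = \noise^2 \norm{\graD{f}}^2$, and a short direct computation of the iterated carr\'e yields
\begin{equation}
	\carre_2(f) = \noise^4 \norm{\hesS{f}}_{\mathrm{HS}}^2 + \cvx\noise^2 \norm{\graD{f}}^2 \geq \cvx \carre(f).
\end{equation}
The Bakry--\'Emery $\mathrm{CD}(\cvx, \infty)$ condition then delivers $\LS(\cvx)$, and tightness of the constant is witnessed by the exponential test functions $f(\thet) = e^{a \dotP{u}{\thet}}$ with $u$ a unit vector and $a \in \R$, for which a direct moment-generating-function calculation shows both sides of the LSI evaluate to $2 a^2 (\noise^2/\cvx) e^{2 a^2 (\noise^2/\cvx)}$.

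For the convolution claim, my strategy is tensorization of LSI followed by a pushforward under the addition map. By the standard subadditivity of entropy for product measures and a coordinate-wise application of the two given LSIs, any smooth $F:\domain \times \domain \to \R$ satisfies
\begin{equation}
	\Ent_{\mU \otimes \rhO}(F^2) \leq \frac{2\noise^2}{\lsi} \int \norm{\graD_x F}^2 \dif{(\mU \otimes \rhO)} + \frac{2\noise^2}{\cvx} \int \norm{\graD_y F}^2 \dif{(\mU \otimes \rhO)}.
\end{equation}
Given any smooth test $g:\domain \to \R$, I would then set $F(x,y) = g(x+y)$, note that $\graD_x F = \graD_y F = (\graD{g})(x+y)$, and observe that the pushforward of $\mU \otimes \rhO$ under the addition map is precisely $\mU \conv \rhO$. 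Consequently $\Ent_{\mU \otimes \rhO}(F^2) = \Ent_{\mU \conv \rhO}(g^2)$ and both gradient integrals reduce to $\int \norm{\graD{g}}^2 \dif{(\mU \conv \rhO)}$, which on substituting yields exactly the claimed $\LS\bigl((\lsi^{-1} + \cvx^{-1})^{-1}\bigr)$ inequality after combining the two prefactors.

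The only delicate point is recognizing that the harmonic-sum improvement, as opposed to the $\min(\lsi, \cvx)$ constant produced by naive tensorization on a generic test function over the product space, arises precisely because $\graD_x F = \graD_y F$ when $F(x,y) = g(x+y)$, so both tensorized gradient terms collapse to a single integral against $\mU \conv \rhO$. Beyond that, each step is a routine application of LSI and a change of variables under the addition map, so I do not anticipate any serious technical obstacle.
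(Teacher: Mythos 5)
The paper does not actually prove this lemma; it cites both claims directly from Bakry--Gentil--Ledoux (2014, p.~258), so there is no in-text proof to compare against. Your argument is nonetheless a correct and essentially textbook derivation of both parts. For the first part, the $\carre_2$ computation is the Bochner identity specialized to the Ornstein--Uhlenbeck generator; with $\hesS\Loss = \cvx\Id$ one indeed gets $\carre_2(f) = \noise^4\norm{\hesS f}_{\mathrm{HS}}^2 + \cvx\noise^2\norm{\graD f}^2 \ge \cvx\,\carre(f)$, and the Bakry--\'Emery criterion then delivers $\Ent_\rhO(f^2) \le \frac{2}{\cvx}\int\carre(f)\,\rhO\,\dif\thet = \frac{2\noise^2}{\cvx}\int\norm{\graD f}^2\rhO\,\dif\thet$, i.e.\ $\LS(\cvx)$ in the paper's normalization; your moment-generating-function check on $f(\thet) = e^{a\dotP{u}{\thet}}$ is the right way to show the constant is attained (note the lemma's displayed ``$=$'' should be read as sharpness of the constant, not identity for all $f$, and you interpret it correctly). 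For the second part, subadditivity of entropy over the product $\mU\otimes\rhO$ plus the pushforward along $(x,y)\mapsto x+y$ is precisely the standard proof that LSI constants add harmonically under convolution; your remark that the identity $\graD_x F = \graD_y F$ for $F(x,y)=g(x+y)$ is what upgrades the naive $\min(\lsi,\cvx)$ product bound to $\left(\frac{1}{\lsi}+\frac{1}{\cvx}\right)^{-1}$ is exactly the key observation. No gaps.
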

\citet{bobkov2007isoperimetric} show that like Gaussians, all strongly log concave distributions (or more generally, 
log-concave distributions with finite second order moments) satisfy $\LS$ inequality (e.g. Gibbs distribution $\pI$ with any strongly 
convex $\Loss$). $\LS$ inequality is also satisfied under non-log-concavity too. For example, $\LS$ inequality is stable under 
Lipschitz maps, although such maps can destroy log-concavity. 
\begin{lemma}[LS inequality under Lipschitz maps (see~{\citet{ledoux2001concentration}})]
	\label{lem:lsi_lipschitz}
	If $\pI$ is a distribution on $\domain$ that satisfies $\LS(\lsi)$, then for any $\lip$-Lipschitz map 
	$\T: \domain \rightarrow \domain$, the pushforward distribution $\T_{\#\pI}$, representing the distribution of $T(\Thet)$ when 
	$\Thet \sim \pI$, satisfies $\LS(\lsi/\lip^2)$.
\end{lemma}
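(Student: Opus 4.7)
The plan is to transport the LS inequality for $\pI$ to an LS inequality for $\T_{\#\pI}$ by pulling test functions back through $\T$. Concretely, fix any smooth, bounded test function $g:\domain \to \R$ and define $f = g \circ \T$, so that $f$ lives on the space where $\pI$ resides and $g$ lives on the target of $\T$ (where $\T_{\#\pI}$ is defined). The whole argument then reduces to two applications of the change-of-variables identity for the pushforward, plus a single Lipschitz estimate on $\graD{f}$.

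First I would record the change-of-variables facts. By the definition of the pushforward, for any Borel $h$,
\begin{equation}
\int h(\y)\, \T_{\#\pI}(\dif{\y}) = \int h(\T(\thet))\, \pI(\dif{\thet}),
\end{equation}
and applying this to $h = g^2$ and to $h = g^2 \log g^2$ immediately gives $\Ent_{\T_{\#\pI}}(g^2) = \Ent_{\pI}(f^2)$. Now apply the hypothesized $\LS(\lsi)$ for $\pI$ to the function $f$:
\begin{equation}
\Ent_{\T_{\#\pI}}(g^2) = \Ent_{\pI}(f^2) \leq \frac{2\noise^2}{\lsi} \int \norm{\graD{f}}^2 \pI\, \dif{\thet}.
\end{equation}

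Next I would control $\norm{\graD{f}}$ pointwise. If $\T$ were smooth, the chain rule would give $\graD{f}(\thet) = \jacO{\T}(\thet)^\top \graD{g}(\T(\thet))$, and $\lip$-Lipschitzness of $\T$ forces the operator norm of $\jacO{\T}$ to be at most $\lip$, yielding $\norm{\graD{f}(\thet)} \leq \lip\, \norm{\graD{g}(\T(\thet))}$. Integrating and using the pushforward identity once more,
\begin{equation}
\int \norm{\graD{f}}^2 \pI\, \dif{\thet} \leq \lip^2 \int \norm{\graD{g}(\T(\thet))}^2 \pI\, \dif{\thet} = \lip^2 \int \norm{\graD{g}}^2\, \T_{\#\pI}\, \dif{\y}.
\end{equation}
Chaining the two displays gives the desired $\Ent_{\T_{\#\pI}}(g^2) \leq \frac{2\noise^2 \lip^2}{\lsi} \int \norm{\graD{g}}^2\, \T_{\#\pI} \dif{\y}$, i.e. $\LS(\lsi/\lip^2)$.

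The one real obstacle is that a Lipschitz map $\T$ is in general not everywhere differentiable, so the chain rule step must be justified. I would invoke Rademacher's theorem (which the paper already cites in Appendix~\ref{sec:loss_properties}) to assert that $\T$ is differentiable Lebesgue-a.e., and note that at every point of differentiability the Lipschitz constant bounds $\|\jacO{\T}\|_{\mathrm{op}} \leq \lip$. Since $\pI$ is absolutely continuous with respect to Lebesgue measure (being defined by a density), the a.e. chain rule and the bound $\|\graD{f}\| \leq \lip\,\|\graD{g}\circ\T\|$ hold $\pI$-a.e., which is all the integral inequality needs. A standard density/approximation argument extends the resulting inequality from smooth bounded $g$ to the full class of test functions in the definition of $\LS$.
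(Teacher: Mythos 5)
Your proof is correct, and it is the standard argument for this result. The paper itself gives no proof of Lemma~\ref{lem:lsi_lipschitz}: it simply cites \citet{ledoux2001concentration}, so there is no internal proof to compare against. Your route—pulling the test function $g$ back to $f = g \circ \T$, invoking the pushforward change-of-variables identity twice (once to match the entropies, once to transport the Dirichlet form), and inserting the pointwise bound $\norm{\graD f} \leq \lip\,\norm{\graD g \circ \T}$—is exactly how this lemma is proved in the standard references. You also correctly flag and dispose of the one genuine technicality: $\T$ need not be differentiable, and the paper's Definition~\ref{dfn:def_lsi} is phrased for functions with continuous derivatives, so a Rademacher-plus-approximation argument (using that $\pI$ is absolutely continuous with respect to Lebesgue measure, so the a.e.\ Jacobian bound holds $\pI$-a.e.) is needed to make the chain rule rigorous. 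Nothing is missing.
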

%
$\LS$ inequality is also stable under bounded perturbations to the distribution, as shown in the 
following lemma by~\citet{holley1986logarithmic}.
\begin{lemma}[$\LS$ inequality under bounded perturbations (see~\citet{holley1986logarithmic})]
	\label{lem:lsi_perturbation}
	If $\pI$ is the probability density of a distribution that satisfies $\LS(\lsi)$, then any probability distribution with
	density $\pI'$ such that $\frac{1}{\sqrt\B} \leq \frac{\pI(\thet)}{\pI'(\thet)} \leq \sqrt\B$ everywhere in $\domain$ for some 
	constant $\B > 1$ satisfies $\LS(\lsi/\B)$.
\end{lemma}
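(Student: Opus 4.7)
The statement is the classical Holley--Stroock perturbation lemma, and my plan is to run the standard two-step argument in which the density-ratio bound costs a factor $\sqrt{\B}$ twice---once when transporting the entropy and once when transporting the Dirichlet form---producing the overall factor $\B$ in the log-Sobolev constant.

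The starting point is the variational characterization of entropy,
\[
\Ent_\mU(f^2) = \inf_{c > 0}\int \bigl[f^2\log(f^2/c) - (f^2 - c)\bigr]\,\mU\,\dif{\thet},
\]
valid for any probability measure $\mU$, with the infimum attained at $c^\star = \expec{\mU}{f^2}$. The key observation is that the integrand $u\log(u/c) - (u-c)$ is pointwise nonnegative for all $u,c > 0$, which will let me swap the reference measure between $\pI$ and $\pI'$ without worrying about the direction of inequalities.

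First I would apply this variational formula to $\pI'$ with the sub-optimal choice $c = \expec{\pI}{f^2}$, which gives an upper bound on $\Ent_{\pI'}(f^2)$. Using $\pI'(\thet) \leq \sqrt{\B}\,\pI(\thet)$ pointwise together with the nonnegativity of the integrand, I replace $\pI'$ by $\sqrt{\B}\,\pI$ inside the integral; with $c = \expec{\pI}{f^2}$ the resulting expression is precisely $\sqrt{\B}\,\Ent_\pI(f^2)$. Hence $\Ent_{\pI'}(f^2) \leq \sqrt{\B}\,\Ent_\pI(f^2)$.

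Next I would apply $\LS(\lsi)$ for $\pI$ to bound $\Ent_\pI(f^2)$ by $\frac{2\noise^2}{\lsi}\int \norm{\graD{f}}^2 \pI\,\dif{\thet}$, and then use $\pI(\thet) \leq \sqrt{\B}\,\pI'(\thet)$ to transport the Dirichlet form onto $\pI'$, losing another $\sqrt{\B}$. Chaining these three estimates gives
\[
\Ent_{\pI'}(f^2) \leq \frac{2\noise^2 \B}{\lsi}\int \norm{\graD{f}}^2 \pI'\,\dif{\thet},
\]
which is exactly $\LS(\lsi/\B)$ for $\pI'$. The only nontrivial ingredient is the variational identity together with its pointwise nonnegativity; once that is established, the argument reduces to a short three-line computation, so I do not anticipate any real obstacle beyond carefully verifying the direction of the density-ratio bound at each transport step.
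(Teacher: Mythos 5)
Your proof is correct. The paper states this lemma with a citation to Holley--Stroock and does not reprove it; your argument is precisely the classical Holley--Stroock perturbation proof (variational form of entropy, pointwise nonnegativity of the integrand $u\log(u/c)-(u-c)$, transport the entropy with one $\sqrt{\B}$ factor and the Dirichlet form with another), and every step checks out against the paper's Definition~\ref{dfn:def_lsi} normalization.
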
 

Logarithmic Sobolev inequality is of interest to us due to its equivalence to the following inequalities on Kullback-Leibler 
and R\'enyi divergence.
\begin{lemma}[$\LS$ inequality in terms of KL divergence~{\citep{vempala2019rapid}}]
	\label{lem:kl_fis_lsi}
	The distribution $\pI$ satisfies $\LS(\lsi)$ inequality (with respect to $\carre$ in \eqref{eqn:carre_langevin}) if and 
	only if for all distributions $\mU$ on $\domain$ such that $\frac{\mU}{\pI} \in \mathbb{L}^2(\pI)$ with continuous 
	derivatives $\graD{\frac{\mU}{\pI}}$,
	\begin{equation}
		\label{eqn:kl_fis_lsi}
		\KL{\mU}{\pI} \leq \frac{\noise^2}{2\lsi} \Fis{\mU}{\pI}.
	\end{equation}
\end{lemma}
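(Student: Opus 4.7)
The plan is to prove the equivalence via the change of variables $f^2 = \mU/\pI$, which maps test functions in the LS inequality bijectively (up to normalization) to densities with respect to $\pI$.

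For the forward direction (LS $\Rightarrow$ KL bound), I would take any $\mU \ll \pI$ and set $f = \sqrt{\mU/\pI}$. Then $\expec{\pI}{f^2} = \int \mU \dif{\thet} = 1$, so the definition of entropy collapses to
\begin{equation}
\Ent_\pI(f^2) = \expec{\pI}{(\mU/\pI) \log(\mU/\pI)} = \KL{\mU}{\pI}.
\end{equation}
On the gradient side, writing $\graD{(\mU/\pI)} = (\mU/\pI)\graD{\log(\mU/\pI)}$ and applying the chain rule gives $\graD{f} = \tfrac{f}{2}\graD{\log(\mU/\pI)}$, hence $\norm{\graD{f}}^2 = \tfrac{\mU}{4\pI}\norm{\graD{\log(\mU/\pI)}}^2$. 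Integrating against $\pI$ converts the $\pI$ weight to a $\mU$ weight and yields $\int \norm{\graD{f}}^2 \pI \dif{\thet} = \tfrac{1}{4}\Fis{\mU}{\pI}$. Substituting these two identities into \eqref{eqn:def_lsi} reproduces \eqref{eqn:kl_fis_lsi} exactly.

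For the converse (KL $\Rightarrow$ LS), I would start from a test function $f$ with $Z \eqdef \expec{\pI}{f^2} > 0$ and set $\mU = f^2 \pI / Z$. A direct computation then shows $\Ent_\pI(f^2) = Z \cdot \KL{\mU}{\pI}$, since the additive $\log Z$ produced by $\log(\mU/\pI) = \log f^2 - \log Z$ contributes $-Z \log Z$ inside the integral, which cancels against the $+ \expec{\pI}{f^2}\log\expec{\pI}{f^2} = Z\log Z$ term in the entropy. Similarly $\int \norm{\graD{f}}^2 \pI \dif{\thet} = \tfrac{Z}{4}\Fis{\mU}{\pI}$ by running the chain-rule identity in reverse (the constant $Z$ disappears under $\graD{\log(\cdot)}$ and reappears as a prefactor after integrating). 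Dividing both sides of the assumed KL bound by $Z$ and multiplying by the constant from \eqref{eqn:kl_fis_lsi} then recovers the LS inequality for $f$; the degenerate case $Z = 0$ (so $f \equiv 0$ $\pI$-a.e.) is trivial.

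The derivation is entirely algebraic, so no conceptual obstacle arises; the only subtle point is justifying the chain-rule identities when $f$ or $\mU/\pI$ vanishes on a set of positive measure, which is handled by the standard device of replacing $f$ by the strictly positive approximant $f + \eps$, performing the computation, and passing to the limit $\eps \to 0$ via monotone or dominated convergence on the non-negative integrands. An additional check that $\graD{(\mU/\pI)}$ exists in the weak sense is guaranteed by the hypothesis $\frac{\mU}{\pI} \in \mathbb{L}^2(\pI)$ with continuous $\graD{(\mU/\pI)}$ stated in the lemma.
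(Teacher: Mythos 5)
Your proof follows exactly the paper's one-line argument: set $f^2 = \mU/\pI$ for the forward direction and $\mU = f^2\pI/\expec{\pI}{f^2}$ for the converse, then verify that $\Ent_\pI(f^2)$ and the Dirichlet form reduce to $\KL{\mU}{\pI}$ and $\tfrac14\Fis{\mU}{\pI}$ respectively. You simply spell out the algebra and the regularity/degeneracy caveats that the paper leaves implicit; the substance is identical.
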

\begin{proof}
	Set $f^2$ in \eqref{eqn:def_lsi} to $\frac{\mU}{\pI}$ to obtain \eqref{eqn:kl_fis_lsi}. Alternatively, set
	$\mU = \frac{f^2\pI}{\expec{\pI}{f^2}}$ in \eqref{eqn:kl_fis_lsi} to obtain \eqref{eqn:def_lsi}.
\end{proof}

\begin{lemma}[Wasserstein distance bound under $\LS$ inequality~{\citep[Theorem 1]{otto2000generalization}}]
	\label{lem:ren_wasser_kl}
	If distribution $\pI$ satisfies $\LS(\lsi)$ inequality (with respect to $\carre$ in \eqref{eqn:carre_langevin}) 
	then for all distributions $\mU$ on $\domain$, 
	\begin{equation}
		\label{eqn:ren_wasser_kl}
		\Was{\mU}{\pI}^2 \leq \frac{2\noise^2}{\lsi} \KL{\mU}{\pI}.
	\end{equation}
\end{lemma}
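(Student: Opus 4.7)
The plan is to derive this bound (the Otto--Villani transportation--entropy inequality) by following a mass $\mU$ along the Langevin heat semigroup associated with the stationary measure $\pI$, using the $\LS(\lsi)$ inequality to control both the rate at which relative entropy dissipates and the speed at which mass moves in Wasserstein distance. Concretely, let $\Loss = -\noise^2 \log \pI$ (up to normalization) so that $\pI$ is the Gibbs stationary measure of the SDE $\dif{\Thet_t} = -\graD{\Loss(\Thet_t)}\dif{t} + \sqrt{2\noise^2}\dif{\Z_t}$, and let $\mU_t$ denote the law of $\Thet_t$ when $\Thet_0 \sim \mU$. One shows $\mU_t \to \pI$ as $t \to \infty$.

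The first key ingredient is de Bruijn's identity for this diffusion. Using the Fokker--Planck equation from \eqref{eqn:fokker_planck_langevin}, written in the gradient-flow form $\partial_t \mU_t = \noise^2 \divR{\mU_t \graD{\log(\mU_t/\pI)}}$, integration by parts gives
\begin{equation}
    \frac{\dif{}}{\dif{t}} \KL{\mU_t}{\pI} = -\noise^2 \Fis{\mU_t}{\pI}.
\end{equation}
Combined with the $\LS(\lsi)$ hypothesis (Lemma~\ref{lem:kl_fis_lsi}), this yields both the exponential decay $\KL{\mU_t}{\pI} \leq e^{-2\lsi t}\KL{\mU}{\pI}$ and, more importantly for the proof, the pointwise inequality $\noise^2 \Fis{\mU_t}{\pI} \geq 2\lsi \KL{\mU_t}{\pI}$ that I will use to bound integrals in a change-of-variables step.

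The second key ingredient is a Wasserstein speed estimate along the semigroup. The Fokker--Planck equation is a continuity equation $\partial_t \mU_t + \divR{\mU_t \ve_t} = 0$ with velocity field $\ve_t = -\noise^2 \graD{\log(\mU_t/\pI)}$. By the Benamou--Brenier characterization of $W_2$, this implies the metric-derivative bound $\frac{\dif{}}{\dif{t}} \Was{\mU}{\mU_t} \leq \Vert \ve_t \Vert_{L^2(\mU_t)} = \noise^2 \sqrt{\Fis{\mU_t}{\pI}}$. Integrating from $0$ to $\infty$ and using $\mU_\infty = \pI$ gives
\begin{equation}
    \Was{\mU}{\pI} \leq \int_0^\infty \noise^2 \sqrt{\Fis{\mU_t}{\pI}}\, \dif{t}.
\end{equation}

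To finish, I rewrite the integrand using de Bruijn and $\LS(\lsi)$. Writing $\noise^2 \sqrt{\Fis{\mU_t}{\pI}} = \frac{-\dif{}\KL{\mU_t}{\pI}/\dif{t}}{\sqrt{\Fis{\mU_t}{\pI}}}$ and bounding the denominator below via $\sqrt{\Fis{\mU_t}{\pI}} \geq \sqrt{2\lsi \KL{\mU_t}{\pI}/\noise^2}$, the integral becomes, after changing variables to $u = \KL{\mU_t}{\pI}$,
\begin{equation}
    \Was{\mU}{\pI} \leq \int_0^{\KL{\mU}{\pI}} \sqrt{\frac{\noise^2}{2\lsi u}}\, \dif{u} = \sqrt{\frac{2\noise^2}{\lsi}\KL{\mU}{\pI}},
\end{equation}
which is the claim after squaring. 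The main obstacle, and the only step that is not a mechanical calculation, is making the Wasserstein speed estimate rigorous: justifying differentiability of $t \mapsto \Was{\mU}{\mU_t}$ and the Benamou--Brenier identification of its metric derivative with $\Vert \ve_t\Vert_{L^2(\mU_t)}$ under the mild regularity we have on $\mU$ and $\pI$. In the generator-theoretic setting of the paper this can be argued by first proving the result for smooth, compactly supported $\mU$ via Otto calculus and then extending by approximation, using lower semicontinuity of $W_2$ and of $\KL{\cdot}{\pI}$.
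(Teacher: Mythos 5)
Your proof is correct and follows exactly the route of Otto and Villani's original argument (their Theorem 1), which the paper states without proof and simply cites. The three ingredients — de Bruijn's identity $\frac{\dif{}}{\dif{t}}\KL{\mU_t}{\pI} = -\noise^2\Fis{\mU_t}{\pI}$ along the Langevin flow, the metric-derivative bound $\partial_t\Was{\mU}{\mU_t}\leq \noise^2\sqrt{\Fis{\mU_t}{\pI}}$ from the Benamou--Brenier/Otto-calculus picture, and the LSI to turn $\sqrt{\Fis{}{}}$ into $\sqrt{\KL{}{}}$ so the time integral collapses — are assembled with the right constants, and you correctly flag the Wasserstein-speed identification as the only step needing regularity care; so there is nothing to compare against in the paper beyond the citation.
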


\begin{lemma}[$\LS$ inequality in terms of Rényi Divergence~\citep{vempala2019rapid}]
	\label{lem:ren_info_lsi}
	The distribution $\pI$ satisfies $\LS(\lsi)$ inequality (with respect to $\carre$ in \eqref{eqn:carre_langevin}) if and 
	only if for all distributions $\mU$ on $\domain$ such that $\frac{\mU}{\pI} \in \mathbb{L}^2(\pI)$ with continuous 
	derivatives $\graD{\frac{\mU}{\pI}}$, and any $\q > 1$,
	\begin{equation}
	    \label{eqn:ren_info_lsi}
	    \Ren{\q}{\mU}{\pI} + \q(\q - 1) \partial{_\q \Ren{\q}{\mU}{\pI}}
	    \leq \frac{\q^2\noise^2}{2\lsi} \frac{\Gren{\q}{\mU}{\pI}}{\Eren{\q}{\mU}{\pI}}.
	\end{equation}
\end{lemma}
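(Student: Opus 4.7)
The plan is to establish the equivalence by a direct calculation, applying the LSI with the test function $f = (\mU/\pI)^{\q/2}$ and rewriting both sides of \eqref{eqn:def_lsi} in terms of the Rényi functionals.

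First I would compute the right-hand side of LSI for this choice of $f$. Since $f^2 = (\mU/\pI)^\q$, we have $\expec{\pI}{f^2} = \Eren{\q}{\mU}{\pI}$, and by the chain rule $\graD f = \tfrac{\q}{2}(\mU/\pI)^{\q/2-1}\graD(\mU/\pI)$, so $\expec{\pI}{\norm{\graD f}^2} = \tfrac{\q^2}{4}\,\Gren{\q}{\mU}{\pI}$ directly from Definition \eqref{eqn:renyi_info}. Thus $\frac{2\noise^2}{\lsi}\expec{\pI}{\norm{\graD f}^2} = \frac{\q^2\noise^2}{2\lsi}\,\Gren{\q}{\mU}{\pI}$, which is exactly the desired right-hand side of \eqref{eqn:ren_info_lsi} after dividing by $\Eren{\q}{\mU}{\pI}$.

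Next, the main bookkeeping step is rewriting the entropy $\Ent_\pI(f^2)$ in the Rényi form. I would differentiate $\Eren{\q}{\mU}{\pI} = \expec{\pI}{(\mU/\pI)^\q}$ in $\q$ under the integral sign to get $\partial_\q \Eren{\q}{\mU}{\pI} = \expec{\pI}{(\mU/\pI)^\q \log(\mU/\pI)}$, which gives the identity $\expec{\pI}{f^2 \log f^2} = \q\,\partial_\q \Eren{\q}{\mU}{\pI}$. Combined with $\log \Eren{\q}{\mU}{\pI} = (\q-1)\Ren{\q}{\mU}{\pI}$ and the resulting relation
\begin{equation}
\partial_\q \Eren{\q}{\mU}{\pI} = \Eren{\q}{\mU}{\pI}\bigl[\Ren{\q}{\mU}{\pI} + (\q-1)\,\partial_\q \Ren{\q}{\mU}{\pI}\bigr],
\end{equation}
a short algebraic rearrangement yields
\begin{equation}
\Ent_\pI(f^2) = \Eren{\q}{\mU}{\pI}\bigl[\Ren{\q}{\mU}{\pI} + \q(\q-1)\,\partial_\q \Ren{\q}{\mU}{\pI}\bigr].
\end{equation}
Substituting both sides back into the LSI \eqref{eqn:def_lsi} and dividing by $\Eren{\q}{\mU}{\pI}$ gives the inequality \eqref{eqn:ren_info_lsi}.

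For the converse, I would take the limit $\q \to 1^+$. Since $\Eren{1}{\mU}{\pI} = 1$, $\Gren{1}{\mU}{\pI} = \Fis{\mU}{\pI}$, $\Ren{1}{\mU}{\pI} = \KL{\mU}{\pI}$, and the factor $\q(\q-1)$ kills $\partial_\q \Ren{\q}{\mU}{\pI}$ at $\q=1$ (provided the derivative stays bounded, which follows from the $\mathbb{L}^2(\pI)$ assumption), \eqref{eqn:ren_info_lsi} reduces to $\KL{\mU}{\pI} \leq \frac{\noise^2}{2\lsi}\Fis{\mU}{\pI}$, which by Lemma~\ref{lem:kl_fis_lsi} is equivalent to $\pI$ satisfying $\LS(\lsi)$. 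The main obstacle is the entropy-to-Rényi manipulation above; justifying the exchange of $\partial_\q$ and the integral, together with the $\q\to 1$ limit, requires the regularity $\tfrac{\mU}{\pI}\in\mathbb L^2(\pI)$ with continuous gradient that is assumed in the statement, so no extra work is needed there.
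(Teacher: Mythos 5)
Your forward direction — choosing $f^2 = (\mU/\pI)^\q$, computing $\expec{\pI}{f^2} = \Eren{\q}{\mU}{\pI}$, $\expec{\pI}{f^2 \log f^2} = \q\,\partial_\q\Eren{\q}{\mU}{\pI}$, and $\expec{\pI}{\norm{\graD f}^2} = \tfrac{\q^2}{4}\Gren{\q}{\mU}{\pI}$, then substituting into \eqref{eqn:def_lsi} and dividing by $\Eren{\q}{\mU}{\pI}$ — is exactly the calculation in the paper's proof, and your intermediate identity $\Ent_\pI(f^2) = \Eren{\q}{\mU}{\pI}\bigl[\Ren{\q}{\mU}{\pI} + \q(\q-1)\partial_\q\Ren{\q}{\mU}{\pI}\bigr]$ correctly reproduces the $\q(\q-1)$ coefficient. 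Where you diverge is the converse. The paper writes the argument as a chain of $\iff$'s and stops, so its reverse direction is implicit: one must still observe that, as $\mU$ and $\q$ range over all admissible choices, functions of the form $(\mU/\pI)^\q$ cover (up to normalization) every test function in the LSI — the analogue of the ``$\mU = f^2\pI/\expec{\pI}{f^2}$'' substitution the paper does spell out in Lemma~\ref{lem:kl_fis_lsi}. You instead take $\q\to 1^+$, using $\Eren{1}{\mU}{\pI}=1$, $\Ren{1}{\mU}{\pI}=\KL{\mU}{\pI}$, $\Gren{1}{\mU}{\pI}=\Fis{\mU}{\pI}$, and the vanishing prefactor $\q(\q-1)$, then hand off to Lemma~\ref{lem:kl_fis_lsi}. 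This is a genuinely different and more self-contained route: it avoids the density argument entirely and reuses an already-proved characterization. One small caveat: your claim that $\partial_\q\Ren{\q}{\mU}{\pI}$ stays bounded near $\q=1$ ``because $\mU/\pI\in\mathbb L^2(\pI)$'' is not automatic — the limit is $\tfrac12\mathrm{Var}_\mU\!\left(\log(\mU/\pI)\right)$, a different moment from the one $\mathbb L^2(\pI)$ controls — but this is a regularity detail at the same level of informality as the lemma statement itself, not a structural gap.
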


\begin{proof}
	For brevity, let the functions ${R(\q)=\Ren{\q}{\mU}{\pI}}$, ${E(\q)=\Eren{\q}{\mU}{\pI}}$, 
	and ${I(\q)=\Gren{\q}{\mU}{\pI}}$. Let function ${f^2(\thet)=\left(\frac{\mU(\thet)}{\pI(\theta)}\right)^\q}$. Then,
	\begin{align*}
		\expec{\pI}{f^2} = \expec{\pI}{\left(\frac{\mU}{\pI}\right)^{\q}} = E(\q),
		\tag{From \eqref{eqn:renyi_dfn}}
	\end{align*}
	and,
	\begin{align*}
		\expec{\pI}{f^2 \log f^2} = \expec{\pI}{\left(\frac{\mU}{\pI}\right)^{\q}
		                          \log \left(\frac{\mU}{\pI}\right)^{\q}} 
		                          = \q \partial_\q \expec{\pI}{\int_{\q}\left(\frac{\mU}{\pI}\right)^{\q}
		                          \log\left(\frac{\mU}{\pI}\right)\dif{\q}}
		                          = \q \partial_\q \expec{\pI}{\left(\frac{\mU}{\pI}\right)^{\q}}
		                          = \q \partial_\q E(\q).
		                          \tag{From Lebniz rule and \eqref{eqn:renyi_dfn}}
	\end{align*}
 	Moreover,
 	\begin{align*}
 		\expec{\pI}{\norm{\graD{f}}^2} = \expec{\pI}{\norm{\graD{\left(\frac{\mU}{\pI}\right)^\frac{\q}{2}}}^2}
 		= \frac{\q^2}{4} I(\q)
 		\tag{From \eqref{eqn:renyi_info}}
 	\end{align*}
 	On substituting \eqref{eqn:def_lsi} with the above equalities, we get:
 	\begin{align*}
		&\Ent_{\pI}(f^2) \leq \frac{2\noise^2}{\lsi} \expec{\pI}{\norm{\graD{f}}^2} \\
 	    \iff&\q\partial_\q E(\q) - E(\q) \log E(\q) \leq \frac{\q^2\noise^2}{2\lsi} I(\q) \\
 	    \iff&\q\partial_\q \log E(\q) - \log E(\q) \leq \frac{\q^2\noise^2}{2\lsi} \frac{I(\q)}{E(\q)} \\
 	    \iff&\q \partial_\q \left((\q-1) R(\q)\right) - (\q - 1)R(\q) \leq \frac{\q^2\noise^2}{2\lsi} \frac{I(\q)}{E(\q)}
 	        \tag{From \eqref{eqn:renyi_dfn}} \\
	    \iff& R(\q) + \q(\q-1) \partial_\q R(\q) \leq \frac{\q^2\noise^2}{2\lsi} \frac{I(\q)}{E(\q)}
 	\end{align*}
\end{proof}


\subsection{(R\'enyi) Differential Privacy Guarantees on Noisy-GD}
\label{app:dp_guarantees}

In this section, we recap the differential privacy bounds in literature for Noisy-GD Algorithm~\ref{alg:noisygd}.
\begin{theorem}[R\'enyi DP guarantee for Noisy-GD Algorithm~\ref{alg:noisygd}]
	\label{thm:abadi}
	If $\loss(\thet;\x)$ is $\lip$-Lipschitz, then Noisy-GD satisfies $(\q, \eps)$-R\'enyi DP with $\eps = \frac{\q\lip^2}{\noise^2\n^2} \cdot \step \K$.
\end{theorem}
\begin{proof}
	The $L_2$ sensitivity of gradient $\nabla\Loss_{\D}(\thet)\stackrel{\text{def}}{=}\frac{1}{\n}\sum_{\x \in \D} \nabla\loss(\thet;\x)+\nabla\reg(\thet)$ computed in step 2 of Algorithm~\ref{alg:noisygd} for neighboring databases in $\X^\n$ that differ in a single record is $\frac{2\lip}{\n}$ since $\loss(\thet;\x)$ is $\lip$-Lipschitz.

	Conditioned on observing the intermediate model $\Thet_{\step k}=\thet_k$ at step $k$, the next model $\Thet_{\step(k+1)}$ after the noisy gradient update is a Gaussian mechanism with noise variance $2\noise^2/\step$. So, for neighboring databases $\D, \D' \in \X^\n$, we have from the R\'enyi DP bound of Gaussian mechanisms proposed by~\citet[Proposition 7]{mironov2017renyi} that
\begin{equation}
	\Ren{\q}{\Thet_{\step(k+1)}\mid_{\Thet_{\step k}=\thet_k}}{\Thet_{\step(k+1)}'\mid_{\Thet_{\step k}'=\thet_k}}\leq\frac{\step \q\lip^2}{\n^2\noise^2},
\end{equation}
where $(\Thet_{\step k})_{0\leq k\leq \K}$ and $(\Thet_{\step k}')_{0\leq k\leq \K}$ are intermediate parameters in Algorithm~\ref{alg:noisygd} when run on databases $\D$ and $\D'$ respectively. Finally, from R\'enyi composition~\citet[Proposition 1]{mironov2017renyi}, we have
\begin{align*}
	\Ren{\q}{\Thet_{\step \K}}{\Thet_{\step \K}'} &\leq \Ren{\q}{(\Thet_0,\Thet_\step,\cdots,\Thet_{\step \K})}{(\Thet_0',\Thet_\step',\cdots,\Thet_{\step \K}')} \\
						      &\leq \sum_{k=0}^{\K-1} \Ren{\q}{\Thet_{\step(k+1)}\mid_{\Thet_{\step k}=\thet_k}}{\Thet_{\step(k+1)}'\mid_{\Thet_{\step k}'=\thet_k}} \\
						      &\leq \frac{\q\lip^2}{\n^2\noise^2}\cdot\eta \K.
\end{align*}
\end{proof}
\begin{remark}
	Different papers discussing Noisy-GD variants adopt different notational conventions for the total noise added to the gradients. The noise variance in our Algorithm~\ref{alg:noisygd} is $2\step\noise^2$; but is $\frac{\step^2\noise^2\lip^2}{\n^2}$ in the full-batch setting of DP-SGD by~\citet{abadi2016deep}. To translate the bound in Theorem~\ref{thm:abadi}, one can simply rescale $\noise$ across different conventions to have the same noise variance, i.e., $2\step\noise^2=\frac{\step^2\hat\noise^2\lip^2}{\n^2}$.
\end{remark}

	Our Theorem~\ref{thm:abadi} is somewhat identical to \citet{abadi2016deep}'s $(\eps,\del)$-DP bound. To verify this, note from R\'enyi divergence to $(\eps, \del)$-indistinguishability conversion discussed in Remark~\ref{rem:renyi_onesided} that $(1+\frac{2}{\eps}\log\frac{1}{\del},\frac{\eps}{2})$-R\'enyi DP implies $(\eps,\del)$-DP. So, setting the bound in Theorem~\ref{thm:abadi} to be smaller than $\frac{\eps}{2}$ and substituting $\q=1+\frac{2}{\eps}\log\frac{1}{\del}$, we get
	$$
	\left(\frac{\eps+2\log\frac{1}{\del}}{\eps}\right)\frac{\lip^2}{\n^2\noise^2}\cdot\eta \K\leq\frac{\eps}{2}\iff\frac{\sqrt{\K(\eps+2\log\frac{1}{\del}})}{\eps}\leq\hat\noise. 
	$$
	For $\eps\leq2\log\frac{1}{\del}$, we get the same noise bound as in \citet[Theorem 1]{abadi2016deep} for their (full-batch) DP-SGD algorithm.

	Next, we recap the tighter R\'enyi DP guarantee of \citet{chourasia2021differential} under stronger assumptions on the loss function.
\begin{theorem}[R\'enyi DP guarantee for Noisy-GD Algorithm~\ref{alg:noisygd}~\citep{chourasia2021differential}]
	\label{thm:chourasia}
	If $\loss(\thet;\x)$ is convex, $\lip$-Lipschitz, and $\smh$-smooth and $\reg(\thet)$ is the $\ltwo$ regularizer with constant $\cvx$, then Noisy-GD with learning rate $\step < \frac{1}{\smh + \cvx}$ satisfies $(\q, \eps)$-R\'enyi DP with ${\eps = \frac{4\q\lip^2}{\cvx\noise^2\n^2}\left(1 - e^{-\cvx\step\K/2}\right)}$.
\end{theorem}

\subsection{Proofs for Subsection~\ref{sec:deletion_convex}}
\label{app:deletion_convex}

In this appendix, we provide a proof of our Theorem~\ref{thm:unlearning_accuracy_convex} which applies to convex losses $\loss(\thet;\x)$ under $\ltwo$ regularizer $\reg(\thet)$. Let $\D_0 \in \X^\n$ be any arbitrary database, and $\updreq$ be any non-adaptive $\reps$-requester. 

Our first goal in this section is to prove $(\q, \epsdd)$-data-deletion guarantees on our proposed algorithm pair $(\Lrn_\Nsgd, \Unlrn_\Nsgd)$ (in Definition~\ref{dfn:noisy_gd_unlrn}) under $\updreq$. That is, if $(\hat\Thet_i)_{i \geq 0}$ is the sequence of models produced by the interaction between $(\Lrn_\Nsgd, \Unlrn_\Nsgd, \updreq)$ on $\D_0$, we need to show that their exists a mapping $\pI^\updreq_i$ such that for all $i \geq 1$ and any $\up_i \in \U^\reps$, 
\begin{equation}
	\Ren{\q}{\Unlrn(\D_{i-1}, \up_i, \hat\Thet_{i-1})}{\pI^\updreq_i(\D_0 \circ \langle \ind, \y \rangle)} \leq \epsdd \quad\text{for all}\ \langle \ind, \y \rangle \in \up_i.
\end{equation}

For an arbitrary replacement operation $\langle \ind, \y \rangle$ in $\up_i$, we define a map $\pI^\updreq_i(\D_0 \circ \langle \ind, \y \rangle) = \hat\Thet_i'$, where the model sequence $(\hat\Thet_i')_{i\geq0}$ is produced by the interaction of between algorithms $(\Lrn_\Nsgd, \Unlrn_\Nsgd, \updreq)$ on initial database $\D_0 \circ \langle \ind, \y \rangle$. Since non-adaptive requester $\updreq$ is equivalent to fixing the edit sequence $(\up_i)_{i\geq1}$ a-priori, note that showing the data-deletion guarantee reduces to proving the following DP-like bound
\begin{equation}
	\label{eqn:dp_bound_eqiv}
	\Ren{\q}{\Unlrn(\D_{i-1}, \up_i, \hat\Thet_{i-1})}{\Unlrn(\D_{i-1}', \up_i, \hat\Thet_{i-1}')} \leq \epsdd,
\end{equation}
for for all $\up_{\leq i}$ and for all neighbouring databases $\D_0, \D_0'$ s.t. $\D_0' = \D_0 \circ \langle \ind, \y \rangle$ with $\langle \ind, \y \rangle \in \up_i$. 


Note from our Definition~\ref{dfn:noisy_gd_unlrn} that the sequence of models $(\hat\Thet_0, \cdots, \hat\Thet_{i})$ can be seen as being generated from a continuous run of $\Nsgd$, where:
\begin{enumerate}
	\item for iterations $0 \leq k < \K_\Lrn$, the loss function is $\Loss_{\D_0}$, 
	\item for the iterations $\K_\Lrn + (j - 1)\K_\Unlrn \leq k < \K_\Lrn + j\K_\Unlrn$ on any $1 \leq j \leq i-1$, the loss function is $\Loss_{\D_j}$, and
	\item for the iterations $\K_\Lrn + (i - 1)\K_\Unlrn \leq k < \K_\Lrn + i \K_\Unlrn$, the loss function is $\Loss_{\D_{i-1} \circ \up_i}$.
\end{enumerate}
Let $(\Thet_{\step k})_{0 \leq k \leq \K_\Lrn + i \K_\Unlrn}$ be the sequence representing the intermediate parameters of this extended $\Nsgd$ run. Similarly, let $(\Thet_{\step k}')_{k \geq 0}$ be the parameter sequence corresponding to the extended run on the neighbouring database $\D_0'$. Since $\langle \ind, \y \rangle \in \up_i$, note from the construction that $\D_{i-1}' \circ \up_i = \D_{i-1} \circ \up_i$, meaning that the loss functions while processing request $\up_i$ is identical for the two processes, i.e. $\Loss_{\D_{i-1} \circ \up_i} = \Loss_{\D_{i-1}' \circ \up_i}$. For brevity, we refer to the database seen in iteration $k$ of the two respective extended runs as $\D(k)$ and $\D'(k)$ respectively. 
In short, these two discrete processes induced by $\Nsgd$ follow the following update rule for any ${0 \leq k < \K_\Lrn + i \K_\Unlrn}$:
\begin{align}
	\label{eqn:gradient_update}
	\begin{cases}
		\Thet_{\step(k+1)} = \Thet_{\step k} - \step \graD{\Loss_{\D(k)}(\Thet_{\step k})} + \sqrt{2\step\noise^2}\Z_k \\
		\Thet_{\step(k+1)}' = \Thet_{\step k}' - \step \graD{\Loss_{\D'(k)}(\Thet_{\step k}')} + \sqrt{2\step\noise^2}\Z_k',
	\end{cases}
	\ \text{where}\ \Z_k, \Z_k' \sim \Gaus{0}{\Id},
\end{align}
and $\Thet_0$ and $\Thet_0'$ are sampled from same the weight initialization distribution $\rhO$.
To prove the bound in \eqref{eqn:dp_bound_eqiv}, we follow the approach proposed in \citet{chourasia2021differential} of interpolating the two discrete stochastic process of $\Nsgd$ with two piecewise-continuous tracing diffusions $\Thet_t$ and $\Thet_t'$ in the duration ${\step k < t \leq \step(k + 1)}$, defined as follows. 
\begin{align}
	\label{eqn:tracing_diffusion_deletion}
	\begin{cases}
		\Thet_t = \T_k(\Thet_{\step k}) - \frac{(t - \step k)}{2} \graD{\left(\Loss_{\D(k)}(\Thet_{\step k}) - \Loss_{\D'(k)}(\Thet_{\step k})\right)} + \sqrt{2\noise^2} (\Z_t - \Z_{\step k}), \\
		\Thet_t' = \T_k(\Thet_{\step k}') + \frac{(t - \step k)}{2} \graD{\left(\Loss_{\D(k)}(\Thet_{\step k}') - \Loss_{\D'(k)}(\Thet_{\step k}')\right)} + \sqrt{2\noise^2} (\Z_t' - \Z_{\step k}'),
	\end{cases}
\end{align}
where $\Z_t, \Z_t'$ are two independent Weiner processes, and $\T_k$ is a map on $\domain$ defined as
\begin{equation}
	\label{eqn:T}
	\T_k = \Id - \frac{\step}{2} \graD{\left(\Loss_{\D(k)} + \Loss_{\D'(k)} \right)}.
\end{equation}
Note that equation \eqref{eqn:tracing_diffusion_deletion} is identical to \eqref{eqn:gradient_update} when $t = \step(k+1)$, and
can be expressed by the following stochastic differential equations (SDEs):
\begin{equation}
	\label{eqn:sde_convex}
	\begin{cases}
		\dif{\Thet_t} = - \grad_k(\Thet_{\step k})\dif{t} + \sqrt{2\noise^2}  \dif{\Z_t} \\
		\dif{\Thet_t'} = + \grad_k(\Thet_{\step k}')\dif{t} + \sqrt{2\noise^2} \dif{\Z_t'},
	\end{cases}
	\text{where}\ \grad_k(\Thet) = \frac{1}{2\n}\graD{\left[\loss(\Thet;\D(k)[\ind]) - \loss(\Thet;\D'(k)[\ind])\right]},
\end{equation}
and initial condition $\lim_{t\rightarrow \step k^+}\Thet_t = \T_k(\Thet_{\step k})$, $\lim_{t\rightarrow \step k^+}\Thet_t' = \T_k(\Thet_{\step k}')$. These two SDEs can be equivalently described by the following pair of Fokker-Planck equations.
\begin{lemma}[Fokker-Planck equation for SDE~{\eqref{eqn:sde_convex}}]
	\label{lem:fokker_planck_deletion}
	Fokker-Planck equation for SDE in \eqref{eqn:sde_convex} at time $\step k < t \leq \step (k+1)$, is
	\begin{equation}
		\label{eqn:fokker_planck_deletion}
		\begin{cases}
		\partial_t \mU_{t}(\thet) &= \divR{\mU_{t}(\thet)\expec{}{\grad_k(\Thet_{\step k}) \middle\vert \Thet_t = \thet}} + \noise^2\lapL{\mU_{t}(\thet)}, \\
		\partial_t \mU_{t}'(\thet) &= \divR{\mU_{t}'(\thet)\expec{}{-\grad_k(\Thet_{\step k}') \middle\vert \Thet_t' = \thet}} + \noise^2\lapL{\mU_{t}'(\thet)},
		\end{cases}
	\end{equation}
	where $\mU_{t}$ and $\mU_{t}'$ are the densities of $\Thet_t$ and $\Thet_t'$ respectively.
\end{lemma}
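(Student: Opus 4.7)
The plan is to derive the Fokker--Planck equation for the marginal density $\mU_t$ by first writing down the Fokker--Planck equation for the joint density of $(\Thet_{\step k}, \Thet_t)$ and then integrating out $\Thet_{\step k}$. The key observation is that although the drift $-\grad_k(\Thet_{\step k})$ in the SDE~\eqref{eqn:sde_convex} is random, it is measurable with respect to $\Thet_{\step k}$ and independent of the Brownian increment $\Z_t - \Z_{\step k}$. So, conditional on the event $\Thet_{\step k} = \thet_0$, the process $(\Thet_t)_{t > \step k}$ is an Ornstein--Uhlenbeck-like diffusion with a constant (deterministic) drift vector $-\grad_k(\thet_0)$ and volatility $\sqrt{2\noise^2}$.

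First, I would let $p_t(\thet_0, \thet)$ denote the joint density of $(\Thet_{\step k}, \Thet_t)$. By the standard Fokker--Planck equation for diffusions with constant drift (applied pointwise in $\thet_0$), the conditional density $p_t(\thet \mid \thet_0) = p_t(\thet_0, \thet)/\mU_{\step k}(\thet_0)$ satisfies
\begin{equation}
	\partial_t p_t(\thet \mid \thet_0) = \divR{p_t(\thet \mid \thet_0)\, \grad_k(\thet_0)} + \noise^2 \lapL{p_t(\thet \mid \thet_0)},
\end{equation}
with the initial condition $\lim_{t \rightarrow \step k^+} p_t(\thet \mid \thet_0)$ being the push-forward of the point mass at $\thet_0$ under the map $\T_k$. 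Multiplying by $\mU_{\step k}(\thet_0)$ (which is time-independent) yields the same equation for $p_t(\thet_0, \thet)$.

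Next, I would integrate this equation with respect to $\thet_0$ to obtain the equation for the marginal $\mU_t(\thet) = \int p_t(\thet_0, \thet) \dif{\thet_0}$. The Laplacian term integrates trivially to $\noise^2 \lapL{\mU_t(\thet)}$. For the divergence term, I would exchange integration and differentiation (justified by sufficient decay of $p_t$ and its derivatives in $\thet$) and then rewrite the integral as a conditional expectation via Bayes' rule:
\begin{equation}
	\int p_t(\thet_0, \thet)\, \grad_k(\thet_0) \dif{\thet_0} = \mU_t(\thet) \int \frac{p_t(\thet_0, \thet)}{\mU_t(\thet)}\, \grad_k(\thet_0) \dif{\thet_0} = \mU_t(\thet)\, \expec{}{\grad_k(\Thet_{\step k}) \big\vert \Thet_t = \thet}.
\end{equation}
Combining these gives exactly the first line of~\eqref{eqn:fokker_planck_deletion}, and the second line follows by the same argument applied to the companion SDE (with sign reversed in the drift).

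The only real subtlety is technical: justifying the exchange of integration and differentiation in the divergence term, and making sure the conditional expectation is well-defined almost everywhere with respect to $\mU_t$. Since $\grad_k$ is a bounded vector field (it is proportional to the difference of two gradients of smooth Lipschitz losses, scaled by $1/(2\n)$), and $p_t$ is smooth and decays rapidly (being a convolution of the initial distribution, translated by the random drift, with a Gaussian of variance $2\noise^2 (t - \step k)$), both steps go through by standard dominated convergence arguments. This is the main step that requires a bit of care, but it is routine once one has the clean joint-then-marginalize picture in place.
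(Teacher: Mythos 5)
Your proposal is correct and takes essentially the same route as the paper's proof: condition on $\Thet_{\step k}$, write the Fokker--Planck equation for the conditional density (a diffusion with frozen constant drift $\grad_k(\thet_0)$), then take expectation over $\mU_{\step k}$ (equivalently, multiply by $\mU_{\step k}(\thet_0)$ and integrate out $\thet_0$), and finally rewrite the integral as a conditional expectation via Bayes' rule. The paper phrases it as ``take expectation of the conditional Fokker--Planck equation'' while you phrase it as ``derive the joint density's equation and marginalize,'' but these are the same computation step for step, including the same Bayes-rule manipulation for the divergence term.
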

\begin{proof}
	Conditioned on observing parameter $\Thet_{\step k} = \thet_{\step k}$, the process $(\Thet_t)_{\step k < t \leq \step (k+1)}$ is a Langevin diffusion along a constant Vector field (i.e. on conditioning, we get a Langevin SDE~\eqref{eqn:langevin_sde} with $\graD{\Loss(\thet)} = \grad_k(\thet_{\step k})$ for all $\thet \in \domain$). Therefore as per~\eqref{eqn:fokker_planck_langevin}, the conditional probability density $\mU_{t| \step k}(\cdot | \thet_{\step k})$ of $\Thet_t$ given $\Thet_{\step k}$ follows the following Fokker-Planck equation:
	\begin{equation}
		\partial_t \mU_{t|\step k}(\cdot | \thet_{\step k}) = \divR{\mU_{t|\step k}(\cdot | \thet_{\step k})\grad_k(\thet_{\step k})} + \noise^2 \lapL{\mU_{t|\step k}(\cdot | \thet_{\step k})}
	\end{equation}
	Taking expectation over $\mU_{\step k}$ which is the distribution of $\Thet_{\step k}$,
	\begin{align*}
		\partial_t \mU_{t}(\cdot) &= \int \mU_{\step k}(\thet_{\step k}) 
		\left\{\divR{\mU_{t|\step k}(\cdot | \thet_{\step k}) \grad_k(\thet_{\step k})}  + \noise^2 \lapL{\mU_{t|\step k}(\cdot| \thet_{\step k})} \right\} \dif{\thet_{\step k}} \\
					  &= \divR{ \int \grad_k(\thet_{\step k}) \mU_{t, \step k}(\cdot, \thet_{\step k}) \dif{\thet_{\step k}} } + \noise^2 \lapL{\mU_{t}(\cdot)} \\
					  &= \divR{ \mU_{t}(\cdot)\left\{\int \grad_k(\thet_{\step k}) \mU_{\step k| t}(\thet_{\step k}| \cdot) \dif{\thet_{\step k}} \right\}} + \noise^2 \lapL{\mU_{t}(\cdot)} \\
					  &= \divR{ \mU_{t}(\cdot) \expec{}{\grad_k(\Thet_{\step k}) | \Thet_t = \cdot}} + \noise^2 \lapL{\mU_{t}(\cdot)}. \\
	\end{align*}
	where $\mU_{\step k,| t}$ is the conditional density of $\Thet_{\step k}$ given $\Thet_t$. Proof for second Fokker-Planck equation is similar.
\end{proof}
%

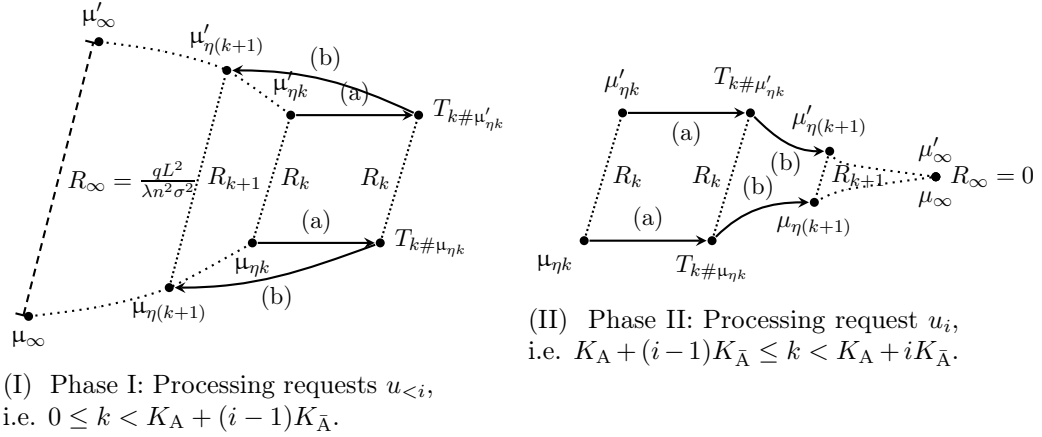
\begin{figure}
	\centering
	\begin{subfigure}{.37\textwidth}
		\begin{tikzpicture}[thick,scale=0.85, every node/.style={scale=0.85}]
			\node[fill,circle,minimum size=4pt, inner sep=0pt, label=below:{$\mU_{\step k}$}] (0) at (0,0) {};
			\node[fill,circle,minimum size=4pt, inner sep=0pt, label=above:{$\mU_{\step k}'$}] (2) at (0.6,2) {};
			\node[fill,circle,minimum size=4pt, inner sep=0pt, label=right:{$\T_{k\#\mU_{\step k}}$}] (1) at (2,0) {};
			\node[fill,circle,minimum size=4pt, inner sep=0pt, label=right:{$\T_{k\#\mU_{\step k}'}$}] (3) at (2.6,2) {};
			\node[fill,circle,minimum size=4pt, inner sep=0pt, label=above:{$\mU_{\step(k+1)}'$}] (4) at (-0.4,2.7) {};
			\node[fill,circle,minimum size=0pt, inner sep=0pt] (5) at (-1.4,3) {};
			\node[fill,circle,minimum size=4pt, inner sep=0pt, label=above:{$\mU_{\infty}'$}] (7) at (-2.4,3.15) {};
			\node[fill,circle,minimum size=4pt, inner sep=0pt, label=below:{$\mU_{\step(k+1)}$}] (8) at (-1.3,-0.7) {};
			\node[fill,circle,minimum size=0pt, inner sep=0pt] (9) at (-2.4,-1) {};
			\node[fill,circle,minimum size=4pt, inner sep=0pt, label=below:{$\mU_{\infty}$}] (11) at (-3.5,-1.15) {};

			\draw[thick,-stealth] (0) to node[above] {{(a)}} (1) ;
			\draw[thick,-stealth] (2) to node[above] {{(a)}} (3);
			\draw[thick,-stealth] (3) to [out=155,in=0] node[above] {{(b)}} (4);
			\draw[thick,-stealth] (1) to [out=200,in=0] node[below] {{(b)}}  (8);
			\draw[thick,densely dotted] (0) to node[right] {{$R_k$}} (2) {};
			\draw[thick,densely dotted] (1) to node[left] {{$R_k$}} (3) {};
			\draw[thick,densely dotted] (8) to node[right] {$R_{k+1}$} (4) {};
			\draw[thick,|-|,densely dashed] (11.west) to node[right] {$R_{\infty} = \frac{\q\lip^2}{\cvx\n^2\noise^2}$} (7.west) {};

			\draw[thick,dotted] plot [smooth] coordinates {(2)  (4)  (5)  (7)};
			\draw[thick,dotted] plot [smooth] coordinates {(0)  (8)  (9)  (11)};
		\end{tikzpicture}
		\caption{\label{subfig:convex_deletion_phase1} Phase I: Processing requests $\up_{< i}$, i.e. $0 \leq k < \K_\Lrn + (i-1) \K_\Unlrn$.}
	\end{subfigure}
	\hspace{1cm}
	\begin{subfigure}{.37\textwidth}
		\begin{tikzpicture}[thick,scale=0.85, every node/.style={scale=0.85}]
			\node[fill,circle,minimum size=4pt, inner sep=0pt, label=250:{$\mU_{\step k}$}] (0) at (0,0) {};
			\node[fill,circle,minimum size=4pt, inner sep=0pt, label=below:{$\T_{k\#\mU_{\step k}}$}] (1) at (2,0) {};
			\node[fill,circle,minimum size=4pt, inner sep=0pt, label=above:{$\mu_{\step k}'$}] (2) at (0.6,2) {};
			\node[fill,circle,minimum size=4pt, inner sep=0pt, label=above:{$\T_{k\#\mu_{\step k}'}$}] (3) at (2.6,2) {};
			\node[fill,circle,minimum size=4pt, inner sep=0pt, label=above:{$\mu_{\step(k+1)}'$}] (4) at (3.84,1.4) {};
			\node[fill,circle,minimum size=4pt, inner sep=0pt, label=below:{$\mu_{\step(k+1)}$}] (5) at (3.6,0.6) {};
			\node[fill,circle,minimum size=4pt, inner sep=0pt, label=above:{$\mu_{\infty}'$}] (6) at (5.5,1) {};
			\node[fill,circle,minimum size=4pt, inner sep=0pt, label=below:{$\mu_{\infty}$}] (7) at (5.5,1) {};
			\node[fill,circle,minimum size=4pt, inner sep=0pt, label=right:{$R_\infty=0$}] (7) at (5.5,1) {};
			\node[fill,circle,minimum size=0pt, inner sep=0pt] (8) at (4.1,1.2) {};
			\node[fill,circle,minimum size=0pt, inner sep=0pt] (9) at (4.1,0.8) {};

			\draw[thick,-stealth] (0) to node[above] {{(a)}} (1) ;
			\draw[thick,-stealth] (2) to node[below] {{(a)}} (3);
			\draw[thick,-stealth] (3) to [out=-45,in=180] node[below] {{(b)}} (4);
 			\draw[thick,-stealth] (1) to [out=45,in=180] node[above] {{(b)}}  (5);
 			\draw[thick,densely dotted] (0) to node[right] {{$R_k$}} (2) {};
			\draw[thick,densely dotted] (1) to node[left] {{$R_k$}} (3) {};
			\draw[thick,densely dotted] (4) to node[right] {$R_{k+1}$} (5) {};

			\draw[thick,dotted] plot [smooth] coordinates {(4) (8) (6)};
			\draw[thick,dotted] plot [smooth] coordinates {(5) (9) (7)};
		\end{tikzpicture}
		\caption{\label{subfig:convex_deletion_phase2} Phase II: Processing request $\up_i$, i.e. $\K_\Lrn + (i-1)\K_\Unlrn \leq k < \K_\Lrn + i \K_\Unlrn$.}
	\end{subfigure}
	\caption{\label{fig:convex_proof} Diagram illustrating the technical overview of Theorem~\ref{thm:data_deletion_convex}. Here $\mU_{\step k}$ and $\mU_{\step k'}$ represent the $k$th iteration parameter distribution of $\Thet_{\step k}$ and $\Thet_{\step k}'$ respectively. We interpolate the two discrete processes in two steps: (a) an identical transformation $\T_k$ (as defined in \eqref{eqn:T}, and (b) a diffusion process. If divergence before descent step is $R_k = \Ren{\q}{\mU_{\step k}}{\mU_{\step k}'}$, the stochastic mapping $\T_k$ in (a) doesn't increase the divergence, while the diffusion (b) either increases it upto an asymptotic constant in phase I or decreases it exponentially to $0$ in phase II.}

\end{figure}

We provide an overview of how we bound equation~\eqref{eqn:dp_bound_eqiv} in Figure~\ref{fig:convex_proof}. Basically, our analysis has two phases; in phase (I) we provide a bound on $\Ren{\q}{\hat\Thet_{i-1}}{\hat\Thet_{i-1}'}$ that holds for any choice of number of iterations $\K_\Lrn$ and $\K_\Unlrn$, and in phase (II) we prove an exponential contraction in the divergence $\Ren{\q}{\Unlrn(\D_{i-1}, \up_i, \hat\Thet_{i-1}}{\Unlrn(\D_{i-1}', \up_i, \hat\Thet_{i-1}')}$ with number of iterations $\K_\Unlrn$. 

We first introduce a few lemmas that will be used in both phases. The first set of following lemmas show that the transformation $\Thet_{\step k}, \Thet_{\step k}' \rightarrow \T_k(\Thet_{\step k}), \T_k(\Thet_{\step k})$ preserves the R\'enyi divergence. To prove this property, we show that $\T_k$ is a differentiable bijective map in Lemma~\ref{lem:T_is_bijective_and_lipschitz} and apply the following Lemma from \citet{vempala2019rapid}.
\begin{lemma}[{\citet[Lemma 15]{vempala2019rapid}}]
	\label{lem:divergence_preserved_for_bijections}
	If $\T: \domain \rightarrow \domain$ is a differentiable bijective map, then for any random variables $\Thet, \Thet' \in \domain$, and for all $\q > 0$,
	\begin{equation}
		\Ren{\q}{\T(\Thet)}{\T(\Thet')} = \Ren{\q}{\Thet}{\Thet}.
	\end{equation}
\end{lemma}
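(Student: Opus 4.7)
The plan is to prove the lemma via a direct change-of-variables computation, exploiting the fact that $\Ren{\q}{\cdot}{\cdot}$ is defined through an integral of a ratio of densities, and that for a diffeomorphism the Jacobian determinants appearing in the pushforward densities cancel in this ratio.

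First I would write down the densities explicitly. Let $\nU$ and $\nU'$ denote the densities of $\Thet$ and $\Thet'$ respectively, and let $\T$ be a differentiable bijection with differentiable inverse $\T^{-1}$ (the lemma is typically stated under this diffeomorphism assumption, which is what is used in the cited reference). By the change-of-variables formula, the densities of $\T(\Thet)$ and $\T(\Thet')$ at a point $\y \in \domain$ are
\begin{equation}
\tilde\nU(\y) = \nU(\T^{-1}(\y)) \left\lvert \detR \jacO{\T^{-1}}(\y) \right\rvert, \qquad \tilde\nU'(\y) = \nU'(\T^{-1}(\y)) \left\lvert \detR \jacO{\T^{-1}}(\y) \right\rvert.
\end{equation}

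Next I would plug these into the definition of $\Eren{\q}{\tilde\nU}{\tilde\nU'}$ from Definition~\ref{dfn:renyi} and note that the Jacobian factor cancels inside the ratio, leaving
\begin{equation}
\left(\frac{\tilde\nU(\y)}{\tilde\nU'(\y)}\right)^\q = \left(\frac{\nU(\T^{-1}(\y))}{\nU'(\T^{-1}(\y))}\right)^\q.
\end{equation}
Then I would perform the substitution $\thet = \T^{-1}(\y)$ in the integral $\int \tilde\nU'(\y)^{1-\q} \tilde\nU(\y)^\q \dif{\y}$. The change of variables introduces a factor $\left\lvert \detR \jacO{\T}(\thet) \right\rvert$ from $\dif{\y} = \left\lvert \detR \jacO{\T}(\thet) \right\rvert \dif{\thet}$, which precisely cancels with the remaining $\left\lvert \detR \jacO{\T^{-1}}(\T(\thet)) \right\rvert = \left\lvert \detR \jacO{\T}(\thet) \right\rvert^{-1}$ factor left in $\tilde\nU'$. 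Consequently, $\Eren{\q}{\tilde\nU}{\tilde\nU'} = \Eren{\q}{\nU}{\nU'}$, and taking $\frac{1}{\q-1}\log$ of both sides yields the claim.

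I would also check the edge case $\nU \not\ll \nU'$: since $\T$ is a bijection, $\nU \ll \nU'$ holds if and only if $\tilde\nU \ll \tilde\nU'$, so both divergences are $\infty$ simultaneously. There is no real obstacle here; the only subtlety is ensuring the Jacobian factors are well-defined and nonzero almost everywhere, which follows from $\T$ being a differentiable bijection with differentiable inverse (equivalently, a diffeomorphism). In the application to $\T_k = \Id - \frac{\step}{2}\graD{(\Loss_{\D(k)} + \Loss_{\D'(k)})}$, this will be established separately in the referenced Lemma~\ref{lem:T_is_bijective_and_lipschitz} by choosing $\step$ small relative to the smoothness constant so that $\T_k$ is a bi-Lipschitz bijection.
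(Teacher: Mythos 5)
Your proof is correct and follows the standard change-of-variables argument used in the cited reference \citep{vempala2019rapid}; the paper itself states this lemma without supplying a proof. The Jacobian cancelation you describe is precisely the mechanism behind the invariance of R\'enyi (and all $f$-) divergences under smooth bijections, and your caveat that $\T$ must effectively be a diffeomorphism (so that the pushforward densities and the inverse substitution are well-defined) is the right reading of the loosely stated ``differentiable bijective'' hypothesis, a condition that Lemma~\ref{lem:T_is_bijective_and_lipschitz} does in fact verify for the map $\T_k$ used downstream.
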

\begin{lemma}
	\label{lem:T_is_bijective_and_lipschitz}
	If $\loss(\thet; \x)$ is a twice continuously differentiable, convex, and $\smh$-smooth loss function and regularizer is $\reg(\thet) = \frac{\cvx}{2}\norm{\thet}^2$, then the map $\T_k$ defined in~\eqref{eqn:T} is:
	\begin{enumerate}
		\item a differentiable bijection for any $\step < \frac{1}{\cvx + \smh}$, and
		\item $(1-\step\cvx)$-Lipschitz for any $\step \leq \frac{2}{2\cvx + \smh}$.
	\end{enumerate}
\end{lemma}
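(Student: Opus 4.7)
The plan is to rewrite $\T_k$ as a gradient step on the averaged regularized objective $\bar\Loss = \tfrac{1}{2}(\Loss_{\D(k)} + \Loss_{\D'(k)})$. Since each $\Loss_\D$ is an average of convex $\smh$-smooth losses plus the quadratic regularizer $\tfrac{\cvx}{2}\|\cdot\|^2$, the average $\bar\Loss$ is $\cvx$-strongly convex and $(\cvx+\smh)$-smooth, and $\T_k(\thet) = \thet - \step\,\graD{\bar\Loss}(\thet)$. Twice continuous differentiability of $\loss(\cdot;\x)$ and of $\reg$ immediately makes $\T_k$ continuously differentiable on $\R^\dime$.

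For part (1), I would observe that $\T_k = \graD h$ for the potential $h(\thet) = \tfrac{1}{2}\|\thet\|^2 - \step\,\bar\Loss(\thet)$. The smoothness bound $\hesS{\bar\Loss} \preccurlyeq (\cvx+\smh)\Id$ gives $\hesS{h} \succcurlyeq (1-\step(\cvx+\smh))\Id$, which is strictly positive definite whenever $\step < \tfrac{1}{\cvx+\smh}$. Thus $h$ is strongly convex and hence coercive, so for every $y \in \R^\dime$ the perturbed objective $h(\thet) - \langle y, \thet\rangle$ admits a unique minimizer, i.e.\ $\graD h(\thet) = y$ has a unique solution. This makes $\T_k$ a bijection.

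For part (2), I would split off the regularizer and work with $\bar f = \bar\Loss - \reg$, which is convex and $\smh$-smooth since $\reg$ exactly accounts for the $\cvx$-strong-convexity of $\bar\Loss$. Writing
\begin{equation*}
\T_k(x) - \T_k(y) = (1-\step\cvx)(x-y) - \step\bigl(\graD{\bar f}(x) - \graD{\bar f}(y)\bigr),
\end{equation*}
squaring, and invoking Baillon--Haddad co-coercivity $\langle \graD{\bar f}(x)-\graD{\bar f}(y), x-y\rangle \geq \tfrac{1}{\smh}\|\graD{\bar f}(x)-\graD{\bar f}(y)\|^2$ on the cross term yields
\begin{equation*}
\|\T_k(x)-\T_k(y)\|^2 \leq (1-\step\cvx)^2\|x-y\|^2 + \Bigl(\step^2 - \tfrac{2\step(1-\step\cvx)}{\smh}\Bigr)\|\graD{\bar f}(x)-\graD{\bar f}(y)\|^2,
\end{equation*}
and the bracketed coefficient is non-positive exactly when $\step \leq \tfrac{2}{2\cvx + \smh}$, giving the claimed Lipschitz constant.

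The main obstacle is identifying this algebraic split for part (2): peeling off the strongly-convex quadratic piece of $\bar\Loss$ yields the $(1-\step\cvx)$ factor directly, and the residual convex $\smh$-smooth piece is what makes co-coercivity (rather than mere convexity of $\bar\Loss$) strong enough to absorb the $\step^2\|\graD{\bar f}(x)-\graD{\bar f}(y)\|^2$ term precisely at the threshold $\step\leq\tfrac{2}{2\cvx+\smh}$.
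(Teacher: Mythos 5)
Your proposal is correct, and both parts take routes that are recognizably close to but cleaner than the paper's.

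For part (1), the paper proves injectivity by a direct contraction estimate (if $\T_k(\thet)=\T_k(\thet')$ then $(\cvx+\smh)$-smoothness forces $\|\thet-\thet'\|\leq\step(\cvx+\smh)\|\thet-\thet'\|$) and surjectivity by exhibiting $\T_k(\prox_\Loss(\thet))=\thet$ via the proximal map $\prox_\Loss(\thet)=\arg\min_{\thet'}\tfrac{1}{2}\|\thet'-\thet\|^2-\step\Loss(\thet')$. You observe that $\T_k=\graD{h}$ for the strongly convex potential $h=\tfrac{1}{2}\|\cdot\|^2-\step\bar\Loss$ and read off bijectivity from the existence and uniqueness of minimizers of $h-\langle y,\cdot\rangle$. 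These are essentially the same argument: the paper's $\prox_\Loss(\thet)$ is exactly your minimizer of $h(\cdot)-\langle\thet,\cdot\rangle$ up to an additive constant, and your observation that $\hesS{h}\succcurlyeq(1-\step(\cvx+\smh))\Id$ is what underlies the paper's injectivity contraction. Your framing via the potential $h$ packages both steps at once and is a bit tidier.

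For part (2), your decomposition is genuinely different from the paper's, and in my view better organized. The paper expands $\|\T_k(\thet)-\T_k(\thet')\|^2$ entirely in terms of the $\cvx$-strongly-convex $\Loss$, applies co-coercivity to $g=\Loss-\tfrac{\cvx}{2}\|\cdot\|^2$, and then must additionally invoke the lower bound $\|\graD{\Loss(\thet)}-\graD{\Loss(\thet')}\|\geq\cvx\|\thet-\thet'\|$ to absorb the leftover negative gradient-norm term back into $\|\thet-\thet'\|^2$ and recover the clean $(1-\step\cvx)^2$ factor. You instead split $\T_k(x)=(1-\step\cvx)x-\step\graD{\bar f}(x)$ with $\bar f=\bar\Loss-\reg$, so the $(1-\step\cvx)^2$ appears immediately; a single Baillon--Haddad application to $\bar f$ then shows the gradient-norm coefficient $\step^2-\tfrac{2\step(1-\step\cvx)}{\smh}$ is nonpositive precisely when $\step\leq\tfrac{2}{2\cvx+\smh}$, with no back-substitution needed. (You implicitly use $1-\step\cvx\geq 0$ when you drop the cross term to the co-coercivity lower bound; this is automatic from $\step\leq\tfrac{2}{2\cvx+\smh}\leq\tfrac{1}{\cvx}$, but worth a line.) The net content and the threshold obtained are identical; your route just makes the role of the regularizer transparent and avoids the final absorption step.
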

\begin{proof}
	{\bf Differentiable bijection.} To see that $\T_k$ is injective, assume $\T_k(\thet) = \T_k(\thet')$ for some $\thet, \thet' \in \domain$. Then, by $(\smh + \cvx)$-smoothness of $\Loss \stackrel{\text{def}}{=} (\Loss_{\D(k)} + \Loss_{\D'(k)})/2$,
		\begin{align*}
			\norm{\thet - \thet'} &= \norm{\T_k(\thet) + \step \graD{\Loss(\thet)} - \T_k(\thet') -  \step\graD{\Loss(\thet') }} \\
					      &= \step \norm{\graD{\Loss(\thet)} - \graD{\Loss(\thet')}} \\
					      & \leq \step(\cvx + \smh) \norm{\thet - \thet'}.
		\end{align*}
		Since $\step < 1/(\cvx + \smh)$, we must have $\norm{\thet - \thet'} = 0$. For showing $\T_k$ is surjective, consider the proximal mapping 
		\begin{equation}
			\prox_\Loss(\thet) = \underset{\thet' \in \domain}{\arg\min} \frac{\norm{\thet' - \thet}^2}{2} - \step \Loss(\thet').
		\end{equation}
		Note that $\prox_\Loss(\cdot)$ is strongly convex for $\step < \frac{1}{\cvx + \smh}$. Therefore, from KKT conditions, we have $\thet = \prox_\Loss(\thet) - \step \graD{\Loss(\prox_\Loss(\thet))} = \T_k(\prox_\Loss(\thet))$. Differentiability of $\T_k$ follows from the twice continuously differentiable assumption on $\loss(\thet;\x)$.

		{\bf Lipschitzness.} Let $\Loss \stackrel{\text{def}}{=} (\Loss_{\D(k)} + \Loss_{\D'(k)})/2$. For any $\thet, \thet' \in \domain$,
		\begin{align*}
			\norm{\T_k(\thet) - \T_k(\thet')}^2 &= \norm{\thet - \step\graD{\Loss(\thet)} - \thet' + \step\graD{\Loss(\thet')}}^2 \\
							&= \norm{\thet - \thet'}^2 + \step^2 \norm{\graD{\Loss(\thet)} - \graD{\Loss(\thet')}}^2 - 2 \step \dotP{\thet - \thet'}{\graD{\Loss(\thet)} - \graD{\Loss(\thet'}}.
		\end{align*}
		We define a function $g(\thet) = \Loss(\thet) - \frac{\cvx}{2}\norm{\thet}^2$, which is convex and $\smh$-smooth. By
		co-coercivity property of convex and $\smh$-smooth functions, we have
		\begin{align*}
			&\dotP{\thet - \thet'}{\graD{g(\thet)} - \graD{g(\thet')}} \geq \frac{1}{\smh} \norm{\graD{g(\thet)} - \graD{g(\thet')}}^2 \\
			\implies& \dotP{\thet - \thet'}{\graD{\Loss(\thet)} - \graD{\Loss(\thet')}} - \cvx \norm{\thet - \thet'}^2 \geq \frac{1}{\smh} \bigg(\norm{\graD{\Loss(\thet)} - \graD{\Loss(\thet')}}^2 + \cvx^2\norm{\thet - \thet'}^2 \\
				&\qquad\qquad\qquad\qquad\qquad\qquad\qquad\qquad\qquad\qquad - 2\cvx \dotP{\thet - \thet'}{\graD{\Loss(\thet)} - \graD{\Loss(\thet')}} \bigg) \\
			\implies& \dotP{\thet - \thet'}{\graD{\Loss(\thet)} - \graD{\Loss(\thet')}} \geq \frac{1}{2\cvx + \smh} \norm{\graD{\Loss(\thet)} - \graD{\Loss(\thet')}}^2 + \frac{\cvx(\cvx+\smh)}{2\cvx+\smh} \norm{\thet - \thet'}^2.
		\end{align*}
		Substituting this in the above inequality, and noting that $\step \leq \frac{2}{2\cvx+\smh}$, we get
		\begin{align*}
			\norm{\T_k(\thet) - \T_k(\thet')}^2 &\leq \left(1 - \frac{2\step\cvx(\cvx+\smh)}{2\cvx+\smh}\right) \norm{\thet - \thet'}^2 + \left(\step^2 - \frac{2\step}{\smh + 2\cvx}\right) \norm{\graD{\Loss(\thet)} - \graD{\Loss(\thet')}}^2 \\
							&\leq \left(1 - \frac{2\step\cvx(\cvx+\smh)}{2\cvx+\smh}\right) \norm{\thet - \thet'}^2 + \left(\step^2\cvx^2 - \frac{2\step\cvx^2}{\smh + 2\cvx}\right) \norm{\thet - \thet'}^2 \\
							&= (1 - \step\cvx)^2 \norm{\thet - \thet'}^2.
		\end{align*}

\end{proof}

The second set of lemmas presented below describe how $\Ren{\q}{\Thet_t}{\Thet_t}$ evolves with time in both phases I and II. Central to our analysis is the following lemma which bounds the rate of change of R\'enyi divergence for any pair of diffusion process characterized by their Fokker-Planck equations.
\begin{lemma}[Rate of change of R\'enyi divergence~\citep{chourasia2021differential}]
	\label{lem:rate_of_divergence}
	Let $\V_t, \V_t': \domain \rightarrow \domain$ be two time dependent vector field such that
	$\max_{\thet \in \domain} \norm{\V_t(\thet) - \V_t'(\thet)} \leq \lip$ for all $\thet \in \domain$ and $t \geq 0$.
	For a diffusion process $(\Thet_t)_{t\geq0}$ and $(\Thet_t')_{t\geq0}$ defined by the Fokker-Planck equations
	\begin{equation}
		\begin{cases}
			\partial_t \mU_t(\thet) = \divR{\mU_t(\thet) \V_t(\thet)} + \noise^2 \lapL{\mU_t(\thet)} \quad \text{and}\\
			\partial_t \mU_t'(\thet) = \divR{\mU_t'(\thet) \V_t'(\thet)} + \noise^2 \lapL{\mU_t'(\thet)},
		\end{cases}
	\end{equation}
	respectively, where $\mU_t$ and $\mU_t$ are the densities of $\Thet_t$ and $\Thet_t'$, the rate of change of R\'enyi divergence between the two at any $t \geq 0$ is upper bounded as
	\begin{equation}
		\label{eqn:rate_of_divergence}
		\partial_t \Ren{\q}{\mU_t}{\mU_t'} \leq \frac{\q \lip^2}{2\noise^2} - \frac{\q\noise^2}{2} \frac{\Gren{\q}{\mU_t}{\mU_t'}}{\Eren{\q}{\mU_t}{\mU_t'}}.
	\end{equation}
\end{lemma}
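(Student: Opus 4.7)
My approach is to work directly with the likelihood ratio $g_t := \mU_t/\mU_t'$ and the $\q$-th moment $\Eren{\q}{\mU_t}{\mU_t'} = \int g_t^\q \mU_t' \dif{\thet}$, differentiate in time using the two Fokker-Planck equations, and then use integration by parts and Young's inequality to separate the dissipation from the ``vector-field mismatch'' contribution. Since $\Ren{\q}{\mU_t}{\mU_t'} = \frac{1}{\q-1} \log \Eren{\q}{\mU_t}{\mU_t'}$, I will be able to conclude via
\begin{equation*}
\partial_t \Ren{\q}{\mU_t}{\mU_t'} = \frac{\partial_t \Eren{\q}{\mU_t}{\mU_t'}}{(\q-1) \Eren{\q}{\mU_t}{\mU_t'}}.
\end{equation*}

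First I would apply the Leibniz rule, substitute the densities $\mU_t = g_t \mU_t'$, and rewrite
\begin{equation*}
\partial_t \Eren{\q}{\mU_t}{\mU_t'} = \q \int g_t^{\q-1} \partial_t \mU_t \dif{\thet} - (\q-1) \int g_t^{\q} \partial_t \mU_t' \dif{\thet},
\end{equation*}
then plug in the two Fokker-Planck equations. Integration by parts on the divergence terms (using sufficient decay of $\mU_t, \mU_t'$ at infinity) collapses them into a single expression of the form
$\q(\q-1) \int g_t^{\q-1} \dotP{\graD{g_t}}{\ve_t' - \ve_t} \mU_t' \dif{\thet}$, where the two $\ve_t, \ve_t'$ contributions come from $\mU_t = g_t \mU_t'$ cancellations. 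The Laplacian terms, after integration by parts and the identities in Appendix~\ref{ssec:appendix_cal}, exhibit a similar cancellation: the cross terms $\int g_t^{\q-1} \dotP{\graD{g_t}}{\graD{\mU_t'}} \dif{\thet}$ cancel exactly and one is left with
\begin{equation*}
-\q(\q-1)\noise^2 \int g_t^{\q-2} \norm{\graD{g_t}}^2 \mU_t' \dif{\thet} = -\q(\q-1)\noise^2 \, \Gren{\q}{\mU_t}{\mU_t'},
\end{equation*}
using the definition of relative R\'enyi information in \eqref{eqn:renyi_info}.

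The remaining cross term is handled by Cauchy-Schwarz followed by Young's inequality \eqref{eqn:young_ineq} applied with weights $(a,b) = (\noise^2, 1/\noise^2)$ to the pair $\bigl(g_t^{(\q-2)/2} \norm{\graD{g_t}},\ \lip g_t^{\q/2}\bigr)$, which yields
\begin{equation*}
\q(\q-1) \int g_t^{\q-1} \dotP{\graD{g_t}}{\ve_t' - \ve_t} \mU_t' \dif{\thet} \leq \q(\q-1)\left[\frac{\noise^2}{2} \Gren{\q}{\mU_t}{\mU_t'} + \frac{\lip^2}{2\noise^2} \Eren{\q}{\mU_t}{\mU_t'}\right],
\end{equation*}
where the hypothesis $\norm{\ve_t - \ve_t'} \leq \lip$ is used. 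Summing this bound with the Laplacian contribution leaves $-\q(\q-1)\frac{\noise^2}{2}\Gren{\q}{\mU_t}{\mU_t'} + \q(\q-1)\frac{\lip^2}{2\noise^2}\Eren{\q}{\mU_t}{\mU_t'}$, and dividing by $(\q-1)\Eren{\q}{\mU_t}{\mU_t'}$ gives exactly~\eqref{eqn:rate_of_divergence}.

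\textbf{Main obstacle.} The subtle point is tracking the two integration-by-parts on the Laplacian terms so that the cross terms $\int g_t^{\q-1} \dotP{\graD{g_t}}{\graD{\mU_t'}} \dif{\thet}$ cancel, rather than leaving a residue dependent on $\graD{\log \mU_t'}$; this cancellation is what makes the bound independent of the individual densities and only dependent on the ratio $g_t$. A secondary technicality is justifying the boundary terms vanish under integration by parts, which follows from the usual assumption that $\mU_t, \mU_t'$ and their ratios are sufficiently regular (e.g., both carry Gaussian-like tails induced by the diffusion), and this is implicitly required when invoking Lemma~\ref{lem:rate_of_divergence} in the main argument.
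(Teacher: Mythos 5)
The paper itself states Lemma~\ref{lem:rate_of_divergence} without proof, citing \citet{chourasia2021differential}; your argument correctly reconstructs the standard proof from that source. Differentiating $\Eren{\q}{\mU_t}{\mU_t'}=\int g_t^\q\mU_t'\dif{\thet}$ via the two Fokker-Planck equations, integrating by parts, observing that the $\int g_t^{\q-1}\dotP{\graD{g_t}}{\graD{\mU_t'}}\dif{\thet}$ cross terms from the two Laplacians cancel exactly (leaving $-\q(\q-1)\noise^2\Gren{\q}{\mU_t}{\mU_t'}$), and then splitting the drift-mismatch term by Cauchy--Schwarz plus Young is exactly the template the paper also uses in its proof of Lemma~\ref{lem:convergence_pde}, so this is essentially the same route. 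One trivial slip: with the paper's convention $\dotP{\ve_1}{\ve_2}\leq\frac{1}{2a}\norm{\ve_1}^2+\frac{1}{2b}\norm{\ve_2}^2$, $ab=1$, obtaining the coefficient $\frac{\noise^2}{2}$ on $\Gren{\q}$ and $\frac{\lip^2}{2\noise^2}$ on $\Eren{\q}$ requires $(a,b)=(1/\noise^2,\noise^2)$, i.e.\ the reverse of what you wrote; the resulting bound is as you claim, so this is only a labelling issue.
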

We will apply the above lemma to the Fokker-Planck equation~\eqref{eqn:fokker_planck_deletion} of our pair of tracing diffusion SDE~\eqref{eqn:tracing_diffusion_deletion} and solve the resulting differential inequality to prove the bound in~\eqref{eqn:dp_bound_eqiv}. To assist our proof, we rely on the following lemma showing that our two tracing diffusion satisfy the $\LS$ inequality described in Definition~\ref{dfn:def_lsi}, which enables the use the inequality~\eqref{eqn:ren_info_lsi} in Lemma~\ref{lem:ren_info_lsi}.
\begin{lemma}
	\label{lem:lsi_tracing}
	If loss $\loss(\thet;\x)$ is convex and $\smh$-smooth, regularizer is $\reg(\thet) = \frac{\cvx}{2}\norm{\thet}^2$, and learning rate $\step \leq \frac{2}{2\cvx + \smh}$, then the tracing diffusion $(\Thet_t)_{0 \leq t \leq \step(\K_\Lrn + i \K_\Unlrn)}$ and $(\Thet_t')_{0 \leq t \leq \step(\K_\Lrn + i \K_\Unlrn)}$ defined in \eqref{eqn:tracing_diffusion_deletion} with $\Thet_0, \Thet_0' \sim \rhO = \Gaus{0}{\frac{\noise^2}{\cvx(1-\step\cvx/2)}\Id}$ satisfy $\LS$ inequality with constant $\cvx(1 - \step\cvx/2)$.
\end{lemma}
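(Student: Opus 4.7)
The plan is to prove the claim by induction on the iteration index $k$, showing that $\mU_{\step k}$ (the density of $\Thet_{\step k}$) satisfies $\LS(\cvx(1-\step\cvx/2))$, and then extending this to intermediate times $t\in(\step k,\step(k+1)]$. For the base case, the initialization $\rhO=\Gaus{0}{\noise^2/(\cvx(1-\step\cvx/2))\,\Id}$ is a centered Gaussian of covariance $\noise^2/(\cvx(1-\step\cvx/2))$, so Lemma~\ref{lem:lsi_gaussian} immediately gives $\LS(\cvx(1-\step\cvx/2))$ tightly. The argument for $\Thet_t'$ will be identical because the two SDEs in~\eqref{eqn:sde_convex} differ only in the sign of the drift, which does not affect any of the Lipschitz or LSI computations below.

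For the inductive step, I would use the fact that the Brownian increment $\Z_t-\Z_{\step k}$ is independent of $\Thet_{\step k}$ to rewrite~\eqref{eqn:tracing_diffusion_deletion} in ``pushforward + convolution'' form:
\begin{equation*}
\Thet_t = \tilde{\T}(\Thet_{\step k}) + \sqrt{2\noise^2\Delta}\,\Z, \quad \Z\sim\Gaus{0}{\Id},
\end{equation*}
where $\Delta = t-\step k\in(0,\step]$ and, after combining $\T_k(\thet)$ with the linear drift term, the map is
\begin{equation*}
\tilde{\T}(\thet) = \thet - a\,\graD{\Loss_{\D(k)}(\thet)} - b\,\graD{\Loss_{\D'(k)}(\thet)}, \quad a=\tfrac{\step+\Delta}{2},\ b=\tfrac{\step-\Delta}{2}.
\end{equation*}
Since $a,b\geq 0$ and $a+b=\step$, the function $h = a\Loss_{\D(k)} + b\Loss_{\D'(k)}$ is $\step\cvx$-strongly convex and $\step(\smh+\cvx)$-smooth (the regularizer gives strong convexity and $\loss$ is convex and $\smh$-smooth). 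I would then adapt the co-coercivity argument of Lemma~\ref{lem:T_is_bijective_and_lipschitz} to $h$, using the hypothesis $\step\leq 2/(2\cvx+\smh)$ to guarantee $\step\cvx+\step(\smh+\cvx)\leq 2$, to conclude that $\tilde{\T}=\Id-\graD{h}$ is $(1-\step\cvx)$-Lipschitz.

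With $\tilde{\T}$ in hand, two standard LSI-preservation facts finish the step. Lemma~\ref{lem:lsi_lipschitz} gives $\tilde{\T}_{\#}\mU_{\step k}$ satisfies $\LS(\cvx(1-\step\cvx/2)/(1-\step\cvx)^2)$, and convolving with $\Gaus{0}{2\noise^2\Delta\,\Id}$ (which by Lemma~\ref{lem:lsi_gaussian} satisfies $\LS(1/(2\Delta))$) yields $\mU_t$ with LSI constant
\begin{equation*}
\lambda_t = \left(\frac{(1-\step\cvx)^2}{\cvx(1-\step\cvx/2)} + 2\Delta\right)^{-1}.
\end{equation*}
What remains is a short algebraic check that $\lambda_t\geq\cvx(1-\step\cvx/2)$ for every $\Delta\in[0,\step]$; clearing denominators, this is equivalent to $\cvx(\step-\Delta)(\step\cvx-2)\leq 0$, which is immediate since $\Delta\leq\step$ and $\step\cvx\leq 2\cvx/(2\cvx+\smh)<2$. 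Specializing to $t=\step(k+1)$ closes the induction.

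The main obstacle is the $(1-\step\cvx)$-Lipschitz bound on $\tilde{\T}$: a naive triangle inequality handling the $\graD{\Loss_{\D(k)}}$ and $\graD{\Loss_{\D'(k)}}$ terms separately would leak an extra $O(\step\smh)$ slack and break the final algebraic verification. The clean bound requires bundling both gradient maps into the single $\step\cvx$-strongly convex function $h$ and reusing the co-coercivity calculation of Lemma~\ref{lem:T_is_bijective_and_lipschitz}, which crucially exploits the strong convexity contributed by the $\ltwo$ regularizer present in both $\Loss_{\D(k)}$ and $\Loss_{\D'(k)}$.
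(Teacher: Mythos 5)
Your proof is correct and takes essentially the same approach as the paper's: both decompose $\mU_t = \tilde{\T}_{\#}\mU_{\step k} \conv \Gaus{0}{2(t-\step k)\noise^2\Id}$ (the paper calls your $\tilde{\T}$ the map $\T_s$, with $\Loss_s$ being exactly your $h/\step$), establish $(1-\step\cvx)$-Lipschitzness of the combined map via the same co-coercivity argument, and then chain Lemmas~\ref{lem:lsi_lipschitz} and~\ref{lem:lsi_gaussian} with the identical algebraic verification. The only difference is cosmetic notation ($\Delta$ versus $s$ and making the coefficients $a$, $b$ explicit).
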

\begin{proof}
	For any iteration $0 \leq k < \K_\Lrn + i \K_\Unlrn$ in the extended run of Noisy-GD, and any $0 \leq s \leq \step$, let's define two functions $\Loss_s, \Loss_s':\domain \rightarrow \R$ as follows:
	\begin{equation}
		\Loss_s = \frac{1 + s/\step}{2} \Loss_{\D(k)} + \frac{1 - s/\step}{2} \Loss_{\D'(k)}, \quad \text{and} \
		\Loss_s' = \frac{1 - s/\step}{2} \Loss_{\D(k)} + \frac{1 + s/\step}{2} \Loss_{\D'(k)}.
	\end{equation}
	Since $\reg(\cdot)$ is the $\ltwo(\cvx)$ regularizer and $\loss(\thet;\x)$ is convex and $\smh$-smoothness, both $\Loss_s$ and $\Loss_s'$ are $\cvx$-strongly convex and $(\cvx+\smh)$-smooth for all $0 \leq s \leq \step$ and any $k$. We define maps $\T_s, \T_s': \domain \rightarrow \domain$ as
	\begin{equation}
		\T_s(\thet) = \thet - \step \graD{\Loss_s(\thet)}, \quad \text{and} \ \T_s'(\thet) = \thet - \graD{\Loss_s'(\thet)}.
	\end{equation}
	From a similar argument as in Lemma~\ref{lem:T_is_bijective_and_lipschitz}, both $\T_s$ and $\T_s'$ are $(1-\step\cvx)$-Lipschitz for learning rate $\step \leq \frac{2}{2\cvx+\smh}$.

	Note that the densities of $\Thet_t$ and $\Thet_t'$ of the tracing diffusion for $t = \step k + s$ can be respectively expressed as
	\begin{equation}
		\label{eqn:map_plus_noise}
		\mU_t = {\T_s}_{\#}(\mU_{\step k}) \conv \Gaus{0}{2s\noise^2 \Id}, \quad \text{and} \
		\mU_t' = {\T_s'}_{\#}(\mU_{\step k}') \conv \Gaus{0}{2s\noise^2 \Id},
	\end{equation}
	where $\mU_{\step k}$ and $\mU_{\step k}'$ represent the distributions of $\Thet_{\step k}$ and $\Thet_{\step k}'$. We prove the lemma via induction.

	{\bf Base step:} Since $\Thet_0, \Thet_0'$ are both Gaussian distributed with variance ${\frac{\noise^2}{\cvx(1-\step\cvx/2)}}$, from Lemma~\ref{lem:lsi_gaussian} they satisfy $\LS$ inequality with constant $\cvx(1-\step\cvx/2)$.

	{\bf Induction step:} Suppose $\mU_{\step k}$ and $\mU_{\step k}'$ satisfy $\LS$ inequality with 
	constant $\cvx(1-\step\cvx/2)$. Since equation \eqref{eqn:map_plus_noise} shows that $\mU_t, \mU_t'$ are 
	both Gaussian convolution on a pushforward distribution of $\mU_{\step k}, \mU_{\step k}'$ respectively over 
	a Lipschitz function, from Lemma~\ref{lem:lsi_gaussian} and Lemma~\ref{lem:lsi_lipschitz}, both $\mU_t, \mU_t'$ satisfy
	$\LS$ inequality with constant
	\begin{equation}
		\left(\frac{(1-\step\cvx)^2}{\cvx(1 - \step\cvx/2)} + 2s \right)^{-1} \geq \cvx(1-\step\cvx/2) \times \underbrace{[(1-\step\cvx)^2 + \cvx\step(2 - \step\cvx)]^{-1}}_{=1},
	\end{equation}
	for all $\step k  \leq t \leq \step (k+1)$.
\end{proof}
We are now ready to prove the data-deletion bound in~\eqref{eqn:dp_bound_eqiv}.
\begin{theorem}[Data-Deletion guarantee on $(\Lrn_\Nsgd, \Unlrn_\Nsgd)$ under convexity]
	\label{thm:data_deletion_convex}
	Let the weight initialization distribution be $\rhO = \Gaus{0}{\frac{\noise^2}{\cvx(1 - \step\cvx/2)}}$, the loss function $\loss(\thet;\x)$ be convex, $\smh$-smooth, and $\lip$-Lipschitz, the regularizer be $\reg(\thet) = \frac{\cvx}{2} \norm{\thet}^2$, and learning rate be $\step < \frac{1}{\cvx + \smh}$. Then Algorithm pair $(\Lrn, \Unlrn)$ satisfies a $(\q, \epsdd)$-data-deletion guarantee under all non-adaptive $\reps$-requesters for any noise variance $\noise^2 > 0$ and $\K_\Lrn \geq 0$ if
	\begin{equation} \K_\Unlrn \geq \frac{2}{\step\cvx}\log\left(\frac{4\q\lip^2}{\cvx\epsdd\noise^2\n^2}\right).
	\end{equation}
\end{theorem}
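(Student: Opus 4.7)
The plan is to leverage the tracing-diffusion framework already set up in this appendix. I fix neighbouring databases $\D_0, \D_0'$ with $\D_0 \circ \up = \D_0' \circ \up$, run the coupled extended Noisy-GD processes, and interpolate them by the tracing diffusions $\Thet_t, \Thet_t'$ from~\eqref{eqn:tracing_diffusion_deletion}. The goal is to bound $\Ren{\q}{\Thet_{\step(\K_\Lrn + i\K_\Unlrn)}}{\Thet_{\step(\K_\Lrn + i\K_\Unlrn)}'}$ by $\epsdd$. The crucial observation is that the extended run naturally splits into two phases: Phase~I comprises the first $\K_\Lrn + (i-1)\K_\Unlrn$ iterations, where the two databases differ only at the record targeted by $\up$, so by $\lip$-Lipschitzness of $\loss$ the drift-difference $\grad_k$ from~\eqref{eqn:sde_convex} satisfies $\norm{\grad_k} \leq \lip/\n$; Phase~II comprises the final $\K_\Unlrn$ iterations, during which $\D(k)=\D'(k)$ and hence $\grad_k \equiv 0$.

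Within each iteration I split the evolution of $R_t \eqdef \Ren{\q}{\mU_t}{\mU_t'}$ into (a) the deterministic application of $\T_k$ at $t = \step k$, and (b) the continuous diffusion for $\step k < t \leq \step(k+1)$. Step (a) is divergence-preserving because Lemma~\ref{lem:T_is_bijective_and_lipschitz} shows $\T_k$ is a differentiable bijection and Lemma~\ref{lem:divergence_preserved_for_bijections} then applies. For step (b), plugging the Fokker--Planck equations of Lemma~\ref{lem:fokker_planck_deletion} into Lemma~\ref{lem:rate_of_divergence} gives
\begin{equation*}
\partial_t R_t \;\leq\; \frac{\q \lip^2}{2\noise^2 \n^2}\,\indic{\text{Phase I}} \;-\; \frac{\q\noise^2}{2}\,\frac{\Gren{\q}{\mU_t}{\mU_t'}}{\Eren{\q}{\mU_t}{\mU_t'}}.
\end{equation*}
Lemma~\ref{lem:lsi_tracing} certifies that both $\mU_t$ and $\mU_t'$ satisfy $\LS(\cvx(1-\step\cvx/2))$ throughout the run, so invoking the Rényi form of the log-Sobolev inequality (Lemma~\ref{lem:ren_info_lsi}) lower-bounds the contraction term by a multiple of $R_t$. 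Together, the two pieces yield an autonomous differential inequality of the form $\partial_t R_t \leq A\,\indic{\text{Phase I}} - c\,R_t$ with source $A = \q\lip^2/(2\noise^2\n^2)$ and contraction rate $c = \cvx/2$ (up to the $(1-\step\cvx/2)$ factor).

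Solving this ODE across the two phases finishes the proof. In Phase~I, since $\Thet_0$ and $\Thet_0'$ are both drawn from $\rhO$, we have $R_0 = 0$, and a Grönwall-type integration delivers the uniform bound $R_t \leq A/c = 4\q\lip^2/(\cvx\noise^2\n^2)$ for all $t$ in Phase~I; in particular this bounds the divergence entering Phase~II. In Phase~II the source vanishes, so pure exponential contraction gives
\begin{equation*}
R_{\step(\K_\Lrn + i\K_\Unlrn)} \;\leq\; \frac{4\q\lip^2}{\cvx\noise^2\n^2}\,\exp(-c\,\step\,\K_\Unlrn).
\end{equation*}
Requiring the right-hand side to be at most $\epsdd$ and solving for $\K_\Unlrn$ recovers the stated threshold $\K_\Unlrn \geq \frac{2}{\step\cvx}\log(\frac{4\q\lip^2}{\cvx\epsdd\noise^2\n^2})$.

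The main obstacle is extracting the sharp contraction rate $c = \cvx/2$ from Lemma~\ref{lem:ren_info_lsi}. Naively dropping the $\q(\q-1)\partial_\q R_t$ term and lower bounding $\Gren{\q}{\mU_t}{\mU_t'}/\Eren{\q}{\mU_t}{\mU_t'}$ by $2\lsi R_t/(\q^2\noise^2)$ produces only a rate of order $\cvx/\q$, which is too slow to match the exponent in the theorem. The remedy, following the shifted-Rényi technique of \citet{chourasia2021differential}, is to track an order $\q$ that varies with $t$: letting $\q(t)$ grow along the trajectory allows the troublesome $\q(\q-1)\partial_\q R_t$ term to be absorbed into the total derivative of $R_{\q(t)}(t)$ via the chain rule, after which the log-Sobolev contraction cleanly furnishes the $\cvx/2$ rate that closes the argument.
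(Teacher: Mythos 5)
Your proposal follows the paper's proof essentially step by step: interpolate the coupled Noisy-GD runs by the tracing diffusions of~\eqref{eqn:tracing_diffusion_deletion}, split into the two phases, preserve the divergence across the map $\T_k$ via Lemmas~\ref{lem:T_is_bijective_and_lipschitz} and~\ref{lem:divergence_preserved_for_bijections}, bound $\partial_t R$ via Lemmas~\ref{lem:fokker_planck_deletion} and~\ref{lem:rate_of_divergence}, upgrade the contraction term via Lemmas~\ref{lem:lsi_tracing} and~\ref{lem:ren_info_lsi}, and — the key step you correctly identify — absorb the $\q(\q-1)\partial_\q R$ term by tracking $R/\q$ along a time-varying order $\q(s)$, which is precisely the paper's change of variables; whether you then integrate the resulting scalar ODE in one shot (paper) or phase by phase (you) is immaterial. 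The one slip is the source constant: Lemma~\ref{lem:rate_of_divergence} takes as input the bound on the \emph{difference} $\norm{\V_t(\thet) - \V_t'(\thet)}$ between the two drifts $\expec{}{\grad_k(\Thet_{\step k})\mid \Thet_t = \thet}$ and $\expec{}{-\grad_k(\Thet_{\step k}')\mid \Thet_t' = \thet}$, which is $\tfrac{2\lip}{\n}$ (each drift is bounded by $\tfrac{\lip}{\n}$ but they have opposite signs), so the source is $\tfrac{2\q\lip^2}{\noise^2\n^2}$ as in~\eqref{eqn:phase_pdi}, not $\tfrac{\q\lip^2}{2\noise^2\n^2}$; your later statement $A/c = 4\q\lip^2/(\cvx\noise^2\n^2)$ silently compensates for this factor of four.
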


\begin{proof}
	Following the preceding discussion, to prove this theorem, it suffices to show that the inequality~\eqref{eqn:dp_bound_eqiv} holds under the stated conditions. Consider the Fokker-Planck equation described in Lemma~\ref{lem:fokker_planck_deletion} for the pair of tracing diffusions SDEs in~\eqref{eqn:sde_convex}: at any time $t$ in duration $\step k < t \leq \step(k+1)$ for any iteration $0 \leq k < \K_\Lrn + i\K_\Unlrn$, 
	\begin{equation}
		\begin{cases}
		\partial_t \mU_{t}(\thet) &= \divR{\mU_{t}(\thet)\expec{}{\grad_k(\Thet_{\step k}) \middle\vert \Thet_t = \thet}} + \noise^2\lapL{\mU_{t}(\thet)}, \\
		\partial_t \mU_{t}'(\thet) &= \divR{\mU_{t}'(\thet)\expec{}{-\grad_k(\Thet_{\step k}') \middle\vert \Thet_t' = \thet}} + \noise^2\lapL{\mU_{t}'(\thet)},
		\end{cases}
	\end{equation}
	where $\mU_{t}$ and $\mU_{t}'$ are the distribution of $\Thet_t$ and $\Thet_t'$. Since $\loss(\thet;\x)$ is $\lip$-Lipschitz and for any ${\K_\Lrn + (i-1)\K_\Unlrn \leq k < \K_\Lrn + i \K_\Unlrn}$ we have ${\D(k)[\ind] = \D'(k)[\ind]}$, note from the definition of $\grad_k(\thet)$ in~\eqref{eqn:sde_convex} that
	\begin{equation}
		\label{eqn:phase_sensitivity}
		\norm{\expec{}{\grad_k(\Thet_{\step k}) \middle\vert \Thet_t = \thet} - \expec{}{-\grad_k(\Thet_{\step k}') \middle\vert \Thet_t' = \thet}} \leq \begin{cases}\frac{2\lip}{\n} &\quad \text{if}\ k < \K_\Lrn + (i-1) \K_\Unlrn \\ 0&\quad \text{otherwise} \end{cases}.
	\end{equation}
	Therefore, applying Lemma~\ref{lem:rate_of_divergence} to the above pair of Fokker-Planck equations gives that for any $t$ in duration $\step k < t \leq \step (k+1)$,
	\begin{equation}
		\label{eqn:phase_pdi}
		\partial_t \Ren{\q}{\mU_{t}}{\mU_{t}'} \leq \frac{2\q \lip^2}{\noise^2\n^2} \indic{t \leq \step(\K_\Lrn + (i-1)\K_\Unlrn)}  - \frac{\q\noise^2}{2} \frac{\Gren{\q}{\mU_{t}}{\mU_{t}'}}{\Eren{\q}{\mU_{t}}{\mU_{t}'}}.
	\end{equation}
	Equation~\eqref{eqn:phase_pdi} suggests a phase change in the dynamics at iteration $k = \K_\Lrn + (i-1) \K_\Unlrn$. In phase I, the divergence bound increases with time due to the effect of the differing record in database pairs $(\D_j, \D_j')_{0 \leq j \leq i-1}$. In phase II however, the update request $\up_i$ makes $\D_{i-1} \circ \up_i = \D_{i-1}' \circ \up_i$, and so doing gradient descent rapidly shrinks the divergence bound. This phase change is illustrated in the Figure~\ref{fig:convex_proof}.

	For brevity, we denote $R(\q,t) = \Ren{\q}{\mU_{t}}{\mU_{t}'}$. Since $\step < \frac{1}{\cvx + \smh} < \frac{2}{2\cvx + \smh}$, from Lemma~\ref{lem:lsi_tracing}, the distribution $\mU_{t}'$ satisfies $\LS$ inequality with constant $\cvx(1-\cvx\step/2)$. So, we can apply Lemma~\ref{lem:ren_info_lsi} to simplify the above partial differential inequality as follows.
	\begin{equation}
		\label{eqn:unlearning_pdi}
		\partial_t R(\q,t) + \cvx(1 - \cvx\step/2) \left(\frac{R(\q,t)}{\q} + (\q - 1)\partial_\q R(\q,t) \right) \leq \frac{2\q \lip^2}{\noise^2\n^2} \indic{t \leq \step(\K_\Lrn + (i-1)\K_\Unlrn)}.
	\end{equation}
	For brevity, let constant $c_1 = \cvx(1 - \cvx\step/2)$ and constant $c_2 = \frac{2\lip^2}{\noise^2\n^2}$. We define $u(\q,t) = \frac{R(\q,t)}{\q}$. Then, 
	\begin{align*}
		&\partial_t R(\q,t) + c_1 \left(\frac{R(\q,t)}{\q} + (\q - 1) \partial_\q R(\q,t) \right) \leq c_2 \q \times \indic{t \leq \step(\K_\Lrn + (i-1) \K_\Unlrn)} \\
		\implies& \partial_t u(q,t) + c_1 u(\q,t) + c_1(\q - 1) \partial_q u(\q,t) \leq c_2 \times \indic{t \leq \step(\K_\Lrn + (i-1) \K_\Unlrn)}.
	\end{align*}
	For some constant $\bar\q > 1$, let $\q(s) = (\bar \q - 1) \exp\left[c_1\left\{s - \step(\K_\Lrn + i \K_\Unlrn)\right\}\right] + 1$ and $t(s) = s$. Note that $\der{\q(s)}{s} = c_1 (\q(s) - 1)$ and $\der{t(s)}{s} = 1$. Therefore, for any $\step k < s \leq \step (k+1)$, the differential inequality followed along the path $u(s) = u(\q(s),t(s))$ is 
	\begin{align}
		&\der{u(s)}{s} + c_1 u(s) \leq c_2 \times \indic{t \leq \step(\K_\Lrn + (i-1) \K_\Unlrn)} \\
		\implies &\der{}{s} \{ e^{c_1 s} u(s) \} \leq c_2 \times \indic{t \leq \step(\K_\Lrn + (i-1) \K_\Unlrn)}.
	\end{align}
	Since the map $\T_k(\cdot)$ in \eqref{eqn:T} is a differentiable bijection for $\step < \frac{1}{\cvx + \smh}$ as per Lemma~\ref{lem:T_is_bijective_and_lipschitz}, note that Lemma~\ref{lem:divergence_preserved_for_bijections} implies that ${\lim_{s \rightarrow \step k^+} u(s) = u(\step k)}$. Therefore, we can directly integrate in the duration $0 \leq t \leq \step (\K_\Lrn + i\K_\Unlrn)$ to get
	\begin{align*}
		& \left[e^{c_1 s} u(s)\right]_{0}^{\step(\K_\Lrn + i\K_\Unlrn)} \leq \int_0^{\step(\K_\Lrn + (i-1)\K_\Unlrn)} c_2 e^{c_1 s}\dif{s} \\
		\implies&e^{c_1 \step(\K_\Lrn + i\K_\Unlrn)} u(\step(\K_p + i\K_u)) - u(0) \leq \frac{c_2}{c_1}[e^{c_1 \step(\K_\Lrn + (i-1)\K_\Unlrn)} - 1] \\
		\implies& u(\step(\K_\Lrn + i\K_\Unlrn)) \leq \frac{c_2}{c_1}e^{-c_1 \step \K_\Unlrn}. \tag{Since $u(0) = R(\q(0),0) / \q(0) = 0$.}
	\end{align*}
	Noting that $\q(0) \geq 1$, on reverting the substitution, we get
	\begin{align*}
		\Ren{\bar\q}{\mU_{\step (\K_\Lrn + i\K_\Unlrn)}}{\mU_{\step (\K_\Lrn + i\K_\Unlrn)}'} &\leq \frac{2\bar\q\lip^2}{\cvx\noise^2\n^2(1 - \step\cvx/2)} \exp\left(-\step\cvx\K_\Unlrn(1-\step\cvx/2)\right) \\
												      &\leq \frac{4\bar\q\lip^2}{\cvx\noise^2\n^2} \exp\left(-\frac{\step\cvx\K_u}{2}\right) \tag{Since $\step < \frac{1}{\cvx + \smh}$}
	\end{align*}
	Recall from our construction that $\mU_{\step (\K_\Lrn + i\K_\Unlrn)}$ and $\mU_{\step (\K_\Lrn + i\K_\Unlrn)}'$ are the distributions of outputs $\Unlrn(\D_{i-1}, \up_i, \hat\Thet_{i-1})$ and $\Unlrn(\D_{i-1}', \up_i, \hat\Thet_{i-1}')$ respectively. Therefore, choosing $\K_\Unlrn$ as specified in the theorem statement concludes the proof.
\end{proof}

Our next goal in this section is to provide utility guarantees for the algorithm pair $(\Lrn_\Nsgd, \Unlrn_\Nsgd)$ in form of excess empirical risk bounds. For that, we introduce some additional auxiliary results first. The following Lemma~\ref{lem:excess_risk_after_edit} shows that excess empirical risks does not increase too much on replacing $\reps$ records in a database, and Lemma~\ref{lem:excess_risk_convex} provides a convergence guarantee on the excess empirical risk of Noisy-GD algorithm under convexity.
\begin{lemma}
	\label{lem:excess_risk_after_edit}
	Suppose the loss function $\loss(\thet;\x)$ is convex, $\lip$-Lipschitz, and $\smh$-smooth, and the regularizer is $\reg(\thet) = \frac{\cvx}{2}\norm{\thet}^2$. Then, the excess empirical risk of any randomly distributed parameter $\Thet$ for any database 
	$\D \in \X^\n$ after applying any edit request $\up \in \U^\reps$ that modifies no more than $\reps$ records is bounded as
	\begin{equation}
		\err(\Thet;\D\circ\up) \leq \left(1 + \frac{\smh}{\cvx}\right)\left[ 2\ \err(\Thet; \D) + \frac{16\reps^2\lip^2}{\cvx\n^2} \right].
	\end{equation}
\end{lemma}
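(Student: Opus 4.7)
The plan is to exploit the $\cvx$-strong convexity and $(\smh+\cvx)$-smoothness of the regularized objectives $\rLoss_\D$ and $\rLoss_{\D\circ\up}$, together with the $\lip$-Lipschitzness of $\loss(\thet;\x)$, via a standard stability-plus-smoothness sandwich.

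First, I would upper-bound the excess risk on $\D\circ\up$ using $(\smh+\cvx)$-smoothness of $\rLoss_{\D\circ\up}$ around its minimizer $\thet^*_{\D\circ\up}$:
\begin{equation*}
	\rLoss_{\D\circ\up}(\Thet) - \rLoss_{\D\circ\up}(\thet^*_{\D\circ\up}) \leq \tfrac{\smh+\cvx}{2}\norm{\Thet - \thet^*_{\D\circ\up}}^2 \leq (\smh+\cvx)\bigl(\norm{\Thet - \thet^*_\D}^2 + \norm{\thet^*_\D - \thet^*_{\D\circ\up}}^2\bigr),
\end{equation*}
using the elementary inequality $\norm{a+b}^2 \leq 2\norm{a}^2 + 2\norm{b}^2$. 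Then I would lower-bound $\err(\Thet;\D)$ by $\cvx$-strong convexity of $\rLoss_\D$, giving $\mathbb{E}\norm{\Thet - \thet^*_\D}^2 \leq \frac{2}{\cvx}\,\err(\Thet;\D)$. This takes care of the first summand and produces the factor $\frac{2(\smh+\cvx)}{\cvx} = 2(1+\smh/\cvx)$ multiplying $\err(\Thet;\D)$.

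Next, I would bound $\norm{\thet^*_\D - \thet^*_{\D\circ\up}}$ by the standard optimizer-stability argument: since $\nabla\rLoss_\D(\thet^*_\D)=0$ and $\rLoss_{\D\circ\up}$ is $\cvx$-strongly convex,
\begin{equation*}
	\cvx\norm{\thet^*_\D-\thet^*_{\D\circ\up}}^2 \leq \langle \nabla\rLoss_{\D\circ\up}(\thet^*_\D) - \nabla\rLoss_{\D\circ\up}(\thet^*_{\D\circ\up}),\, \thet^*_\D - \thet^*_{\D\circ\up}\rangle = \langle \nabla(\rLoss_{\D\circ\up}-\rLoss_\D)(\thet^*_\D),\, \thet^*_\D - \thet^*_{\D\circ\up}\rangle.
\end{equation*}
Because the two objectives differ in only $\reps$ summands and each gradient $\graD\loss(\thet;\x)$ has norm at most $\lip$, the perturbation gradient has norm at most $\frac{2\reps\lip}{\n}$, so Cauchy--Schwarz yields $\norm{\thet^*_\D-\thet^*_{\D\circ\up}} \leq \frac{2\reps\lip}{\cvx\n}$. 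Squaring and multiplying by $(\smh+\cvx)$ contributes a term of order $\frac{\reps^2\lip^2}{\cvx\n^2}(1+\smh/\cvx)$, which loosely bounded gives the stated $\frac{16\reps^2\lip^2}{\cvx\n^2}$ constant.

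There is no real obstacle here; each piece is routine convex analysis. The only subtlety is keeping the factor $(1+\smh/\cvx)$ as a single outer multiplier so that both the excess-risk term and the stability term share it cleanly, which is achieved by the smoothness-based upper bound in the very first step. Taking expectation over $\Thet$ throughout and assembling the two contributions yields the claimed inequality.
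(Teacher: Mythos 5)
Your proof is correct and follows essentially the same structure as the paper's: smoothness to relate $\err(\Thet;\D\circ\up)$ to $\norm{\Thet-\thet^*_{\D\circ\up}}^2$, triangle inequality to split, strong convexity to convert $\norm{\Thet-\thet^*_\D}^2$ back to $\err(\Thet;\D)$, and a minimizer-stability bound for $\norm{\thet^*_\D-\thet^*_{\D\circ\up}}$. The only difference is in the stability step: the paper compares function values of $\Loss_\D$ at the two minimizers and uses $\frac{\cvx}{2}\norm{\cdot}^2 \leq \frac{2\reps\lip}{\n}\norm{\cdot}$, yielding $\norm{\thet^*_\D-\thet^*_{\D\circ\up}} \leq \frac{4\reps\lip}{\cvx\n}$, whereas you use gradient monotonicity at $\thet^*_\D$ and get the tighter $\frac{2\reps\lip}{\cvx\n}$; both sit comfortably inside the stated $\frac{16\reps^2\lip^2}{\cvx\n^2}$ constant, so either derivation suffices.
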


\begin{proof}
	Let $\thet^*_\D$ and $\thet^*_{\D\circ\up}$ be the minimizers of objectives $\Loss_\D(\cdot)$ and $\Loss_{\D\circ\up}(\cdot)$ as defined in~\eqref{eqn:objective}.
	From $\cvx$-strong convexity of the $\Loss_\D$,
	\begin{equation}
		 \Loss_{\D}(\thet^*_{\D \circ \up}) - \Loss_{\D}(\thet^*_\D) \geq \frac{\cvx}{2} \norm{\thet^*_{\D\circ\up} - \thet^*_{\D}}^2.
	\end{equation}
	From optimality of $\thet^*_{\D\circ\up}$ and $\lip$-Lipschitzness of $\loss(\thet;\x)$, we have
	\begin{align*}
		\Loss_{\D}(\thet^*_{\D\circ\up}) &= \Loss_{\D\circ\up}(\thet^*_{\D\circ\up}) + \frac{1}{\n} \left( \sum_{\x \in \D} \loss(\thet^*_{\D\circ\up};\x) - \sum_{\x \in \D\circ\up} \loss(\thet^*_{\D\circ\up};\x) \right) \\
						  &\leq \Loss_{\D\circ\up}(\thet^*_{\D}) + \frac{1}{\n} \left( \sum_{\x \in \D} \loss(\thet^*_{\D\circ\up};\x) - \sum_{\x \in \D\circ\up} \loss(\thet^*_{\D\circ\up};\x) \right)  \\
						  &= \Loss_{\D}(\thet^*_{\D}) + \frac{1}{\n} \sum_{\x \in \D} \left(\loss(\thet^*_{\D\circ\up};\x) - \loss(\thet^*_{\D};\x)\right) + \frac{1}{\n} \sum_{\x \in \D\circ\up} \left(\loss(\thet^*_{\D};\x) - \loss(\thet^*_{\D\circ\up};\x)\right) \\
						  &\leq \Loss_{\D}(\thet^*_{\D}) + \frac{2\reps\lip}{\n} \norm{\thet^*_{\D\circ\up} - \thet^*_{\D}}.
	\end{align*}
	Combining the two inequalities give
	\begin{equation}
		\norm{\thet^*_{\D\circ\up} - \thet^*_{\D}} \leq \frac{4\reps\lip}{\cvx\n}.
	\end{equation}
	Therefore, from $(\cvx + \smh)$-smoothness of $\Loss_{\D\circ\up}$ and $\cvx$-strong convexity of $\Loss_\D$, we have
	\begin{align*}
		\err(\Thet;\D\circ\up) &= \expec{}{\Loss_{\D\circ\up}(\Thet) - \Loss_{\D\circ\up}(\thet^*_{\D\circ\up})} \\
				       &\leq \frac{\cvx + \smh}{2} \expec{}{\norm{\Thet - \thet^*_{\D\circ\up}}^2} \\
				       &\leq (\cvx + \smh) \left[\expec{}{\norm{\Thet - \thet^*_\D}^2} + \norm{\thet^*_{\D} - \thet^*_{\D\circ\up}}^2 \right] \\
				       &\leq \left(1 + \frac{\smh}{\cvx}\right) \left[2 \expec{}{\Loss_\D(\Thet) - \Loss_\D(\thet^*_\D)} + \frac{16\reps^2\lip^2}{\cvx\n^2}\right].
	\end{align*}
\end{proof}

\begin{lemma}[Accuracy of Noisy-GD]
	\label{lem:excess_risk_convex}
	For convex, $\lip$-Lipschitz, and, $\smh$-smooth loss function $\loss(\thet;\x)$ and regularizer $\reg(\thet) = \frac{\cvx}{2} \norm{\thet}^2$, if learning rate $\step < \frac{1}{\cvx+\smh}$, the excess empirical risk of ${\Thet_{\step \K} = \Nsgd(\D, \Thet_0, \K)}$ for any $\D \in \X^\n$ is bounded as
	\begin{equation}
		\err(\Thet_{\step \K};\D) \leq \err(\Thet_0;\D)e^{-\cvx\step\K/2} + \left(1 + \frac{\smh}{\cvx}\right)\dime\noise^2.
	\end{equation}
\end{lemma}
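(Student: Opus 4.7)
The plan is to derive a one-step recursion for the expected excess risk and then telescope it over $K$ iterations. The two ingredients I will exploit are (i) $\Loss_\D$ is $(\cvx+\smh)$-smooth, since $\loss(\thet;\x)$ is $\smh$-smooth and $\reg(\thet)=\tfrac{\cvx}{2}\norm{\thet}^2$ is $\cvx$-smooth, and (ii) $\Loss_\D$ is $\cvx$-strongly convex, inherited from the regularizer. The Noisy-GD update reads $\Thet_{\step(k+1)} = \Thet_{\step k} - \step\,\graD{\Loss_\D(\Thet_{\step k})} + \sqrt{2\step}\,\noise\,Z_k$ with $Z_k \sim \Gaus{0}{\Id}$.

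First, I would apply the smoothness descent lemma: $\Loss_\D(\Thet_{\step(k+1)}) \leq \Loss_\D(\Thet_{\step k}) + \dotP{\graD{\Loss_\D(\Thet_{\step k})}}{\Thet_{\step(k+1)} - \Thet_{\step k}} + \tfrac{\cvx+\smh}{2}\norm{\Thet_{\step(k+1)} - \Thet_{\step k}}^2$. Taking conditional expectation over $Z_k$ kills the linear noise term (zero mean), and the quadratic noise term contributes exactly $(\cvx+\smh)\step\noise^2\dime$. The step-size hypothesis $\step < 1/(\cvx+\smh)$ ensures $-\step + \tfrac{1}{2}(\cvx+\smh)\step^2 \leq -\step/2$, so I obtain $\expec{}{\Loss_\D(\Thet_{\step(k+1)}) \mid \Thet_{\step k}} \leq \Loss_\D(\Thet_{\step k}) - \tfrac{\step}{2}\norm{\graD{\Loss_\D(\Thet_{\step k})}}^2 + (\cvx+\smh)\step\noise^2\dime$.

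Next, I would invoke the Polyak--Lojasiewicz inequality, which follows from $\cvx$-strong convexity: $\norm{\graD{\Loss_\D(\thet)}}^2 \geq 2\cvx\,(\Loss_\D(\thet) - \Loss_\D(\thet^*_\D))$. Subtracting $\Loss_\D(\thet^*_\D)$ from both sides of the descent inequality and taking a full expectation yields the one-step contraction $\err(\Thet_{\step(k+1)};\D) \leq (1-\step\cvx)\,\err(\Thet_{\step k};\D) + (\cvx+\smh)\step\noise^2\dime$.

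Finally, I would telescope this recursion over $K$ iterations. The homogeneous part gives $(1-\step\cvx)^K\,\err(\Thet_0;\D) \leq e^{-\step\cvx K}\,\err(\Thet_0;\D) \leq e^{-\step\cvx K/2}\,\err(\Thet_0;\D)$, and the inhomogeneous part is a geometric series bounded by $(\cvx+\smh)\step\noise^2\dime \cdot \sum_{j=0}^{K-1}(1-\step\cvx)^j \leq \tfrac{\cvx+\smh}{\cvx}\,\noise^2\dime = (1+\smh/\cvx)\,\dime\noise^2$. Combining the two contributions yields the claim. I do not expect any real obstacle here; the only subtlety is calibrating the step size so that the descent lemma produces a clean $-\step/2$ gradient-norm coefficient while leaving the contraction factor $1-\step\cvx$ in $(0,1)$, both of which follow immediately from the stated hypothesis $\step < 1/(\cvx+\smh)$.
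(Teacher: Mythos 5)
Your proof is correct and follows essentially the same route as the paper's: the smoothness descent lemma, the Polyak--{\L}ojasiewicz inequality from strong convexity, and a geometric-series telescoping. The only cosmetic difference is that you simplify the gradient-norm coefficient to $-\step/2$ immediately (using $\step<1/(\cvx+\smh)$) so the contraction factor is $1-\step\cvx$, whereas the paper carries the exact factor $\gamma=\cvx\step(2-\step(\cvx+\smh))$ through the telescoping and only uses $\gamma\geq\cvx\step$ at the very end; both yield the same final bound.
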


\begin{proof}
	Let $\Thet_{\step k}$ denote the $k$th iteration parameter of Noisy-GD run. Recall that $k+1$th noisy gradient update step is
	\begin{equation}
		\Thet_{\step(k+1)} = \Thet_{\step k} - \step \graD{\Loss_\D(\Thet_{\step k})} + \sqrt{2\step\noise^2}\Z_k.
	\end{equation}
	From $(\smh+\cvx)$-smoothness of $\Loss_{\D}$, we have
	\begin{align*}
		\Loss_{\D}(\Thet_{\step(k+1)}) &\leq \Loss_\D(\Thet_{\step k}) + \dotP{\graD{\Loss_\D(\Thet_{\step k})}}{\Thet_{\step(k+1)} - \Thet_{\step k}} + \frac{\smh + \cvx}{2} \norm{\Thet_{\step(k+1)} - \Thet_{\step k}}^2 \\
						&= \Loss_\D(\Thet_{\step k}) - \step \norm{\graD{\Loss_\D(\Thet_{\step k})}}^2 + \sqrt{2\step\noise^2}\dotP{\graD{\Loss_\D(\Thet_{\step k})}}{\Z_k} \\
						&\quad  + \frac{\step^2(\smh + \cvx)}{2}\norm{\graD{\Loss_\D(\Thet_{\step k})}}^2 + \step\noise^2 (\smh + \cvx)\norm{\Z_k}^2\\
						&\quad  - \step\sqrt{2\step\noise^2}(\smh + \cvx)\dotP{\graD{\Loss_\D(\Thet_{\step k})}}{\Z_k}
	\end{align*}
	On taking expectation over the joint distribution of $\Thet_{\step k}, \Thet_{\step(k+1)}, \Z_k$, the above simplifies to
	\begin{equation}
		\expec{}{\Loss_{\D}(\Thet_{\step(k+1)})} \leq \expec{}{\Loss_\D(\Thet_{\step k})} - \step \left(1 - \frac{\step(\cvx + \smh)}{2}\right)\expec{}{\norm{\graD{\Loss_\D(\Thet_{\step k})}}^2} + \step\dime\noise^2(\smh + \cvx).
	\end{equation}
	Let $\thet_\D^* = \underset{\thet \in \domain}{\arg\min} \ \Loss_\D(\thet)$. From $\cvx$-strong convexity of $\Loss_\D$, for any $\thet \in \domain$, we have
	\begin{equation}
		\norm{\graD{\Loss_\D(\thet)}}^2 \geq 2\cvx (\Loss_\D(\thet) - \Loss_\D(\thet_\D^*)).
	\end{equation}
	Let $\gamma = \cvx\step(2 - \step(\cvx + \smh))$.
	Plugging this in the above inequality, and subtracting $\Loss_\D(\thet_\D^*)$ on both sides, for
	$\step < \frac{1}{\cvx + \smh}$, we get
	\begin{align*}
		\expec{}{\Loss_{\D}(\Thet_{\step(k+1)}) - \Loss_\D(\thet_\D^*)} &\leq  (1 - \gamma) \expec{}{\Loss_{\D}(\Thet_{\step k}) - \Loss_\D(\thet_\D^*)} + \step\dime\noise^2(\smh + \cvx) \\ 
									&\leq (1 - \gamma)^{k+1} \expec{}{\Loss_{\D}(\Thet_0) - \Loss_\D(\thet^*)} + \step\dime\noise^2(\smh + \cvx)(1 + \cdots + (1 - \gamma)^{k+1}) \\
									&\leq e^{-\gamma(k+1)/2} \expec{}{\Loss_{\D}(\Thet_0) - \Loss_\D(\thet_\D^*)} + \frac{\step\dime\noise^2(\smh + \cvx)}{\gamma}.
	\end{align*}
	For $\step < \frac{1}{\cvx + \smh}$, note that $\gamma \geq \cvx\step$, and so
	\begin{equation}
		\err(\Thet_{\step \K};\D) \leq \err(\Thet_0;\D)e^{-\cvx\step\K/2} + \left(1 + \frac{\smh}{\cvx}\right)\dime\noise^2.
	\end{equation}
\end{proof}

Finally, we are ready to prove our main Theorem~\ref{thm:unlearning_accuracy_convex} showing that the algorithm pair $(\Lrn_\Nsgd, \Unlrn_\Nsgd)$ solves the data-deletion problem as described in Section~\ref{sec:deletion}. We basically combine the R\'enyi DP guarantee in Theorem~\ref{thm:chourasia}, non-adaptive data-deletion guarantee in Theorem~\ref{thm:data_deletion_convex}, and prove excess empirical risk bound using Lemma~\ref{lem:excess_risk_convex} and Lemma~\ref{lem:excess_risk_after_edit}.

\begin{reptheorem}{thm:unlearning_accuracy_convex}[Utility, privacy, deletion, and computation tradeoffs]
Let constants ${\cvx, \smh, \lip > 0}$, ${\q > 1}$, and ${0 < \epsdd \leq \epsdp}$. Define constant $\kappa = \frac{\cvx + \smh}{\cvx}$. Let the loss function $\loss(\thet;\x)$ be twice differentiable, convex, $\lip$-Lipschitz, and $\smh$-smooth, the regularizer be $\reg(\thet) = \frac{\cvx}{2}\norm{\thet}^2$. If the learning rate be $\step = \frac{1}{2(\cvx + \smh)}$, the gradient noise variance is ${\noise^2 = \frac{4\q\lip^2}{\cvx \epsdp\n^2}}$, and the weight initialization distribution is ${\rhO = \Gaus{0}{\frac{\noise^2}{\cvx(1 - \step\cvx/2)\Id}}}$, then 
%
\begingroup
\renewcommand\labelenumi{\bf(\theenumi.)}
\begin{enumerate}
	\item both $\Lrn_\Nsgd$ and $\Unlrn_\Nsgd$ are $(\q, \epsdp)$-R\'enyi DP for any $\K_\Lrn, \K_\Unlrn \geq 0$,
	\item pair $(\Lrn_\Nsgd, \Unlrn_\Nsgd)$ satisfies $(\q, \epsdd)$-data-deletion all non-adaptive $\reps$-requesters
		\begin{equation}
			\text{if} \quad \K_\Unlrn \geq 4\kappa \log \frac{\epsdp}{\epsdd},
		\end{equation}
	\item and all models in $(\hat\Thet_i)_{i \geq 0}$ produced by $(\Lrn_\Nsgd, \Unlrn_\Nsgd, \updreq)$ on any ${\D_0 \in \X^\n}$, where $\updreq$ is any $\reps$-requester, have an excess empirical risk $\err(\hat\Thet_i; \D_i) = O\left(\frac{\q\dime}{\epsdp\n^2}\right)$
		\begin{equation}
			\text{if} \quad \K_\Lrn \geq 4\kappa\log \left( \frac{\epsdp\n^2}{4\q\dime} \right),\quad \text{and} \quad \K_\Unlrn \geq 4\kappa \log \max\left\{5\kappa, \frac{8\epsdp\reps^2}{\q\dime}\right\}.
		\end{equation}
\end{enumerate}
\endgroup
\end{reptheorem}

\begin{proof}

	{\bf (1.) Privacy.} By Theorem~\ref{thm:chourasia}, the Noisy-GD with $\K$ iterations will be $(\q, \epsdp)$-R\'enyi DP for the stated choice of loss function, regularizer, and learning rate as long as ${\noise^2 \geq \frac{4\q\lip^2}{\cvx\epsdp\n^2} \left(1 - e^{-\cvx \step \K/2}\right)}$. Therefore, if we set ${\noise^2 = \frac{4\q\lip^2}{\cvx\epsdp\n^2}}$, Noisy-GD is $(\q, \epsdp)$-R\'enyi DP for any $\K$. For the same $\noise^2$, both $\Lrn_\Nsgd$ and $\Unlrn_\Nsgd$ are also $(\q, \epsdp)$-R\'enyi DP for any $\K_\Lrn$ and $\K_\Unlrn$ as they run Noisy-GD on respective databases for generating the output.

	{\bf (2.) Deletion.} By Theorem~\ref{thm:data_deletion_convex}, for the stated choice of loss function, regularizer, learning rate, and weight initialization distribution, the algorithm pair $(\Lrn_\Nsgd, \Unlrn_\Nsgd)$ satisfies $(\q, \epsdd)$-data-deletion under all non-adaptive $\reps$-requesters $\updreq$ if $\K_\Unlrn \geq \frac{2}{\step\cvx} \log\left(\frac{4\q\lip^2}{\cvx\epsdd \noise^2 \n^2}\right)$. By plugging in ${\noise^2 = \frac{4\q\lip^2}{\cvx\epsdp\n^2}}$ and $\step = \frac{1}{2(\cvx + \smh)}$, this constraint simplifies to $\K_\Unlrn \geq 4\kappa \log\frac{\epsdp}{\epsdd}$.

	{\bf (3.) Accuracy.} We prove the induction hypothesis that under the conditions stated in the theorem, ${\err(\hat\Thet_i; \D_i) \leq \frac{10\kappa\q\dime\lip^2}{\cvx\epsdp\n^2}}$ for all $i \geq 0$.

	{\it Base case:} The minimizer $\thet^*_{\D_0}$ of $\Loss_{\D_0}$ satisfies
	\begin{equation}
		\label{eqn:thet_ast_bnd}
		\graD{\Loss_{\D_0}(\thet^*_{\D_0})} = \frac{1}{\n} \sum_{\x \in \D_0} \graD{\loss(\thet^*_{\D_0};\x)} - \cvx\thet^*_{\D_0} = 0 \implies \norm{\thet^*_{\D_0}} \leq \frac{\lip}{\cvx}.
	\end{equation}
	As a result, the excess empirical risk of initialization weights $\Thet_0 \sim \rhO = \Gaus{0}{\frac{\noise^2}{\cvx(1 - \step\cvx/2)\Id}}$ on $\Loss_{\D_0}$ is bounded as

	\begin{align*}
		\err(\Thet_0;\D_0) &= \expec{}{\Loss_{\D_0}(\Thet_0) - \Loss_{\D_0}(\thet^*_{\D_0})} \\
				   &\leq \frac{(\cvx + \smh)}{2} \expec{}{\norm{\Thet_0 - \thet^*_{\D_0}}^2} \tag{From $(\cvx+\smh)$-smoothness of $\Loss_{\D_0}$}\\
				   &=\frac{(\cvx + \smh)}{2} \left[\norm{\thet^*_{\D_0}}^2 + \expec{}{\norm{\Thet_0}^2} -2\expec{}{\dotP{\thet^*_{\D_0}}{\Thet_0}} \right] \\
				   &\leq \left(1 + \frac{\smh}{\cvx}\right) \left[ \frac{\lip^2}{2\cvx} + \frac{\noise^2\dime}{2 - \cvx\step}  \right] \tag{From~\eqref{eqn:thet_ast_bnd} and $\expec{}{\norm{\Z}^2} = \dime$ if $\Z \sim \Gaus{0}{\Id}$.} \\
										&\leq \kappa \left[\frac{\lip^2}{2\cvx} + \dime\noise^2 \right].
	\end{align*}
	Since ${\hat\Thet_0 = \Lrn_\Nsgd(\D_0) = \Nsgd(\D_0, \Thet_0, \K_\Lrn)}$, by Lemma~\ref{lem:excess_risk_convex}, running ${\K_\Lrn \geq 2\kappa\log\left(\frac{\epsdp\n^2}{4\q\dime}\right)}$ iterations gives
	\begin{align*}
		\err(\hat\Thet_0;\D_0) &\leq \err(\Thet_0;\D_0)e^{-\cvx\step\K_\Lrn/2} + \kappa\dime\noise^2 \\
				       &\leq \kappa \left[\frac{\lip^2}{2\cvx} + \dime\noise^2 \right]e^{-\cvx\step\K_\Lrn/2} + \kappa\dime\noise^2 \\
				       &\leq \frac{\kappa\lip^2}{2\cvx}e^{-\cvx\step\K_\Lrn/2} + \frac{8\kappa\q\dime\lip^2}{\cvx\epsdp\n^2} \tag{On substituting $\noise^2 = \frac{4\q\lip^2}{\cvx\epsdp\n^2}$}\\
				       &\leq \frac{10\kappa\q\dime\lip^2}{\cvx\epsdp\n^2} \tag{Since ${\K_\Lrn \geq 4\kappa\log\left(\frac{\epsdp\n^2}{4\q\dime}\right)}$}
	\end{align*}

	{\it Induction step:} Assume that ${\err(\hat\Thet_{i-1}; \D_{i-1}) \leq \frac{10\kappa\q\dime\lip^2}{\cvx\epsdp\n^2}}$. Since $\hat\Thet_i = \Unlrn_\Nsgd(\D_{i-1}, \up_i, \hat\Thet_{i-1}) = \Nsgd(\D_i, \hat\Thet_{i-1}, \K_\Unlrn)$, by Lemma~\ref{lem:excess_risk_convex} and Lemma~\ref{lem:excess_risk_after_edit}, running ${\K_\Unlrn \geq 2\kappa\log\max\left\{5\kappa, \frac{8\reps^2}{\q\dime}\right\}}$ iterations gives
	\begin{align*}
		\err(\hat\Thet_i;\D_i) &\leq \kappa\left[2 \err(\hat\Thet_{i-1}; \D_{i-1}) + \frac{16\reps^2\lip^2}{\cvx\n^2}\right]e^{-\cvx\step\K_\Unlrn/2} + \kappa\dime\noise^2 \\
				       &\leq \kappa\left[ \frac{20\kappa\q\dime\lip^2}{\cvx\epsdp\n^2} + \frac{16\reps^2\lip^2}{\cvx\n^2}\right]e^{-\cvx\step\K_\Unlrn/2} + \frac{4\kappa\q\dime\lip^2}{\cvx\epsdp\n^2} \tag{Substituting $\noise^2$} \\ 
				       & \leq \frac{16\kappa\reps^2\lip^2}{\cvx\n^2}e^{-\cvx\step\K_\Unlrn/2} + \frac{8\kappa\q\dime\lip^2}{\cvx\epsdp\n^2} \tag{Since $\K_\Unlrn \geq 4\kappa\log (5\kappa)$} \\
				       & \leq \frac{10\kappa\q\dime\lip^2}{\cvx\epsdp\n^2} \tag{Since $\K_\Unlrn \geq 4\kappa\log \frac{8\epsdp\reps^2}{\q\dime}$}
	\end{align*}
\end{proof}


\subsection{Proofs for Subsection~\ref{sec:unlrn_nonconvex}}
\label{sec:noisysgd_convergence}

In this Appendix, we provide a proof of our data-deletion and utility guarantee in Theorem~\ref{thm:deletion_accuracy_nonconvex} which applies to non-convex but bounded losses $\loss(\thet;\x)$ under $\ltwo$ regularizer $\reg(\thet)$. Suppose $\D_0 \in \X^\n$ is an arbitrary database, $\updreq$ is any non-adaptive $\reps$-requester, and $(\hat\Thet_i)_{i\geq0}$ is the model sequence generated by the interaction of $(\Lrn_\Nsgd, \Unlrn_\Nsgd, \updreq)$. Our first goal will be to prove $(\q, \epsdd)$-data deletion guarantee on $(\Lrn_\Nsgd, \Unlrn_\Nsgd)$ and we will later use it for arguing utility as well. Recall from Definition~\ref{dfn:deletion} that to prove $(\q, \epsdd)$-data-deletion, we need to construct a map $\pI^\updreq_i: \X^\n \rightarrow \C$ such that for all $i \geq 1$ and any $\up_i \in \U^\reps$, 
\begin{equation}
	\Ren{\q}{\Unlrn(\D_{i-1}, \up_i, \hat\Thet_{i-1})}{\pI^\updreq_i(\D_0 \circ \langle \ind, \y \rangle)} \leq \epsdd \quad\text{for all}\ \langle \ind, \y \rangle \in \up_i.
\end{equation}
Our construction of $\pI^\updreq_i$ for this proof is completely different from the one described in Appendix~\ref{app:deletion_convex}. As discussed in Remark~\ref{rem:non_adap_independence}, since $\updreq$ is non-adaptive, it suffices to show that there exists a map $\pI:\X^\n \rightarrow \C$ such that for all $i\geq 1$,
\begin{equation}
	\Ren{\q}{\Unlrn(\D_{i-1}, \up_i, \hat\Thet_{i-1})}{\pI(\D_i)} \leq \epsdd,
\end{equation}
for all $\D_0 \in \X^\n$ and all edit sequences $(\up_i)_{i\geq1}$ from $\U^\reps$.

Our mapping of choice for the purpose is the Gibbs distribution with the following density:
\begin{equation}
	\label{eqn:gibbs1}
	\pI(\D)(\thet) \propto e^{-\Loss_{\D}(\thet) / \noise^2}.
\end{equation}
The high-level intuition for this construction is that Noisy-GD can be interpreted as Unadjusted Langevin Algorithm (ULA)~\citep{roberts1996exponential}, which is a discretization of the Langevin diffusion (described in eqn.~\eqref{eqn:langevin_sde}) that eventually converges to this Gibbs distribution (see Appendix~\ref{sec:langevin_diffusion} for a quick refresher). However, showing a convergence for ULA (in indistinguishability notions like R\'enyi divergence) to this Gibbs distribution, especially in form of non-asymptotic bounds on the mixing time and discretization error has been a long-standing open problem. Recent breakthrough results by~\citet{vempala2019rapid} followed by~\citet{chewi2021analysis} resolved this problem with an elegant argument, relying solely on isoperimetric assumptions over~\eqref{eqn:gibbs1} that hold for non-convex losses. Our data-deletion argument leverages this rapid convergence result to basically show that once Noisy-GD reaches near-indistinguishability to its Gibbs mixing distribution, maintaining indistinguishability to subsequent Gibbs distribution corresponding to database edits require much fewer Noisy-GD iterations than fresh retraining (i.e. data deletion is faster than retraining).

We start by presenting \citet{chewi2021analysis}'s convergence argument adapted to our Noisy-GD formulation, with a slightly tighter analysis that results in a $\log(\q)$ improvement in the discretization error over the original. Consider the discrete stochastic process $(\Thet_{\eta k})_{0 \leq k \leq \K}$ induced by parameter update step in Noisy-GD algorithm when run for $\K$ iterations on a database $\D$ with an arbitrary start distribution $\Thet_0 \sim \mU_0$. We interpolate each discrete update from $\Thet_{\step k}$ to $\Thet_{\step(k+1)}$ via a diffusion process $\Thet_t$ defined over time $\step k \leq t \leq \step (k + 1)$ as
\begin{equation}
	\label{eqn:tracing_SDE}
	\Thet_t = \Thet_{\step k} - (t - \step k) \graD{\Loss_\D(\Thet_{\step k})} + \sqrt{2\noise^2}(\Z_t - \Z_{\step k}),
\end{equation}
where $\Z_t$ is a Weiner process. Note that if $\Thet_{\step k}$ models the parameter distribution after the $k^{th}$ update, then $\Thet_{\step (k+1)}$ models the parameter distribution after the $k+1^{th}$ update. On repeating this construction for all $k = 0, \cdots, \K$, we get a \emph{tracing diffusion} $\{\Thet_t\}_{t \geq 0}$ for Noisy-GD (which is different from~\eqref{eqn:tracing_diffusion_deletion}). We denote the distribution of random variable $\Thet_t$ with $\mU_t$. The tracing diffusion during the duration $\step k \leq t \leq \step (k+1)$ is characterized by the following Fokker-Planck equation.
\begin{lemma}[Proposition 14~\citep{chewi2021analysis}]
	For tracing diffusion $\Thet_t$ defined in \eqref{eqn:tracing_SDE}, the equivalent Fokker-Planck equation in the
	interval $\step k \leq t \leq \step (k + 1)$ is
	\begin{equation}
		\label{eqn:fokker_planck_noisysgd}
		\partial_t \mU_t(\thet) = \divR{\left\{ \expec{}{\graD{\rLoss_\D(\Thet_{\step k})} - \graD{\rLoss_{\D}(\Thet_t)} \middle\vert \Thet_t = \thet} + \noise^2 \graD{\log \frac{\mU_t(\thet)}{\pI(\D)(\thet)}} \right\} \mU_t(\thet)},
	\end{equation}
	where $\pI(\D)$ is the Gibbs distribution defined in~\eqref{eqn:gibbs1}.
\end{lemma}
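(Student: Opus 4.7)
The plan is to derive the Fokker-Planck equation by conditioning on $\Theta_{\eta k}$, then marginalizing, and finally rewriting the pure diffusion term using the score of the Gibbs target $\pi(D)$.

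First I would observe that, conditional on $\Theta_{\eta k}=\theta_{\eta k}$, equation~\eqref{eqn:tracing_SDE} is simply a Brownian motion with \emph{constant} drift $-\nabla \mathcal{L}_D(\theta_{\eta k})$ and diffusion coefficient $\sqrt{2\sigma^2}$. This is a Langevin-type SDE with a frozen potential, so its conditional density $\mu_{t\mid \eta k}(\cdot\mid \theta_{\eta k})$ satisfies the standard Fokker-Planck equation
\begin{equation}
\partial_t \mu_{t\mid \eta k}(\theta \mid \theta_{\eta k}) \;=\; \mathrm{div}\bigl(\mu_{t\mid \eta k}(\theta \mid \theta_{\eta k})\,\nabla \mathcal{L}_D(\theta_{\eta k})\bigr) + \sigma^2 \Delta \mu_{t\mid \eta k}(\theta \mid \theta_{\eta k}),
\end{equation}
directly from~\eqref{eqn:fokker_planck_langevin}. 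This step is the same device already used in the proof of Lemma~\ref{lem:fokker_planck_deletion}.

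Next I would marginalize over $\Theta_{\eta k}$. Integrating against the density of $\Theta_{\eta k}$, exchanging the divergence/Laplacian with the integral, and using Bayes' rule to convert the joint density into $\mu_t(\theta)$ times the conditional of $\Theta_{\eta k}$ given $\Theta_t=\theta$, yields
\begin{equation}
\partial_t \mu_t(\theta) \;=\; \mathrm{div}\bigl(\mu_t(\theta)\,\mathbb{E}[\nabla \mathcal{L}_D(\Theta_{\eta k})\mid \Theta_t=\theta]\bigr) + \sigma^2 \Delta \mu_t(\theta).
\end{equation}
This already captures the full marginal dynamics, but it is not yet in the target form.

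The final step is the algebraic rewriting that brings in $\pi(D)$. Since $\pi(D)(\theta)\propto e^{-\mathcal{L}_D(\theta)/\sigma^2}$, we have $\sigma^2 \nabla \log \pi(D)(\theta) = -\nabla \mathcal{L}_D(\theta)$, hence
\begin{equation}
\sigma^2 \nabla \log \tfrac{\mu_t(\theta)}{\pi(D)(\theta)}\,\mu_t(\theta) \;=\; \sigma^2 \nabla \mu_t(\theta) + \mu_t(\theta)\,\nabla \mathcal{L}_D(\theta),
\end{equation}
whose divergence equals $\sigma^2 \Delta \mu_t(\theta) + \mathrm{div}(\mu_t(\theta)\,\nabla \mathcal{L}_D(\theta))$. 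I would then add and subtract $\mathrm{div}(\mu_t(\theta)\,\nabla \mathcal{L}_D(\theta))$ in the marginal equation above, noting that $\nabla \mathcal{L}_D(\theta) = \mathbb{E}[\nabla \mathcal{L}_D(\Theta_t)\mid \Theta_t=\theta]$ trivially, to combine the two drifts into a single conditional-expectation term $\mathbb{E}[\nabla \mathcal{L}_D(\Theta_{\eta k}) - \nabla \mathcal{L}_D(\Theta_t)\mid \Theta_t=\theta]$ plus the score term $\sigma^2 \nabla \log(\mu_t/\pi(D))$, matching~\eqref{eqn:fokker_planck_noisysgd} exactly.

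I do not expect a serious obstacle here; the argument is essentially a Fokker-Planck computation plus one line of score-function algebra. The only minor subtlety is justifying the exchange of the differential operators with the integral over $\Theta_{\eta k}$, which follows under mild regularity (smoothness and integrability of $\mu_{t\mid \eta k}$) that holds because the conditional process is a Gaussian with bounded drift. The rest is bookkeeping.
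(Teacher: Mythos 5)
Your proposal is correct and follows essentially the same route as the paper's proof: condition on $\Theta_{\eta k}$ to reduce to a constant-drift Fokker--Planck equation, marginalize via Bayes' rule to obtain a conditional-expectation drift, and then add and subtract $\mathrm{div}(\mu_t \nabla\mathcal{L}_D)$ while using $\sigma^2\nabla\log\pi(D) = -\nabla\mathcal{L}_D$ to arrive at the stated form. The only cosmetic difference is that the paper performs the add-and-subtract inside the marginalization integral whereas you do it after, which changes nothing.
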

\begin{proof}
	Conditioned on observing parameter $\Thet_{\step k} = \thet_{\step k}$, the process $(\Thet_t)_{\step k \leq t \leq \step (k+1)}$ is a Langevin diffusion along a constant Vector field $\graD{\Loss_\D(\thet_{\step k})}$. Therefore, the conditional probability density $\mU_{t| \step k}(\cdot | \thet_{\step k})$ of $\Thet_t$ given $\thet_{\step k}$ follows the following Fokker-Planck equation.
	\begin{equation}
		\partial_t \mU_{t|\step k}(\cdot | \thet_{\step k}) = \noise^2 \lapL{\mU_{t|\step k}(\cdot | \thet_{\step k})} + \divR{\mU_{t|\step k}(\cdot | \thet_{\step k})\graD{\Loss_\D(\thet_{\step k})}}
	\end{equation}
	Taking expectation over $\Thet_{\step k}$, we have
	\begin{align*}
		\partial_t \mU_t(\cdot) &=  \int  \mU_{\step k}(\thet_{\step k}) \left\{ \noise^2 \lapL{\mU_{t|\step k}}(\cdot|\thet_{\step k}) + \divR{\mU_{t|\step k}(\cdot | \thet_{\step k})\graD{\Loss_\D(\thet_{\step k})}} \right\} \dif{\thet_{\step k}} \\
				  &= \noise^2 \lapL{\mU_t}(\cdot) + \divR{\mU_t(\cdot) \graD{\Loss_\D(\cdot)}} + \divR{ \mU_t(\cdot) \int \left[ \graD{\Loss_\D(\thet_{\step k})} - \graD{\Loss_\D(\cdot)} \right] \mU_{\step k| t}(\thet_{\step k} | \cdot) \dif{\thet_{\step k}}  } \\
				  updreq&= \noise^2 \divR{\mU_t(\cdot) \graD{\log \frac{\mU_t(\cdot)}{\pI({\D})(\cdot)}} } +  \divR{ \expec{}{\graD{\Loss_\D(\Thet_{\step k})} - \graD{\Loss_\D(\cdot)}| \Thet_t = \cdot} \mU_t(\cdot)},
	\end{align*}
	where $\mU_{\step k| t}$ is the conditional density of $\Thet_{\step k}$ given $\Thet_t$. 
	For the last equality, we have used the fact that $\graD{\Loss_{\D}} = - \noise^2 \graD{\log \pI({\D})}$ from \eqref{eqn:gibbs1}.
\end{proof}

%

The following lemma provides a partial differential inequality that bounds the rate of change in R\'enyi divergence $\Ren{\q}{\mU_t}{\pI(\D)}$ using Fokker-Planck equation \eqref{eqn:fokker_planck_noisysgd} of Noisy GD's tracing diffusion.
\begin{lemma}[{\citep[Proposition 15]{chewi2021analysis}}]
	\label{lem:convergence_pde}
	Let $\rhO_t \eqdef \mU_t/ \pI(\D)$ where $\pI(\D)$ is the Gibbs distribution defined in \eqref{eqn:gibbs1} and 
	$\psI_t \eqdef \rhO_t^{\q - 1} / \Eren{\q}{\rhO_t}{\pI(\D)}$. The rate of change
	in $\Ren{\q}{\mU_t}{\pI(\D)}$ along racing diffusion in time $\step k \leq t \leq \step (k+1)$ is bounded as
	\begin{equation}
		\label{eqn:pdi_without_lsi}
		\partial_t \Ren{\q}{\mU_t}{\pI(\D)} \leq -\frac{3 \q\noise^2}{4} \frac{\Gren{\q}{\mU_t}{\pI(\D)}}{\Eren{\q}{\mU_t}{\pI(\D)}} + \frac{\q}{\noise^2} \expec{}{ \psI_t(\Thet_t) \norm{\graD{\Loss_\D(\Thet_{\step k})} - \graD{\rLoss_{\D}(\Thet_t)}}^2}.
	\end{equation}
\end{lemma}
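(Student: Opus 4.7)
The plan is to differentiate $\Ren{\q}{\mU_t}{\pI(\D)}$ in time along the tracing diffusion, use the Fokker--Planck equation~\eqref{eqn:fokker_planck_noisysgd} to rewrite $\partial_t \mU_t$ as a divergence, integrate by parts, and then apply Young's inequality to split the resulting cross term so that three quarters of the Rényi--information is kept as the negative ``good'' term while the remaining piece becomes the drift-discretization ``bad'' term. Finally, Jensen's inequality (in its conditional form) will upgrade the bound from the norm of the conditional drift mismatch to the conditional norm-squared, matching the stated right-hand side.

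Concretely, since $\pI(\D)$ is independent of $t$ and $\Eren{\q}{\mU_t}{\pI(\D)} = \int \rhO_t^\q \pI(\D) \dif{\thet}$, the chain rule gives
\begin{equation}
	\partial_t \Ren{\q}{\mU_t}{\pI(\D)} = \frac{\q}{(\q-1)\,\Eren{\q}{\mU_t}{\pI(\D)}} \int \rhO_t^{\q-1}\, \partial_t \mU_t \dif{\thet}.
\end{equation}
Substituting the divergence form of~\eqref{eqn:fokker_planck_noisysgd} and integrating by parts as in~\eqref{eqn:divr_dotp_eq} (justified by standard decay of $\mU_t$ and $\pI(\D)$) transfers the divergence onto $\graD{\rhO_t^{\q-1}} = (\q-1) \rhO_t^{\q-2} \graD{\rhO_t}$. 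Writing $\mU_t = \rhO_t \pI(\D)$ to express all integrals against $\pI(\D)$, the pure noise contribution collapses to $-(\q-1)\noise^2 \Gren{\q}{\mU_t}{\pI(\D)}$ by~\eqref{eqn:renyi_info}, and the drift contribution produces a cross term $-(\q-1) \int \rhO_t^{\q-1}\, \graD{\rhO_t} \cdot v_t\, \pI(\D)$, where $v_t(\thet) \eqdef \expec{}{\graD{\rLoss_\D(\Thet_{\step k})} - \graD{\rLoss_\D(\Thet_t)} \mid \Thet_t = \thet}$.

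Next, I rewrite the integrand of the cross term as the inner product of $\rhO_t^{(\q-2)/2} \graD{\rhO_t}$ with $\rhO_t^{\q/2} v_t$ and apply Young's inequality~\eqref{eqn:young_ineq} with weight $\gamma = \noise^2/2$. The first factor contributes $\frac{\noise^2}{4}\,\rhO_t^{\q-2} \norm{\graD{\rhO_t}}^2$, which combined with the existing $-(\q-1)\noise^2 \Gren{\q}{\mU_t}{\pI(\D)}$ leaves a net coefficient of $-\frac{3(\q-1)\noise^2}{4}$ on the Rényi information; the second factor contributes $\frac{\q-1}{\noise^2} \int \rhO_t^\q \norm{v_t}^2 \pI(\D)$. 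Dividing by $(\q-1)\Eren{\q}{\mU_t}{\pI(\D)}$ and using that the density $\psI_t \mU_t$ equals $\rhO_t^\q \pI(\D)/\Eren{\q}{\mU_t}{\pI(\D)}$ rewrites this bad term as $\frac{\q}{\noise^2} \expec{}{\psI_t(\Thet_t) \norm{v_t(\Thet_t)}^2}$. A final application of Jensen's inequality, $\norm{\expec{}{\,\cdot\, \mid \Thet_t}}^2 \leq \expec{}{\norm{\cdot}^2 \mid \Thet_t}$, together with the tower property, upgrades this to the unconditional expectation of $\norm{\graD{\rLoss_\D(\Thet_{\step k})} - \graD{\rLoss_\D(\Thet_t)}}^2$ stated in the lemma.

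I expect the main subtlety to be the tuning of Young's weight $\gamma$: one must keep enough of the negative Rényi--information term intact (three quarters, yielding the $\frac{3}{4}$ prefactor) so that the subsequent log-Sobolev argument can still drive exponential contraction, while ensuring the residual coefficient $\frac{1}{\noise^2}$ on the error term stays compatible with the one-step discretization analysis. The integration by parts also requires some care regarding decay/integrability, but this is standard for the Gibbs reference measure $\pI(\D) \propto e^{-\Loss_\D/\noise^2}$ with the confining $\ltwo$ regularizer.
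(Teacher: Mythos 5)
Your proposal is correct and follows essentially the same route as the paper: time-differentiate via the chain rule and Leibniz's rule, substitute the Fokker--Planck equation~\eqref{eqn:fokker_planck_noisysgd}, integrate by parts to produce a negative R\'enyi-information term plus a drift cross term, and then apply Young's inequality with a weight tuned to preserve a $\tfrac34$ fraction of the information term. Your accounting of the coefficients (the $\frac{\noise^2}{4}$ versus $\frac{1}{\noise^2}$ split, and the resulting $-\frac{3\q\noise^2}{4}$) is exactly right.

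The one small stylistic difference from the paper's proof is the order of the Jensen/tower step relative to Young's. You apply Young's inequality at the conditional level, obtaining $\|v_t(\thet)\|^2 = \|\expec{}{\cdot \mid \Thet_t = \thet}\|^2$ as the bad term, and then upgrade via Jensen to the unconditional second moment. The paper instead first invokes the tower property to replace the conditional drift $\Delta_t(\Thet_t)$ in the inner product with the raw joint-distribution random variable $\graD\Loss_\D(\Thet_t) - \graD\Loss_\D(\Thet_{\step k})$ (since the remaining factors in $F_1$ depend only on $\Thet_t$), and only then applies Young's pointwise. Because $\psI_t$ and $\rhO_t$ are functions of $\Thet_t$ alone, these two orderings commute and yield identical bounds, so there is no gap; the paper's ordering simply dispenses with the explicit Jensen step.
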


\begin{proof}
	For brevity, let $\Delta_t(\cdot) = \expec{}{\graD{\Loss_\D(\Thet_{\step k})} - \graD{\Loss_\D(\Thet_t)} \middle\vert \Thet_t = \cdot}$ in context of this proof. From Lebinz integral rule, we have
	\begin{align*}
		\partial_t \Ren{\q}{\mU_t}{\pI(\D)} &= \frac{\q}{(\q - 1) \Eren{\q}{\mU_t}{\pI(\D)}} \int \left(\frac{\mU_t}{\pI(\D)} \right)^{\q - 1} \partial_t \mU_t \dif{\thet} \\
						&= \frac{\q}{(\q - 1) \Eren{\q}{\mU_t}{\pI(\D)}} \int \rhO_t^{\q - 1} \divR{\left\{\Delta_t + \noise^2 \graD{\log \rhO_t} \right\}\mU_t} \dif{\thet} \tag{From \eqref{eqn:fokker_planck_noisysgd}}\\
						&= - \frac{\q}{(\q - 1) \Eren{\q}{\mU_t}{\pI(\D)}} \int \dotP{\graD{\left(\rhO_t^{\q - 1}\right)}}{\Delta_t + \noise^2\graD{\log \rhO_t}} \mU_t \dif{\thet} \\
						&= - \frac{\q}{\Eren{\q}{\mU_t}{\pI(\D)}} \int \rhO_t^{\q - 2} \dotP{\graD{\rhO_t}}{\Delta_t + \noise^2 \frac{\graD{\rhO_t}}{\rhO_t}} \mU_t \dif{\thet} \\
						&= - \frac{\q}{\Eren{\q}{\mU_t}{\pI(\D)}} \left\{ \noise^2 \Gren{\q}{\mU_t}{\pI(\D)}  + \frac{2}{\q} \underbrace{\expec{\mU_t}{\rhO_t^{\q/2 - 1} \dotP{\graD{\left(\rhO_t^{\q/2}\right)}}{\Delta_t}}}_{\overset{\text{def}}{=} F_1} \right\} \tag{From \eqref{eqn:renyi_info}} \\
	\end{align*}
	Note that the expectation in $\Delta_t(\cdot)$ is over the conditional distribution $\mU_{\step k | t}$ while the expectation in $F_1$ is over $\mU_t$. Therefore, we can combine the two to get an expectation over the unconditional joint distribution over $\Thet_t$ and $ \Thet_{\step k}$ as follows.
	\begin{align*}
		- F_1 &= \expec{\Thet_t \sim \mU_t}{\rhO_t^{\q/2 - 1}(\Thet_t) \dotP{\graD{\left(\rhO_t^{\q/2}\right)(\Thet_t)}}{\expec{\Thet_{\step k} \sim \mU_{\step k|t}}{\graD{\Loss_{\D}(\Thet_t)} - \graD{\Loss_\D(\Thet_{\step k})}}}} \\
		&= \expec{\mU_{\step k, t}}{\rhO_t^{\q/2 - 1}(\Thet_t) \dotP{\graD{\left(\rhO_t^{\q/2}\right)(\Thet_t)}}{\graD{\Loss_\D(\Thet_t)} - \graD{\Loss_\D(\Thet_{\step k})}}} \\
		&\leq \frac{\noise^2}{2\q} \expec{}{\rhO_t^{-1}(\Thet_t)\norm{\graD{\left(\rhO_t^{\q/2}\right)(\Thet_t)}}^2} 
		+ \frac{\q}{2\noise^2} \expec{}{\rhO_t^{\q - 1}(\Thet_t) \norm{\graD{\rLoss_{\D}(\Thet_t)} - \graD{\rLoss_{B_k}(\Thet_{\step k})}}^2} \\
		&= \frac{\q\noise^2}{8} \Gren{\q}{\rhO_t}{\mU} + \frac{\q}{2\noise^2} \expec{}{\rhO_t^{\q - 1}(\Thet_t) \norm{\graD{\rLoss_{\D}(\Thet_t)} - \graD{\rLoss_{B_k}(\Thet_{\step k})}}^2} \tag{From \eqref{eqn:renyi_info}} \\
	\end{align*}
	Substituting it in the preceding inequality proves the proposition.
\end{proof}
We need to solve the PDI \eqref{eqn:pdi_without_lsi} to get a convergence bound for Noisy-GD. To help in that, we first introduce the change of measure inequalities shown in \citet{chewi2021analysis}.


%
\begin{lemma}[Change of measure inequality~\citep{chewi2021analysis}]
	\label{lem:cng_of_measure}
	If $\loss(\thet;\x)$ is $\smh$-smooth, and regularizer is $\reg(\thet) = \frac{\cvx}{2}\norm{\thet}^2$, then for any probability density $\mU$ on $\domain$, 
	\begin{equation}
		\label{eqn:cng_of_measure}
		\expec{\mU}{\norm{\graD{\rLoss_\D}}^2} \leq 4\noise^4 \expec{\pI(\D)}{\norm{\graD{\sqrt{\frac{\mU}{\pI(\D)}}}}^2} + 2 \dime\noise^2(\smh + \cvx),
	\end{equation}
	where $\pI(\D)$ is the Gibbs distribution defined in~\eqref{eqn:gibbs1}.
\end{lemma}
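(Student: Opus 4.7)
The plan is to convert the expectation against $\mU$ into a quantity governed by $\pI(\D)$ via integration by parts, then close the inequality with Young's. First, using the Gibbs form $\pI(\D)(\thet) \propto e^{-\Loss_\D(\thet)/\noise^2}$, I would note that $\graD{\Loss_\D} = -\noise^2 \graD{\pI(\D)}/\pI(\D)$. Writing $\mU = f^2 \pI(\D)$ with $f = \sqrt{\mU/\pI(\D)}$ and cancelling one factor of $\pI(\D)$, this rewrites the target as
\[
\expec{\mU}{\norm{\graD{\Loss_\D}}^2} = -\noise^2 \int f^2 \dotP{\graD{\pI(\D)}}{\graD{\Loss_\D}}\, \dif{\thet}.
\]

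Next, I would apply the integration-by-parts identity~\eqref{eqn:divr_dotp_eq} with vector field $\ve = f^2 \graD{\Loss_\D}$, transferring the gradient off of $\pI(\D)$. Expanding the resulting divergence produces two terms:
\[
\expec{\mU}{\norm{\graD{\Loss_\D}}^2} = 2\noise^2 \int f \pI(\D) \dotP{\graD{f}}{\graD{\Loss_\D}}\, \dif{\thet} + \noise^2 \int f^2 \pI(\D) \lapL{\Loss_\D}\, \dif{\thet}.
\]
The Laplacian term is handled immediately: $\smh$-smoothness of $\loss$ gives $\hesS{\loss} \preccurlyeq \smh \Id$ hence $\lapL{\loss} \leq \smh\dime$, and the $\ltwo$ regularizer contributes $\lapL{\reg} = \cvx\dime$, so $\lapL{\Loss_\D} \leq (\smh+\cvx)\dime$ pointwise, bounding this term by $\dime\noise^2(\smh+\cvx)$.

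For the remaining cross term, I would use Young's inequality~\eqref{eqn:young_ineq} with the constant tuned so the $f^2 \norm{\graD{\Loss_\D}}^2$ piece it produces can be absorbed back to the left. Specifically, choosing $c = 2\noise^2$ gives $2 f \dotP{\graD{f}}{\graD{\Loss_\D}} \leq 2\noise^2 \norm{\graD{f}}^2 + \frac{1}{2\noise^2} f^2 \norm{\graD{\Loss_\D}}^2$, so the cross term contributes at most $2\noise^4 \expec{\pI(\D)}{\norm{\graD{f}}^2} + \tfrac{1}{2} \expec{\mU}{\norm{\graD{\Loss_\D}}^2}$. Moving the latter piece to the left and multiplying through by $2$ produces exactly the stated inequality.

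There is no deep obstacle here — each step is a routine calculus manipulation. The only delicate point is picking the Young's constant; any choice other than $c = 2\noise^2$ yields a strictly worse prefactor on the Fisher-type term. I would also briefly record that the boundary terms in the integration by parts vanish: under $\cvx$-strong convexity of $\reg$, the Gibbs measure $\pI(\D)$ is sub-Gaussian, and $\graD{\Loss_\D}$ grows at most linearly, so the decay assumption underlying~\eqref{eqn:divr_dotp_eq} is met.
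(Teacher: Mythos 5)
Your proof is correct and reproduces the paper's constants exactly, and it takes a genuinely more direct route to the same place. Both your derivation and the paper's arrive at the identical intermediate expression: a cross term
$2\noise^2\int \pI(\D)\,\sqrt{\mU/\pI(\D)}\,\dotP{\graD{\sqrt{\mU/\pI(\D)}}}{\graD{\Loss_\D}}\,\dif{\thet}$
plus a Laplacian term bounded pointwise by $\dime\noise^2(\smh+\cvx)$, after which both apply Young's inequality with the same self-absorbing constant. The paper, however, gets there by first decomposing $\norm{\graD{\Loss_\D}}^2 = \noise^2\lapL{\Loss_\D} - \Gen\Loss_\D$ through the Langevin generator, invoking the stationarity identity $\expec{\pI(\D)}{\Gen\Loss_\D}=0$ to pass to $-\int \Gen\Loss_\D\,\bigl(\mU/\pI(\D)-1\bigr)\pI(\D)\,\dif{\thet}$, then re-expanding the generator and cancelling two of the resulting terms. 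You bypass all of that: substituting the score relation $\graD{\Loss_\D} = -\noise^2\,\graD{\pI(\D)}/\pI(\D)$ into one of the two gradient factors and integrating by parts once gives the same decomposition with nothing to cancel. Yours is shorter and more transparent; the paper's keeps the argument phrased in the generator/semigroup language used throughout Appendices~\ref{sec:langevin_diffusion}--\ref{ssec:isoperi}, which is a stylistic rather than substantive choice. One small imprecision in your closing remark on boundary terms: in your application of~\eqref{eqn:divr_dotp_eq} the vector field is $\ve = (\mU/\pI(\D))\graD{\Loss_\D}$ paired against the function $\pI(\D)$, so the product that must vanish at infinity is $\pI(\D)\,\ve = \mU\,\graD{\Loss_\D}$, not $\pI(\D)\,\graD{\Loss_\D}$; the needed decay is therefore that of $\mU$, which is covered by the regularity the paper implicitly assumes on $\mU$ but does not follow from sub-Gaussianity of $\pI(\D)$ alone.
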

\begin{proof}
Consider the Langevin diffusion \eqref{eqn:langevin_sde} described in Appendix~\ref{sec:langevin_diffusion} over the potential $\Loss_\D$. The Gibbs distribution $\pI(\D)$ is its stationary distribution, and the diffusion's infinitesimal generator $\Gen$ applied on the $\rLoss_\D$ gives
\begin{equation}
\label{eqn:generator_on_Loss}
\Gen \rLoss_\D = \noise^2 \lapL{\rLoss_\D} - \norm{\graD{\rLoss_\D}}^2.
\end{equation}
Therefore,
\begin{align*}
	\expec{\mU}{\norm{\graD{\rLoss_\D}}^2} &= \noise^2 \expec{\mU}{\lapL{\rLoss_\D}} - \expec{\mU}{\Gen\rLoss_\D} \tag{From \eqref{eqn:generator_on_Loss}} \\
					   &\leq \dime\noise^2(\smh + \cvx) - \int \Gen\rLoss_\D \left( \frac{\mU}{\pI(\D)} - 1\right) \pI(\D) \dif{\thet} \tag{From $\smh$-smoothness and \eqref{eqn:generator_zero_expectation}} \\
					   &= \dime\smh\noise^2(\smh + \cvx) + \int \left[ \norm{\graD{\rLoss_\D}}^2 - \noise^2\lapL{\rLoss_\D} \right] \left(\frac{\mU}{\pI(\D)} - 1\right)\pI(\D) \dif{\thet} \\
					   &= \dime\smh\noise^2(\smh + \cvx) + \int \norm{\graD{\rLoss_\D}}^2 (\mU - \pI(\D)) \dif{\thet} \\
&\quad+ \noise^2 \int \dotP{\graD{\rLoss_\D}}{\graD{\left[\left(\frac{\mU}{\pI(\D)} - 1\right)\pI(\D)\right]}} \dif{\thet} \tag{From \eqref{eqn:lapl_dotp_eq}} \\
					   &= \dime\smh\noise^2(\smh + \cvx) + \int \norm{\graD{\rLoss_\D}}^2 (\mU - \pI(\D)) \dif{\thet} + \noise^2 \int \dotP{\graD{\rLoss_\D}}{- \frac{\graD{\rLoss_\D}}{\noise^2}}(\mU - \pI(\D)) \dif{\thet} \\
&\quad+ \noise^2 \int \dotP{\graD{\rLoss_\D}}{\graD{\frac{\mU}{\pI(\D)}}} \pI(\D) \dif{\thet} \tag{Since $\graD{\pI(\D)} = - \frac{\graD{\rLoss_\D}}{\noise^2}\pI(\D) $}  \\
					   &= \dime\smh\noise^2(\smh + \cvx) + 0+ 2\noise^2 \int \dotP{\sqrt{\frac{\mU}{\pI(\D)}} \graD{\rLoss_\D}}{\graD{\sqrt{\frac{\mU}{\pI(\D)}}}} \pI(\D) \dif{\thet} \\
					   &\leq \dime\smh\noise^2(\smh + \cvx) + \frac{1}{2}\expec{\mU}{\norm{\graD{\rLoss_\D}}^2} + 2\noise^4 \expec{\pI(\D)}{\norm{\graD{\sqrt{\frac{\mU}{\pI(\D)}}}}^2} \tag{From \eqref{eqn:young_ineq} with $a = 2\noise^2$} 
\end{align*}
\end{proof}
Another change in measure inequality needed for the proof is the Donsker-Varadhan variational principle.
\begin{lemma}[Donsker-Varadhan Variational principle~\citep{donsker1983asymptotic}]
	\label{lem:donsker_vardhan}
	If $\nU$ and $\nU'$ are two distributions on $\domain$ such that $\nU \ll \nU'$, then for all functions
	$f: \domain \rightarrow \R$, 
	\begin{equation}
		\label{eqn:donsker_vardhan}
		\expec{\Thet \sim \nU}{f(\Thet)} \leq \KL{\nU}{\nU'} + \log \expec{\Thet' \sim \nU'}{\exp(f(\Thet'))}.
	\end{equation}
\end{lemma}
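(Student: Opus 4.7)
The plan is to prove this classical variational identity by introducing a tilted (Gibbs-type) probability measure and then invoking the non-negativity of KL divergence against it. Specifically, assume without loss of generality that $Z \eqdef \expec{\thet' \sim \nU'}{\exp(f(\thet'))}$ is finite (otherwise the RHS is $+\infty$ and the claim is vacuous) and that $Z > 0$. Define the tilted density
\begin{equation}
\nU_f(\thet) \eqdef \frac{\exp(f(\thet))\,\nU'(\thet)}{Z},
\end{equation}
which is a valid probability density on $\domain$ that is absolutely continuous with respect to $\nU'$. Since we are assuming $\nU \ll \nU'$, we also have $\nU \ll \nU_f$ (the Radon--Nikodym derivative $\dif{\nU_f}/\dif{\nU'} = \exp(f)/Z$ is strictly positive wherever $\exp(f) > 0$; points where $\exp(f) = 0$ can be handled by a limiting/truncation argument on $f$).

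Next I would compute $\KL{\nU}{\nU_f}$ by expanding the log-ratio:
\begin{equation}
\KL{\nU}{\nU_f} = \expec{\thet \sim \nU}{\log \frac{\nU(\thet)}{\nU_f(\thet)}} = \expec{\thet \sim \nU}{\log \frac{\nU(\thet)}{\nU'(\thet)}} - \expec{\thet \sim \nU}{f(\thet)} + \log Z.
\end{equation}
The first term on the right is exactly $\KL{\nU}{\nU'}$, so rearranging yields
\begin{equation}
\expec{\thet \sim \nU}{f(\thet)} = \KL{\nU}{\nU'} - \KL{\nU}{\nU_f} + \log \expec{\thet' \sim \nU'}{\exp(f(\thet'))}.
\end{equation}
The inequality~\eqref{eqn:donsker_vardhan} then follows immediately by dropping the non-positive term $-\KL{\nU}{\nU_f} \leq 0$ (Gibbs' inequality, i.e., non-negativity of KL divergence against any probability measure).

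The only genuine obstacle is justifying that the tilted measure $\nU_f$ is well-defined and that the expectations and logarithms above are unambiguous. When $f$ is unbounded above, $Z$ could be $+\infty$, but then the bound holds trivially; when $f$ takes the value $-\infty$ on a set of positive $\nU'$-mass, one can approximate $f$ by $f \vee (-M)$, apply the inequality, and pass to the limit using monotone convergence on both sides. Everything else is elementary algebra, so I do not expect any technical difficulty beyond these measure-theoretic caveats.
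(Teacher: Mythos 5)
The paper cites this lemma as a known classical result (Donsker--Varadhan, 1983) and does not supply its own proof, so there is nothing in the source to compare against; your argument must stand on its own. It does. The tilting construction $\nU_f \propto e^{f}\,\nU'$ followed by expanding $\KL{\nU}{\nU_f}$ and discarding the non-negative divergence term is exactly the standard proof of the Donsker--Varadhan inequality, and your algebra is correct. You are right to flag the edge cases, but the truncation-and-monotone-convergence detour is not actually needed for the $f=-\infty$ case: if $\nU \not\ll \nU_f$, then because $\nU \ll \nU'$ the set where the Radon--Nikodym derivative $\dif{\nU_f}/\dif{\nU'} = e^{f}/Z$ vanishes (i.e.\ where $f=-\infty$) must have positive $\nU$-measure, which forces $\expec{\thet\sim\nU}{f(\thet)}=-\infty$ and the inequality is vacuously true; otherwise $\KL{\nU}{\nU_f}\in[0,\infty]$ is well defined and your rearrangement gives the bound directly. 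With that simplification the proof is fully rigorous.
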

We are now ready to prove the rate of convergence guarantee for Noisy-GD following \citet{chewi2021analysis}'s method, but with a more refined analysis that leads to a improvement of $\log\q$ factor in the discretization error (compared to the original~\citep[Theorem 4]{chewi2021analysis}). 
\begin{theorem}[Convergence of Noisy-GD in R\'enyi divergence]
	\label{thm:convergence}
	Let constants $\smh, \cvx, \noise^2 > 0$ and $\q, \B > 1$. Suppose the loss function $\loss(\thet;\x)$ is $(\noise^2\log(\B)/4)$-bounded and $\smh$-smooth, and regularizer is $\reg(\thet) = \frac{\cvx}{2}\norm{\thet}^2$. If step size is ${\step \leq \frac{\cvx}{64\B\q^2(\smh + \cvx)^2}}$, then for any database $\D \in \X^\n$ and any weight initialization distribution $\mU_0$ for $\Thet_0$, the R\'enyi divergence of distribution $\mU_{\step \K}$ of output model $\Thet_{\step \K} = \Nsgd(\D, \Thet_0, \K)$ with respect to the Gibbs distribution $\pI(\D)$ defined in \eqref{eqn:gibbs1} shrinks as follows:
	\begin{equation}
		\Ren{\q}{\mU_{\step \K}}{\pI(\D)} \leq \q\exp\left( - \frac{\cvx\step\K}{2\B} \right) \Ren{\q}{\mU_0}{\pI(\D)} + \frac{32\dime\step\q\B(\smh + \cvx)^2}{\cvx}.
	\end{equation}
\end{theorem}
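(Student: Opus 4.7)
The plan is to solve the partial differential inequality \eqref{eqn:pdi_without_lsi} from Lemma~\ref{lem:convergence_pde} by converting its first (Fisher-information-like) term into exponential contraction toward $\pI(\D)$ via a log-Sobolev inequality, and bounding its second (discretization error) term using a change of measure plus smoothness. The target ODE we aim to derive, valid on each interval $\step k \leq t \leq \step(k+1)$, has the schematic form
\begin{equation}
    \partial_t \Ren{\q}{\mU_t}{\pI(\D)} \;\leq\; -\frac{\cvx}{\q\B}\,\Ren{\q}{\mU_t}{\pI(\D)} \;+\; C\,\step\,\dime\,\q\,(\smh+\cvx)^2,
\end{equation}
where the decay rate $\cvx/(\q\B)$ will become $\cvx/(2\B)$ after accounting for $\q$-dependence via the standard trick of Grönwall-integrating against a trajectory along which both $\q$ and $t$ vary, exactly as in the proof of Theorem~\ref{thm:data_deletion_convex}. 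A discrete Grönwall/iteration across all $\K$ steps then yields the stated bound.

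The first step is to establish an LS inequality for $\pI(\D)$. Because $\reg(\thet)=\tfrac{\cvx}{2}\norm{\thet}^2$, the measure $\pI_{\reg}\propto e^{-\reg/\noise^2}$ is Gaussian and satisfies $\LS(\cvx)$ by Lemma~\ref{lem:lsi_gaussian}. Since $\loss(\thet;\x)$ is $\tfrac{\noise^2\log(\B)}{4}$-bounded, the exponent $\tfrac{1}{\noise^2}\cdot\tfrac{1}{\n}\sum_{\x\in\D}\loss(\thet;\x)$ lies in $[-\tfrac{\log\B}{4},\tfrac{\log\B}{4}]$ uniformly in $\thet$, so the (properly normalized) density ratio $\pI(\D)/\pI_{\reg}$ lies in $[\B^{-1/2},\B^{1/2}]$. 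Lemma~\ref{lem:lsi_perturbation} then gives that $\pI(\D)$ satisfies $\LS(\cvx/\B)$. Applying Lemma~\ref{lem:ren_info_lsi} to $\pI(\D)$ converts the Fisher term in \eqref{eqn:pdi_without_lsi}:
\begin{equation}
    -\frac{3\q\noise^2}{4}\frac{\Gren{\q}{\mU_t}{\pI(\D)}}{\Eren{\q}{\mU_t}{\pI(\D)}} \;\leq\; -\frac{3\cvx}{2\q\B}\Big(\Ren{\q}{\mU_t}{\pI(\D)} + \q(\q-1)\partial_\q \Ren{\q}{\mU_t}{\pI(\D)}\Big).
\end{equation}

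Next I tackle the discretization error $\expec{}{\psI_t(\Thet_t)\norm{\graD{\Loss_\D(\Thet_{\step k})}-\graD{\Loss_\D(\Thet_t)}}^2}$. Smoothness gives the pointwise bound $(\cvx+\smh)^2\norm{\Thet_t-\Thet_{\step k}}^2$, and from the SDE \eqref{eqn:tracing_SDE} a routine moment computation yields $\expec{}{\norm{\Thet_t-\Thet_{\step k}}^2\mid \Thet_{\step k}} \leq 2\step^2\norm{\graD{\Loss_\D(\Thet_{\step k})}}^2 + 4\dime\noise^2\step$ on the interval. The delicate part is that the expectation carries the weight $\psI_t = \rhO_t^{\q-1}/\Eren{\q}{\rhO_t}{\pI(\D)}$, which is a probability density against $\mU_t$ whose log-moment generating function against $\pI(\D)$ is controlled precisely by $\Ren{\q}{\mU_t}{\pI(\D)}$. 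Using the Donsker--Varadhan variational principle (Lemma~\ref{lem:donsker_vardhan}) together with Lemma~\ref{lem:cng_of_measure} bounds the $\psI_t$-weighted expectation of $\norm{\graD{\Loss_\D}}^2$ by a small multiple of $\Eren{\q}{\mU_t}{\pI(\D)}\cdot\big(\noise^4\expec{\pI(\D)}{\norm{\graD{\sqrt{\rhO_t}}}^2}+\dime\noise^2(\smh+\cvx)\big)$, which after normalizing by $\Eren{\q}{\mU_t}{\pI(\D)}$ becomes a Fisher-information term absorbable into the contractive term above (this is why a $3\q\noise^2/4$ prefactor, rather than $\q\noise^2$, is kept in \eqref{eqn:pdi_without_lsi}) plus a clean $\dime\noise^2(\smh+\cvx)$ contribution. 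The smallness condition $\step \leq \cvx/(64\B\q^2(\smh+\cvx)^2)$ is exactly what is needed so that the $\step^2(\smh+\cvx)^2$ times Fisher term is dominated by the $\cvx/(\q\B)$ contraction.

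The main obstacle is step three: carefully performing the Donsker--Varadhan change of measure from the $\psI_t\mU_t$-weighted expectation to a $\pI(\D)$-weighted expectation while keeping track of the $\Ren{\q}{\mU_t}{\pI(\D)}$ that this change produces, and verifying that the resulting Fisher term can be absorbed by the prefactor $3\q\noise^2/4$ in Lemma~\ref{lem:convergence_pde} under the stated step size. Once the clean PDI is in hand, the remaining work is mechanical: integrate it along a curve $(\q(s),s)$ on which $\q(s)$ decreases exponentially (mimicking the trick used for Theorem~\ref{thm:data_deletion_convex}) to divide out the $\q(\q-1)\partial_\q$ term, iterate across the $\K$ discrete Noisy-GD steps via $\mU_{\step(k+1)}$'s divergence in terms of $\mU_{\step k}$'s, and collect the geometric error series into $\tfrac{32\dime\step\q\B(\smh+\cvx)^2}{\cvx}$.
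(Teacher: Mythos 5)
Your overall strategy — LS inequality for $\pI(\D)$ via bounded perturbation, bound the $\psI_t$-weighted discretization error via change of measure plus Donsker--Varadhan, absorb its Fisher contribution into the dissipative term, then Gr\"onwall along a characteristic curve in $(\q, t)$ — is exactly what the paper does (following \citet{chewi2021analysis}). However, the technical execution has three genuine slips that would derail a direct transcription of your outline.

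First, your ``routine moment computation'' $\expec{}{\norm{\Thet_t-\Thet_{\step k}}^2\mid \Thet_{\step k}} \leq 2\step^2\norm{\graD{\Loss_\D(\Thet_{\step k})}}^2 + 4\dime\noise^2\step$ is not the object you need: the expectation in Lemma~\ref{lem:convergence_pde} carries the weight $\psI_t(\Thet_t)$, which depends on $\Thet_t$ and therefore cannot be pulled through a conditioning on $\Thet_{\step k}$. The paper handles this by bounding $\norm{\Thet_t-\Thet_{\step k}}^2$ pointwise and then replacing $\norm{\graD{\Loss_\D(\Thet_{\step k})}}^2$ with $\norm{\graD{\Loss_\D(\Thet_t)}}^2$ plus a recursive smoothness term, so that every random quantity other than $\norm{\Z_k}^2$ is a function of $\Thet_t$ alone; only then can $\psI_t\mU_t$ be treated as a change of measure. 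That re-expression in terms of the gradient at $\Thet_t$ is essential and is missing from your proposal.

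Second, you apply Lemma~\ref{lem:ren_info_lsi} to the entire $-\tfrac{3\q\noise^2}{4}\tfrac{\Gren{\q}{\mU_t}{\pI(\D)}}{\Eren{\q}{\mU_t}{\pI(\D)}}$ term at the outset and then claim the discretization error's Fisher contribution is ``absorbable into the contractive term above.'' That contractive term, after applying LS, is a R\'enyi-divergence expression, and LS only gives $\text{Fisher} \geq c\cdot\text{R\'enyi}$; it does not let you bound a positive Fisher term from above by a R\'enyi term, so the absorption you want is not available in that order. In the paper the Fisher contributions from $F_1$ and $F_2$ are subtracted from the $-\tfrac{3\q\noise^2}{4}\Gren{\q}/\Eren{\q}$ budget \emph{first}, leaving $-\tfrac{\q\noise^2}{4}\Gren{\q}/\Eren{\q}$, and only then is Lemma~\ref{lem:ren_info_lsi} invoked to convert the remaining Fisher into a R\'enyi contraction. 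Your parenthetical remark about the $3/4$ prefactor shows you understand the budgeting intuitively, but as written the operations happen in an order that makes the cancellation invalid.

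Third, a bookkeeping point: Lemma~\ref{lem:cng_of_measure} alone handles the $\psI_t$-weighted $\norm{\graD{\Loss_\D(\Thet_t)}}^2$ term, and Donsker--Varadhan is used only for the $\psI_t$-weighted $\norm{\Z_k}^2$ term (whose KL cost is then controlled via LS); you conflate the two. Relatedly, the Fisher quantity that comes out of the change of measure applied to $\psI_t\mU_t$ is $\noise^4\q^2\,\Gren{\q}{\mU_t}{\pI(\D)}/\Eren{\q}{\mU_t}{\pI(\D)}$, not ``$\Eren{\q}\cdot(\noise^4\expec{\pI(\D)}{\norm{\graD{\sqrt{\rhO_t}}}^2}+\cdots)$'' --- the gradient is of $\rhO_t^{\q/2}$ (R\'enyi information), not $\rhO_t^{1/2}$, and $\Eren{\q}$ appears in the denominator, not as a multiplier.

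With these three corrections the proof goes through exactly as in the paper; none of them requires a new idea, but each would block the argument if left as stated.
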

\begin{proof}
	From $(\smh + \cvx)$-smoothness of loss $\Loss_\D$ we have that for any $\step k \leq t \leq \step (k + 1)$,
\begin{align*}
	\norm{\graD{\Loss_\D(\Thet_{\step k})} - \graD{\Loss_{\D}(\Thet_t)}}^2 &\leq (\smh+\cvx)^2 \norm{\Thet_{\step k} - \Thet_t}^2\\
 											   &= (\smh+\cvx)^2 \norm{(t-\step k) \graD{\Loss_\D(\Thet_{\step k})} - \sqrt{2(t - \step k)\noise^2}\Z_k}^2 \tag{From \eqref{eqn:tracing_SDE}} \\
											   &\leq 2\step^2(\smh+\cvx)^2\norm{\graD{\Loss_\D(\Thet_{\step k})}}^2 + 4\step\noise^2(\smh+\cvx)^2\norm{\Z_k}^2 \\
											   &\leq 4\step^2(\smh+\cvx)^2\norm{\graD{\Loss_\D(\Thet_{\step k})} - \graD{\Loss_\D(\Thet_t)}}^2 \\
											   &\quad + 4\step^2(\smh+\cvx)^2\norm{\graD{\Loss_\D(\Thet_t)}}^2 + 4\step\noise^2(\smh+\cvx)^2\norm{\Z_k}^2 
\end{align*}
Let $\rhO_t \eqdef \frac{\mU_t}{\pI(\D)}$ and $\psI_t \eqdef \rhO_t^{\q - 1} / \Eren{\q}{\rhO_t}{\pI(\D)}$. 
If $\step \leq \frac{1}{2\sqrt{2}(\smh+\cvx)}$, we rearrange to get the following and use it to get the following bound on the discretization error in~\eqref{eqn:pdi_without_lsi}:
\begin{align*}
	{\expec{}{ \psI_t(\Thet_t) \norm{\graD{\Loss_{\batch_k}(\Thet_{\step k})} - \graD{\Loss_{\D}(\Thet_t)}}^2}} &\leq 8\step^2(\smh + \cvx)^2 \underbrace{\expec{}{\psI_t(\Thet_t)\norm{\graD{\Loss_{\D}(\Thet_t)}}^2}}_{\overset{\text{def}}{=} F_1} \\
		 &\quad+ 32\step \noise^2(\smh + \cvx)^2 \underbrace{\expec{}{\psI_t(\Thet_t)\norm{\Z_k}^2/4}}_{\overset{\text{def}}{=} F_2}.
\end{align*}

Hence, for solving the PDI~\eqref{eqn:pdi_without_lsi}, we have to bound the three expectations $F_1$ and $F_2$. 
\begin{enumerate}
	 	\item {\bf Bounding $F_1$.} Note that ${\expec{\Thet_t \sim \mU_t}{\psI_t(\Thet_t)} = \int \psI_t(\thet)\mU_t(\thet)\dif{\thet} = \frac{1}{\Eren{\q}{\rhO_t}{\pI(\D)}}\int \frac{\mU_t^\q}{\pI(\D)^{\q - 1}}\dif{\thet} = 1}$. So, $\psi_t\mu_t(\thet) \eqdef \psI_t(\thet) \mU_t(\thet)$ is a probability density function on $\domain$. On applying the measure change Lemma~\ref{lem:cng_of_measure} on it, we get
		\begin{align*}
			F_1 = \expec{\psI_t \mU_t}{\norm{\graD{\rLoss_{\D}}}^2} 
			    &\leq 4\noise^4 \expec{\pI(\D)}{\norm{\graD{\sqrt{\frac{\psI_t \mU_t}{\pI(\D)}}}}^2} + 2\dime\noise^2(\smh + \cvx) \tag{From \eqref{eqn:cng_of_measure}} \\
			    &= 4\noise^4 \expec{\pI(\D)}{\frac{\norm{\graD{\sqrt{\rhO_t^\q}}}^2}{\Eren{\q}{\mU_t}{\pI(\D)}}} + 2\dime\noise^2(\smh + \cvx)  \\
			    &= \noise^4\q^2\frac{\Gren{\q}{\mU_t}{\pI(\D)}}{\Eren{\q}{\mU_t}{\pI(\D)}} + 2\dime \noise^2 (\smh + \cvx). \tag{From \eqref{eqn:renyi_info}} \\
		\end{align*}

	\item {\bf Bounding $F_2$.} Since $\psI_t\mu_t$ is a valid density on $\domain$, the joint density $\psI_t\mU_{t,z}(\thet, z) \eqdef \psI_t(\thet) \mU_{t,z}(\thet, z)$ where $\mU_{t,z}$ is the joint density of $\Thet_t$ and $\Z_k$ is also a valid density. Note that the $F_2$ is an expectation on $\norm{\Z_k}^2$ taken over the joint density $\psI_t\mU_{t,z}$. We can perform a measure change operation using Donsker-Varadhan principle to get
		\begin{align*}
			F_2 = \expec{\psI_t\mU_{t,z}}{\norm{\Z_k}^2/4}
			    \leq \KL{\psI_t\mU_{t,z}}{\mU_{t,z}} + \log \expec{\mU_{z}}{\exp(\norm{\Z_k}^2/4)},
		\end{align*}
		where we simplified the second term using the fact that the marginal $\mU_z$ of $\mU_{t,z}$ is a standard normal Gaussian. The random variable $\norm{\Z_k}^2$ is distributed according to the Chi-squared distribution $\chi^2_\dime$ with $\dime$ degrees of freedom. Since the moment generating function of Chi-squared distribution is ${\mathrm{M}_{\chi^2_\dime}(t) = \expec{X \sim \chi^2_\dime}{\exp(t X)} = (1 - 2t)^{-\dime/2}}$ for $t < \frac{1}{2}$, we can simplify the second term in $F_2$ as
		\begin{equation}
			\log \expec{\mU_z}{\exp(\norm{\Z_k}^2/4)} = \log\mathrm{M}_{\chi^2_\dime}\left(\frac{1}{4}\right) = \frac{\dime \log 2}{2}.
		\end{equation}
		The KL divergence term can be simplified as follows.
		\begin{align*}
			\KL{\psI_t\mU_{t,z}}{\mU_{t,z}} &= \int \int \psI_t\mU_{t,z}(\thet_t, z) \log \psI_t(\thet_t) \dif{\thet_t} \dif{z} \\
							&= \int \psI_t\mU_t  \log \frac{\rhO_t^{\q-1}}{\Eren{\q}{\mU_t}{\pI(\D)}} \dif{\thet_t} \tag{On marginalization of $z$} \\
							&= \frac{\q-1}{\q} \int \mU_t \psI_t \log \left\{ \frac{\rhO_t^{\q}}{\Eren{\q}{\mU_t}{\pI(\D)}} - \log \Eren{\q}{\mU_t}{\pI(\D)}^{1 / (\q - 1)} \right\}\dif{\thet_t} \\
							&= \frac{\q-1}{\q}\left\{ \KL{\mU_t\psI_t}{\pI(\D)} - \Ren{\q}{\mU_t}{\pI(\D)} \right\} \\
							&\leq \KL{\mU_t\psI_t}{\pI(\D)} \tag{Since $\Ren{\q}{\mU_t}{\pI(\D)} > 0$}
		\end{align*}
		Note that under the assumptions of the Theorem, $\pI(\D)$ satisfies log-Sobolev inequality~\eqref{eqn:def_lsi} with constant $\cvx/\B$ (i.e. satisfies $\LS(\cvx/\B)$). To see this, recall from Lemma~\ref{lem:lsi_gaussian} that the Gaussian distribution ${\rhO(\thet) = \Gaus{0}{\frac{\noise^2}{\cvx}\Id}}$ satisfies $\LS(\cvx)$ inequality. Since loss $\loss(\thet;\x)$ is $(\noise^2\log(\B)/4)$-bounded, the density ratio ${\frac{\pI(D)(\thet)}{\rhO(\thet)} \in \left[\frac{1}{\sqrt{\B}}, \sqrt{\B}\right]}$. The claim therefore follows from Lemma~\ref{lem:lsi_perturbation}. Using this inequality, from Lemma~\ref{lem:kl_fis_lsi} we have
		\begin{align*}
			\KL{\mU_t \psI_t}{\pI(\D)} &\leq \frac{\noise^2\B}{2\cvx} \int \mU_t \psI_t \norm{\graD{\log \left(\frac{\mU_t \psI_t}{\pI(\D)}\right)}}^2 \dif{\thet_t} \\
					       &= \frac{\noise^2\B}{2\cvx} \int \frac{\rhO_t^{\q}}{\Eren{\q}{\mU_t}{\pI(\D)}} \norm{\graD{ \log (\rhO_t^\q)}}^2 \pI(\D) \dif{\thet_t} \\
					       &= \frac{2\noise^2\B}{\cvx} \frac{1}{\Eren{\q}{\mU_t}{\pI(\D)}} \int \norm{\graD{(\rhO_t^{\q/2})}}^2\pI(\D) \dif{\thet_t} \\
					       &= \frac{\q^2\noise^2\B}{2\cvx} \frac{\Gren{\q}{\mU_t}{\pI(\D)}}{\Eren{\q}{\mU_t}{\pI(\D)}}
		\end{align*}
\end{enumerate}

		On combining all the two bounds on $F_1$ and $F_2$ and rearranging, we have
		\begin{align*}
			\expec{}{ \psI_t(\Thet_t) \norm{\graD{\Loss_\D(\Thet_{\step k})} - \graD{\Loss_\D(\Thet_t)}}^2} 
		&\leq 8 \step \q^2 \noise^4 (\smh + \cvx)^2 \frac{\Gren{\q}{\mU_t}{\pI(\D)}}{\Eren{\q}{\mU_t}{\pI(\D)}} \left(\step + \frac{2\B}{\cvx} \right) \\
		&\quad+ 16 \step \dime \noise^2 (\smh + \cvx)^2 \left(\step(\smh + \cvx) + \log 2\right) 
		\end{align*}
		Let step size be $\step \leq \min \left\{ \frac{2\B}{\cvx}, \frac{\cvx}{64\B \q^2 (\smh + \cvx)^2} \right\}$. 
		Then, the first term above is bounded as
		\begin{equation}
			8 \step \q^2 \noise^4 (\smh + \cvx)^2 \frac{\Gren{\q}{\mU_t}{\pI(\D)}}{\Eren{\q}{\mU_t}{\pI(\D)}} \left(\step + \frac{2\B}{\cvx} \right) \leq \frac{\noise^4}{2} \frac{\Gren{\q}{\mU_t}{\pI(\D)}}{\Eren{\q}{\mU_t}{\pI(\D)}}.
		\end{equation}
		Let $\step \leq \frac{1}{4(\smh + \cvx)}$.
		Then, in the third term,  $\left(\step(\smh+\cvx) + \log 2\right) \leq 1$.
		Plugging the bound on discretization error back in the PDI~\eqref{eqn:pdi_without_lsi}, we get
		\begin{equation}
			\partial_t \Ren{\q}{\mU_t}{\pI(\D)} \leq -\frac{\q\noise^2}{4} \frac{\Gren{\q}{\mU_t}{\pI(\D)}}{\Eren{\q}{\mU_t}{\pI(\D)}} + 16 \step \dime \q (\smh+\cvx)^2.
		\end{equation}
		Since $\pI(\D)$ satisfies $\LS(\cvx/\B)$ inequality, from Lemma~\ref{lem:ren_info_lsi} this PDI reduces to
		\begin{equation}
			\partial_t \Ren{\q}{\mU_t}{\pI(\D)} +  \frac{\cvx}{2\B} \left( \frac{\Ren{\q}{\mU_t}{\pI(\D)}}{\q} + (\q - 1)\partial_\q \Ren{\q}{\mU_t}{\pI(\D)} \right)   \leq 16\dime \step \q (\smh+\cvx)^2.
		\end{equation}
		Let $c_1 = \frac{\cvx}{2\B}$ and $c_2 = 16\dime\step(\smh+\cvx)^2$. Additionally,
		let $u(\q, t) = \frac{\Ren{\q}{\mU_t}{\pI(\D)}}{\q}$. Then, 
		\begin{align*}
			&\partial_t \Ren{\q}{\mU_t}{\pI(\D)} + c_1 \left(\frac{\Ren{\q}{\mU_t}{\pI(\D)}}{\q} + (\q - 1) \partial_\q \Ren{\q}{\mU_t}{\pI(\D)}  \right) 
			\leq c_2\q \\
			\implies& \frac{\partial_t \Ren{\q}{\mU_t}{\pI(\D)}}{\q} + c_1 \frac{\Ren{\q}{\mU_t}{\pI(\D)}}{\q} + c_1 (\q - 1) \left( \frac{\partial_\q \Ren{\q}{\mU_t}{\pI(\D)}}{\q} - \frac{\Ren{\q}{\mU_t}{\pI(\D)}}{\q^2}  \right) \leq c_2 \\
			\implies& \partial_t u(\q, t) + c_1 u(\q, t) + c_1(\q - 1) \partial_\q u(\q, t) \leq c_2.
		\end{align*}
 		For some constant $\bar\q \geq 1$, let $\q(s) = (\bar\q - 1) \exp({c_1(s - \step \K)}) + 1$, and $t(s) = s$. Note that $\der{\q(s)}{s} = c_1(\q(s) - 1)$ and $\der{t(s)}{s} = 1$. Therefore, for any $0 \leq t \leq \step \K$, the PDI above implies the following differential inequality is followed along the path $u(s) = u(\q(s), t(s))$.
 		\begin{align*}
 			\der{u(s)}{s} + c_1 u(s) \leq c_2 \implies& \der{}{s} \{ e^{c_1 s} u(s)  \} \leq c_2 e^{c_1 s} \\
 			\implies& [e^{c_1 s} u(s)]_{0}^{\step \K}  \leq \int_0^{\step \K} c_2 e^{c_1 s} \dif{s} \\
 			\implies& e^{c_1 \step \K} u(\step \K) - u(0) \leq \frac{c_2(e^{c_1 \step \K} - 1)}{c_1} \\
 			\implies& u(\step \K) \leq e^{-c_1 \step \K} u(0) + \frac{c_2}{c_1}(1 - e^{-c_1 \step \K}).
 		\end{align*}
 		On reversing the parameterization of $\q$ and $t$, we get 
		\begin{align*}
			\Ren{\q(\step\K)}{\mU_{\step \K}}{\pI(\D)} &\leq \frac{\q(\step \K)}{\q(0)} e^{-c_1 \step \K} \Ren{\q(0)}{\mU_{0}}{\pI(\D)} + \frac{c_2}{c_1}\q(\step\K)  \\
							  &\leq \frac{\q(\step\K)}{\q(0)} \exp\left({-\frac{\cvx\step\K}{2\B}}\right) \Ren{\q(0)}{\mU_{0}}{\pI(\D)} +  \frac{32 \dime \step \B(\smh+\cvx)^2}{\cvx} \q(\step \K).
		\end{align*}
		Since $\q(0) > 1$ and $\bar\q = \q(\step \K) > \q(0)$, from monotonicity of R\'enyi divergence in $\q$, we get
		\begin{equation}
			\Ren{\bar\q}{\mU_{\step \K}}{\pI(\D)} \leq \bar\q\exp\left( - \frac{\cvx\step\K}{2\B} \right) \Ren{\bar\q}{\mU_0}{\pI(\D)} + \frac{32\dime\step\bar\q\B(\smh + \cvx)^2}{\cvx}.
		\end{equation}
		Finally, noting that for constants $\B, \q > 1$ and $\smh, \cvx > 0$,
		\begin{equation}
			\step \leq \min \{\frac{1}{2\sqrt{2}(\smh + \cvx)}, \frac{1}{4(\smh + \cvx)}, \frac{2\B}{\cvx}, \frac{\cvx}{64\B\q^2(\smh + \cvx)^2} \} = \frac{\cvx}{64\B\q^2(\smh + \cvx)^2},
		\end{equation}
		completes the proof.
\end{proof}

We will use Theorem~\ref{thm:convergence} for proving the data-deletion and utility guarantee on the pair $(\Lrn_\Nsgd, \Unlrn_\Nsgd)$. We need the following result that shows that Gibbs distributions enjoy strong indistinguishability on bounded perturbations to its potential function (which is basically why the exponential mechanism satisfies $(\eps, 0)$-DP~\citep{wang2015privacy,dwork2014algorithmic}).
\begin{lemma}[Indistinguishability under bounded perturbations]
	\label{lem:bounded_perturbation_renyi}
	For two potential functions $\Loss, \Loss': \domain \rightarrow \R$ and some constant $\noise^2$, let $\nU \propto e^{-\Loss/\noise^2}$ and $\nU' \propto e^{-\Loss'/\noise^2}$ be the respective Gibbs distributions. If $|\Loss(\thet) - \Loss'(\thet)| \leq c$ for all $\thet \in \domain$, then $\Ren{\q}{\nU}{\nU'} \leq \frac{2c}{\noise^2}$ for all $\q > 1$.
\end{lemma}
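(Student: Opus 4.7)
The plan is to prove this by directly bounding the pointwise likelihood ratio $\nU(\thet)/\nU'(\thet)$ and then invoking the fact that the $\infty$-Rényi divergence is an upper bound for all finite orders. Writing out the densities explicitly, we get
\begin{equation}
\frac{\nU(\thet)}{\nU'(\thet)} = \frac{Z'}{Z}\, e^{(\Loss'(\thet) - \Loss(\thet))/\noise^2},
\end{equation}
where $Z = \int e^{-\Loss(\thet)/\noise^2}\dif{\thet}$ and $Z' = \int e^{-\Loss'(\thet)/\noise^2}\dif{\thet}$ are the normalizing constants.

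First I would bound the ratio of partition functions using $|\Loss - \Loss'| \leq c$: since $e^{-c/\noise^2}\, e^{-\Loss(\thet)/\noise^2} \leq e^{-\Loss'(\thet)/\noise^2} \leq e^{c/\noise^2}\, e^{-\Loss(\thet)/\noise^2}$ pointwise, integrating gives $Z'/Z \leq e^{c/\noise^2}$. Combining with the obvious pointwise bound $e^{(\Loss'(\thet) - \Loss(\thet))/\noise^2} \leq e^{c/\noise^2}$, we obtain the uniform bound
\begin{equation}
\sup_{\thet \in \domain} \frac{\nU(\thet)}{\nU'(\thet)} \leq e^{2c/\noise^2}.
\end{equation}

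This immediately gives $\Ren{\infty}{\nU}{\nU'} = \log \sup_\thet \nU(\thet)/\nU'(\thet) \leq 2c/\noise^2$. The final step is to apply the monotonicity of Rényi divergence in its order (Theorem~\ref{thm:monotonicity}), which yields $\Ren{\q}{\nU}{\nU'} \leq \Ren{\infty}{\nU}{\nU'} \leq 2c/\noise^2$ for every $\q > 1$, as desired.

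There is no real obstacle here; the only minor subtlety is noticing that bounding $\Ren{\q}$ directly through $\Eren{\q}{\nU}{\nU'} \leq e^{2c\q/\noise^2}$ would yield the weaker bound $\frac{2c\q}{(\q-1)\noise^2}$. Routing the argument through $\Ren{\infty}$ and then using monotonicity recovers the tight constant $2c/\noise^2$ that is independent of $\q$.
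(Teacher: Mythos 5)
Your proof is correct, and it takes a genuinely different route from the paper's. You bound the essential supremum of the likelihood ratio $\nU/\nU'$ by $e^{2c/\noise^2}$ (combining the pointwise bound $e^{(\Loss'-\Loss)/\noise^2}\leq e^{c/\noise^2}$ with the partition-function bound $Z'/Z\leq e^{c/\noise^2}$), conclude $\Ren{\infty}{\nU}{\nU'}\leq 2c/\noise^2$, and invoke monotonicity in the order (Theorem~\ref{thm:monotonicity}) to cover all $\q>1$. The paper instead performs a direct calculation of $\Ren{\q}$ for general $\q$: it writes the integrand as $\nU \cdot (\Lambda'/\Lambda)^{\q-1}e^{(\q-1)(\Loss'-\Loss)/\noise^2}$, bounds each factor by $e^{c(\q-1)/\noise^2}$, and the prefactor $\frac{1}{\q-1}$ cancels the $(\q-1)$'s. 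Your route is arguably cleaner and more modular; the paper's avoids relying on monotonicity as a black box.

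One small correction to your closing remark: the claim that a direct bound on $\Eren{\q}{\nU}{\nU'}$ necessarily yields the weaker $\frac{2c\q}{(\q-1)\noise^2}$ is not quite right. The weaker constant arises only if you bound $(\nU/\nU')^\q\leq e^{2c\q/\noise^2}$ and take the expectation over $\nU'$. If instead you write $\Eren{\q}{\nU}{\nU'}=\expec{\thet\sim\nU}{(\nU/\nU')^{\q-1}}\leq e^{2c(\q-1)/\noise^2}$ --- using the same pointwise bound but arranging the exponents so that only $(\q-1)$ powers of the likelihood ratio appear --- the $\frac{1}{\q-1}$ prefactor cancels exactly and you recover $2c/\noise^2$ with no appeal to $\Ren{\infty}$. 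This is effectively what the paper's direct computation does, so routing through $\Ren{\infty}$ is a convenience rather than a necessity for the tight constant.
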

\begin{proof}
	The Gibbs distributions $\nU, \nU'$ have a density
	\begin{align*}
		\nU(\thet) = \frac{1}{\Lambda} e^{-\Loss(\thet)/\noise^2}, \quad \text{and} \quad \nU'(\thet) = \frac{1}{\Lambda'} e^{-\Loss'(\thet)/\noise^2},
	\end{align*}
	where $\Lambda, \Lambda'$ are the respective normalization constants. If for all $\thet \in \domain$, the potential difference $\vert \Loss(\thet) - \Loss'(\thet) \vert \leq c$, then 
	\begin{align*}
		\Ren{\q}{\nU}{\nU'} &= \frac{1}{\q - 1} \log\int \frac{\nU^\q}{\nU'^{\q - 1}} \dif{\thet}  \\
				    &= \frac{1}{\q - 1} \log \int \left(\frac{\Lambda'}{\Lambda}\right)^{\q-1} \exp\left(\frac{\q-1}{\noise^2}(\Loss'(\thet) - \Loss(\thet))\right) \times \nU(\thet) \dif{\thet} \\
				    &\leq \frac{1}{\q - 1} \left\{ (\q-1) \log \frac{\Lambda'}{\Lambda} + \log \exp\left(\frac{c(\q - 1)}{\noise^2}\int \nU \dif{\thet} \right) \right\} \\
				    &= \frac{1}{\q - 1} \left\{ (\q-1) \log \frac{\int \exp \left(-\frac{\Loss(\thet)}{\noise^2} + \frac{\Loss(\thet) - \Loss'(\thet)}{\noise^2} \right) \dif{\thet}}{\int \exp \left(-\frac{\Loss(\thet)}{\noise^2} \right) \dif{\thet}} + \frac{c(\q-1)}{\noise^2}\right\} \\
				    &\leq \frac{2c}{\noise^2}.
	\end{align*}
\end{proof}

In Theorem~\ref{thm:deletion_accuracy_nonconvex}, we show that $(\Lrn_\Nsgd, \Unlrn_\Nsgd)$ solves the data-deletion problem described in Section~\ref{sec:deletion} even for non-convex losses. Our proof uses the convergence Theorem~\ref{thm:convergence} and indistinguishability for bounded perturbation Lemma~\ref{lem:bounded_perturbation_renyi} to show that the data-deletion algorithm $\Unlrn_\Nsgd$ can consistently produce models indistinguishable to the corresponding Gibbs distribution~\eqref{eqn:gibbs1} in the online setting at a fraction of computation cost of $\Lrn_\Nsgd$. As discussed in Remark~\ref{rem:non_adap_independence}, such an indistinguishability is sufficient for ensuring data-deletion for non-adaptive requests. As for adaptive requests, the well-known R\'enyi DP guarantee of~\citet{abadi2016deep} combined with our reduction Theorem~\ref{thm:reduction} offers a data-deletion guarantee for $(\Lrn_\Nsgd, \Unlrn_\Nsgd)$ under adaptivity.

Our proof of accuracy for the data-deleted models leverages the fact that Gibbs distribution~\eqref{eqn:gibbs1} is an almost excess risk minimizer as shown in the following Theorem~\ref{thm:gibbs_optimal}. Since our data-deletion guarantee is based on near-indistinguishability to \eqref{eqn:gibbs1}, this property also ensures near-optimal excess risk of data-deleted models.
\begin{theorem}[Near optimality of Gibbs sampling]
	\label{thm:gibbs_optimal}
	If the loss function $\loss(\thet;\x)$ is $\noise^2\log(\B)/4$-bounded and $\smh$-smooth, the regularizer is $\reg(\thet) = \frac{\cvx}{2}\norm{\thet}^2$, then the excess empirical risk for a model $\bar\Thet$ sampled from the Gibbs distribution $\pI(\D) \propto e^{-\Loss_\D/\noise^2}$ is
	\begin{equation}
		\err(\bar\Thet; \D) = \expec{}{\Loss_\D(\bar\Thet) - \Loss_\D(\thet^*_\D)} \leq \frac{\dime\noise^2}{2}\left(\log\frac{\smh+\cvx}{\cvx} + \sqrt{\B} \right).
	\end{equation}
\end{theorem}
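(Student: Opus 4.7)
The plan is to decompose the excess empirical risk through the Gibbs identity $-\noise^2\log\pI(\D)(\thet) = \Loss_\D(\thet) + \noise^2\log\Lambda_\D$, which yields the exact relation
\begin{equation*}
  \expec{}{\Loss_\D(\bar\Thet) - \Loss_\D(\thet^*_\D)} = \noise^2 \entropy(\pI(\D)) + \mathcal{F}_\D - \Loss_\D(\thet^*_\D),
\end{equation*}
where $\mathcal{F}_\D \eqdef -\noise^2\log\Lambda_\D$ is the free energy and $\entropy(\pI(\D))$ the differential entropy of the Gibbs measure. I would then bound the two pieces separately using two different Gaussian reference distributions tailored to the two constants $(\smh+\cvx)$ and $\cvx$ appearing in the target bound.

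For the free energy, apply the Gibbs variational principle $\mathcal{F}_\D \leq \expec{\mU}{\Loss_\D} - \noise^2 \entropy(\mU)$ (valid for any $\mU$) with the choice $\mU = \Gaus{\thet^*_\D}{(\noise^2/(\smh+\cvx))\Id}$. Since $\Loss_\D$ is $(\smh+\cvx)$-smooth (its Hessian is upper bounded by $(\smh+\cvx)\Id$) and $\graD{\Loss_\D(\thet^*_\D)} = 0$, the quadratic upper bound gives $\expec{\mU}{\Loss_\D - \Loss_\D(\thet^*_\D)} \leq \frac{\dime\noise^2}{2}$; combined with the Gaussian entropy formula $\entropy(\mU) = \frac{\dime}{2}\log(2\pi e \noise^2/(\smh+\cvx))$, this produces
\begin{equation*}
  \mathcal{F}_\D - \Loss_\D(\thet^*_\D) \leq -\frac{\dime\noise^2}{2}\log\left(\frac{2\pi\noise^2}{\smh+\cvx}\right).
\end{equation*}

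For the entropy, introduce a second Gaussian reference $\tilde\nU = \Gaus{0}{(\noise^2/\cvx)\Id}$, which is the Gibbs measure associated with the regularizer alone. The $(\noise^2\log(\B)/4)$-boundedness of $\loss(\cdot;\x)$, after tracking the ratio of normalization constants, yields $\pI(\D)(\thet)/\tilde\nU(\thet) \in [\B^{-1/2}, \B^{1/2}]$ pointwise, which in turn gives $\expec{\pI(\D)}{\norm{\Thet}^2} \leq \sqrt{\B}\expec{\tilde\nU}{\norm{\Thet}^2} = \sqrt{\B}\dime\noise^2/\cvx$. The key step is then the identity $\entropy(\pI(\D)) = \expec{\pI(\D)}{-\log\tilde\nU} - \KL{\pI(\D)}{\tilde\nU}$; dropping the nonnegative KL term and evaluating the Gaussian log-density gives $\entropy(\pI(\D)) \leq \frac{\dime\sqrt{\B}}{2} + \frac{\dime}{2}\log(2\pi\noise^2/\cvx)$. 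Summing with the free-energy bound, the Gaussian log-variance terms combine cleanly into $\frac{\dime\noise^2}{2}\log\frac{\smh+\cvx}{\cvx}$ and the boundedness contribution produces $\frac{\dime\noise^2\sqrt{\B}}{2}$, matching the stated inequality.

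The main subtlety is the entropy bound: the crude route of lifting $\pI(\D) \leq \sqrt{\B}\tilde\nU$ into $-\log\pI(\D) \leq \log\sqrt{\B} - \log\tilde\nU$ introduces a spurious additive $\tfrac{\noise^2\log\B}{2}$ term that spoils the clean $\dime$-free scaling of the boundedness penalty. Recognizing that $\entropy(\pI(\D))$ splits into the cross-entropy $\expec{\pI(\D)}{-\log\tilde\nU}$ minus a nonnegative $\KL{\pI(\D)}{\tilde\nU}$ is the trick that pins the boundedness contribution to exactly $\tfrac{\dime\noise^2}{2}\sqrt{\B}$ and makes the two Gaussian normalization logs combine into the advertised $\tfrac{\dime\noise^2}{2}\log\frac{\smh+\cvx}{\cvx}$.
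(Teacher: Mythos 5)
Your proof is correct and follows essentially the same route as the paper's: the same decomposition into $\noise^2 \entropy(\pI(\D)) + \mathcal{F}_\D - \Loss_\D(\thet^*_\D)$, the same Gaussian reference $\Gaus{0}{(\noise^2/\cvx)\Id}$ together with the cross-entropy-minus-KL split and the $\sqrt{\B}$ density-ratio bound for the entropy term, and the same use of $(\smh+\cvx)$-smoothness at $\thet^*_\D$ for the free-energy term. Your only departure is phrasing the free-energy bound through the Gibbs variational principle with the Gaussian test distribution $\Gaus{\thet^*_\D}{(\noise^2/(\smh+\cvx))\Id}$, whereas the paper lower-bounds $\Lambda_\D$ directly by a Gaussian integral; since the two are dual statements over the same Gaussian, this is a cosmetic repackaging that yields the identical bound.
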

\begin{proof}
	We simplify expected loss as
	\begin{equation}
		\expec{}{\Loss_\D(\bar\Thet)} = \int \Loss_\D \pI(\D)\dif{\thet} = \noise^2 (\entropy(\pI(\D)) - \log(\Lambda_\D)),
	\end{equation}
	where 
	\begin{equation}
		\entropy(\pI(\D)) = - \int \pI(\D) \log \pI(\D) \dif{\thet} = - \int \frac{e^{-\Loss_\D/\noise^2}}{\Lambda_\D} \log \frac{e^{-\Loss_\D/\noise^2}}{\Lambda_\D} \dif{\thet}
	\end{equation}
	is the differential entropy of $\pI(\D)$, and $\Lambda_\D = \int e^{-\Loss_\D/\noise^2} \dif{\thet}$ is the normalization constant. Since the potential function $\Loss_\D$ is $(\cvx + \smh)$-smooth, we have
	\begin{align*}
		-\noise^2\log(\Lambda_\D) &= - \noise^2\log \int e^{-\Loss_\D/\noise^2} \dif{\thet} \\
					  &= \Loss_\D(\thet^*_\D) - \noise^2\log \int e^{(\Loss_\D(\thet^*_\D) - \Loss_\D(\thet))/\noise^2} \dif{\thet} \\
				   & \leq \Loss_\D(\thet^*_\D) - \noise^2\log \int e^{-(\smh+\cvx) \norm{\thet - \thet^*_\D}^2/2\noise^2} \dif{\thet} \\
				   &= \Loss_\D(\thet^*_\D) - \frac{\dime\noise^2}{2} \log\left(\frac{2\pi\noise^2}{\cvx+\smh}\right).
	\end{align*}
	Since $\loss(\thet;\x)$ is $\noise^2\log(\B)/4$-bounded, note that for the Gaussian distribution $\rhO \sim \Gaus{0}{\frac{\noise^2}{\cvx}\Id}$, the density ratio lies in $\frac{\pI(\D)(\thet)}{\rhO(\thet)} \in \left[\frac{1}{\sqrt{\B}}, \sqrt{\B}\right]$ for all $\thet \in \domain$. We decompose entropy $\entropy(\pI(\D))$ into cross-entropy and KL divergence to get
	\begin{align*}
		\entropy(\pI(\D)) &= - \int \pI(\D) \log \rhO \dif{\thet} - \KL{\pI(\D)}{\rhO} \\
				  &\leq - \int \pI(\D) \log \left[\left(\frac{\cvx}{2\pi\noise^2}\right)^{\dime/2} e^{-\frac{\cvx\norm{\thet}^2}{2\noise^2}}\right] \dif{\thet} \tag{Since $\KL{\pI(\D)}{\rhO} \geq 0$} \\
				  &= \frac{\dime}{2}\log\frac{2\pi\noise^2}{\cvx} + \frac{\cvx }{2\noise^2} \int \norm{\thet}^2 \pI(\D)(\thet)\dif{\thet} \\
				  &\leq \frac{\dime}{2}\log\frac{2\pi\noise^2}{\cvx} + \frac{\cvx \sqrt{\B}}{2\noise^2} \int \norm{\thet}^2 \rhO(\thet)\dif{\thet} \tag{Since $\frac{\pI(\D)(\thet)}{\rhO(\thet)} \in \left[\frac{1}{\sqrt{\B}}, \sqrt{\B}\right]$} \\
				  &=\frac{\dime}{2}\log\frac{2\pi\noise^2}{\cvx} + \frac{\dime\sqrt{\B}}{2}.
	\end{align*}
	On combining the bounds, we get
	\begin{equation}
		\err(\bar\Thet; \D) = \expec{}{\Loss_\D(\bar\Thet) - \Loss_\D(\thet^*_\D)} \leq \frac{\dime\noise^2}{2}\left(\log\frac{\smh+\cvx}{\cvx} + \sqrt{\B} \right).
	\end{equation}
\end{proof}

\begin{reptheorem}{thm:deletion_accuracy_nonconvex}[Accuracy, privacy, deletion, and computation tradeoffs]
	Let constants $\cvx, \smh, \lip, \noise^2, \step > 0$, $\q, \B > 1$, and $0 < \epsdd \leq \epsdp < \dime$. Let the loss function $\loss(\thet;\x)$ be $\frac{\noise^2\log(\B)}{4}$-bounded, $\lip$-Lipschitz and $\smh$-smooth, the regularizer be $\reg(\thet) = \frac{\cvx}{2}\norm{\thet}^2$, and the weight initialization distribution be $\rhO = \Gaus{0}{\frac{\noise^2}{\cvx}\Id}$. Then, 
	\begingroup
	\renewcommand\labelenumi{\bf(\theenumi.)}
	\begin{enumerate}
		\item both $\Lrn_\Nsgd$ and $\Unlrn_\Nsgd$ are $(\q, \epsdp)$-R\'enyi DP for any $\step \geq 0$ and any $\K_\Lrn, \K_\Unlrn \geq 0$ if
			\begin{equation}
				\label{eqn:ref_noise}
				\noise^2 \geq \frac{\q\lip^2}{\epsdp\n^2} \cdot \step \max\{\K_\Lrn, \K_\Unlrn\},
			\end{equation}
		\item pair $(\Lrn_\Nsgd, \Unlrn_\Nsgd)$ satisfy $(\q, \epsdd)$-data-deletion under all non-adaptive $\reps$-requesters for any $\noise^2 > 0$, if learning rate is $\step \leq \frac{\cvx \epsdd}{64\dime\q\B(\smh + \cvx)^2}$ and number of iterations satisfy
			\begin{equation}
				\label{eqn:ref_iter}
				\K_\Lrn \geq \frac{2\B}{\cvx\step} \log \left(\frac{\q \log(\B)}{\epsdd}\right), \quad 
				\K_\Unlrn \geq \K_\Lrn - \frac{2\B}{\cvx\step} \log \left(\frac{\log(\B)}{2\left(\epsdd + \frac{\reps}{\n} \log(\B)\right)}\right),
			\end{equation}
		\item and all models in sequence $(\hat\Thet_i)_{i\geq0}$ output by $(\Lrn_\Nsgd, \Unlrn_\Nsgd, \updreq)$ on any $\D_0 \in \X^\n$, where $\updreq$ is an $\reps$-requester, satisfy $\err(\hat\Thet_i; \D_i) = \tilde O\left(\frac{\dime\q}{\epsdp\n^2} + \frac{1}{\n}\sqrt{\frac{\q\epsdd}{\epsdp}}\right)$ when inequalities in~\eqref{eqn:ref_iter} and~\eqref{eqn:ref_noise} are equalities.
	\end{enumerate}
	\endgroup
\end{reptheorem}
\begin{proof}

	{\bf (1.) Privacy.} By Theorem~\ref{thm:abadi}, Noisy-GD with $\K$ iterations on an $\lip$-Lipschitz loss function satisfies $(\q, \epsdp)$-R\'enyi DP for any initial weight distribution $\rhO$ and learning rate $\step \geq 0$ if $\noise^2 = \frac{\q\lip^2}{\epsdp\n^2}\cdot \step \K$. Since, both $\Lrn_\Nsgd$ and $\Unlrn_\Nsgd$ run Noisy-GD for $\K_\Lrn$ and $\K_\Unlrn$ iterations respectively, setting the noise variance given in the Theorem statement ensures $(\q, \epsdp)$-R\'enyi DP for both.

	{\bf (2.) Deletion.} For showing data-deletion under non-adaptive requests, recall that it is sufficient to show that there exists a map $\pI:\X^\n \rightarrow \C$ such that for all $i\geq 1$,
	\begin{equation}
		\label{eqn:toshow_deletion}
		\Ren{\q}{\Unlrn(\D_{i-1}, \up_i, \hat\Thet_{i-1})}{\pI(\D_i)} \leq \epsdd,
	\end{equation}
	for all edit sequences $(\up_i)_{i\geq1}$ from $\U^\reps$, where $(\hat\Thet_i)_{i \geq 0}$ is the sequence of models generated by the interaction of $(\Lrn_\Nsgd, \Unlrn_\Nsgd, \updreq)$ on any database $\D_0 \in \X^\n$. For all $i \geq 0$, let $\hat\mU_i$ denote the distribution of $\hat\Thet_i$. We prove~\eqref{eqn:toshow_deletion} via induction.

	\emph{Base step:}
	Note that the initial weight distribution $\rhO = \Gaus{0}{\frac{\noise^2}{\cvx}\Id}$ has a density proportional to ${e^{-\reg(\thet)/\noise^2}}$ and the distribution $\pI(\D_0)$ has a density proportional to ${e^{-\rLoss_{\D_0}(\thet)/\noise^2}}$. Since both of these are Gibbs distributions with their potential difference $|\Loss_{\D_0}(\thet) - \reg(\thet)| \leq \noise^2 \log(\B) / 4$ for all $\thet\in \domain$ due to boundedness assumption on $\loss(\thet;\x)$, we have from Lemma~\ref{lem:bounded_perturbation_renyi} that
	\begin{equation}
		\Ren{\q}{\rhO}{\pI({\D_0})} \leq \frac{2}{\noise^2} \times \frac{\noise^2\log(\B)}{4} = \frac{\log(\B)}{2}.
	\end{equation}
	Under the stated assumptions on loss $\loss(\thet;\x)$ and learning rate $\step$, note that the convergence Theorem~\ref{thm:convergence} holds. Since ${\hat\Thet_0 = \Lrn_\Nsgd(\D_0) = \Nsgd(\D_0, \Thet_0, \K_\Lrn)}$, where $\Thet_0 \sim \rhO$, we have
	\begin{align*}
		\Ren{\q}{\hat\mU_0}{\pI(\D_0)} &\leq \q \exp\left(-\frac{\cvx\step\K_\Lrn}{2\B}\right) \Ren{\q}{\rhO}{\pI(\D_0)} + \frac{32\dime\step\q\B(\smh+\cvx)^2}{\cvx} \\
					       &\leq \q \exp\left(-\frac{\cvx\step\K_\Lrn}{2\B}\right) \left(\frac{\log(\B)}{2}\right) + \frac{\epsdd}{2} \tag{Since $\step \leq \frac{\cvx\epsdd}{64\dime\q\B(\smh+\cvx)^2}$} \\
					       &\leq \epsdd \tag{Since $\K_\Lrn \geq \frac{2\B}{\cvx\step} \log\left(\frac{\q\log(\B)}{\epsdd}\right)$}
	\end{align*}

	\emph{Induction step:}
	Suppose $\Ren{\q}{\hat\mU_{i-1}}{\pI(\D_{i-1})} \leq \epsdd$. Again, from boundedness of $\loss(\thet;\x)$, we have ${|\Loss_{\D_{i-1}}(\thet) - \Loss_{\D_i}(\thet)| \leq \frac{\reps \noise^2 \log \B}{2\n}}$ for all $\thet \in \domain$. Therefore, from Lemma~\ref{lem:bounded_perturbation_renyi} we have for all $\q > 1$ that
	\begin{equation}
		\Ren{\q}{\pI(\D_{i-1})}{\pI(\D_i)} \leq \frac{\reps \log(\B)}{\n}.
	\end{equation}
	So from the weak triangle inequality Theorem~\ref{thm:triangle_inequality} of R\'enyi divergence,
	\begin{equation}
		\label{eqn:deletion_triangle}
		\Ren{\q}{\hat\mU_{i-1}}{\pI(\D_i)} \leq \Ren{\q}{\hat\mU_{i-1}}{\pI(\D_{i-1})} + \Ren{\infty}{\pI(\D_{i-1})}{\pI(\D_i)} \leq \epsdd + \frac{\reps \log(\B)}{\n}.
	\end{equation}
	Note that $\K_\Unlrn \geq \K_\Lrn - \frac{2\B}{\cvx\step} \log \left(\frac{\log(\B)}{2\left(\epsdd + \frac{\reps}{\n} \log(\B)\right)}\right) \geq \frac{2\B}{\cvx\step} \log \left(\frac{2\q\left(\epsdd + \frac{\reps}{\n} \log(\B)\right)}{\epsdd}\right)$. Since $\hat\Thet_i = \Unlrn_\Nsgd(\D_{i-1}, \up_i, \hat\Thet_{i-1}) = \Nsgd(\D_i, \hat\Thet_{i-1}, \K_\Unlrn)$, convergence Theorem~\ref{thm:convergence} gives
	\begin{align*}
		\Ren{\q}{\hat\mU_i}{\pI(\D_i)} &\leq \q \exp\left(-\frac{\cvx\step\K_\Unlrn}{2\B}\right) \Ren{\q}{\hat\mU_{i-1}}{\pI(\D_i)} +  \frac{32\dime\step\q\B(\smh+\cvx)^2}{\cvx} \\
					       &\leq \q \exp\left(-\frac{\cvx\step\K_\Unlrn}{2\B}\right) \left(\epsdd + \frac{\reps\log(\B)}{\n} \right) + \frac{\epsdd}{2} \tag{From \eqref{eqn:deletion_triangle} and constraint $\step \leq \frac{\cvx\epsdd}{64\dime\q\B(\smh+\cvx)^2}$} \\
					       &\leq \epsdd. \tag{Since $\K_\Unlrn \geq \frac{2\B}{\cvx\step} \log \left(\frac{2\q\left(\epsdd + \frac{\reps}{\n} \log(\B)\right)}{\epsdd}\right)$}
	\end{align*}
	Hence, by induction, $\Ren{\q}{\hat\mU_i}{\pI(\D_i)} \leq \epsdd$ holds for all $i\geq0$. 

	{\bf (3.) Accuracy.}
	Let $\thet^*_{\D_i} = \underset{\thet \in \domain}{\arg\min}\ \Loss_{\D_i}(\thet)$, and $\bar\Thet_i \sim \pI(\D_i)$. We decompose the excess empirical risk of Noisy-GD as follows:
	\begin{equation}
		\label{eqn:risk_nonconvex}
		\err(\hat\Thet_i;\D_i) = \expec{}{\Loss_{\D_i}(\hat\Thet_i) - \Loss_{\D_i}(\bar\Thet_i)}  + \expec{}{\Loss_{\D_i}(\bar\Thet_i) - \Loss_{\D_i}(\thet^*_{\D_i})}.
	\end{equation}
	The second term is the suboptimality of Gibbs distribution and by Theorem~\ref{thm:gibbs_optimal}, it is bounded as
	\begin{equation}
		\expec{}{\Loss_{\D_i}(\bar\Thet_i) - \Loss_{\D_i}(\thet^*_{\D_i})} \leq \frac{\dime\noise^2}{2}\left(\log\frac{\smh+\cvx}{\cvx} + \sqrt{\B} \right).
	\end{equation}
	From $(\cvx + \smh)$-smoothness of $\Loss_{\D_i}$, for any coupling $\Pi$ of $\hat\Thet_i$ and $\bar\Thet_i$, the first term satisfies 
	\begin{align*}
		\expec{}{\Loss_{\D_i}(\hat\Thet_i) - \Loss_{\D_i}(\bar\Thet_i)} &\leq \expec{\Pi}{\dotP{\graD{\Loss_{\D_i}(\bar\Thet_i)}}{\hat\Thet_i - \bar\Thet_i} + \frac{\cvx + \smh}{2} \norm{\hat\Thet_i - \Thet_i}^2} \\
										&= \expec{\Pi}{\dotP{\sum_{\x \in \D_i} \graD{\loss(\bar\Thet_i; \x)} + \cvx \bar\Thet_i}{\hat\Thet_i - \bar\Thet_i} + \frac{\cvx + \smh}{2} \norm{\hat\Thet_i - \Thet_i}^2}. \tag{From $\lip$-Lipschitzness of $\loss(\thet;\x)$ and Jensen's inequality} \\
										&\leq \lip \sqrt{\expec{\Pi}{\norm{\hat\Thet_i - \bar\Thet_i}^2}} + \cvx\expec{\Pi}{\dotP{\bar\Thet_i}{\bar\Thet_i - \hat\Thet_i}} + \frac{\cvx+\smh}{2} \expec{\Pi}{\norm{\hat\Thet_i - \bar\Thet_i}^2} \tag{From Young's inequality~\eqref{eqn:young_ineq}} \\
										&\leq \lip \sqrt{\expec{\Pi}{\norm{\hat\Thet_i - \bar\Thet_i}^2}} + \frac{\cvx}{2}\expec{\bar\Thet_i \sim \pI(\D_i)}{\norm{\bar\Thet_i}^2} + \frac{2\cvx+\smh}{2} \expec{\Pi}{\norm{\hat\Thet_i - \bar\Thet_i}^2}. 
	\end{align*}
	Recall that the distribution $\pI(\D)$ satisfies $\LS(\cvx/\B)$ inequality. On choosing the coupling $\Pi$ to be the infimum, we get the following bound on Wasserstein's distance from Lemma~\ref{lem:ren_wasser_kl}.
	\begin{align}
		\inf_\Pi \sqrt{\expec{\hat\Thet_i, \bar\Thet_i \sim \Pi}{\norm{\hat\Thet_i - \bar\Thet_i}^2}} = \Was{\hat\Thet_i}{\bar\Thet_i} \leq \sqrt{\frac{2\B\noise^2}{\cvx} \KL{\mU_i}{\pI(\D_i)}} \leq \sqrt{\frac{2\epsdd\B\noise^2}{\cvx}}.
	\end{align}
	The last inequality above follows from monotonicity of R\'enyi divergence in $\q$ and the fact that ${\lim_{\q \rightarrow 1} \Ren{\q}{\nU}{\nU'} = \KL{\nU}{\nU'}}$. 

	Since $\pI(\D_i)$ is the Gibbs distribution with density proportional to $e^{-\Loss_{\D_i}/\noise^2}$, we have that
	\begin{equation}
		\expec{\bar\Thet_i \sim \pI(\D_i)}{\norm{\bar\Thet_i}^2} = \frac{1}{\Lambda_{\D_i}} \int \norm{\thet}^2 e^{-\Loss_{\D_i}(\thet)/\noise^2} \dif{\thet} \quad \text{where} \ \Lambda_{\D_i} = \int e^{-\Loss_{\D_i}(\thet)/\noise^2} \dif{\thet}.
	\end{equation}
	From $\frac{\noise^2\log\B}{4}$-boundedness of $\loss(\thet;\x)$, note that we have for every $\thet \in \domain$ that
	\begin{equation}
		\left|\Loss_{\D_i}(\thet) - \reg(\thet)\right| \leq \frac{\noise^2\log\B}{4}.
	\end{equation}
	Therefore, 
	\begin{equation}
		\Lambda_{\D_i} = \int e^{-\Loss_{\D_i}(\thet)/\noise^2} \dif{\thet} \geq \frac{1}{\sqrt[4]{\B}}\int e^{-\reg(\thet)/\noise^2} \dif{\thet},
	\end{equation}
	and hence,
	\begin{align*}
		\expec{\bar\Thet_i \sim \pI(\D_i)}{\norm{\bar\Thet_i}^2} &\leq \frac{\sqrt[4]{\B}}{\int e^{-\reg(\thet)/\noise^2} \dif{\thet}} \times \int \norm{\thet}^2 e^{-\Loss_{\D_i}(\thet)/\noise^2} \dif{\thet} \\
									 &\leq \sqrt{\B} \times \frac{\int \norm{\thet}^2 e^{-\reg(\thet)/\noise^2}}{\int e^{-\reg(\thet)/\noise^2}} \\
									 &= \sqrt{\B}\expec{\Z \sim \Gaus{0}{\frac{\noise^2}{\cvx}\Id}}{\norm{\Z}^2} \\
									 &= \frac{\sqrt{\B}\noise^2 \dime}{\cvx}.
	\end{align*}

	Therefore, on combining all the bounds we get
	\begin{equation}
		\err(\hat\Thet;\D) \leq  \lip\noise\sqrt{\frac{2\epsdd\B}{\cvx}} + \frac{\epsdd\B\noise^2(2\cvx+\smh)}{\cvx} + \frac{\dime\noise^2}{2}\left(\log\frac{\smh+\cvx}{\cvx} + 2\sqrt{\B} \right) = O\left(\noise\sqrt{\epsdd} + \dime\noise^2\right).
	\end{equation}
	Note that if the constraints on $\K_\Lrn$ and $\K_\Unlrn$ in~\eqref{eqn:ref_iter} and on $\noise^2$ in~\eqref{eqn:ref_noise} are equalities instead, we have
	\begin{equation}
		\noise^2 = \frac{2\q\B\lip^2}{\cvx\epsdp\n^2}\log\left(\frac{\q\log(\B)}{\epsdd}\right) = \tilde O\left(\frac{\q}{\epsdp\n^2}\right),
	\end{equation}
	where $\tilde O(\cdot)$ hides logarithmic factors. Therefore, the excess empirical risk has an order
	\begin{equation}
		\err(\hat\Thet;\D) = \tilde O\left(\frac{1}{\n}\sqrt{\frac{\q\epsdd}{\epsdp}} + \frac{\dime\q}{\epsdp\n^2}\right).
	\end{equation}
\end{proof}

\end{document}